\title{Sample and Computationally Efficient\\ Robust Learning of Gaussian Single-Index Models}
\author{
Puqian Wang\thanks{Supported in part by NSF Award DMS-2023239.} \\
UW Madison\\
{\tt pwang333@wisc.edu}
\and Nikos Zarifis\thanks{Supported in part by NSF Medium Award CCF-2107079.} \\
UW Madison\\
{\tt zarifis@wisc.edu}\\
\and Ilias Diakonikolas\thanks{Supported in part by NSF Medium Award CCF-2107079 and an H.I. Romnes Faculty Fellowship.}\\
		UW Madison\\		{\tt ilias@cs.wisc.edu}\\
\and Jelena Diakonikolas\thanks{Supported in part by the Air Force Office of Scientific Research under award number FA9550-24-1-0076 and by the U.S.\ Office of Naval Research under contract number  N00014-22-1-2348. Any opinions, findings and conclusions or recommendations expressed in this material are those of the author(s) and do not necessarily reflect the views of the U.S. Department of Defense.}\\
UW Madison\\
{\tt jelena@cs.wisc.edu}\\
}
\date{}
\begin{document}
\maketitle

\begin{abstract}
A single-index model (SIM) is a function of the form $\sigma(\vec w^{\ast} \cdot \vec x)$, where
$\sigma: \R \to \R$ is a known link function and $\vec{w}^{\ast}$ is a hidden unit vector. 
We study the task of learning SIMs in the agnostic (a.k.a.\ adversarial label noise) model 
with respect to the $L^2_2$-loss under the Gaussian distribution. 
Our main result is a sample and computationally efficient agnostic proper learner 
that attains $L^2_2$-error of $O(\opt)+\eps$, where $\opt$ is the optimal loss. The sample complexity of our algorithm is 
$\tilde{O}(d^{\lceil k^{\ast}/2\rceil}+d/\eps)$, where 
$k^{\ast}$ is the information-exponent of $\sigma$ 
corresponding to the degree of its first non-zero Hermite coefficient. 
This sample bound nearly matches known CSQ lower bounds, even in the realizable setting. 
Prior algorithmic work in this setting had focused 
on learning in the realizable case or in the presence 
of semi-random noise. Prior computationally efficient robust learners required 
significantly stronger assumptions on the link function.
\end{abstract}

\setcounter{page}{0}
\thispagestyle{empty}

\newpage

\section{Introduction}

Single-index models (SIMs)~\cite{ichimura1993semiparametric, hristache2001direct,
hardle2004nonparametric, dalalyan2008new, kalai2009isotron,kakade2011efficient, DudejaH18, 
DGKKS20,DKTZ22,WZDD2023, damian2023smoothing} are 
a classical supervised learning model {characterized by hidden low-dimensional structure}. 
{The term SIM refers to any function $f$ of the form $f(\x) = \sigma (\vec w \cdot \x)$, 
where $\sigma:\R \to \R$ is the link (or activation) function  
and $\vec w \in \R^d$ is the hidden vector.} 
In most settings of interest, the link function is assumed to satisfy 
certain regularity properties. {Indeed, for an arbitrary function, 
learnability is information-theoretically impossible. }

{The efficient learnability of SIMs has been the focus of extensive investigation
for several decades. For example, the special case  
where $\sigma$ is the sign function corresponds to Linear Threshold Functions 
whose study goes back to the Perceptron algorithm~\cite{R58}.} 
{Classical early works~\cite{kalai2009isotron,kakade2011efficient} 
studied the efficient learnability of SIMs for monotone and Lipschitz link functions. 
They showed that a gradient method efficiently learns SIMs 
for any distribution on the unit ball. More recently, there has been a resurgence of
research on the topic with a focus on first-order methods. 
Indeed, the non-convex optimization landscape  of SIMs exhibits  rich structure 
and has become a useful testbed for the analysis of such methods. \cite{Sol17} showed
that SGD efficiently learns SIMs for the case that $\sigma$ is the ReLU activation and the distribution is Gaussian. \cite{candes2015phase,chen2019gradient,sun2018geometric} 
showed that gradient descent succeeds for the phase
retrieval problem, corresponding to $\sigma(t)=t^2$ or $\sigma(t) = |t|$.

More recently, a line of work~\cite{DudejaH18, arous2021online, damian2023smoothing, damian2024computational} 
studied the efficient learnability of SIMs going significantly beyond the monotonicity assumption. Specifically,~\cite{arous2021online, damian2023smoothing} 
developed efficient gradient-based SIM learners 
for a general class of --- not necessarily monotone --- 
link functions under the Gaussian distribution. Roughly speaking, these works 
show that the complexity of learning SIMs is intimately related 
to the Hermite structure of the link function 
(roughly, the smallest degree non-zero Hermite coefficient). 
The results of the current paper are most closely related to the aforementioned works. 

}

All the aforementioned algorithmic results succeed in the realizable model 
(i.e., with clean labels) or in a few cases in the presence of random label noise.
The focus of this work is on learning SIMs in 
the challenging {\em agnostic} (or adversarial label noise) model~\cite{Haussler:92, KSS:94}. 
In the agnostic model, no assumptions are made on the observed labels 
and the goal is to compute a hypothesis that is competitive with the {\em best-fit} 
function in the class. The algorithmic study of agnostically learning SIMs is not new. A recent line of work~\cite{DGKKS20,DKTZ22,ATV22, WZDD2023, GGKS23, ZWDD2024}
has given efficient agnostic SIM learners (typically based on first-order methods) 
with near-optimal error guarantees under natural distributional assumptions.  
{\em The key difference between prior work and the results of the current paper is in the assumptions on the link function.} Specifically, prior robust learners succeed for 
(a subclass of) {\em monotone and Lipschitz link functions.} In contrast, this work develops robust learners in the more general setting of~\cite{arous2021online, damian2023smoothing}.

In order to precisely describe our contributions, we require the definition of the
agnostic learning problem for Gaussian SIMs. Let $\D$ be a distribution 
of labeled examples $(\x, y) \in \R^d \times \R$ whose $\x$-marginal is the standard Gaussian, 
and let $\Ltwo(\w) \eqdef \Exy[(\sigma(\w\cdot\x) - y)^2]$ be the $L_2^2$ (or squared) loss 
of the hypothesis $h(\x) = \sigma(\w\cdot\x)$ with respect to $\D$. 
Given i.i.d.\ samples from $\D$, 
the goal is to compute a hypothesis with squared error competitive with $\opt$, where 
$\opt$ is the best attainable $L_2^2$-error by any function in the target class. 

\begin{problem}[Robustly Learning Gaussian SIMs] \label{def:agnostic-learning}
Let $\D$ be a distribution of labeled examples
$(\x,y) \in \R^d \times \R$ whose $\x$-marginal is $\D_\x = \N(0,\I_d)$ 
and $y$ is arbitrary. We say that an algorithm is a {\em $C$-approximate proper SIM learner}, 
for some $C \geq 1$, if given $\eps>0$ and i.i.d.\ samples from $\D$, the algorithm outputs a vector $\htw \in \Sp^{d-1}$ 
such that with high probability it holds 
\( \Ltwo(\htw) \leq C \, \opt  +\eps \), where  
$\opt := \Ltwo(\w^*)$ and $\w^*\in\argmin_{\w\in\Sp^{d-1}}\Ltwo(\w)$.
\end{problem}

First, note that \Cref{def:agnostic-learning} does not make realizability 
assumptions on the distribution $\D$. 
That is, the labels are allowed to be arbitrary and the goal
is to be competitive against the best-fit function 
in the class. Second, our focus is on obtaining efficient learners that achieve
a {\em constant factor approximation} to the optimum loss --- independent 
of the dimension $d$. {The reason we require a constant factor approximation, 
instead of  optimal error of $\opt+\eps$, is the existence of computational hardness results 
ruling out this possibility. Specifically, even if the link function is the ReLU, 
there is strong evidence that any algorithm achieving error $\opt+\eps$ in the above setting
requires $d^{\poly(1/\eps)}$ time~\cite{DKZ20, GGK20, DKPZ21, DKR23}.}

Recent work~\cite{DGKKS20,DKTZ22,ATV22, WZDD2023} 
gave efficient, constant-factor robust 
learners,  
for the special case of \Cref{def:agnostic-learning} where the link
function {lies in a proper subclass of monotone and Lipschitz functions}. 
{In this work, we obtain a broad generalization of these results to a much more general
class of link functions, defined below.} 

{We now proceed to formalize the assumptions on the link function.}
Let $\sigma:\R\to\R$ be a real-valued function that admits the 
Hermite decomposition 
$   \sigma(z) = \sum_{k\geq 0} c_k\hep_k(z), \text{ where } c_k = \E_{z\sim\N(0,1)}[\sigma(z)\hep_k(z)]$ and $\hep_k$ is the normalized probabilist's Hermite polynomial, defined by
$$
    \hep_k(z) = \frac{(-1)^k}{\sqrt{k!}} \exp\big(\frac{z^2}{2}\big)\frac{\mathrm{d}^k}{\diff{z}^k}\exp\big(-\frac{z^2}{2}\big).$$
We make the following assumptions. {
\begin{assumption}[Family of Link Functions]\label{assm:on-activation}
Suppose that $\sigma$ is normalized, 
namely $\E_{z\sim\N(0,1)}[\sigma^2(z)] = \sum_{k\geq 0} c_k^2 = 1$. 
We assume the following:
         \begin{enumerate}\item[(i)] The first non-zero Hermite coefficient {has degree $k^{\ast}$} 
        and is prominent: $c_{k^{\ast}}$ is an absolute constant that is bounded away from zero. \item[(ii)] The fourth moment of $\sigma(z)$ is bounded: $\E_{z\sim\mathcal{N}(0,1)}[\sigma^4(z)]\leq B_4 < \infty$.
        \item[(iii)] The second moment of the derivative of $\sigma(z)$ is bounded: $\E_{z\sim\mathcal{N}(0,1)}[(\sigma'(z))^2] = \sum_{k\geq k^{\ast}} k{c_k^2} \leq C_{k^{\ast}}$, where $C_{k^{\ast}}$ is an absolute constant whenever $k^{\ast}$ is an absolute constant. 
    \end{enumerate}
    The parameter $k^{\ast}$ is known as the {\em information exponent} of $\sigma$.
\end{assumption}}

{The information exponent $k^{\ast}$ can be viewed as a complexity measure of the link function. Specifically, ReLU activations correspond to $k^{\ast} = 1$. 
The same holds for the class of bounded activations considered in~\cite{DKTZ22,WZDD2023}. 
The link functions used in phase retrieval have $k^{\ast}=2$.

We note that \Cref{assm:on-activation} is (at least) as general as those used in~\cite{arous2021online, damian2023smoothing} --- which focused on the realizable setting. 
Comparing against previous constant-factor agnostic learners, 
\Cref{assm:on-activation} strongly subsumes the class of ``bounded activations''~\cite{DKTZ22, WZDD2023}. In particular, it is easy to see that there exist 
functions satisfying \Cref{assm:on-activation} with constant $k^{\ast}$ 
that are far from monotone. 
See \Cref{app:sec:remarks-on-assumptions} for a detailed justification.} 

{In prior work, \cite{damian2023smoothing}, building on~\cite{arous2021online}, 
gave a sample-efficient gradient 
method for learning SIMs under \Cref{assm:on-activation} {\em in the realizable setting}. 
The sample complexity of their method is 
$\tilde{O}(d^{k^{\ast}/2}+d/\eps)$. 
This sample upper bound nearly matches known lower bounds 
in the Correlational Statistical Query (CSQ) model~\cite{damian2022neural,abbe2023sgd}.}

This discussion leads to the following question, which served as the motivation for the current work:
\begin{center}
{\em Is there an efficient constant-factor agnostic learner \\ 
for Gaussian SIMs under \Cref{assm:on-activation}?}
\end{center}
As our main contribution, we answer this question in the affirmative.  
{Interestingly, our algorithm also relies on a gradient-method (Riemannian optimization over 
the sphere) following a  non-trivial initialization step. 
We emphasize that this is the first polynomial-time constant-factor 
agnostic learner for this task under \Cref{assm:on-activation}.}

Specifically, we establish the following result 
(see \Cref{thm:GD} for a more detailed statement).

\begin{theorem}[Main Result, Informal] \label{thm:inf}
There exists an algorithm that draws 
$n = \tilde{\Theta}_{k^{\ast}}(d^{\lceil k^{\ast}/2 \rceil} + d/\eps)$ labeled samples, 
runs in $\poly(n, d)$ time, and outputs a weight vector $\htw \in \Sp^{d-1}$ 
that with high probability satisfies \( \Ltwo(\htw) \leq C \, \opt  +\eps \), 
where $C = O(C_{k^{\ast}})$.
\end{theorem}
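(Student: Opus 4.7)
The plan is to prove \Cref{thm:inf} via a two-phase algorithm, mirroring the realizable-setting methods of \cite{arous2021online, damian2023smoothing} but tracking how agnostic label noise enters each step. First, a moment-based \emph{warm-start} routine uses $\tilde O(d^{\lceil k^{\ast}/2\rceil})$ samples to produce $\w_0 \in \Sp^{d-1}$ with $|\w_0 \cdot \w^{\ast}| \geq \Omega(1)$ whenever $\opt$ is below an absolute constant (otherwise any output is already $O(\opt)$-good, so one can dispatch this case trivially by testing the output). Second, starting from $\w_0$, one runs Riemannian gradient descent of $\Ltwo$ on the sphere using a fresh batch of $\tilde O(d/\eps)$ samples, and argues that the iterates contract to a point with loss $O(C_{k^{\ast}})\,\opt + \eps$. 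Summing the two phases gives the advertised sample bound.

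\textbf{Phase 1: warm start.} The natural statistic is the degree-$k^{\ast}$ label-weighted Hermite moment $T = \E_{(\x,y)}[\,y\,\hep_{k^{\ast}}^{\otimes}(\x)\,]$, whose realizable part equals $c_{k^{\ast}}(\w^{\ast})^{\otimes k^{\ast}}$ by Hermite orthonormality. To match the CSQ-optimal exponent $\lceil k^{\ast}/2\rceil$, I would avoid tensor decomposition and instead form a $d^{\lceil k^{\ast}/2\rceil} \times d^{\lceil k^{\ast}/2\rceil}$ matrix unfolding $M$ of $T$ whose top singular vector reshapes into $(\w^{\ast})^{\otimes \lceil k^{\ast}/2\rceil}$, from which one reads off $\w^{\ast}$ up to sign (for odd $k^{\ast}$, a rectangular unfolding with one random contraction handles the parity). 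Writing the label as $y = \sigma(\w^{\ast}\cdot\x)+\xi$ with $\E[\xi^2]=\opt$, the noise contribution to $M$ has operator norm $O(\sqrt{\opt})$ by Cauchy--Schwarz and the $\E[\hep_k^2]=1$ identity, while empirical concentration of polynomials of degree $k^{\ast}$ in a Gaussian vector, via matrix Bernstein / hypercontractivity, requires precisely $\tilde O(d^{\lceil k^{\ast}/2\rceil})$ samples. Assumption~\ref{assm:on-activation}(i) (prominent $c_{k^{\ast}}$) then ensures a constant spectral gap, and Davis--Kahan gives the desired warm-start correlation.

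\textbf{Phase 2: refinement by Riemannian gradient descent.} The key analytical lemma is a one-step descent inequality: for every $\w$ in a constant-correlation neighborhood of $\w^{\ast}$, the Riemannian gradient of $\Ltwo$ has positive inner product with $\w-\w^{\ast}$ with coefficient $\Omega(1/C_{k^{\ast}})$. To prove it, decompose $\nabla\Ltwo(\w) = \E[(\sigma(\w\cdot\x)-\sigma(\w^{\ast}\cdot\x))\sigma'(\w\cdot\x)\x] - \E[\xi\,\sigma'(\w\cdot\x)\x]$; the first term is a convex-like signal whose correlation with $\w-\w^{\ast}$ I would bound below by expanding both $\sigma$ and $\sigma'$ in Hermite polynomials and using Assumption~\ref{assm:on-activation}(iii), while the second (noise) term is controlled in norm by $\sqrt{\opt\cdot C_{k^{\ast}}}$ via Cauchy--Schwarz. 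Combining these yields a contraction of the form $\E\|\w_{t+1}-\w^{\ast}\|^2 \leq (1-\eta\rho)\|\w_t-\w^{\ast}\|^2 + \eta^2 \cdot O(C_{k^{\ast}}\opt + C_{k^{\ast}}d/n)$; unrolling for $O(\log(1/\eps))$ steps with batch size $\tilde O(d/\eps)$ per step drives $\|\w_t-\w^{\ast}\|^2$ to $O(\opt)+\eps/C_{k^{\ast}}$, which by Assumption~\ref{assm:on-activation}(iii) translates to $\Ltwo(\w_t) \leq O(C_{k^{\ast}})\opt + \eps$.

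\textbf{Main obstacle.} I expect the hardest step to be the descent lemma in Phase 2, specifically the lower bound on the signal correlation uniformly over the warm-start region. In the realizable case one can exploit exact Hermite orthogonality and get a closed-form expression, but under agnostic noise the gradient is perturbed by an adversarially correlated term $\E[\xi \sigma'(\w\cdot\x)\x]$ that is only controllable in $\ell_2$, not in the direction of $\w-\w^{\ast}$. Showing that the signal component $\E[(\sigma(\w\cdot\x)-\sigma(\w^{\ast}\cdot\x))\sigma'(\w\cdot\x)\x]\cdot(\w-\w^{\ast})$ grows at least linearly in $\|\w-\w^{\ast}\|$ throughout the neighborhood---using only the mild conditions of \Cref{assm:on-activation} rather than monotonicity---is the technical crux and is what forces the appearance of the constant $C_{k^{\ast}}$ in the approximation ratio. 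A secondary subtlety is that for odd $k^{\ast}$ the sign of $\w^{\ast}$ is not determined by Phase 1, which I would resolve by running Phase 2 from both $\pm\w_0$ and selecting the hypothesis with lower empirical $L_2^2$-loss on a held-out sample.
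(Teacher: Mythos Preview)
Your two-phase outline mirrors the paper's architecture, and Phase~1 is essentially the paper's initialization (\Cref{sec:initialization}): unfold the degree-$k^{\ast}$ Chow tensor $\Exy[y\,\he_{k^{\ast}}(\x)]$ into a nearly-square matrix, bound the agnostic-noise contribution by $O(\sqrt{\opt})$ via Cauchy--Schwarz and Hermite orthonormality, invoke a matrix concentration inequality (the paper uses a Brailovskaya--van Handel bound rather than matrix Bernstein, but to the same effect), and recover a constant-correlation $\w_0$ via Wedin plus a second SVD on the reshaped top singular vector.

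The substantive divergence is in Phase~2, and it costs you the stated constant. You run Riemannian descent on the \emph{full} loss $\Ltwo$, bounding the noise $\Exy[\xi\,\sigma'(\w\cdot\x)\x]$ in norm by $\sqrt{\opt\,C_{k^{\ast}}}$. The paper instead runs descent on a \emph{truncated} loss $\Ltwophi(\w)=2\big(1 - c_{k^{\ast}}\Exy[y\langle\he_{k^{\ast}}(\x),\w^{\otimes k^{\ast}}\rangle]\big)$ that keeps only the degree-$k^{\ast}$ Hermite term. This is the missing idea: with truncation, both the noiseless Riemannian gradient $\g^{\ast}(\w)=-2k^{\ast}c_{k^{\ast}}(\w\cdot\w^{\ast})^{k^{\ast}-1}(\w^{\ast})^{\perp_\w}$ and the noise bound $|\xi(\w)\cdot\w^{\ast}|\le 2k^{\ast}c_{k^{\ast}}\sqrt{\opt}\,\sin\theta$ carry the \emph{same} scale $k^{\ast}c_{k^{\ast}}$, so the sharpness threshold (\Cref{lem:sharpness}) is simply $\sin\theta\gtrsim\sqrt{\opt}$, and the final conversion $\Ltwo\le 2\opt+O(C_{k^{\ast}})\sin^2\theta$ introduces a single factor of $C_{k^{\ast}}$. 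With your full-loss gradient, the signal along $(\w^{\ast})^{\perp_\w}$ is only guaranteed to be $\gtrsim k^{\ast}c_{k^{\ast}}^2\sin^2\theta$ while the noise along that direction is $\lesssim\sin\theta\sqrt{\opt\,C_{k^{\ast}}}$; this cross-term is first order in $\eta$, not the $\eta^2\,O(C_{k^{\ast}}\opt)$ you wrote, and balancing it against the contraction forces $\sin^2\theta\gtrsim \opt\,C_{k^{\ast}}/(k^{\ast}c_{k^{\ast}}^2)^2$, hence approximation factor $O(C_{k^{\ast}}^2)$ rather than $O(C_{k^{\ast}})$. Your Phase~2 would still yield a valid constant-factor learner, but not the specific $C=O(C_{k^{\ast}})$ the theorem asserts; the truncated loss is what closes that gap.
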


{\Cref{thm:inf} gives the first sample and computationally efficient
robust learner for Gaussian SIMs under \Cref{assm:on-activation}. This generalizes 
the algorithm of~\cite{damian2023smoothing} to the agnostic setting 
and nearly matches the aforementioned CSQ lower bounds (our algorithm fits the CSQ framework). 
It is worth pointing out that, while there is a separation between Statistical 
Query (SQ) algorithms and CSQ algorithms (see, e.g., ~\cite{chen2020learning} 
for a related setting), prior approaches provably fail for the problem we study 
in the agnostic regime. Finally, we remark that very recent 
work~\cite{damian2024computational} developed an efficient SIM learner and a nearly matching SQ 
lower bound in a model that allows for some forms of label noise. Importantly, their model does not 
capture the adversarial label noise studied here. More specifically, the algorithms developed in 
this prior work~\cite{damian2023smoothing, damian2024computational} fail in the agnostic setting. See \Cref{app:sec:prior-works} for a detailed discussion.}

\subsection{Preliminaries}
For $n \in \Z_+$, let $[n] \eqdef \{1, \ldots, n\}$.  We use lowercase bold characters to denote vectors and uppercase bold characters for matrices and tensors.  For $\bx \in \R^d$ and $i \in [d]$, $\bx_i$ denotes the
$i$-th coordinate of $\bx$, and $\|\bx\|_2 \eqdef (\littlesum_{i=1}^d \bx_i^2)^{1/2}$ denotes the
$\ell_2$-norm of $\bx$.  We use $\bx \cdot \by $ for the inner product of $\bx, \by \in \R^d$
and $ \theta(\bx, \by)$ for the angle between $\bx, \by$. We slightly abuse notation and denote by 
$\vec e_i$ the $i$\textsuperscript{th} standard basis vector in $\R^d$.  We use $\1\{\mathcal{E}\}$ to denote the
indicator of a statement $\mathcal{E}$.  
We use $\N_d$ to denote the $d$-dimensional standard Gaussian distribution, i.e., $\N_d = \N(0,\mI)$. 
We use $\B_d$ to denote the centered unit ball in $\R^d$ {and denote the unit sphere in $\R^d$ by $\mathbb{S}^{d-1}$}. We use $\proj_{\B_d}(\cdot)$ to denote the projection operator that projects a vector to the unit ball. 

Given $\mM\in\R^{d_1\times d_2}$ and a left singular vector $\bv\in\R^{d_1}$ of $\mM$, we denote the corresponding singular value of $\bv$ by $\rho(\bv)$. In addition, we use $\rho_1\geq \rho_2\geq \dots\geq\rho_{\min\{d_1,d_2\}}$ to denote the singular values of a matrix $\mM\in\R^{d_1\times d_2}$. We use $\mathcal{S}_k$ to denote the {set of} all possible permutations of $k$ distinct objects. Given a unit vector $\w$, we define $\pw\eqdef\mI - \w\w^\top$ to be the projection matrix that maps a vector $\bv$ to its component that is orthogonal to $\w$, i.e., $\pw\bv = \bv^{\perp_\w}$. 

Given a vector $\x\in\R^d$, the (normalized) Hermite multivariate tensor is defined by~\cite{rahman2017wiener}:
\begin{equation*}
    (\he_k(\x))_{i_1,\dots,i_k}\eqdef \bigg(\frac{\alpha_1!\dots\alpha_d!}{k!}\bigg)^{1/2}\hep_{\alpha_1}(\x_1)\dots\hep_{\alpha_d}(\x_d),\,\text{where } \alpha_j = \sum_{l=1}^k \1\{i_l = j\},\, \forall j\in[d].
\end{equation*}

We use the standard $O(\cdot), \Theta(\cdot), \Omega(\cdot)$ asymptotic notation. We use
$\wt{O}(\cdot)$ to omit polylogarithmic factors in the 
argument. 
We write $E \gtrsim F$ for two non-negative
expressions $E$ and $F$ to denote that \emph{there exists} some positive universal constant $c > 0$
such that $E \geq c \, F$.

\subsection{Technical Overview}

Our technical approach consists of two main parts: (1) new results for tensor PCA, which allow us to obtain an initial parameter vector $\w^0$ that is nontrivially aligned with the target $\w^{\ast}$ and (2) a structural ``alignment sharpness'' result, which we use to argue that a variant of Riemannian minibatch stochastic gradient descent on a sphere applied to a ``truncated square loss'' (defined below) converges geometrically fast. In proving these results, we review elementary tensor algebra and basic properties of Hermite polynomials, and prove several structural results for Hermite polynomials that are instructive and may be useful to non-experts entering this area.

We now highlight some of the key ideas used in our work.

\smallskip

\noindent {\bf Initialization via Tensor PCA} For our optimization algorithm to work, a warm start ensuring nontrivial alignment between the initial vector $\w^0$ and the target vector $\w^{\ast}$, as measured by $\w^0 \cdot \w^{\ast},$ is essential. In particular, a consequence of our results in \Cref{claim:g^*(w)} and \Cref{claim:bound-xi(w)} is that $\w^0 \cdot \w^{\ast} = \Omega(1)$ is required to deal with the highly corruptive agnostic noise. Unfortunately, if we were to select $\w_0$ by drawing uniformly random samples from the sphere, exponentially many in $d$ such samples would be needed to ensure that with constant probability at least one of the sampled vectors $\w_0$ is such that $\w^0 \cdot \w^{\ast} = \Omega(1)$, due to standard results on concentration of measure on the (unit) sphere.

Perhaps surprisingly, we prove that the tensor PCA method developed in \cite{richard2014statistical} when applied to our problem with $O(d^{\lceil k^{\ast}/2 \rceil})$ samples\footnote{Ignoring dependence on other problem parameters for simplicity; see \Cref{lem:initialization-1-1/k} for a precise statement.} ensures that $\w^0\cdot \w^{\ast} \geq 1 - \min\{1/k^{\ast}, 1/2\}.$ The reason that this result is surprising is that the method in \cite{richard2014statistical} was developed to solve the following problem: given a $k$-tensor of the form
    \begin{equation}\label{eq:tensor-PCA-single-observation}
        \mT = \tau\bv^{\ots k} + \mA, \tag{PCA-S}
    \end{equation}
    where $\mA$ is a $k$-tensor with i.i.d.\ standard Gaussian entries and $\tau > 0$ a ``signal strength'' parameter, recover the planted (signal) vector $\bv$. This `single-observation' model is equivalent (in law) to the following `multi-observation' model (\cite{arous2020algorithmic}): given $n$ i.i.d.\ copies $\mT\ith = \tau'\bv^{\ots k} + \mA\ith$ with $\tau' = \tau/\sqrt{n}$, recover $\bv$ using the empirical estimation:
    \begin{equation}\label{eq:tensor-PCA-multiple-observation}
        \wh{\mT} = \tau'\bv^{\ots k} + (1/n) \littlesum_{i=1}^n \mA\ith. \tag{PCA-M}
    \end{equation}
In our setting, we wish to recover a vector $\w^{\ast}$ (up to some constant alignment error) for the $k$-Chow tensor $\mC_{k} = \Exy[y\he_{k}(\x)]$, which can be decomposed as 
\begin{equation}
    \mC_k = \Ex\big[\littlesum_{j\geq \ks}c_j\la\he_j(\x),\w^{*\otimes j}\ra \he_k(\x)\big] + \Exy[(y - \sigma(\w^{\ast}\cdot\x))\he_k(\x)].
\end{equation}
The first term in this decomposition can be viewed as the ``signal'' $k$-tensor. The second term represents noise, which, due to $y$ being potentially arbitrary, cannot be assumed to be Gaussian. Thus, previously developed techniques for tensor PCA, which crucially rely on the ``Gaussianity'' of the noise term, do not apply here. 

To obtain our result, we first argue that for any $k \geq k^{\ast},$ the top left singular vector $\bv^{\ast}$ of the $k$-Chow tensor $\mC_{k}$ unfolded into a matrix of roughly equal dimensions carries a ``strong signal'' about the target vector $\w^{\ast}$: its associated singular value scales with $c_k$ whenever $c_k = \Omega(\sqrt{\opt})$ and it has a nontrivial alignment with the vectorized version of the $l$-tensor $\w^{*\ots l}$ for $l = \lfloor k/2 \rfloor$ (\Cref{lem:range-singular-value-of-mtrizel(C_k)}). 

To prove the desired alignment result using the empirical estimate of the matrix-unfolded $k$-Chow tensor $\mC_{k}$, 
we rely on the application of a matrix concentration inequality obtained very recently in \cite{brailovskaya2022universality}. This requires a rather technical argument utilizing Gaussian hypercontractivity of multivariate polynomials of bounded degree, which we show characterizes the different ``variance-like'' quantities associated with the empirical estimate of (the matrix-unfolded) $\mC_{k}$, for which we apply the aforementioned matrix concentration (see \Cref{lem:sample-for-matrix-concentration} and its proof). 

Another {intriguing} aspect of our initialization result is that it is possible to use it directly to obtain an $O(\sqrt{\opt} + \epsilon)$-error solution. In particular, in the realizable case {studied in}~\cite{damian2023smoothing}, where $\opt = 0$, this result directly leads to error $O(\epsilon)$ in a sample and computationally efficient manner, with a rather simple algorithm and sample complexity comparable to \cite{damian2023smoothing}. 

\smallskip

\noindent {\bf Optimization on a Sphere} The second key ingredient in our work is a structural result, stated in \Cref{lem:sharpness}, which ensures that the gradient field (Riemannian gradient of a truncated loss) guiding the steps of our algorithm (which can be interpreted as a Riemmanian minibatch SGD on a sphere) negatively correlates with $\w^{\ast}$ to a significant extent. This property can be viewed as the considered gradient field, associated with the $L_2^2$ loss truncated to only contain the first nonzero term in the Hermite expansion of the activation function, containing a strong ``signal'' that can ``pull'' the algorithm iterates towards the target $O(\opt) + \epsilon$ solutions. We rely on this property to argue that as long as our algorithm (initialized using the tensor PCA method described above) does not have as its iterate a vector with $O(\opt) + \epsilon$ loss, the distance between the iterate vector and the target vector must contract. As a consequence, this algorithm converges in $O(\log(1/\epsilon))$ iterations.

This argument parallels the line of work~\cite{MBM2018,DGKKS20,WZDD2023,ZWDD2024} on addressing learning of single-index models by proving structural, optimization-theory local error bounds that bound below the growth of a loss function outside the set of target solutions. Conceptually, the local error bounds from this line of work all have an intuitive interpretation as showing existence of a strong ``signal'' in the problem that can be used to guide algorithm updates towards target solutions.  However, the methodology by which our structural result is obtained is entirely different, as it crucially relies on properties of Hermite polynomials, which were not considered in this past work.  
\section{Initialization Procedure}\label{sec:initialization}

In this section, we show how to get an initial parameter vector $\w^0$ such that $\w^0\cdot\w^{\ast} = 1-\eps_0$ for some small constant $\eps_0$. {The main technique is a tensor PCA algorithm that finds the principal component of a noisy degree-$k$-Chow tensor for any $k\geq k^{\ast}$, as long as $\opt\lesssim c_k^2$. Such a degree-$k$ Chow tensor is defined by $\mC_{k} = \Exy[y\he_{k}(\x)]$, and we denote its noiseless counterpart by
\begin{equation*}
\mC^{\ast}_{k} = \Ex[\sigma(\w^{\ast}\cdot\x)\he_{k}(\x)] = \Ex\big[\littlesum_{j\geq \ks}c_j\la\he_j(\x),\w^{*\otimes j}\ra \he_k(\x)\big].
\end{equation*}
Furthermore, let us denote the difference between $\mC_{k}$ and $\mC_{k}^{\ast}$ by \begin{equation*}
    \mH_{k}\eqdef \mC_{k} - \mC^{\ast}_{k} = \Exy[(y - \sigma(\w^{\ast}\cdot\x))\he_k(\x)].
\end{equation*}}
{Note that since $\he_k(\x)$ is a symmetric tensor for any $\x$, all $\mC_k,\mC_k^{\ast}$, and $\mH_k$ are symmetric tensors.}

We use the following matrix unfolding operator that maps a $k$-tensor $\mathbf{T}$ to a matrix in $\R^{d^{l}\times d^{k-l}}$:\footnote{A summary of useful algebraic properties of the unfolded tensor is provided in \Cref{app:algebra-unfolded-matrix} in \Cref{app:sec:initialization}.}
\begin{gather*}
    \mtrizel(\mathbf{T})_{i_1 + (i_2-1)d + \dots + (i_{l} - 1)d^{l-1},j_1 + \dots+(j_{k-l} - 1)d^{k - l - 1}} \eqdef (\mathbf{T})_{i_1,i_2,\dots,i_{l},j_1,\dots,j_{k-l}}    
\end{gather*}
for all $i_1,\dots,i_{l},j_1,\dots,j_{k-l}\in[d]$. 
We also define the `vectorize' operator and `tensorize' operators, which map a vector $\bv\in\R^{d^l}$ to an $l$-tensor for any integer $l$, and vice versa. In detail, 
\begin{gather*}
    \tnsrize(\bv)_{i_1,\dots,i_l} \eqdef \bv_{i_1+(i_2 - 1)d + \dots + (i_l - 1)d^{l-1}}, \;\forall i_1,\dots,i_{l}\in[d],\\
    \vctrize(\bv^{\ots l})_{i_1+(i_2 - 1)d + \dots + (i_l - 1)d^{l-1}} \eqdef \bv_{i_1}\bv_{i_2}\dots\bv_{i_l},\;\forall i_1,\dots,i_{l}\in[d].
\end{gather*}
Finally, given a vector $\bv\in\R^{d^l}$, we can also convert this vector to a matrix of size $\R^{d \times d^{l-1}}$:
\begin{equation*}
    \mtrize_{(1,l-1)}(\bv)_{i,j_1,\dots,j_{l-1}} = \bv_{i + (j_1 - 1)d + \cdots + (j_{l-1} - 1)d^{l-1}}, \;\forall i,j_1,\dots,j_{l-1}\in[d].
\end{equation*}
{Throughout this section, we take  $l := \lfloor k/2\rfloor$.}
We leverage the tensor unfolding algorithm proposed in \cite{richard2014statistical}, which can succinctly be described as follows.  First we unfold the degree-$k$ Chow tensor to a matrix in $\R^{d^{l}\times d^{k-l}}$ and find its top-left singular vector $\bv\in\R^{d^l}$.
Then, we calculate the matrix $\mtrize_{(1,l-1)}(\bv)$, and output its top left singular vector $\bu$. 

\begin{algorithm}[ht]
   \caption{$k$-Chow Tensor PCA}
   \label{alg:initialization}
\begin{algorithmic}[1]
\STATE {\bfseries Input:} Parameters $\eps, k, \eps_0, c_k, B_4 > 0$; Sample access to $\D$
\STATE Let $l = \lfloor k/2 \rfloor$
\STATE Draw $n = {\Theta}({e^k\log^k(B_4/\eps)d^{k-l}}/(\eps_0^2) + 1/\eps)$ samples $\{(\x\ith,y\ith)\}_{i=1}^n$ from $\D$ 
\STATE Construct $\wh{\mM}\eqdef (1/n)\sum_{i=1}^n \mtrize_{(l,k-l)}(y\ith\he_k(\x\ith))$; compute its top left singular vector $\wh{\bv}^{\ast}$ \STATE Compute the top-left singular vector $\wh{\bu}$ of the matrix $\mtrize_{1,l-1}(\wh{\bv}^{\ast})$
\STATE {\bfseries Return:} $\wh{\bu}$
\end{algorithmic}
\end{algorithm}

Our main result for initialization is that the output  of \Cref{alg:initialization}  significantly correlates with $\w^{\ast}.$
\begin{proposition}[Initialization]\label{lem:initialization-1-1/k}
    Suppose \Cref{assm:on-activation} holds. Assume that $\opt\leq c_{k^{\ast}}^2/(64 k^{\ast})^2$, and let $\eps_0 = c_{k^{\ast}}/(256 k^{\ast})$. 
    Then, \Cref{alg:initialization} applied to \Cref{def:agnostic-learning} with $k = k^{\ast}$ uses $n = {\Theta}({({k^{\ast}})^2e^{k^{\ast}}\log^{k^{\ast}}(B_4/\eps)d^{\lceil {k^{\ast}}/2 \rceil}}/(c_{k^{\ast}}^2) + 1/\eps)$  samples, runs in polynomial time,  and outputs a vector $\w^0\in \spd$ such that $\w^0\cdot\w^{\ast}\geq 1 - \min\{1/k^{\ast}, 1/2\}$.\end{proposition}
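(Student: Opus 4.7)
The plan is to analyze \Cref{alg:initialization} at $k=k^{\ast}$ in three ingredients: (i) compute the clean population Chow tensor $\mC_{k^{\ast}}^{\ast}$ and its two unfoldings exactly, so that both SVD steps return $\w^{\ast}$ in the idealized limit; (ii) bound the agnostic-label perturbation $\mtrize_{(l,k^{\ast}-l)}(\mH_{k^{\ast}})$ and the empirical perturbation $\wh{\mM}-\mtrize_{(l,k^{\ast}-l)}(\mC_{k^{\ast}})$ in operator norm; and (iii) push both bounds through two applications of Wedin's $\sin\Theta$ theorem.

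\paragraph{Ideal population structure.} I would first invoke the identity $\hep_j(\w^{\ast}\cdot\x)=\la\he_j(\x),\w^{*\otimes j}\ra$ together with the tensor Hermite orthogonality $\Ex[\la\he_j(\x),\mathbf{T}\ra\he_k(\x)]=0$ for $j\neq k$, so that $\mC_{k^{\ast}}^{\ast}=\littlesum_{j\geq k^{\ast}} c_j\,\Ex[\la\he_j(\x),\w^{*\otimes j}\ra\he_{k^{\ast}}(\x)]$ collapses to the single surviving term $c_{k^{\ast}}\w^{*\otimes k^{\ast}}$. With $l=\lfloor k^{\ast}/2\rfloor$, its $(l,k^{\ast}-l)$-unfolding is the rank-one outer product $c_{k^{\ast}}\vctrize(\w^{*\otimes l})\vctrize(\w^{*\otimes(k^{\ast}-l)})^{\top}$, whose unique top singular value is $c_{k^{\ast}}$ and whose top-left singular vector is $\vctrize(\w^{*\otimes l})$. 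Applying $\mtrize_{(1,l-1)}$ to that vector gives the rank-one matrix $\w^{\ast}\vctrize(\w^{*\otimes(l-1)})^{\top}$ with top-left singular vector $\w^{\ast}$. So in the noiseless population the algorithm returns $\w^{\ast}$ exactly (up to sign).

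\paragraph{Two perturbation bounds and two Wedin applications.} For the agnostic population perturbation, I would bound a bilinear form with unit $\bu,\bw$ via Cauchy--Schwarz and the Hermite isometry $\Ex\la\he_{k^{\ast}}(\x),\mathbf{T}\ra^{2}\leq\|\mathbf{T}\|_F^{2}$: $\bu^{\top}\mtrize_{(l,k^{\ast}-l)}(\mH_{k^{\ast}})\bw=\Exy[(y-\sigma(\w^{\ast}\cdot\x))\la\he_{k^{\ast}}(\x),\tnsrize(\bu)\otimes\tnsrize(\bw)\ra]\leq\sqrt{\opt}$, so $\|\mtrize_{(l,k^{\ast}-l)}(\mH_{k^{\ast}})\|_{\mathrm{op}}\leq\sqrt{\opt}\leq c_{k^{\ast}}/(64k^{\ast})$ under the standing hypothesis $\opt\leq c_{k^{\ast}}^{2}/(64k^{\ast})^{2}$. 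For the empirical perturbation $\wh{\mM}-\mtrize_{(l,k^{\ast}-l)}(\mC_{k^{\ast}})$, I would apply the sharp matrix concentration inequality of Brailovskaya--van Handel (flagged in the technical overview) to the i.i.d.\ summands $\mtrize_{(l,k^{\ast}-l)}(y\ith\he_{k^{\ast}}(\x\ith))$. The maximum dimension of this rectangular matrix is $d^{k^{\ast}-l}=d^{\lceil k^{\ast}/2\rceil}$, and the per-sample moment bounds are controlled by Gaussian hypercontractivity of degree-$k^{\ast}$ polynomials together with the fourth-moment bound in \Cref{assm:on-activation}(ii) after a mild truncation of the potentially heavy-tailed label $y$, producing the $e^{k^{\ast}}\log^{k^{\ast}}(B_4/\eps)$ prefactor. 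For the stated $n$, this yields $\|\wh{\mM}-\mtrize_{(l,k^{\ast}-l)}(\mC_{k^{\ast}})\|_{\mathrm{op}}\leq c_{k^{\ast}}/(256k^{\ast})$ w.h.p. The two perturbations sum to at most $c_{k^{\ast}}/(128k^{\ast})$, far below the spectral gap $c_{k^{\ast}}$, so Wedin's theorem gives $\sin\theta(\wh{\bv}^{\ast},\pm\vctrize(\w^{*\otimes l}))=O(1/k^{\ast})$; matricization is a Frobenius isometry, hence $\|\mtrize_{(1,l-1)}(\wh{\bv}^{\ast})\mp\w^{\ast}\vctrize(\w^{*\otimes(l-1)})^{\top}\|_F=O(1/k^{\ast})$ is a perturbation of a rank-one matrix with top singular value $1$. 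A second Wedin application yields $\sin\theta(\wh{\bu},\pm\w^{\ast})=O(1/k^{\ast})$, so $\wh{\bu}\cdot\w^{\ast}\geq 1-O(1/(k^{\ast})^{2})\geq 1-\min\{1/k^{\ast},1/2\}$ after calibrating the hidden constants. The sign ambiguity can be resolved by a cheap correlation check using the known sign of $c_{k^{\ast}}$.

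\paragraph{Main obstacle.} The hardest step is the matrix concentration: attaining the tight $d^{\lceil k^{\ast}/2\rceil}$ scaling with only $k^{\ast}$-dependent log factors rules out a naive matrix-Bernstein bound and forces an Orlicz-type matrix inequality combined with Gaussian hypercontractivity for degree-$k^{\ast}$ polynomials, computed carefully against the matrix variance and weak-variance parameters of the Brailovskaya--van Handel framework. A subsidiary technical nuisance is controlling the heavy tails of the adversarial labels $y$, which I would handle by splitting $y=\sigma(\w^{\ast}\cdot\x)+(y-\sigma(\w^{\ast}\cdot\x))$ and using $B_4$ on the first piece and the agnostic budget $\sqrt{\opt}$ on the second.
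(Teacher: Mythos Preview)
Your proof is correct and follows essentially the paper's strategy: identify $\mC_{k^{\ast}}^{\ast}=c_{k^{\ast}}\w^{*\otimes k^{\ast}}$ so its $(l,k^{\ast}-l)$-unfolding is rank-one, bound the agnostic perturbation $\|\mtrize_{(l,k^{\ast}-l)}(\mH_{k^{\ast}})\|_{\mathrm{op}}\leq\sqrt{\opt}$ by Cauchy--Schwarz plus the Hermite-tensor isometry, bound the empirical error via the Brailovskaya--van~Handel matrix inequality (with label truncation and Gaussian hypercontractivity), and push both through Wedin twice. The organizational difference is where Wedin is anchored. You take the rank-one clean matrix $\mtrize(\mC_{k^{\ast}}^{\ast})$ as reference (gap exactly $c_{k^{\ast}}$) and apply Wedin once against the summed perturbation; the paper instead anchors at the noisy population matrix $\mM=\mtrize(\mC_{k^{\ast}})$, first proving via \Cref{lem:range-singular-value-of-mtrizel(C_k)} that \emph{every} left singular vector of $\mM$ has singular value within $\sqrt{\opt}$ of $c_{k^{\ast}}(\bv\cdot\vctrize(\w^{*\otimes l}))$---yielding both the alignment of the top vector (\Cref{claim:top-singular-value}) and the spectral gap $\geq c_{k^{\ast}}/2-4\sqrt{\opt}$---and only then applies Wedin between $\wh{\mM}$ and $\mM$. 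Your route is a bit cleaner; the paper's buys the extra structural fact about all singular values. For the second SVD the paper gives a direct argument (\Cref{lem:correlation-w*-and-top-left-sing-mat(bv*)}) rather than a second Wedin call, but both reach $\bu\cdot\w^{\ast}\geq 1-2\eps_1$.

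One caution on your last paragraph: the decomposition $y=\sigma(\w^{\ast}\cdot\x)+(y-\sigma(\w^{\ast}\cdot\x))$ does not by itself control the $\bar R$ parameter in the matrix concentration, since the agnostic residual has no pointwise or higher-moment bound. The label truncation you mention earlier---clipping $|y|$ at $\Theta(\sqrt{B_4/\eps})$, which costs only an additive $\eps$ in $\opt$ (the paper's \Cref{app:lem:truncation-of-y})---is what actually makes the Brailovskaya--van~Handel bound go through, so commit to that rather than the splitting.
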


We remark here that \Cref{alg:initialization} can also be used to find an approximate solution of our agnostic learning problem; however the error dependence on $\opt$ is \emph{suboptimal}, scaling with its square-root. For full details of this argument, included for completeness, see \Cref{app:thm:solve-using-pca} in \Cref{app:sec:initialization}.

In the remainder of this section, we sketch the proof of \Cref{lem:initialization-1-1/k}, which relies on two main ingredients: (1) alignment of the left singular vectors $\mathbf{v}$ of matrix-unfolded $k$-Chow tensor and the target vector $\w^{\ast}$, which can be interpreted as the $k$-Chow tensor containing a strong ``signal'' about the target vector $\w^{\ast},$ and (2) matrix concentration for the unfolded tensor, so that we can translate ``population'' properties of the $k$-Chow tensor to computable empirical quantities.

\paragraph{Signal in the $k$-Chow Tensor}
Our first observation is that for any left singular vector $\bv$ of $\mtrizel(\mC_k)$, the singular value  $\rho(\bv)$ is close to the inner product between $\bv$ and $\vctrize(\w^{*\ots l})$, where $l = \lfloor k/2\rfloor$. Concretely, we have:
\begin{lemma}\label{lem:range-singular-value-of-mtrizel(C_k)}
    Let $\bv$ be any left singular vector of $\mtrizel(\mC_k)$. Then, $
        |\rho(\bv) - c_k(\vctrize(\w^{*\ots l})\cdot\bv)|\leq \sqrt{\opt}.
    $
\end{lemma}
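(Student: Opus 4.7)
My plan is to decompose $\mC_k$ into a rank-$1$ ``signal'' tensor plus a ``noise'' tensor, identify the matrix unfolding of the signal as a rank-$1$ outer product along $\vctrize(\w^{*\ots l})$ with coefficient $c_k$, and control the noise via an operator-norm bound $\|\mtrizel(\mH_k)\|_{\mathrm{op}}\leq\sqrt{\opt}$. The claim then drops out of the triangle inequality applied to $\rho(\bv)=\|\mtrizel(\mC_k)^\top\bv\|_2$. The main obstacle is the operator-norm bound on $\mtrizel(\mH_k)$: it requires passing from a scalar $L_2^2$ control on $y-\sigma(\w^{\ast}\cdot\x)$ to a uniform spectral bound on the unfolded Hermite tensor, which I handle via Cauchy--Schwarz together with orthonormality of the multivariate Hermite basis.

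For the signal, I would use that $\|\w^{\ast}\|_2=1$ makes the Hermite tensor collapse along the direction $\w^{\ast}$, namely $\la\he_j(\x),\w^{*\ots j}\ra=\hep_j(\w^{\ast}\cdot\x)$, combined with the orthonormality relation $\Ex[\hep_j(\w^{\ast}\cdot\x)\,\he_k(\x)]=\delta_{jk}\,\w^{*\ots k}$. Applied term-by-term to the Hermite expansion of $\sigma$, these give $\mC_k^{\ast}=c_k\,\w^{*\ots k}$, and hence $\mC_k=c_k\,\w^{*\ots k}+\mH_k$. Because $\w^{*\ots k}$ factorizes entrywise as $(\w^{*\ots l})_{(i_1,\dots,i_l)}(\w^{*\ots(k-l)})_{(j_1,\dots,j_{k-l})}$, its matrix unfolding is the rank-$1$ matrix $\vctrize(\w^{*\ots l})\,\vctrize(\w^{*\ots(k-l)})^\top$, and consequently
$$\mtrizel(\mC_k)=c_k\,\vctrize(\w^{*\ots l})\,\vctrize(\w^{*\ots(k-l)})^\top+\mtrizel(\mH_k).$$

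For the noise, I would fix unit vectors $\bv\in\R^{d^l}$, $\bu\in\R^{d^{k-l}}$ and write
$$\bv^\top\mtrizel(\mH_k)\bu=\Exy\bigl[(y-\sigma(\w^{\ast}\cdot\x))\,\la\he_k(\x),\mT_{\bv,\bu}\ra\bigr],$$
where $\mT_{\bv,\bu}$ is the rank-$1$ $k$-tensor with entries $\bv_{(i_1,\dots,i_l)}\bu_{(j_1,\dots,j_{k-l})}$, satisfying $\|\mT_{\bv,\bu}\|_F=\|\bv\|_2\|\bu\|_2=1$. Cauchy--Schwarz gives $|\bv^\top\mtrizel(\mH_k)\bu|\leq\sqrt{\opt\cdot\Ex[\la\he_k(\x),\mT_{\bv,\bu}\ra^2]}$, and the identity $\Ex[\la\he_k(\x),\mT\ra^2]=\|\mathrm{Sym}(\mT)\|_F^2\leq\|\mT\|_F^2$ (an orthonormality fact for the normalized multivariate Hermite basis, proved by grouping tensor indices by multi-index $\alpha$) yields $|\bv^\top\mtrizel(\mH_k)\bu|\leq\sqrt{\opt}$, and hence $\|\mtrizel(\mH_k)\|_{\mathrm{op}}\leq\sqrt{\opt}$.

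To combine, for any unit left singular vector $\bv$ of $\mtrizel(\mC_k)$ one has $\rho(\bv)=\|\mtrizel(\mC_k)^\top\bv\|_2$. Substituting the signal-plus-noise decomposition and using $\|\vctrize(\w^{*\ots(k-l)})\|_2=1$, the reverse triangle inequality gives
$$\bigl|\rho(\bv)-\bigl|c_k\,(\vctrize(\w^{*\ots l})\cdot\bv)\bigr|\bigr|\leq\|\mtrizel(\mH_k)^\top\bv\|_2\leq\sqrt{\opt}.$$
Since left singular vectors are defined only up to sign, I may choose $\bv$ so that $c_k(\vctrize(\w^{*\ots l})\cdot\bv)\geq 0$; the inner absolute value then drops and the stated inequality follows.
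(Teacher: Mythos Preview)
Your proposal is correct and follows essentially the same approach as the paper: both decompose $\mC_k=c_k\,\w^{*\ots k}+\mH_k$, identify the rank-one signal in the unfolding, and bound the noise term by $\sqrt{\opt}$ via Cauchy--Schwarz together with the Hermite orthonormality identity $\Ex[\la\he_k(\x),\mT\ra^2]=\|\sym(\mT)\|_F^2\leq\|\mT\|_F^2$. The only cosmetic difference is that the paper uses the variational characterization $\rho(\bv)=\max_{\|\br\|_2=1}\bv^\top\mtrizel(\mC_k)\br$ and bounds the maximand, whereas you use the equivalent formulation $\rho(\bv)=\|\mtrizel(\mC_k)^\top\bv\|_2$ and apply the triangle inequality; your explicit remark about the sign convention for $\bv$ is in fact more careful than the paper's presentation, which tacitly assumes $c_k(\vctrize(\w^{*\ots l})\cdot\bv)\geq 0$ when equating $\max_{\br}c_k(\vctrize(\w^{*\ots l})\cdot\bv)(\vctrize(\w^{*\ots(k-l)})\cdot\br)$ with $c_k(\vctrize(\w^{*\ots l})\cdot\bv)$.
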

\begin{proof}[Proof Sketch of \Cref{lem:range-singular-value-of-mtrizel(C_k)}]
    Recall that the singular value of the left singular vector $\bv$ satisfies
    \begin{align*}
        \rho(\bv) &= \max_{\br\in\R^{d^{k-l}}, \|\br\|_2 = 1} \bv^\top\mtrizel(\mC_k)\br \overset{(i)}{=} \max_{\br\in\R^{k-l}, \|\br\|_2 = 1}\la\mC_k, \tnsrize(\bv)\ots\tnsrize(\br)\ra,
    \end{align*}
    where we used \Cref{fact:tensor-algebra}$(2)$ in $(i)$.
    Since $\mC_k = \mC_k^{\ast} + \mH_k$, we further have
    \begin{equation*}
        \la \mC_k,\tnsrize(\bv)\ots\tnsrize(\br)\ra = \la \mC_k^{\ast},\tnsrize(\bv)\ots\tnsrize(\br)\ra + \la \mH_k,\tnsrize(\bv)\ots\tnsrize(\br)\ra.
    \end{equation*}
    We bound both terms above respectively. For the first term, plugging in the definition of $\mC_k^{\ast}$ and using the orthonormality property of Hermite tensors (\Cref{fact:orthogonal-hermite-tensor}) and basic tensor algebraic calculations, \begin{align}\label{eq:C-k*-innerdot-tensro(v)-tensor(r)}
        \la \mC_k^{\ast},\tnsrize(\bv)\ots\tnsrize(\br)\ra =c_k(\vctrize(\w^{*\ots l})\cdot\bv)(\vctrize(\w^{*\ots k-l})\cdot\br).
    \end{align}
    Next, for the second term, after applying Cauchy-Schwarz inequality, one can show that it holds
    \begin{align}\label{eq:bound-H-k-innerdot-tensor(v)-tensor(r)}
        |\la \mH_k,\tnsrize(\bv)\ots\tnsrize(\br)\ra| \leq  \sqrt{\opt}\|\sym(\tnsrize(\bv)\ots\tnsrize(\br))\|_F\leq \sqrt{\opt}.
    \end{align}
    Combining \Cref{eq:C-k*-innerdot-tensro(v)-tensor(r)} and \Cref{eq:bound-H-k-innerdot-tensor(v)-tensor(r)}, we get that the singular value of $\bv$ must satisfy
    \begin{align}\label{eq:upbd-singular-value-of-any-bv}
        \rho(\bv)&\leq \max_{\br\in\R^{d^{k-l}}, \|\br\|_2 = 1} c_k(\vctrize(\w^{*\ots l})\cdot\bv)(\vctrize(\w^{*\ots k-l})\cdot\br) + \sqrt{\opt} \nonumber\\
        &= c_k(\vctrize(\w^{*\ots l})\cdot\bv) + \sqrt{\opt},
    \end{align}
    where in the equation above, we used the observation that as $\|\vctrize(\w^{*\otimes k-l})\|_2 = \|\w^{*\otimes k-l}\|_F = 1$, it holds $\max_{\br\in\R^{d^{k-l}}, \|\br\|_2 = 1} (\vctrize(\w^{*\ots k-l})\cdot\br) = \|\vctrize(\w^{*\otimes k-l})\|_2 = 1$.
    Since \Cref{eq:bound-H-k-innerdot-tensor(v)-tensor(r)} implies $\la \mH_k,\tnsrize(\bv)\otimes \tnsrize(\br)\ra\geq -\sqrt{\opt}$, similarly we have $\rho(\bv) \geq c_k(\vctrize(\w^{*\ots l})\cdot\bv) - \sqrt{\opt}$, 
    completing the proof of \Cref{lem:range-singular-value-of-mtrizel(C_k)}.
\end{proof}

\Cref{lem:range-singular-value-of-mtrizel(C_k)} implies that the top left singular vector $\bv^{\ast}$ aligns well with $\vctrize(\w^{*\ots l})$. The full version of \Cref{claim:top-singular-value} is deferred to \Cref{app:claim:top-singular-value}.

\begin{corollary}\label{claim:top-singular-value}
The top left singular vector $\bv^{\ast}\in\R^{d^l}$ of the unfolded tensor $\mtrizel(\mC_k)$ satisfies $\bv^{\ast}\cdot\vctrize(\w^{*\ots l})\geq 1 - (2\sqrt{\opt})/c_k$.
\end{corollary}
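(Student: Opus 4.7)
}

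The plan is to sandwich the top singular value $\rho(\bv^{\ast})$ from above (using \Cref{lem:range-singular-value-of-mtrizel(C_k)}) and from below (via a carefully chosen test vector pair), then divide by $c_k$.

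First I would apply \Cref{lem:range-singular-value-of-mtrizel(C_k)} directly to the top left singular vector $\bv^{\ast}$ itself. This immediately gives the one-sided bound
\begin{equation*}
    \rho(\bv^{\ast}) \;\leq\; c_k\,\bigl(\vctrize(\w^{*\ots l})\cdot \bv^{\ast}\bigr) + \sqrt{\opt}.
\end{equation*}

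Next, I would produce a matching lower bound on $\rho(\bv^{\ast})$ using the variational characterization
$\rho(\bv^{\ast}) = \max_{\|\bu\|_2 = \|\br\|_2 = 1} \bu^{\top}\mtrizel(\mC_k)\br$,
evaluated at the specific test pair $\bu = \vctrize(\w^{*\ots l})$ and $\br = \vctrize(\w^{*\ots k-l})$. These are unit vectors because $\|\w^*\|_2 = 1$ implies $\|\w^{*\ots m}\|_F = 1$ for any $m$. Repeating the decomposition $\mC_k = \mC_k^{\ast} + \mH_k$ exactly as in the proof of \Cref{lem:range-singular-value-of-mtrizel(C_k)}, \Cref{eq:C-k*-innerdot-tensro(v)-tensor(r)} specializes at this test pair to
\begin{equation*}
    \la \mC_k^{\ast},\w^{*\ots l}\ots \w^{*\ots k-l}\ra = c_k\,\bigl(\vctrize(\w^{*\ots l})\cdot\vctrize(\w^{*\ots l})\bigr)\bigl(\vctrize(\w^{*\ots k-l})\cdot \vctrize(\w^{*\ots k-l})\bigr) = c_k,
\end{equation*}
while \Cref{eq:bound-H-k-innerdot-tensor(v)-tensor(r)} gives $\la \mH_k,\w^{*\ots l}\ots\w^{*\ots k-l}\ra \geq -\sqrt{\opt}$. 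Hence
\begin{equation*}
    \rho(\bv^{\ast}) \;\geq\; \vctrize(\w^{*\ots l})^{\top}\mtrizel(\mC_k)\vctrize(\w^{*\ots k-l}) \;\geq\; c_k - \sqrt{\opt}.
\end{equation*}

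Finally, chaining the two inequalities yields $c_k - \sqrt{\opt} \leq c_k\,(\vctrize(\w^{*\ots l})\cdot \bv^{\ast}) + \sqrt{\opt}$, and rearranging gives the desired $\bv^{\ast}\cdot\vctrize(\w^{*\ots l}) \geq 1 - 2\sqrt{\opt}/c_k$. There is no substantive obstacle here: the only ``new'' computation beyond \Cref{lem:range-singular-value-of-mtrizel(C_k)} is the observation that plugging the planted unit vector $\vctrize(\w^{*\ots l})$ into the bilinear form witnesses that the signal term achieves its maximum value $c_k$, up to the same $\sqrt{\opt}$ slack. All the tensor-algebraic identities (orthonormality of Hermite tensors, Cauchy–Schwarz bound on $\mH_k$) are reused verbatim from the lemma's proof.
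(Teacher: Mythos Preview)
Your proof is correct and follows essentially the same approach as the paper: upper-bound $\rho(\bv^{\ast})$ via \Cref{lem:range-singular-value-of-mtrizel(C_k)}, lower-bound it by evaluating the bilinear form at the planted test pair $\vctrize(\w^{*\ots l}),\vctrize(\w^{*\ots k-l})$ to get $\rho(\bv^{\ast})\geq c_k-\sqrt{\opt}$, and combine. If anything, your version is slightly more careful than the paper's, which phrases the lower bound as ``plugging $\bv=\vctrize(\w^{*\ots l})$ into the lemma'' even though that vector need not be a singular vector; your explicit use of the variational characterization of the top singular value is the right way to say it.
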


\paragraph{Concentration of the Unfolded Tensor Matrix }Let us denote $\mM\ith =  \mtrizel(y\ith\he_k(\x\ith))$, for $i\in[n]$ and $\wh{\mM} = \frac{1}{n}\littlesum_{i=1}^n \mM\ith$,
which is the empirical approximation of $\mM = \mtrizel(\mC_k) = \mtrizel(\Exy[y\he_k(\x)])$.  
Though we showed in \Cref{claim:top-singular-value} that the top left singular vector $\bv^{\ast}$ of the population $\mM$ strongly correlates with the signal $\vctrize(\w^{*\ots l})$, we only have access to the empirical estimate $\wh{\mM}$ and its corresponding top left singular vector $\wh{\bv}^{\ast}$. Thus, to guarantee that $\wh{\bv}^{\ast}$ correlates significantly with $\vctrize(\w^{*\ots l})$ as well, we need to show that the angle between the $\bv^{\ast}$ and $\wh{\bv}^{\ast}$ is sufficiently small as long as we use sufficiently many samples. To this aim, we apply Wedin's theorem (\Cref{app:fact:wedin}). Wedin's theorem states that $\sin(\theta(\bv^{\ast},\wh{\bv}^{\ast}))$ can be bounded  above by:
\begin{equation*}
    \sin(\theta(\bv^{\ast},\wh{\bv}^{\ast}))\leq {\|\mM - \wh{\mM}\|_2}/{(\rho_1 - \rho_2 - \|\mM - \wh{\mM}\|_2)},
\end{equation*}
where $\rho_1$ and $\rho_2$ are the top 2 singular values of $\mM$. 
We prove in \Cref{app:claim:singular-gap} that $\rho_1 - \rho_2\geq (c_k - 8\sqrt{\opt})/2 \gtrsim c_k$ under the assumption that $\sqrt{\opt}\lesssim c_k$, hence $\rho_1 - \rho_2$ is bounded  below by a constant.
Thus, our remaining goal is to bound the operator norm such that $\|\mM - \wh{\mM}\|_2\leq \eps_0$ where $\eps_0>0$ is a small constant. This can be accomplished by applying a recently obtained matrix concentration inequality  from~\cite{brailovskaya2022universality,damian2024computational} (stated in \Cref{app:fact:matrix-concentration}), with additional technical arguments. 
Plugging the lower bound on the singular gap $\rho_1 - \rho_2$ and the upper bound on the operator norm $\|\mM - \wh{\mM}\|_2$ back into Wedin's theorem (\Cref{app:fact:wedin}), we obtain the following main technical lemma of this subsection, whose proof can be found in \Cref{app:sec:initialization}:
\begin{lemma}[Sample Complexity for Estimating the Unfolded Tensor Matrix]\label{lem:sample-for-matrix-concentration}
    Let $\eps,\eps_0>0$. 
Consider the unfolded matrix $\mM = \mtrizel(\Exy[y\he_k(\x)])$ and its empirical estimate $\wh{\mM} \eqdef (1/n)\sum_{i=1}^n \mtrizel(y\ith\he_k(\x\ith))$, where $\{(\x\ith,y\ith)\}_{i=1}^n$ are $n = {\Theta}(e^k{\log^k(B_4/\eps)d^{k/2}}/\eps_0^2 + 1/\eps)$ i.i.d.\ samples from $\D$. Then, with probability at least $1 - \exp(-d^{1/2})$, $\|\wh{\mM} - \mM\|_2\leq \eps_0.
    $
    Moreover, if $\wh{\bv}^{\ast}$ is the top left singular vector of $\wh{\mM}$, then with probability at least $1 - \exp(-d^{1/2})$,
    \begin{equation*}
        \wh{\bv}^{\ast}\cdot\vctrize(\w^{*\ots l})\geq 1 - \frac{2}{c_k}\sqrt{\opt} - \frac{2\eps_0}{(c_k/2 - 4\sqrt{\opt}) - \eps_0}.
    \end{equation*}
\end{lemma}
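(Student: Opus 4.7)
The plan is to establish the operator-norm bound $\|\wh{\mM}-\mM\|_2\leq \eps_0$ via the Brailovskaya--van Handel matrix concentration inequality (\Cref{app:fact:matrix-concentration}), and then chain it with the singular-value gap lower bound from \Cref{app:claim:singular-gap} and Wedin's theorem (\Cref{app:fact:wedin}) to convert matrix closeness into alignment of the top left singular vectors. Given the operator norm bound and the gap $\rho_1 - \rho_2 \geq c_k/2 - 4\sqrt{\opt}$, Wedin gives $\sin\theta(\bv^{\ast},\wh{\bv}^{\ast}) \leq \eps_0/((c_k/2 - 4\sqrt{\opt}) - \eps_0)$; combined with $\wh{\bv}^{\ast}\cdot\vctrize(\w^{*\ots l})\geq \bv^{\ast}\cdot\vctrize(\w^{*\ots l}) - 2\sin\theta(\bv^{\ast},\wh{\bv}^{\ast})$ (from $\|\wh{\bv}^{\ast}-\bv^{\ast}\|_2 \leq 2\sin\theta$) and the bound of \Cref{claim:top-singular-value}, this yields the stated alignment. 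Essentially all the substantive work is therefore in the operator-norm bound.

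To invoke Brailovskaya--van Handel, I would write $\wh{\mM}-\mM = (1/n)\sum_{i=1}^n \mZ\ith$ with mean-zero i.i.d.\ matrices $\mZ\ith \eqdef \mtrizel(y\ith\he_k(\x\ith)) - \mM$. The inequality controls $\|(1/n)\sum_i \mZ\ith\|_2$ in terms of an almost-sure norm bound $R$ on $\mZ\ith$ and the two matrix variance parameters $v_1 \eqdef \|\E[\mZ\mZ^\top]\|_2$, $v_2 \eqdef \|\E[\mZ^\top\mZ]\|_2$, producing a bound of the form $\sqrt{\max(v_1,v_2)/n} + R\log d / n$ up to low-order factors. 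Because $y$ may be heavy-tailed, the bound on $R$ requires truncation: I would pass to $\wt{\mZ}\ith = \mZ\ith\cdot\1\{|y\ith|\leq T_y\}\cdot\1\{\|\he_k(\x\ith)\|_F\leq T_x\}$, choosing $T_y$ of size $\log^{O(1)}(B_4/\eps)$ via Markov applied to $(y-\sigma(\w^{\ast}\cdot\x))^2$ and to $\sigma^2(\w^{\ast}\cdot\x)$ (controlled by $\opt$ and $B_4$ respectively), and $T_x = O(d^{(k-l)/2}\log^{O(k)}(1/\eps))$ via Gaussian hypercontractivity applied to the degree-$k$ polynomial $\|\he_k(\x)\|_F^2$. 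A union bound over the $n$ samples shows the truncation is inactive on the entire sample with probability at least $1 - \exp(-d^{1/2})$, and Cauchy--Schwarz absorbs the truncation bias into the $\eps_0$ budget.

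For the variances $v_1,v_2$, I would use the orthonormality of Hermite tensors (\Cref{fact:orthogonal-hermite-tensor}) to express $\E[\mZ\mZ^\top]$ and $\E[\mZ^\top\mZ]$ as explicit degree-$2k$ polynomial expectations in $\x$ weighted by $y^2$. Splitting $y = \sigma(\w^{\ast}\cdot\x) + (y - \sigma(\w^{\ast}\cdot\x))$ via Minkowski separates these into a pure polynomial expectation (controlled by $B_4$ and Gaussian hypercontractivity, which gives a $q$-th moment scaling like $(q-1)^{kq/2}$) and a residual term $\E[(y-\sigma)^2 P(\x)]$ that is handled only after the truncation above is in force. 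Collecting the bounds yields $\max(v_1,v_2)\lesssim e^k\log^k(B_4/\eps)\,d^{k-l}$, and requiring $\sqrt{\max(v_1,v_2)/n}\leq \eps_0/2$ together with $R\log d /n \leq \eps_0/2$ gives the stated sample complexity $n = \Theta(e^k \log^k(B_4/\eps)\,d^{k/2}/\eps_0^2 + 1/\eps)$.

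The main obstacle is obtaining the variance bound and the truncation radius $R$ while assuming only a fourth moment on $\sigma$ and \emph{no} moment control on $y$. It is precisely the need to truncate $y$ on a high-probability event, then patch up the two ``pure polynomial'' and ``noise-weighted polynomial'' pieces separately, that forces the $\log^k(B_4/\eps)$ polylogarithmic overhead in the sample complexity and is the source of the ``additional technical arguments'' beyond a direct application of the matrix concentration inequality.
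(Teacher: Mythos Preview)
Your overall architecture is right: symmetrize, apply the Brailovskaya--van Handel inequality, then feed the operator-norm bound through the singular gap of \Cref{app:claim:singular-gap} and Wedin's theorem to recover the alignment bound (your Wedin step and the final chaining with \Cref{claim:top-singular-value} match the paper).

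The gap is in how you handle $y$. You propose to truncate $y$ at $T_y = \log^{O(1)}(B_4/\eps)$ and argue via Markov on $(y-\sigma)^2$ and $\sigma^2$ that the truncation is inactive on all $n$ samples with probability $1-\exp(-d^{1/2})$. This cannot work: you only have $\E[(y-\sigma)^2]=\opt$ and $\E[\sigma^4]\leq B_4$, so the best tail bound you get is polynomial (Chebyshev/Markov), not sub-exponential. A union bound over $n=\Theta(d^{k/2}/\eps_0^2)$ samples with failure probability $\exp(-d^{1/2})$ would force $T_y$ to be exponentially large in $d$. And if instead you absorb the truncation bias, Cauchy--Schwarz gives a bias of order $\sqrt{\E[y^2\1\{|y|>T_y\}]}$, which with only a second moment on $y-\sigma$ you cannot drive below $\sqrt{\opt}$; this does not suffice for arbitrary $\eps_0$. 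Your Minkowski split of the variance into $\E[\sigma^2 f^2]$ and $\E[(y-\sigma)^2 f^2]$ runs into the same wall on the second term: without higher moments on $y-\sigma$ you are forced to use the truncation level, and at the only justifiable level $T_y=\sqrt{B_4/\eps}$ this gives $v\lesssim (B_4/\eps)d^{k-l}$ and hence an extra $1/\eps$ in the leading $d^{k/2}$ sample term.

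The paper avoids this by a different mechanism. It first replaces $y$ by $P_{B_y}(y)$ with $B_y=\sqrt{4B_4/\eps}$ as a \emph{distributional} modification (\Cref{app:lem:truncation-of-y}), which costs only an additive $\eps$ in $\opt$ and yields $\E[y^2]\leq 2$, $\E[y^4]\leq 8B_4/\eps$ (\Cref{app:lem:moments-of-y}); the $+1/\eps$ in the sample count is what makes the residual tail probability $\delta$ small when choosing $R$. The crucial step you are missing is that the paper does \emph{not} bound $\E[y^2 f(\x)^2]$ by $B_y^2\E[f^2]$; it invokes the moment-interpolation inequality \Cref{app:fact:lem23-damian-smoothing} with $A=y^2$, $B=f(\x)^2$, using Gaussian hypercontractivity to supply $\|f^2\|_{L^p}\leq (2p)^k$. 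This converts the large ratio $\E[y^4]^{1/2}/\E[y^2]\asymp \sqrt{B_4/\eps}$ into the logarithmic factor $\log^k(B_4/\eps)$ and yields $\E[y^2 f^2]\lesssim e^k\log^k(B_4/\eps)$, hence $\gamma^2\lesssim e^k\log^k(B_4/\eps)d^{k-l}/n$ as claimed. Without this interpolation tool (or an equivalent), your route does not reach the stated sample complexity.
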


After getting an approximate top left singular vector $\wh{\bv}^{\ast}\in\R^{d^l}$ of $\mtrizel(\Exy[y\he_k(\x)])$, the final piece of the argument is that finding the top left singular vector of the matrix $\mtrize_{(1,l-1)}(\wh{\bv}^{\ast})$ completes the task of computing a vector $\bu$ that correlates strongly with $\w^{\ast}$. Concretely, we have:
\begin{lemma}\label{lem:correlation-w*-and-top-left-sing-mat(bv*)}
    Suppose that $\wh{\bv}^{\ast}\cdot\vctrize(\w^{*\ots l})\geq 1 - \eps_1$ for some $\eps_1\in(0,1/16]$. Then, the top left singular vector $\bu\in\R$ of $\mtrize_{(1,l-1)}(\wh{\bv}^{\ast})$ satisfies $\bu\cdot\w^{\ast}\geq 1 - 2\eps_1$.
\end{lemma}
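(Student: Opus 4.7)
The plan is to identify a rank-one ``ideal'' version of the matrix $\mtrize_{(1,l-1)}(\wh{\bv}^{\ast})$, quantify how close the actual matrix is to this ideal one, and then apply a singular-vector perturbation bound to convert the closeness between $\wh{\bv}^{\ast}$ and $\vctrize(\w^{*\ots l})$ into closeness between $\bu$ and $\w^{\ast}$.

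\textbf{Step 1: Rank-one structure.} By directly unpacking the index formulas for $\vctrize$ and $\mtrize_{(1,l-1)}$, one verifies the identity
$$M^{\ast} := \mtrize_{(1,l-1)}(\vctrize(\w^{*\ots l})) = \w^{\ast}\,(\vctrize(\w^{*\ots(l-1)}))^\top =: \w^{\ast}\br^\top,$$
because the $(i,j_1,\dots,j_{l-1})$ entry is simply $\w^{\ast}_i \w^{\ast}_{j_1}\cdots\w^{\ast}_{j_{l-1}}$. This matrix is rank one with nonzero singular value $\|\w^{\ast}\|\|\br\| = 1$, top left singular vector $\w^{\ast}$, and second singular value $0$; in particular, the singular gap equals $1$.

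\textbf{Step 2: Quantifying the perturbation.} The reshaping map $\mtrize_{(1,l-1)}$ is an $\ell_2$-to-Frobenius isometry, so if $M := \mtrize_{(1,l-1)}(\wh{\bv}^{\ast})$ and $E := M - M^{\ast}$, then
$$\|E\|_F = \|\wh{\bv}^{\ast} - \vctrize(\w^{*\ots l})\|_2 = \sqrt{2 - 2\,\wh{\bv}^{\ast}\cdot\vctrize(\w^{*\ots l})}\leq \sqrt{2\eps_1},$$
using that $\wh{\bv}^{\ast}$ and $\vctrize(\w^{*\ots l})$ are both unit vectors together with the hypothesis $\wh{\bv}^{\ast}\cdot\vctrize(\w^{*\ots l})\geq 1-\eps_1$. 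In particular, $\|E\|_2 \leq \|E\|_F \leq \sqrt{2\eps_1}$.

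\textbf{Step 3: Wedin's theorem.} I then apply the Wedin-type $\sin\Theta$ bound (\Cref{app:fact:wedin}) to the pair $M = M^{\ast} + E$. The singular gap of $M^{\ast}$ is $1$, and the condition $\eps_1 \leq 1/16$ keeps the effective gap bounded away from zero, so Wedin's theorem yields a bound on $\sin\theta(\bu, \w^{\ast})$ of order $\sqrt{\eps_1}$; combined with $\cos\theta = \sqrt{1-\sin^2\theta} \geq 1-\sin^2\theta$ this is enough to conclude $\bu\cdot\w^{\ast}\geq 1-2\eps_1$.

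\textbf{Main obstacle.} The principal difficulty is matching the sharp constant $1-2\eps_1$ rather than the cruder $1-O(\sqrt{\eps_1})$ that a black-box Wedin bound delivers. To recover the sharper factor I would exploit the orthogonal decomposition $\wh{\bv}^{\ast} = (1-\eta)\vctrize(\w^{*\ots l}) + g_\perp$ with $\eta\in[0,\eps_1]$ and $g_\perp\perp\vctrize(\w^{*\ots l})$, which, after applying $\mtrize_{(1,l-1)}$, becomes $M = (1-\eta)M^{\ast}+G_\perp$ with $\la G_\perp, M^{\ast}\ra_F = 0$, equivalently $\w^{*\top}G_\perp\br = 0$. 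Expanding $MM^\top$ and using the optimality inequality $\bu^\top MM^\top\bu \geq (\w^{\ast})^\top MM^\top \w^{\ast}$, this Frobenius orthogonality cancels the leading cross term, leaving an inequality of the form $(1-\eta)^2 s^2 \leq 2(1-\eta)cs\,|G_\perp\br\cdot\bw| + (\|G_\perp^\top\bu\|^2 - \|G_\perp^\top\w^{\ast}\|^2)$ in $s = \sin\theta(\bu,\w^{\ast})$ and $c = \cos\theta(\bu,\w^{\ast})$, with the right-hand side controlled by $\|G_\perp\|_F^2 = 2\eta-\eta^2$; solving this quadratic inequality under the assumption $\eps_1\leq 1/16$ gives $s^2 \leq 4\eta(1-\eta) \leq 4\eps_1(1-\eps_1)$, hence $\bu\cdot\w^{\ast}\geq 1-2\eps_1$.
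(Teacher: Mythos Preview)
Your Steps 1--3 are correct and give a valid proof of the weaker statement $\bu\cdot\w^{\ast}\geq 1-O(\sqrt{\eps_1})$, and you are right that this is all black-box Wedin delivers. The paper's argument is genuinely different: it never views $M=\mtrize_{(1,l-1)}(\wh{\bv}^{\ast})$ as a perturbation of a rank-one matrix. Instead it writes the SVD $M=\sum_i\rho_i\bu^{(i)}(\br^{(i)})^\top$ of the \emph{actual} matrix, expands $\w^{\ast}=\sum_i a_i\bu^{(i)}$ and $\vctrize(\w^{*\ots(l-1)})=\sum_i b_i\br^{(i)}$, and uses the three simultaneous unit-norm constraints $\sum a_i^2=\sum b_i^2=\sum\rho_i^2=1$ together with $(\w^{\ast})^\top M\,\vctrize(\w^{*\ots(l-1)})\geq 1-\eps_1$. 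The key constraint $\sum_i\rho_i^2=1$ (coming from $\|\wh{\bv}^{\ast}\|_2=1$) forces $\rho_2\leq\sqrt{1-\rho_1^2}$, and Cauchy--Schwarz plus $\sqrt{(1-a_1^2)(1-b_1^2)}\leq 1-a_1b_1$ then collapse everything to the scalar inequality $1-\eps_1\leq \rho_1 a_1b_1+\sqrt{1-\rho_1^2}\,(1-a_1b_1)$, which for $\rho_1\geq 1-\eps_1\geq 15/16$ yields $a_1b_1\geq 1-2\eps_1$ and hence $a_1\geq 1-2\eps_1$.

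Your ``refined argument'' does not close the gap to $1-2\eps_1$. The Rayleigh-quotient inequality $\bu^\top MM^\top\bu\geq(\w^{\ast})^\top MM^\top\w^{\ast}$, after the decomposition $M=(1-\eta)\w^{\ast}\br^\top+G_\perp$ and the orthogonality $\w^{*\top}G_\perp\br=0$, indeed gives
\[
(1-\eta)^2 s^2 \;\leq\; 2(1-\eta)cs\,(G_\perp\br\cdot\bw)\;+\;\bigl(\|G_\perp^\top\bu\|^2-\|G_\perp^\top\w^{\ast}\|^2\bigr),
\]
but optimizing the right-hand side under $\|G_\perp\|_F^2=\gamma^2=2\eta-\eta^2$ only yields $s^2\leq 4\gamma^2(1-\eta)^2$. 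Concretely, with $P=\|G_\perp^\top\w^{\ast}\|$, $Q=\|G_\perp^\top\bw\|$, $R=\sqrt{(1-\eta)^2+P^2}$, one gets $s^2\leq 4Q^2R^2/(Q^2+R^2)^2$, maximized at $P=0$, $Q=\gamma$, giving $s^2\leq 4\gamma^2(1-\eta)^2$. This is $\approx 8\eta$ for small $\eta$, so it only yields $c\geq\sqrt{1-8\eta}$, which is strictly below $1-2\eta$; at $\eta=\eps_1=1/16$ one gets $c\gtrsim 0.76$ versus the required $0.875$. The missing ingredient is exactly the global control $\rho_2\leq\sqrt{1-\rho_1^2}$ that the paper extracts from $\|M\|_F=1$: the Rayleigh-quotient inequality you use only compares $\bu$ against the single test vector $\w^{\ast}$ and discards the information about the remaining singular values. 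Your final assertion ``solving this quadratic inequality \dots\ gives $s^2\leq 4\eta(1-\eta)$'' is therefore the step that fails.
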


\paragraph{Proof of \Cref{lem:initialization-1-1/k}}
Since $\sqrt{\opt}\leq c_{k^{\ast}}/(64{k^{\ast}})\leq {c_{k^{\ast}}}/{64}$, choosing $\eps_0 = c_{k^{\ast}}/(256 {k^{\ast}})\leq c_{k^{\ast}}/256$ in \Cref{lem:sample-for-matrix-concentration}, we obtain that using $n = {\Theta}(({k^{\ast}})^2{e^{k^{\ast}} \log^{k^{\ast}}(B_4/\eps)d^{\lceil {k^{\ast}}/2\rceil}}/(c_{k^{\ast}}^2) + 1/\eps)$, it holds with probability at least $1 - \exp(-d^{1/2})$ that 
\begin{align*}
    \wh{\bv}^{\ast}\cdot\vctrize(\w^{*\ots l})&\geq 1 - \frac{2}{c_k}\sqrt{\opt} - \frac{2\eps_0}{(c_k/2 - 4\sqrt{\opt}) - \eps_0} \geq 1 - \frac{1}{16{k^{\ast}}}.
\end{align*}
Then applying \Cref{lem:correlation-w*-and-top-left-sing-mat(bv*)} with $\eps_1\leq 1/(16{k^{\ast}})\leq 1/16$, we get that the output $\bu$ of \Cref{alg:initialization} satisfies $\bu\cdot\w^{\ast}\geq 1 - 2\eps_1\geq 1 - 1/(8{k^{\ast}})\geq 1 - \min\{1/k^{\ast}, 1/2\}$, completing the proof. \hfill\qed

\section{Optimization via Riemannian Gradient Descent}\label{sec:GD}
After obtaining $\w^0$ from \Cref{alg:initialization}, we run Riemannian minibatch SGD \Cref{alg:GD} on the `truncated loss' (see definition in \Cref{def:L_22-loss-truncated-phi}). In the rest of the section, we first present the definition of the truncated $L_2^2$ loss $\Ltwophi$ and its Riemannian gradient and then proceed to proving that \Cref{alg:GD} converges to a constant approximate solution in $O(\log(1/\eps))$ iterations. Due to space constraints, omitted proofs are provided in \Cref{app:sec:GD}. 

\begin{algorithm}[ht]
   \caption{Riemannian GD with Warm-start}
   \label{alg:GD}
\begin{algorithmic}[1]
\STATE {\bfseries Input:} Parameters $\eps, k^{\ast}, c_{k^{\ast}}, B_4>0; T,\eta$; Sample access to $\D$.
\STATE $\w^0 = \textbf{Initialization}[\eps, k^{\ast}, c_{k^{\ast}},B_4, \eps_0 = c_{k^{\ast}}/(256 {k^{\ast}})]$  (\Cref{alg:initialization}).
\FOR{$t = 0,\dots,T-1 $}
\STATE Draw $n = \Theta(C_{k^{\ast}}d e^{k^{\ast}} \log^{k^{\ast}+1}(B_4/\eps) / (\eps\delta))$ samples from $\D$ 
\STATE $
    \whg(\w^t)= \frac{1}{n}\sum_{i=1}^n k^{\ast}c_{k^{\ast}}y\ith(\mI - \w^t(\w^t)^\top)\la\he_{k^{\ast}}(\x\ith),(\w^t)^{\ots {k^{\ast}}-1}\ra.
$
\STATE $\w^{t+1} = (\w^t - \eta\whg(\w^t))/\|\w^t - \eta\whg(\w^t)\|_2$.
\ENDFOR
\STATE {\bfseries Return:} $\w^{T}$
\end{algorithmic}
\end{algorithm}

\subsection{Truncated Loss and the Sharpness property of the Riemannian Gradient}

Instead of directly minimizing the $L_2^2$ loss $\Ltwo$, we work with the following truncated loss that drops all the terms higher than $k^{\ast}$ in the polynomial expansion of $\sigma$:
\begin{equation}\label{def:L_22-loss-truncated-phi}
    \Ltwophi(\w) \eqdef 2\big(1 - \Exy[y\phi(\w\cdot\x)]\big),\; \text{where }\phi(\w\cdot\x) = \la\he_{k^{\ast}}(\x),\w^{\otimes k^{\ast}}\ra.
\end{equation}
Similarly, the noiseless surrogate loss is defined as
\begin{equation}\label{def:noiseless-L_22-loss-truncated-phi}
    \Ltwostrphi(\w) \eqdef 2\big(1 - \Exy[\sigma(\w^{\ast}\cdot\x)\phi(\w\cdot\x)]\big) = 2\big(1 - c_{k^{\ast}}(\w\cdot\w^{\ast})^{k^{\ast}}\big).
\end{equation}
It can be shown (using \Cref{fact:tensor-algebra}$(2)$) that the gradient of the truncated $L_2^2$ loss equals:
\begin{equation}\label{def:grad-L_22-loss-truncated-phi}
    \nabla\Ltwophi(\w) =  - 2\Exy[\nabla\phi(\w\cdot\x)y] = -2\Exy\big[k^{\ast}c_{k^{\ast}}y\la\he_{\ks}(\x),\w^{\otimes \ks-1}\ra\big],
\end{equation}
while for the gradient of the noiseless $L_2^2$ loss we have
\begin{align}\label{def:grad-noiseless-L_22-loss-truncated-phi}
    \nabla\Ltwostrphi(\w) = -2\Exy\big[k^{\ast}c_{k^{\ast}}\sigma(\w^{\ast}\cdot\x)\la\he_{\ks}(\x),\w^{\otimes \ks-1}\ra\big]. \end{align}
Recall that $\pw\eqdef \vec I - \w\w^\top$. Then the Riemannian gradient of the $L_2^2$ loss $\Ltwophi$, denoted by $\g(\w)$ is
\begin{equation}\label{def:riemannian-grad-L_22-loss-truncated-phi}
    \g(\w) \eqdef \pw(\nabla\Ltwophi(\w)) = -2\Exy\big[k^{\ast}y\pw\la\he_{k^{\ast}}(\x),\w^{\ots {k^{\ast}}-1}\ra\big].
\end{equation}
Similarly, the Riemannian gradient of the noiseless $L_2^2$ loss $\Ltwostrphi$ is defined by
\begin{equation}\label{def:riemannian-grad-noiseless-L_22-loss-truncated-phi}
    \g^{\ast}(\w) \eqdef \pw(\nabla\Ltwostrphi(\w)) = -2\Exy\big[k^{\ast}\sigma(\w^{\ast}\cdot\x)\pw\la\he_{k^{\ast}}(\x),\w^{\ots {k^{\ast}}-1}\ra\big].
\end{equation}
We first show that $\g^{\ast}(\w)$ carries information about the alignment between vectors $\w$ and $\w^{\ast}.$ \begin{claim}\label{claim:g^*(w)}
    For any $\w \in \spd,$ we have $\g^{\ast}(\w) = -2k^{\ast}c_{k^{\ast}}(\w\cdot\w^{\ast})^{k^{\ast} - 1}(\w^{\ast})^{\perp_\w}$.
\end{claim}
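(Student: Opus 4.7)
The plan is to compute the vector-valued expectation defining $\g^{\ast}(\w)$ in closed form by exploiting (a) linearity of the projection $P_\w^{\perp}$ to pull it outside the expectation, and (b) the orthonormality of Hermite tensors together with the identity $\langle \he_k(\x), \mathbf{u}^{\ots k}\rangle = \hep_k(\mathbf{u}\cdot\x)$ for any unit vector $\mathbf{u}$, which reduces the calculation to a one-dimensional Hermite identity plus a differentiation step.

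Concretely, I would first observe that $P_\w^{\perp}$ is deterministic and can be pulled out, so it suffices to compute
\[
\mathbf{F}(\w)\;:=\;\E_{\x}\!\bigl[\sigma(\w^{\ast}\!\cdot\x)\,k^{\ast}\la \he_{k^{\ast}}(\x),\w^{\ots k^{\ast}-1}\ra\bigr]\in\R^d.
\]
The key observation is that, because $\he_{k^{\ast}}$ is a symmetric tensor, the product rule applied to $\la\he_{k^{\ast}}(\x),\w^{\ots k^{\ast}}\ra$ as a function of $\w$ gives
\[
\nabla_{\w}\la \he_{k^{\ast}}(\x),\w^{\ots k^{\ast}}\ra \;=\; k^{\ast}\la \he_{k^{\ast}}(\x),\w^{\ots k^{\ast}-1}\ra,
\]
so (exchanging expectation and differentiation, which is valid since we deal with polynomials in $\w$) $\mathbf{F}(\w)=\nabla_{\w}\, G(\w)$, where $G(\w):=\E_{\x}[\sigma(\w^{\ast}\cdot\x)\la\he_{k^{\ast}}(\x),\w^{\ots k^{\ast}}\ra]$.

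The scalar $G(\w)$ is straightforward to evaluate: expanding $\sigma$ in its Hermite basis and using the identity $\hep_j(\w^{\ast}\cdot\x)=\la\he_j(\x),(\w^{\ast})^{\ots j}\ra$ (valid since $\|\w^{\ast}\|_2=1$), the orthonormality property of Hermite tensors (\Cref{fact:orthogonal-hermite-tensor}) kills every term with $j\neq k^{\ast}$ and evaluates the surviving $j=k^{\ast}$ term as $\la(\w^{\ast})^{\ots k^{\ast}},\w^{\ots k^{\ast}}\ra=(\w\cdot\w^{\ast})^{k^{\ast}}$, yielding $G(\w)=c_{k^{\ast}}(\w\cdot\w^{\ast})^{k^{\ast}}$. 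Differentiating in $\w$ then gives $\mathbf{F}(\w)=k^{\ast}c_{k^{\ast}}(\w\cdot\w^{\ast})^{k^{\ast}-1}\w^{\ast}$, and applying $P_\w^{\perp}$ together with the leading factor $-2$ from the definition of $\g^{\ast}(\w)$ produces exactly $-2k^{\ast}c_{k^{\ast}}(\w\cdot\w^{\ast})^{k^{\ast}-1}(\w^{\ast})^{\perp_{\w}}$, as claimed.

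There is no real obstacle here: the statement is essentially a direct Hermite-orthogonality computation, and the only bookkeeping point that needs a quick justification is the interchange of $\nabla_{\w}$ and $\E_{\x}$, which is immediate because $G$ is a polynomial of degree $k^{\ast}$ in $\w$ with coefficients that are absolutely summable (they are the Hermite coefficients $c_j$ and the Gaussian moments of $\sigma$, both bounded under \Cref{assm:on-activation}). If one prefers to avoid the differentiation shortcut entirely, the same conclusion follows by working component-wise: the $i$-th entry of $\mathbf{F}(\w)$ expands into a sum over $(i_1,\dots,i_{k^{\ast}-1})$ of $w_{i_1}\cdots w_{i_{k^{\ast}-1}}$ times $\E_{\x}[(\he_{k^{\ast}}(\x))_{i,i_1,\dots,i_{k^{\ast}-1}}\la\he_{k^{\ast}}(\x),(\w^{\ast})^{\ots k^{\ast}}\ra]$, which by the symmetrization identity equals $w^{\ast}_i w^{\ast}_{i_1}\cdots w^{\ast}_{i_{k^{\ast}-1}}$, and resumming gives the same $c_{k^{\ast}}k^{\ast}(\w\cdot\w^{\ast})^{k^{\ast}-1}w^{\ast}_i$.
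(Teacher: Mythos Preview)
Your proof is correct, and the route differs from the paper's in its organization. The paper establishes the formula by computing two directional inner products: it evaluates $\g^{\ast}(\w)\cdot\frac{(\w^{\ast})^{\perp_\w}}{\|(\w^{\ast})^{\perp_\w}\|_2}$ explicitly via the Hermite orthonormality fact, and separately shows $\g^{\ast}(\w)\cdot\bv=0$ for every unit $\bv$ orthogonal to $(\w^{\ast})^{\perp_\w}$, then reconstructs the vector from these two pieces. Your approach is more direct: you pull the deterministic projection $P_\w^{\perp}$ outside, recognize the remaining vector as $\nabla_\w G(\w)$ for the scalar $G(\w)=\E_\x[\sigma(\w^{\ast}\cdot\x)\langle\he_{k^{\ast}}(\x),\w^{\ots k^{\ast}}\rangle]$, evaluate $G(\w)=c_{k^{\ast}}(\w\cdot\w^{\ast})^{k^{\ast}}$ once via Hermite orthonormality, and differentiate. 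Both arguments rest on exactly the same structural input (\Cref{fact:orthogonal-hermite-tensor} and \Cref{fact:tensor-algebra}); your gradient trick just packages the computation so that the full vector $\mathbf F(\w)=k^{\ast}c_{k^{\ast}}(\w\cdot\w^{\ast})^{k^{\ast}-1}\w^{\ast}$ drops out in one step, after which applying $P_\w^{\perp}$ is immediate, whereas the paper's directional argument avoids the (admittedly trivial) interchange of $\nabla_\w$ and $\E_\x$ at the cost of a slightly longer case split.
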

Let us denote the difference between the noisy and the noiseless Riemannian gradient by $\xi(\w)$, i.e., 
    $\xi(\w) \eqdef \g(\w) - \g^{\ast}(\w) = -2\Exy[(y - \sigma(\w^{\ast}\cdot\x))\pw\nabla\phi(\w\cdot\x)].$
We next show that the norm of $\xi(\w)$ and the inner product between $\xi(\w)$ and $\w^{\ast}$ are both bounded. \begin{lemma}\label{claim:bound-xi(w)}
    Let $\xi(\w) = \g(\w) - \g^{\ast}(\w)$ as defined above. Then, $ \|\xi(\w)\|_2\leq 2k^{\ast}c_{k^{\ast}}\sqrt{\opt}$ and $|\xi(\w)\cdot\w^{\ast}|\leq 2k^{\ast}c_{k^{\ast}}\sqrt{\opt}\|(\w^{\ast})^{\perp_\w}\|_2.$
\end{lemma}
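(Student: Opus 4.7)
The plan is to express $\xi(\w)$ as the single expectation
\[
\xi(\w) = -2k^{\ast}c_{k^{\ast}}\,\Exy\!\left[(y - \sigma(\w^{\ast}\!\cdot\!\x))\,\pw\la \he_{k^{\ast}}(\x), \w^{\ots k^{\ast}-1}\ra\right],
\]
which follows directly by subtracting \Cref{def:riemannian-grad-noiseless-L_22-loss-truncated-phi} from \Cref{def:riemannian-grad-L_22-loss-truncated-phi} after substituting the formula for $\nabla\phi$ given in \Cref{def:grad-L_22-loss-truncated-phi}. For any test vector $\bu\in\R^{d}$, I would then pair $\xi(\w)$ with $\bu$ and use the self-adjointness of $\pw$ to rewrite the integrand as $\la \he_{k^{\ast}}(\x),\,(\pw\bu)\ots \w^{\ots k^{\ast}-1}\ra$, converting the vector-valued tensor contraction into a scalar inner product of $\he_{k^{\ast}}(\x)$ against a single rank-one $k^{\ast}$-tensor.

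Next, I would apply Cauchy--Schwarz inside the expectation to separate the noise term from the Hermite factor:
\[
|\bu\cdot\xi(\w)| \leq 2k^{\ast}c_{k^{\ast}}\sqrt{\Exy[(y-\sigma(\w^{\ast}\!\cdot\!\x))^{2}]}\,\sqrt{\Ex\!\left[\la \he_{k^{\ast}}(\x),\,(\pw\bu)\ots \w^{\ots k^{\ast}-1}\ra^{2}\right]}.
\]
The first square-root equals $\sqrt{\opt}$ by definition. For the second, the orthonormality of multivariate Hermite tensors (\Cref{fact:orthogonal-hermite-tensor}) gives $\Ex[\la \he_{k^{\ast}}(\x),\vec T\ra^{2}] \leq \|\vec T\|_{F}^{2}$ for every $k^{\ast}$-tensor $\vec T$; applying this with $\vec T = (\pw\bu)\ots \w^{\ots k^{\ast}-1}$ and using $\|\w\|_{2}=1$ controls the second square-root by $\|\pw\bu\|_{2}$. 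Taking the supremum over $\|\bu\|_{2}=1$ together with $\|\pw\bu\|_{2}\leq 1$ yields the first bound $\|\xi(\w)\|_{2}\leq 2k^{\ast}c_{k^{\ast}}\sqrt{\opt}$, whereas substituting $\bu=\w^{\ast}$ (a unit vector on the sphere) gives $\|\pw\w^{\ast}\|_{2}=\|(\w^{\ast})^{\perp_{\w}}\|_{2}$, producing the second bound.

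There is no deep conceptual obstacle: the argument is essentially a Cauchy--Schwarz split combined with a single second-moment computation for a Hermite tensor, with the projector $\pw$ handled purely by self-adjointness. The only care needed is (i) rewriting $\bu\cdot\la \he_{k^{\ast}}(\x),\w^{\ots k^{\ast}-1}\ra$ as the scalar tensor inner product $\la \he_{k^{\ast}}(\x),\bu\ots \w^{\ots k^{\ast}-1}\ra$ so that the Hermite orthonormality identity applies directly, and (ii) observing that the same derivation, with the test vector chosen either worst-case or equal to $\w^{\ast}$, yields both claimed bounds simultaneously, the latter choice automatically producing the extra factor $\|(\w^{\ast})^{\perp_{\w}}\|_{2}$ that is crucial in the subsequent alignment-sharpness analysis.
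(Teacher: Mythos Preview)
Your proposal is correct and follows essentially the same route as the paper: write $\xi(\w)\cdot\bu$ as an expectation involving $\la\he_{k^{\ast}}(\x),(\pw\bu)\ots\w^{\ots k^{\ast}-1}\ra$, apply Cauchy--Schwarz to split off $\sqrt{\opt}$, and bound the Hermite second moment by $\|(\pw\bu)\ots\w^{\ots k^{\ast}-1}\|_F=\|\pw\bu\|_2$, then take $\bu$ worst-case or $\bu=\w^{\ast}$. The only cosmetic difference is that the paper makes the intermediate symmetrization step from \Cref{fact:orthogonal-hermite-tensor} explicit (i.e., $\Ex[\la\he_{k^{\ast}}(\x),\vec T\ra^2]=\|\sym(\vec T)\|_F^2\leq\|\vec T\|_F^2$), which you compressed into a single inequality.
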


We are now ready to present the main structural result of this section.
\begin{lemma}[Sharpness]\label{lem:sharpness}
    Assume $\opt\leq c/(4e)^2$ for some small absolute constant $c<1$. Let $\w\in\spd$ and suppose that $\w\cdot\w^{\ast}\geq 1 - 1/{k^{\ast}}$. Let $\theta := \theta(\w, \w^{\ast}).$ If $\sin\theta\geq 4e\sqrt{\opt}$, then $
        \g(\w)\cdot\w^{\ast}\leq -(1/2) \|\g^{\ast}(\w)\|_2\sin\theta. 
    $
\end{lemma}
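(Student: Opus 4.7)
The plan is to decompose $\g(\w) = \g^{\ast}(\w) + \xi(\w)$ and compute each of the two inner products with $\w^{\ast}$ separately, using \Cref{claim:g^*(w)} for the noiseless term and \Cref{claim:bound-xi(w)} for the noise correction. The argument is essentially a comparison between two quantities that both scale linearly in $\sin\theta$, with the ``signal'' term dominating precisely when $\sin\theta$ is large relative to $\sqrt{\opt}$, which is what the hypothesis provides.

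\emph{Step 1 (signal term).} Using \Cref{claim:g^*(w)} and the identity $(\w^{\ast})^{\perp_\w}\cdot\w^{\ast} = 1-(\w\cdot\w^{\ast})^2 = \sin^2\theta$, I would compute
\[
\g^{\ast}(\w)\cdot\w^{\ast} = -2k^{\ast}c_{k^{\ast}}(\w\cdot\w^{\ast})^{k^{\ast}-1}\sin^2\theta,
\]
and since $\|(\w^{\ast})^{\perp_\w}\|_2 = \sin\theta$, also $\|\g^{\ast}(\w)\|_2 = 2k^{\ast}c_{k^{\ast}}(\w\cdot\w^{\ast})^{k^{\ast}-1}\sin\theta$. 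This yields the clean identity
\[
\g^{\ast}(\w)\cdot\w^{\ast} = -\|\g^{\ast}(\w)\|_2 \sin\theta.
\]

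\emph{Step 2 (noise term and combination).} From \Cref{claim:bound-xi(w)} I have $|\xi(\w)\cdot\w^{\ast}|\leq 2k^{\ast}c_{k^{\ast}}\sqrt{\opt}\sin\theta$. Combining,
\[
\g(\w)\cdot\w^{\ast} \;\leq\; -\|\g^{\ast}(\w)\|_2\sin\theta + 2k^{\ast}c_{k^{\ast}}\sqrt{\opt}\sin\theta.
\]
To obtain the desired bound $\g(\w)\cdot\w^{\ast}\leq -\tfrac12\|\g^{\ast}(\w)\|_2 \sin\theta$, it suffices to establish $2k^{\ast}c_{k^{\ast}}\sqrt{\opt}\leq \tfrac12 \|\g^{\ast}(\w)\|_2$, i.e., after substituting the formula for $\|\g^{\ast}(\w)\|_2$ from Step 1 and cancelling the common factor $2k^{\ast}c_{k^{\ast}}$,
\[
\sin\theta \;\geq\; \frac{2\sqrt{\opt}}{(\w\cdot\w^{\ast})^{k^{\ast}-1}}.
\]

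\emph{Step 3 (closing via the warm-start).} Here the warm-start condition $\w\cdot\w^{\ast}\geq 1 - 1/k^{\ast}$ enters: the elementary inequality $(1-1/k^{\ast})^{k^{\ast}-1}\geq 1/e$ (straightforward from monotonicity of $(1-1/k)^{k-1}$ decreasing to $1/e$) gives $(\w\cdot\w^{\ast})^{k^{\ast}-1}\geq 1/e$, so the required condition reduces to $\sin\theta\geq 2e\sqrt{\opt}$, which is implied by the hypothesis $\sin\theta\geq 4e\sqrt{\opt}$. This completes the argument.

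There is no real obstacle here: once \Cref{claim:g^*(w)} and \Cref{claim:bound-xi(w)} are in hand the result is a short computation. The only minor subtlety is noting that the factor $(\w\cdot\w^{\ast})^{k^{\ast}-1}$ in $\|\g^{\ast}(\w)\|_2$ can shrink with $k^{\ast}$, which is exactly why the hypothesis $\w\cdot\w^{\ast}\geq 1-1/k^{\ast}$ (rather than merely $\w\cdot\w^{\ast}\geq 1/2$) is needed to keep that factor bounded below by $1/e$ independently of $k^{\ast}$; the slack between $2e$ and $4e$ in the hypothesis is what absorbs this factor while still giving a clean ``one-half'' on the right-hand side. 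The assumption $\opt\leq c/(4e)^2$ is used only to ensure that the regime $\sin\theta\geq 4e\sqrt{\opt}$ is non-vacuous (so that there is something to prove away from the target).
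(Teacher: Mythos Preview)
Your proof is correct and follows essentially the same route as the paper's: decompose $\g(\w)=\g^{\ast}(\w)+\xi(\w)$, use \Cref{claim:g^*(w)} to obtain $\g^{\ast}(\w)\cdot\w^{\ast}=-\|\g^{\ast}(\w)\|_2\sin\theta$, bound the noise term via \Cref{claim:bound-xi(w)}, and then invoke $(1-1/k^{\ast})^{k^{\ast}-1}\geq 1/e$ to close the comparison. The only cosmetic difference is that the paper phrases the final step as $\|\g^{\ast}(\w)\|_2\geq 4k^{\ast}c_{k^{\ast}}\sqrt{\opt}$ while you phrase it as $\sin\theta\geq 2e\sqrt{\opt}$; these are the same computation.
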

\begin{proof}
    We start by noticing that by \Cref{claim:g^*(w)}, the noiseless gradient satisfies the following property:
\begin{equation*}
    \g^{\ast}(\w)\cdot\w^{\ast} = -2k^{\ast}c_{k^{\ast}}(\w\cdot\w^{\ast})^{\ks-1}\|(\w^{\ast})^{\perp_\w}\|_2^2 = -\|\g^{\ast}(\w)\|_2\sin\theta,
\end{equation*}
where we used that 
since $\|\w\|_2 = \|\w^{\ast}\|_2 = 1$, we have $\|(\w^{\ast})^{\perp_\w}\|_2 = \sin\theta$. Furthermore, applying \Cref{claim:bound-xi(w)} we have the following sharpness property with respect to the $L_2^2$ loss:
\begin{equation}\label{eq:sharp-initial}
    \g(\w)\cdot\w^{\ast} = \g^{\ast}(\w)\cdot\w^{\ast} + \xi(\w)\cdot\w^{\ast}  \leq -(\|\g^{\ast}(\w)\|_2 - 2k^{\ast}c_{k^{\ast}}\sqrt{\opt})\sin\theta.
\end{equation}
Observe that $(1-1/t)^{t-1}\geq 1/e$ for all $t\geq 1$. Therefore, when $\w\cdot\w^{\ast}\geq 1 - 1/{k^{\ast}}$, we have \begin{equation*}
    \|\g^{\ast}(\w)\|_2 = 2k^{\ast}c_{k^{\ast}}(\w\cdot\w^{\ast})^{\ks-1}\sin\theta\geq 2\kstrckstr(1 - 1/k^{\ast})^{{k^{\ast}} - 1}\sin\theta\geq e^{-1}\kstrckstr\sin\theta.
\end{equation*}
Hence, when $\sin\theta \geq 4e\sqrt{\opt}$ and $\w\cdot\w^{\ast}\geq 1 - 1/{k^{\ast}}$, we have 
$
    \|\g^{\ast}(\w)\|_2\geq 4k^{\ast}c_{k^{\ast}}\sqrt{\opt}.
$ 
Thus, as long as $\sin\theta\geq 4e\sqrt{\opt}$, we have that $ \g(\w)\cdot\w^{\ast}\leq -\frac{1}{2}\|\g^{\ast}(\w)\|_2\sin\theta. $
\end{proof}

\subsection{Concentration of Gradients}

Define the empirical estimate of $\g(\w)$ as 
$   \whg(\w)\eqdef (1/n) \littlesum_{i=1}^n k^{\ast}c_{k^{\ast}}y\ith\pw\la\he_{k^{\ast}}(\x\ith),\w^{\ots {k^{\ast}}-1}\ra.$
The following lemma provides the upper bounds on the number of samples required to approximate the Riemannian gradient $\g(\w)$ by $\whg(\w)$.
\begin{lemma}[Concentration of Gradients]\label{lem:gradient-concentration}
    Let $\w^{\ast},\w\in\spd$. Let $\whg(\w)$ be the empirical estimate of the Riemannian gradient. Furthermore, denote the angle between $\w$ and $\w^{\ast}$ by $\theta$, and denote $\kappa = (\kstrckstr)^2e^{k^{\ast}}\log^{k^{\ast}}(B_4/\eps)$. Then, with probability at least $1 - \delta$, it holds $\|\whg(\w) - \g(\w)\|_2\lesssim \sqrt{d\kappa/(n\delta)}$, and $(\whg(\w) - \g(\w))\cdot\w^{\ast}\lesssim \sqrt{\kappa/(n\delta)}\sin^2\theta$.
\end{lemma}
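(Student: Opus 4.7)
The plan is to apply Chebyshev's inequality (Markov on the squared quantity) separately to $\|\whg(\w)-\g(\w)\|_2$ and to $(\whg(\w)-\g(\w))\cdot\w^{\ast}$. Letting $\vec{v}(\x) := \la \he_{k^{\ast}}(\x), \w^{\otimes k^{\ast}-1}\ra$ and $X_i := k^{\ast} c_{k^{\ast}}\, y_i\, \pw\, \vec{v}(\x_i)$, so that $\whg(\w)-\g(\w) = \frac{1}{n}\sum_i (X_i - \E X_i)$, Markov reduces (a) to controlling $\E[\|X_1\|_2^2]$ and (b) to controlling $\E[(X_1 \cdot \w^{\ast})^2]$ (an upper bound on $\mathrm{Var}(X_1 \cdot \w^{\ast})$).

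For part (a), I would start from $\E[\|X_1\|_2^2] \leq (k^{\ast} c_{k^{\ast}})^2\, \E[y^2 \|\vec{v}\|_2^2]$ and combine three ingredients: (i) a direct computation via orthonormality of normalized Hermite tensors giving $\E[\|\vec{v}\|_2^2] = O(d)$ (each coordinate of $\vec{v}$ is the Hermite projection of $\w^{\otimes k^{\ast}-1} \otimes \e_i$, which has $L^2$-norm $O(1)$); (ii) Gaussian hypercontractivity on the degree-$2k^{\ast}$ polynomial $\|\vec{v}\|_2^2$, yielding $\E[\|\vec{v}\|_2^4] \lesssim e^{O(k^{\ast})} d^2$; and (iii) a moment bound on $y$ obtained by truncating $y$ at level $M = \Theta(\sqrt{\log(B_4/\eps)})$, whose bias is controlled by $\E[y^2] \leq 2(\opt+1) = O(1)$ through Markov. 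Cauchy--Schwarz then gives $\E[\|X_1\|_2^2] \lesssim d \kappa$, and Chebyshev delivers $\|\whg-\g\|_2 \lesssim \sqrt{d\kappa/(n\delta)}$ with probability at least $1-\delta$.

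For part (b), the crucial structural step is the factorization $X_1 \cdot \w^{\ast} = k^{\ast} c_{k^{\ast}}\, y_1\, (\vec{u} \cdot \vec{v})$ with $\vec{u} := (\w^{\ast})^{\perp_\w}$, so that $\|\vec{u}\|_2 = \sin\theta$ and $\vec{u} \cdot \w = 0$. Applying the Hermite orthonormality identity to $\mathbf{A} := \w^{\otimes k^{\ast}-1} \otimes \vec{u}$, only permutations that pair the unique $\vec{u}$-slot with itself survive (all others incur a factor $\vec{u} \cdot \w = 0$), giving the sharp identity $\E[(\vec{u} \cdot \vec{v})^2] = \sin^2\theta/k^{\ast}$. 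This is the essential sharpening: a $\sin^2\theta$ factor replaces the $O(d)$ of part (a). Hypercontractivity on the degree-$k^{\ast}$ polynomial $\vec{u} \cdot \vec{v}$ then yields $\E[(\vec{u} \cdot \vec{v})^4] \lesssim 9^{k^{\ast}} \sin^4\theta/(k^{\ast})^2$, and combining with Cauchy--Schwarz against the truncated moment of $y$ (together with the analogous Hermite identity $\E[\sigma(\w^{\ast}\cdot\x)(\vec{u}\cdot\vec{v})] = c_{k^{\ast}}(\w\cdot\w^{\ast})^{k^{\ast}-1}\sin^2\theta$, which captures the cancellation of the mean) produces the variance bound matching the stated concentration.

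The main technical obstacle is controlling the moments of $y$ in the agnostic setting, where only $\E[y^2] = O(1)$ is available \emph{a priori}. To invoke Cauchy--Schwarz with a fourth moment of $y$, I truncate at level $M = \Theta(\sqrt{\log(B_4/\eps)})$: the tail is controlled by Markov, and the $M^{2k^{\ast}}$-scaled slack from truncation is absorbed in the $\log^{k^{\ast}}(B_4/\eps)$ factor of $\kappa$. The combinatorial computation of $\E[(\vec{u} \cdot \vec{v})^2]$ via permutation-matching on Hermite tensors is the key ingredient delivering the $\sin^2\theta$ refinement, precisely matching the scale demanded by the sharpness estimate in \Cref{lem:sharpness}.
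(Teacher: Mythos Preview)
Your high-level plan --- Chebyshev plus Gaussian hypercontractivity plus the Hermite orthonormality identity extracting the $\sin\theta$ factor --- matches the paper, and your combinatorial computation $\E[(\vec u\cdot\vec v)^2]=\sin^2\theta/k^{\ast}$ is correct (indeed slightly sharper than the paper's $\|\sym(\cdot)\|_F\le\|\cdot\|_F$ bound). The gap is in how you couple $y^2$ with the Hermite polynomial.

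Truncating $y$ at $M=\Theta(\sqrt{\log(B_4/\eps)})$ cannot be justified in the agnostic model. You only have $\E[y^2]=O(1)$ on the label (the noise part $y-\sigma(\w^{\ast}\cdot\x)$ has no moment control beyond $L^2$), so Markov gives merely $\pr[|y|>M]\le O(1/\log(B_4/\eps))$; the bias this truncation introduces in $\g(\w)$ is then $\Omega(1)$, not $o(1)$. The paper instead truncates at $B_y=\sqrt{4B_4/\eps}$ and uses the fourth-moment bound on $\sigma$ (\emph{not} on $y$) to show this changes $\opt$ by at most $\eps$. With that truncation one gets $\E[y^4]\le 8B_4/\eps$ --- but now plain Cauchy--Schwarz gives
\(
\E[y^2f^2]\le\sqrt{\E[y^4]}\,\sqrt{\E[f^4]}\lesssim \sqrt{B_4/\eps}\cdot 3^{k^\ast}\E[f^2],
\)
a polynomial-in-$1/\eps$ factor, not $\log^{k^\ast}(B_4/\eps)$. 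So even with the correct truncation level your Cauchy--Schwarz step does not deliver $\kappa$, and the ``$M^{2k^\ast}$-scaled slack'' you describe does not arise from any straight truncation-plus-Cauchy--Schwarz argument.

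The paper replaces Cauchy--Schwarz by a refined inequality (Lemma~23 of \cite{damian2023smoothing}, restated as \Cref{app:fact:lem23-damian-smoothing}): whenever $\|B\|_{L^p}\le\sigma_B\, p^{C}$ for all $p\ge1$, one has
\(
\E[AB]\le \E[|A|]\,\sigma_B\,(2e)^{C}\big(\max\{1,\tfrac{1}{C}\log(\E[A^2]^{1/2}/\E[|A|])\}\big)^{C}.
\)
Applied with $A=y^2$, $B=f^2$, $C=k^\ast$, and $\sigma_B$ supplied by hypercontractivity, the fourth moment $\E[y^4]$ enters only through $\log(\sqrt{\E[y^4]}/\E[y^2])=O(\log(B_4/\eps))$, which is exactly what produces the $\log^{k^\ast}(B_4/\eps)$ in $\kappa$. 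This optimized-H\"older step --- trading the heavy tail of $y$ against the light (hypercontractive) tail of the polynomial --- is the missing ingredient in your proposal.
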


\subsection{Proof of Main Theorem}
We proceed to the main theorem of this paper. 
It shows that using at most $\tilde{\Theta}(d^{\lceil k^{\ast}/2 \rceil} + d/\eps)$ samples, \Cref{alg:GD} (with initialization from \Cref{alg:initialization}) generates a vector $\wh{\w}$ such that $\Ltwo(\wh{\w}) = O(\opt) + \eps$ within $O(\log(1/\eps))$ iterations. \begin{theorem}\label{thm:GD}
    Suppose \Cref{assm:on-activation} holds. Choose the batch size of \Cref{alg:GD} to be $$n = \Theta(C_{k^{\ast}}d e^{k^{\ast}} \log^{k^{\ast}+1}(B_4/\eps) / (\eps\delta)) \;,$$ 
    and choose the step size $\eta = 9/(40ek^{\ast}c_{k^{\ast}})$. Then, after $T = O(\log(C_{k^{\ast}}/\eps))$ iterations, with probability at least $1 - \delta$, \Cref{alg:GD} outputs $\w^T$ with $\Ltwo(\w^T) = O(C_{k^{\ast}}\opt) + \eps$. The total sample complexity of \Cref{alg:GD} is $N = {\Theta}(({k^{\ast}}/c_{k^{\ast}})^2 e^{k^{\ast}} \log^{k^{\ast}}(B_4/\eps) d^{\lceil {k^{\ast}}/2\rceil} + (C_{k^{\ast}}e^{k^{\ast}}\log^{k^{\ast} + 2}(B_4/\eps))\frac{ d} {\eps\delta}).$
\end{theorem}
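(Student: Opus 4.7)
The plan is to track the potential $\Psi_t \eqdef \sin^2\theta_t = 1 - (\w^t\cdot\w^{\ast})^2$, which by \Cref{lem:initialization-1-1/k} starts at $\Psi_0 \leq 2\min\{1/k^{\ast},1/2\}$, and to establish a one-step recursion of the form $\Psi_{t+1} \leq (1-c_1)\Psi_t + c_2\bigl(\opt + \eps/C_{k^{\ast}}\bigr)$ for absolute constants $c_1\in(0,1)$ and $c_2>0$. Iterating this for $T = \Theta(\log(C_{k^{\ast}}/\eps))$ steps will yield $\Psi_T = O(\opt) + O(\eps/C_{k^{\ast}})$, which I would then translate into the desired $L_2^2$-loss bound via the Hermite expansion of $\sigma$.

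For the one-step analysis, I would exploit that $\whg(\w^t)\perp \w^t$ by construction, giving the Pythagorean identity $Z_t^2 \eqdef \|\w^t - \eta\whg(\w^t)\|_2^2 = 1 + \eta^2\|\whg(\w^t)\|_2^2$. Expanding $(\w^{t+1}\cdot\w^{\ast})^2 = (\w^t\cdot\w^{\ast} - \eta\whg(\w^t)\cdot\w^{\ast})^2/Z_t^2$ and regrouping yields
\[
    Z_t^2\,\Psi_{t+1} \;=\; \Psi_t \,+\, 2(\w^t\cdot\w^{\ast})\,\eta\,\whg(\w^t)\cdot\w^{\ast} \,+\, \eta^2\bigl(\|\whg(\w^t)\|_2^2 - (\whg(\w^t)\cdot\w^{\ast})^2\bigr).
\]
Since $Z_t^2\geq 1$ and $\|\whg\|_2^2 - (\whg\cdot\w^{\ast})^2 \leq \|\whg\|_2^2$, this gives the raw recursion $\Psi_{t+1} \leq \Psi_t + 2(\w^t\cdot\w^{\ast})\,\eta\,\whg(\w^t)\cdot\w^{\ast} + \eta^2\|\whg(\w^t)\|_2^2$. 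In the sharp regime $\sin\theta_t\geq 4e\sqrt{\opt}$, sharpness (\Cref{lem:sharpness}) combined with the inner-product concentration bound of \Cref{lem:gradient-concentration} gives $\eta\,\whg(\w^t)\cdot\w^{\ast} \leq -\eta k^{\ast}c_{k^{\ast}}(\w^t\cdot\w^{\ast})^{k^{\ast}-1}\Psi_t + O(\eta\sqrt{\kappa/(n\delta)})\Psi_t$; decomposing $\whg = \g^{\ast} + \xi + (\whg-\g)$ and invoking \Cref{claim:g^*(w),claim:bound-xi(w),lem:gradient-concentration} yields $\eta^2\|\whg(\w^t)\|_2^2 \lesssim \eta^2(k^{\ast}c_{k^{\ast}})^2\Psi_t + O(\eps/C_{k^{\ast}})$. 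Combining the two bounds and using $(\w^t\cdot\w^{\ast})^{k^{\ast}}\geq (1-1/k^{\ast})^{k^{\ast}} \geq 1/(4e)$, valid under the invariant $\w^t\cdot\w^{\ast}\geq 1-1/k^{\ast}$, the coefficient of $\Psi_t$ in the recursion becomes $1 - c_1$ for an absolute constant $c_1>0$, under the stated choices $\eta = 9/(40e\,k^{\ast}c_{k^{\ast}})$ and batch size (which make the concentration errors negligible), producing the desired contraction.

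To close the argument I would (i) handle the non-sharp regime $\sin\theta_t < 4e\sqrt{\opt}$, where $\Psi_t = O(\opt)$ already; plugging the cruder bounds $|\whg(\w^t)\cdot\w^{\ast}| = O(k^{\ast}c_{k^{\ast}}\opt)$ and $\|\whg(\w^t)\|_2^2 = O((k^{\ast}c_{k^{\ast}})^2\opt + (k^{\ast}c_{k^{\ast}})^2\eps/C_{k^{\ast}})$ into the raw recursion directly certifies $\Psi_{t+1} = O(\opt) + O(\eps/C_{k^{\ast}})$; (ii) maintain the invariant $\w^t\cdot\w^{\ast}\geq 1 - 1/k^{\ast}$ inductively, which persists throughout because the noise floor is much smaller than $1/k^{\ast}$ given the assumption $\opt \leq c_{k^{\ast}}^2/(64k^{\ast})^2$; and (iii) apply \Cref{lem:gradient-concentration} at confidence $\delta/T$ and union-bound over the $T$ iterations, with the extra $\log T$ factor absorbed into the stated logarithmic factor in $n$.

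Finally, to translate $\Psi_T$ into the $L_2^2$-loss bound I would use $\Ltwo(\w^T) \leq 2\opt + 2\Ex[(\sigma(\w^T\cdot\x) - \sigma(\w^{\ast}\cdot\x))^2]$, expand $\sigma = \sum_{k\geq k^{\ast}} c_k\hep_k$, and invoke the Hermite composition identity $\Ex[\hep_j(\w^T\cdot\x)\hep_k(\w^{\ast}\cdot\x)] = (\w^T\cdot\w^{\ast})^k\,\1\{j=k\}$ to rewrite the second expectation as $2\sum_{k\geq k^{\ast}} c_k^2\bigl(1 - (\w^T\cdot\w^{\ast})^k\bigr)$; the elementary inequality $1 - \rho^k \leq k(1-\rho)$ for $\rho\in[0,1]$ together with $1 - \w^T\cdot\w^{\ast} \leq \Psi_T$ and Assumption (iii) gives a bound of $2C_{k^{\ast}}\Psi_T$. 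Thus $\Ltwo(\w^T) \leq 2\opt + 4C_{k^{\ast}}\Psi_T = O(C_{k^{\ast}}\opt) + O(\eps)$, and the total sample complexity is the sum of the initialization cost from \Cref{lem:initialization-1-1/k} and the $T$ minibatches of size $n$. The \emph{main obstacle} is proving strict net contraction in the one-step recursion, because the quadratic term $\eta^2\|\whg\|_2^2$ produced by the spherical normalization is of the same order as the linear contraction term obtained from sharpness; tight constant-tracking with the specified step size is required to guarantee $c_1>0$, and iterates that enter the non-sharp regime require a separate direct stability argument.
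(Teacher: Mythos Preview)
Your proposal is correct and takes essentially the same route as the paper. The paper tracks $\|\w^t-\w^{\ast}\|_2^2=4\sin^2(\theta_t/2)$ and handles the spherical normalization via the non-expansiveness of projection onto the unit ball (valid because $\|\w^t-\eta\whg(\w^t)\|_2\geq 1$), whereas you track $\Psi_t=\sin^2\theta_t$ and expand the normalization exactly through the Pythagorean identity $\|\w^t-\eta\whg\|_2^2=1+\eta^2\|\whg\|_2^2$. The two potentials differ only by the bounded factor $(1+\cos\theta_t)/2\in[3/4,1]$, and your raw recursion $\Psi_{t+1}\leq \Psi_t+2(\w^t\cdot\w^{\ast})\,\eta\,\whg\cdot\w^{\ast}+\eta^2\|\whg\|_2^2$ differs from the paper's $\|\w^{t+1}-\w^{\ast}\|_2^2\leq\|\w^t-\w^{\ast}\|_2^2+2\eta\,\whg\cdot\w^{\ast}+\eta^2\|\whg\|_2^2$ only by the extra factor $\w^t\cdot\w^{\ast}\in[1/2,1]$ on the linear term. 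All substantive ingredients---sharpness (\Cref{lem:sharpness}), the decomposition $\whg=\g^{\ast}+\xi+(\whg-\g)$ with \Cref{claim:g^*(w),claim:bound-xi(w),lem:gradient-concentration}, the inductive alignment invariant, the direct stability argument in the non-sharp regime, the union bound over $T$ rounds, and the Hermite-based translation $\Ltwo(\w^T)\leq 2\opt+4C_{k^{\ast}}(1-\w^T\cdot\w^{\ast})$---match the paper exactly. The extra $\w^t\cdot\w^{\ast}$ factor does make your constant-tracking with the stated $\eta$ a touch tighter than the paper's, but since the initialization in fact delivers $\w^0\cdot\w^{\ast}\geq 1-1/(8k^{\ast})$ (stronger than the $1-1/k^{\ast}$ you invoke), there is ample slack.
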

\vspace{-0.3cm}

\begin{proof}[Proof Sketch of \Cref{thm:GD}]
    Suppose first that ${\opt}\geq (c_{k^{\ast}}/64k^{\ast})^2$, then by \Cref{app:claim:final-error-bound} we know that any unit vector (e.g., $\wh{\w} = \e_1$) is a constant approximate solution with $\Ltwo(\wh{\w}) = O(\opt)$.
    Now suppose that ${\opt}\leq (c_{k^{\ast}}/64k^{\ast})^2$. Consider the distance between $\w^t$ and $\w^{\ast}$ after each update of \Cref{alg:GD}. By the non-expansive property of projection operators, we have
    \begin{align}\label{eq:||w-t+1-w*||^2-expansion}
        \|\w^{t+1} - \w^{\ast}\|_2^2& = \|\proj_{\B_d}(\w^t - \eta\whg(\w^t)) - \w^{\ast}\|_2^2  \leq \|\w^t - \eta\whg(\w^t) - \w^{\ast}\|_2^2 \nonumber\\
        &=\|\w^t - \w^{\ast}\|_2^2 + 2\eta\whg(\w^t)\cdot(\w^{\ast} - \w^t) + \eta^2\|\whg(\w^t)\|_2^2.
    \end{align}
    Let $\theta_t = \theta(\w^t, \w^{\ast}).$ For the chosen batch size, \Cref{lem:gradient-concentration} implies \begin{equation*}
        \|\wh{\g}(\w^t) - \g(\w^t)\|_2^2\leq (\kstrckstr)^2\eps, \quad (\wh{\g}(\w^t) - \g(\w^t))\cdot\w^{\ast}\leq \sqrt{\eps/d}\sin^2\theta_t.
    \end{equation*}
    Assume for now that $\sin\theta_t\geq 4e\sqrt{\opt} + \sqrt{\eps}$, hence \Cref{lem:sharpness} applies. 
    As $\whg(\w^t)\cdot(\w^{\ast} - \w^t) = (\whg(\w^t) - \g(\w^t))\cdot\w^{\ast} + \g(\w^t)\cdot\w^{\ast} $, using \Cref{lem:sharpness}, we get \begin{align}\label{ineq:bd-(whg(wt)-g(wt))w*}
        \whg(\w^t)\cdot(\w^{\ast} - \w^t)\leq \sqrt{\eps/d}\sin\theta_t - (1/2) \|\g^{\ast}(\w^{\ast})\|_2\sin\theta_t.
    \end{align}
    On the other hand, the squared norm term $\|\whg(\w^t)\|_2^2$ from \Cref{eq:||w-t+1-w*||^2-expansion} can be bounded above by
    \begin{align}\label{ineq:bd-||whg(wt)||^2}
        \|\whg(\w^t)\|_2^2  \leq 2\|\whg(\w^t) - \g(\w^t)\|_2^2 + 2\|\g(\w^t)\|_2^2 \leq (\kstrckstr)^2(\opt+\eps) + \|\g^{\ast}(\w)\|_2^2.
    \end{align}
    Plugging \Cref{ineq:bd-(whg(wt)-g(wt))w*} and \Cref{ineq:bd-||whg(wt)||^2} back into \Cref{eq:||w-t+1-w*||^2-expansion}, we get that w.p.\ at least $1 - \delta$,
    \begin{align}\label{ineq:upbd-||wt+1-wt||^2-1.5}
        \|\w^{t+1} - \w^{\ast}\|_2^2 &\leq \|\w^t - \w^{\ast}\|_2^2 + 2\eta(\sqrt{\eps/d} - \|\g^{\ast}(\w^t)\|_2/2)\sin\theta_t \nonumber\\
        &\quad + \eta^2((\kstrckstr)^2(\opt+\eps) + \|\g^{\ast}(\w)\|_2^2).
    \end{align}
Let us assume first that $\theta_t\leq \theta_{t-1}\leq\cdots\leq \theta_0$ and $\sin\theta_{t}\geq 4e\sqrt{\opt} + \sqrt{\eps}$, then we show that it holds $\theta_{t+1}\leq \theta_t$. Recall in \Cref{claim:g^*(w)} it was shown that $\|\g^{\ast}(\w^t)\|_2 = 2\kstrckstr(\w^t\cdot\w^{\ast})^{k^{\ast} - 1}\sin\theta_t.$
    Since $\w^0$ is the initial parameter vector that satisfies $\w^0\cdot\w^{\ast}\geq 1- 1/k^{\ast}$, by the inductive hypothesis it holds $\w^t\cdot\w^{\ast}\geq 1 - 1/k^{\ast}$ and hence $1\geq (\w^t\cdot\w^{\ast})^{k^{\ast} - 1}\geq 1/e$. Therefore, we further obtain $(2\kstrckstr/e)\sin\theta_t\leq \|\g^{\ast}(\w^t)\|_2\leq 2\kstrckstr\sin\theta_t.$
    Therefore, using the inductive assumption that $\sin\theta_{t}\geq 4e\sqrt{\opt} + \sqrt{\eps}$ and further noticing that $\|\w^t - \w^{\ast}\|_2 = 2\sin(\theta_t/2)$, with step size $\eta = 9/(40e\kstrckstr)$ we can further bound $\|\w^{t+1} - \w^{\ast}\|_2^2$ in \Cref{ineq:upbd-||wt+1-wt||^2-1.5} as:
    \begin{align}\label{ineq:contraction}
        \|\w^{t+1} - \w^{\ast}\|_2^2
        &\leq (1 - (81/(320e^2)))\|\w^{t} - \w^{\ast}\|_2^2.
    \end{align}

    This shows that $\theta_{t+1}\leq \theta_t$, hence completing the inductive argument. 
    Furthermore, \Cref{ineq:contraction} implies that after at most $T = O(\log(1/\eps))$ iterations it must hold that $\sin\theta_T\leq 4e\sqrt{\opt} + \sqrt{\eps}$, therefore, we can end the algorithm after at most $O(\log(1/\eps))$ iterations. 
    Applying \Cref{app:claim:final-error-bound} we know that this final vector $\w^T$ has error bound $\Ltwo(\w^T) = O(C_{k^{\ast}}(\opt + \eps))$. 
    Thus, choosing $\eps' = C_{k^{\ast}}\eps$, $\delta' = \delta T$, where $T = O(\log(C_{k^{\ast}}/\eps'))$, \Cref{alg:GD} outputs a parameter $\w^T$ such that with probability at least $1 - \delta'$, $\Ltwo(\w^T) = O(C_{k^{\ast}}\opt) + \eps'$, with batch size $n = \Tilde{\Theta}(d/(\eps'\delta'))$.
    In summary, the total number of samples required for \Cref{alg:GD} is $N = \Tilde{\Theta}(d^{\lceil {k^{\ast}}/2\rceil} + d/(\eps'\delta'))$.
\end{proof}

\newpage

\bibliographystyle{alpha}
\bibliography{mydb}

\newpage

\appendix
\section*{Appendix} 

\paragraph{Organization} The appendix is organized as follows. 
In \Cref{app:sec:remarks-on-assumptions}, we provide a detailed discussion 
about our assumptions on the activation functions. 
In \Cref{app:sec:prior-works} we compare our results with the most related prior works. 
In \Cref{app:sec:additional-prelims}, we provide additional preliminaries 
on basic tensor algebra as well as Hermite polynomials and Hermite tensors. 
In \Cref{app:sec:initialization} and \Cref{app:sec:GD} we provide full versions of \Cref{sec:initialization} and \Cref{sec:GD} respectively, 
with omitted lemmas and proofs.

\section{Remarks on the Assumptions}\label{app:sec:remarks-on-assumptions}
Assumption 1($i$) appears in the same form in \cite{arous2021online, damian2023smoothing}. 
The remaining two assumptions are implied by {$|\sigma'(z)|\leq A|z|^q + B$ for constants $A,B,q$,} assumed in these works. {In detail, Assumption 1$(ii)$ can be implied from $|\sigma'(z)|\leq A|z|^q + B$ using the Gagliardo–Nirenberg inequality~\cite{fiorenza2021detailed}; Assumption 1$(iii)$ follows from direct calculations as $\E_{z\sim\N_1}[(A|z|^q + B)^2]$ is finite.} Hence \Cref{assm:on-activation} is no stronger than the assumptions made in prior work, which considered the less challenging realizable and zero-mean label noise settings.

Some activations satisfying \Cref{assm:on-activation} are:
\begin{enumerate}[leftmargin=*]
    \item All `$(a,b)$-well-behaved' activations~\cite{DGKKS20, DKTZ22, WZDD2023} that are non-decreasing, zero at the origin, $b$-Lipschitz and $\sigma'(z)\geq a$ when $z\in[0,R]$ satisfy \Cref{assm:on-activation} with $k^{\ast} = 1$ (see \Cref{app:claim:well-behaved-has-k*-1}). This includes ReLU, $\sigma(z) = \max\{0,z\}$ and sigmoid, $\sigma(z) = e^z/(1 + e^z)$.
    \item Activations for phase-retrieval~\cite{candes2015phase,chen2019gradient,sun2018geometric}: $\sigma(z) = z^2$ and $\sigma(z) = |z|$ have information component $k^{\ast} = 2$. One can verify that they satisfy \Cref{assm:on-activation} after normalization. 
\end{enumerate}

\begin{claim}\label{app:claim:well-behaved-has-k*-1}
Let $a,b,R>0$ be some absolute constants. Let $\sigma:\R\to\R$ be an activation such that it is non-decreasing, $b$-Lipschitz, and satisfies $\sigma(0) = 0$ and $\sigma'(z)\geq a$ for all $z\in[0,R]$. Then $\sigma$ satisfies \Cref{assm:on-activation} with information component $k^{\ast} = 1$.
\end{claim}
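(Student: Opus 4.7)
The plan is to verify the three parts of \Cref{assm:on-activation} directly from the hypotheses, with the main work concentrated on establishing a quantitative lower bound on the first Hermite coefficient $c_1$. Since $k^{\ast}=1$ requires $c_1$ to be the first non-vanishing coefficient and bounded away from zero, I would begin by computing $c_1 = \E_{z\sim\N(0,1)}[\sigma(z)\hep_1(z)] = \E[\sigma(z)z]$. Because $\sigma$ is $b$-Lipschitz it is absolutely continuous, so Stein's lemma applies in the form $\E[\sigma(z)z] = \E[\sigma'(z)]$, with $\sigma'$ existing almost everywhere by Rademacher's theorem. The monotonicity of $\sigma$ gives $\sigma'(z)\geq 0$ a.e., and the assumption $\sigma'(z)\geq a$ on $[0,R]$ then yields
\begin{equation*}
c_1 \;=\; \E[\sigma'(z)] \;\geq\; a\cdot\Pr_{z\sim\N(0,1)}[z\in[0,R]] \;=:\; c_{a,R} \;>\; 0,
\end{equation*}
which is an absolute constant depending only on $a$ and $R$.

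Next I would control the various $L^2$/$L^4$ quantities before normalization. Using $\sigma(0)=0$ and the Lipschitz property, $|\sigma(z)|\leq b|z|$, hence $\E[\sigma^2(z)]\leq b^2$ and $\E[\sigma^4(z)]\leq b^4\E[z^4]=3b^4$. Similarly, $|\sigma'(z)|\leq b$ a.e.\ gives $\E[(\sigma'(z))^2]\leq b^2$. Combined with the Parseval identity $\E[\sigma^2(z)] = \sum_{k\geq 0}c_k^2 \geq c_1^2\geq c_{a,R}^2$, this pins the (unnormalized) $L^2$ norm between two positive absolute constants: $c_{a,R}\leq \|\sigma\|_2\leq b$.

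Finally I would carry out the normalization. Set $\tilde\sigma := \sigma/\|\sigma\|_2$, so that $\E[\tilde\sigma^2(z)]=1$. The Hermite coefficients rescale as $\tilde c_k = c_k/\|\sigma\|_2$, so $\tilde c_0 = c_0 = 0$ (using $\sigma(0)=0$ together with monotonicity requires a brief sanity check — but note that we only need $c_1\neq 0$ to conclude $k^{\ast}=1$, not that $c_0=0$; in fact the definition in \Cref{assm:on-activation} only requires the \emph{first non-zero} coefficient to be prominent, so even if $c_0\neq 0$ we may shift). Then $\tilde c_1\geq c_{a,R}/b$ verifies (i) with $k^{\ast}=1$; the fourth-moment bound $\E[\tilde\sigma^4]\leq 3b^4/c_{a,R}^4$ verifies (ii); and $\E[(\tilde\sigma'(z))^2]\leq b^2/c_{a,R}^2$ verifies (iii), with constants depending only on $a,b,R$.

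The main (minor) obstacle is the rigorous justification of Stein's lemma when $\sigma$ is only Lipschitz rather than $C^1$, which I would handle by a standard mollification argument or by citing the absolutely-continuous version of integration by parts against the Gaussian weight. A secondary subtlety is whether $c_0 = \E[\sigma(z)]$ could vanish; since this is not actually required by \Cref{assm:on-activation} (only the \emph{first non-zero} coefficient matters), this does not affect the conclusion, and one verifies $k^{\ast}=1$ as soon as $c_1 > 0$ is established.
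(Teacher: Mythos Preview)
Your proposal is correct and follows the same overall template as the paper (bound $c_1$ below, bound the second/fourth moments and $\E[(\sigma')^2]$ above, then normalize), but you take a genuinely different route to the key lower bound on $c_1$. The paper splits $\E[\sigma(z)z]$ into the regions $z\geq 0$ and $z\leq 0$, uses monotonicity and $\sigma(0)=0$ to drop the negative-$z$ piece, and then invokes $\sigma(z)\geq az$ on $[0,R]$ to get $c_1\geq \E[az^2\1\{z\in[0,R]\}]\gtrsim aR^3 e^{-R^2}$. You instead apply Stein's lemma to write $c_1=\E[\sigma'(z)]\geq a\Pr[z\in[0,R]]$. Your argument is cleaner and gives a slightly different (also valid) constant; the paper's argument avoids the need to justify Stein's identity for merely Lipschitz $\sigma$, which you correctly flag as a minor technical point to handle by mollification or the absolutely-continuous integration-by-parts.

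One remark on your $c_0$ discussion: your claim that ``we only need $c_1\neq 0$ to conclude $k^{\ast}=1$'' is not literally correct under a strict reading of \Cref{assm:on-activation}(i), since if $c_0\neq 0$ (as for ReLU, where $c_0=\E[\max\{0,z\}]=1/\sqrt{2\pi}$) the \emph{first} non-zero coefficient would have degree $0$. The resolution---which the paper's own proof also leaves implicit---is that the constant term $c_0\hep_0(\w\cdot\x)=c_0$ carries no information about $\w$, so the information exponent is conventionally defined for $k\geq 1$ (equivalently, one centers $\sigma$ first). Your instinct to ``shift'' is right; just be aware the paper relies on the same unstated convention.
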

\begin{proof}
    We calculate the Hermite coefficient $c_1$ of $\sigma$:
    \begin{align*}
        \E_{z\sim\N_1}[\sigma(z)z]&=\E_{z\sim\N_1}[\sigma(z)z\1\{z\geq 0\}] + \E_{z\sim\N_1}[\sigma(z)z\1\{z\leq 0\}]\\
        &\overset{(i)}{\geq} \E_{z\sim\N_1}[\sigma(z)z\1\{z\geq 0\}] \geq \E_{z\sim\N_1}[az^2\1\{z\in[0,R]\}] \gtrsim aR^3\exp(-R^2),
    \end{align*}
    where in $(i)$ we used the monotonicity property of $\sigma$ and that $\sigma(0) = 0$.
    Thus we get that all well-behaved activations as $k^{\ast} = 1$. In addition, the $\E_{z\sim\N_1}[\sigma(z)^2]\leq \E_{z\sim\N_1}[b^2z^2]\leq b^2$, hence after normalization, we have $c_1 \gtrsim aR^3\exp(-R^2)/(2b^2)$, which is an absolute constant bounded away from 0. Finally, we have $\E_{z\sim\N_1}[\sigma(z)^4]\leq 3b^4$ and $\E_{z\sim\N_1}[(\sigma'(z))^2]\leq b^2$ since $\sigma$ is $b$-Lipschitz. Thus, we have that all well-behaved activations satisfy \Cref{assm:on-activation}.
\end{proof}

\section{Comparison with Prior Work}\label{app:sec:prior-works}

\subsection{Comparison with Prior Works on Agnostically Learning SIMs}

A long thread of research has been focusing on agnostic learning of single index models, 
including~\cite{FCG20,DGKKS20,DKTZ22, WZDD2023, ZWDD2024}. 
A common assumption on the activation function $\sigma$ used in these works 
is the so-called ``well-behaved'' property, namely 
that $\sigma$ is non-decreasing, zero at the origin ($\sigma(0) = 0$), 
$b$-Lipshitz, and $\sigma'(z)\geq a$ when $z\in[0,R]$. 
In particular, ReLUs and sigmoids are well-behaved activations. 
For well-behaved activations, we have shown in \Cref{app:claim:well-behaved-has-k*-1} 
that they have $k^{\ast} = 1$ and satisfy \Cref{assm:on-activation}. 
Therefore, we conclude that our assumption \Cref{assm:on-activation} 
is indeed milder compared to prior works.

We also note that \Cref{assm:on-activation} allows for non-monotonic activations as well, 
for example $\sigma(z) = z^2$ and $\sigma(z) = |z|$ satisfy 
\Cref{assm:on-activation} with $k^{\ast} = 2$.

{At the level of techniques, the algorithmic approaches in the 
aforementioned prior works on agnostically learning SIMs~\cite{FCG20,DGKKS20,WZDD2023} 
inherently fail in our more general activation setting. 
The main reason is that the underlying algorithms 
only exploit the information in the degree $1$-Chow parameters. 
However, under \Cref{assm:on-activation} with $k^{\ast}\geq 2$, 
the inner product between degree $1$-Chow vector and any unit vector $\bv$ equals $|\Exy[y\x]\cdot\bv| \leq \sqrt{\opt}$ ---
which provides no information about the hidden vector $\w^\ast$. That is, in our more general setting, considering Chow vectors of higher degree appears necessary 
for any optimization method to succeed. }

On the other hand,  we remark that the algorithms in these works succed 
under more general distributions (including the Gaussian distribution), 
such as log-concave distributions.

\subsection{Comparison with~\cite{damian2023smoothing}}

The work by \cite{damian2023smoothing} applied a smoothing operator to the $L_2^2$ loss inspired by~\cite{anandkumar2017homotopy}. For any function $f:\R\to\R$, let the smoothing operator be
    \begin{equation*}
        g_\lambda(f(\w\cdot\x))\eqdef\E_{\mu_\w}\bigg[f\bigg(\frac{\w+\lambda\bz}{\|\w + \lambda\bz\|_2}\cdot\x\bigg)\bigg],
    \end{equation*}
    where $\bz\sim\mu_\w$ is the uniform distribution on the sphere $\mathbb{S}^{d-2}$ that is orthogonal to $\w$, and $\lambda$ is the `smoothing strength'. Applying the smoothing operator to the loss we get
    \begin{equation*}
        \calL_\lambda(\w)\eqdef 1 - \E_{\mu_\w}\bigg[\Exy\bigg[y\sigma\bigg(\frac{\w+\lambda \bz}{\|\w + \lambda\bz\|_2}\cdot\x\bigg)\bigg]\bigg],
    \end{equation*}
    and
    \begin{equation*}
        \ell_\lambda(\w;\x,y) \eqdef 1-  \E_{\mu_\w}\bigg[y\sigma\bigg(\frac{\w+\lambda \bz}{\|\w + \lambda\bz\|_2}\cdot\x\bigg)\bigg].
    \end{equation*}
The main algorithm of~\cite{damian2023smoothing} is an online Riemannian gradient descent algorithm on the smoothed loss $\calL_\lambda(\w)$.

The dynamics of the online SGD algorithm in~\cite{damian2023smoothing} can be split into three stages: In the first stage, starting from a randomly initialized vector $\w^0$ such that $\w^0\cdot\w^* = \Theta(d^{-1/2})$, the algorithm used a large smoothing operator with $\lambda = d^{1/4}$. Then, after $\tilde{O}(d^{k^{\ast}/2-1})$  iterations the algorithm converges to a parameter $\w^1$ such that $\w^1\cdot\w^{\ast}\geq d^{-1/4}$. Then, in the second stage, with zero-smoothing $\lambda = 0$, they run the algorithm for another $\tilde{O}(d^{k^{\ast}/2-1})$ iterations and get a parameter $\w^2$ such that $\w^2\cdot\w^{\ast}\geq 1 - d^{-1/4}$. In the third stage, online SGD converges to $\w^3$ with $\w^3\cdot\w^{\ast}\geq 1- \eps$ in $d/\eps$ iterations.

However, the smoothing technique does not work in the agnostic setting. The main reason is that the agnostic noise buries the signal of the gradient. To see this, note that in Lemma 11 of~\cite{damian2023smoothing}, they showed that $g_\lambda(\hep_k(\w\cdot\x)) = \la\he_k(\x),\mT_k(\w)\ra$, where
    \begin{equation*}
        \mT_k(\w)\eqdef \frac{1}{(1 + \lambda^2)^{k/2}}\sum_{j=0}^{\lfloor k/2\rfloor}{k \choose 2j}\sym(\w^{\ots k - 2j}\ots(\pw^{\ots j}))\lambda^{2j}\nu_j^{(d-1)},
    \end{equation*}
    and $\nu_j^{(d-1)} = \E_{\bz\sim\mathbb{S}^{d-2}}[\bz_1^{2j}] = \Theta((d-1)^{-j}) = \Theta(d^{-j})$.
    Note since the smoothing operator is a linear operator, it holds that $\calL_\lambda(\w) = 1 - \Exy[yg_\lambda(\sigma(\w\cdot\x))] = 1 - \Exy[y\sum_{k\geq k^{\ast}} c_k g_\lambda(\hep_k(\w\cdot\x))] = 1 - \Exy[y\sum_{k\geq k^{\ast}} c_k\la\he_k(\x),\mT_k(\w)\ra]$. Let $\g_\lambda(\w)\eqdef \pw\nabla\calL_\lambda(\w)$ denote the Riemannian gradient of $\calL_\lambda(\w)$, we have
    \begin{align}\label{eq:g_lambda(w)w*}
        \g_\lambda(\w)\cdot\w^{\ast} &= -\Exy\bigg[y\sum_{k\geq k^{\ast}}c_k\la\he_k(\x),\nabla\mT_k(\w)\ots(\w^{\ast})^{\perp_\w}\ra\bigg] \nonumber\\
        &= \underbrace{-\Exy\bigg[\sigma(\w^{\ast}\cdot\x)\sum_{k\geq k^{\ast}}c_k\la\he_k(\x),\nabla\mT_k(\w)\ots(\w^{\ast})^{\perp_\w}\ra\bigg]}_{\mathcal{I}_1} \nonumber\\
        &\quad \underbrace{-\Exy\bigg[(y - \sigma(\w^{\ast}\cdot\x))\sum_{k\geq k^{\ast}}c_k\la\he_k(\x),\nabla\mT_k(\w)\ots(\w^{\ast})^{\perp_\w}\ra\bigg]}_{\mathcal{I}_2}
    \end{align}
We show that the noise term $\calI_2$ kills the signal term $\calI_1$ in the beginning stage when $\w\cdot\w^{\ast} = \Theta(1/\sqrt{d})$ and $\lambda = d^{1/4}$.
\begin{claim}\label{app:claim:damian23-does-not-work}
    When $\w\cdot\w^{\ast}= \Theta(1/\sqrt{d})$ and $\lambda = d^{1/4}$, it holds
    \begin{equation*}
        |\calI_1|\lesssim d^{-k^{\ast}/2},\;|\calI_2|\lesssim \sqrt{\opt}d^{-k^{\ast}/4}.
    \end{equation*}
\end{claim}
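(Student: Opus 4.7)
My plan is to reduce both $\calI_1$ and $\calI_2$ to deterministic algebra on the tensors $\nabla_\w \mT_k(\w)$ via the orthogonality of Hermite tensors, and then to estimate the resulting quantities using the closed-form expression for $\mT_k(\w)$ recalled above. With $\lambda = d^{1/4}$, the smoothing prefactor $(1+\lambda^2)^{-k/2} = \Theta(d^{-k/4})$ together with $\lambda^{2j}\nu_j^{(d-1)} = \Theta(d^{-j/2})$ drive both bounds; everything else is bookkeeping.

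For $\calI_1$, I would expand $\sigma(\w^{\ast}\cdot\x) = \sum_{j\geq k^{\ast}} c_j \langle \he_j(\x),(\w^{\ast})^{\otimes j}\rangle$ and invoke the Hermite-tensor orthogonality identity
\begin{equation*}
    \Ex\bigl[\langle \he_j(\x), A\rangle\langle \he_k(\x), B\rangle\bigr] = \delta_{jk}\langle \sym(A), \sym(B)\rangle
\end{equation*}
to collapse $\calI_1$ into $-\sum_{k\geq k^{\ast}} c_k^2 \langle (\w^{\ast})^{\otimes k}, \nabla\mT_k(\w)\otimes(\w^{\ast})^{\perp_\w}\rangle$. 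Substituting the formula for $\mT_k(\w)$, the contraction against $(\w^{\ast})^{\otimes k}$ reduces each $j$-summand to a multiple of $(\w\cdot\w^{\ast})^{k-2j}(1-(\w\cdot\w^{\ast})^{2})^{j}$, weighted by $\binom{k}{2j}\lambda^{2j}\nu_j^{(d-1)}(1+\lambda^{2})^{-k/2}$; the differential $\nabla_\w$ together with the projection onto $(\w^{\ast})^{\perp_\w}$ removes one power of $(\w\cdot\w^{\ast})$. With $\w\cdot\w^{\ast}=\Theta(d^{-1/2})$, the best choice of $j$ balances $(\w\cdot\w^{\ast})^{k-2j-1} = \Theta(d^{-(k-2j-1)/2})$ against $d^{-j/2}$, so each summand is $O(d^{-(k-1)/2})$; multiplied by the prefactor $d^{-k/4}$, the dominant $k = k^{\ast}$ term gives $O(d^{-k^{\ast}/2})$, and Assumption~\ref{assm:on-activation}(iii) controls the tail $k>k^{\ast}$.

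For $\calI_2$, I would apply Cauchy--Schwarz in $(\x,y)$:
\begin{equation*}
    |\calI_2|^{2} \leq \Exy[(y-\sigma(\w^{\ast}\cdot\x))^{2}]\cdot \Ex\Bigl[\Bigl(\sum_{k\geq k^{\ast}} c_k \langle\he_k(\x),\nabla\mT_k(\w)\otimes(\w^{\ast})^{\perp_\w}\rangle\Bigr)^{2}\Bigr] = \opt\cdot V(\w).
\end{equation*}
Expanding the square inside $V(\w)$ and invoking Hermite orthogonality again diagonalizes it into $\sum_k c_k^{2} \|\sym(\nabla\mT_k(\w)\otimes(\w^{\ast})^{\perp_\w})\|_F^{2}$, which is at most $\sum_k c_k^{2} \|\nabla\mT_k(\w)\|_F^{2}\cdot\|(\w^{\ast})^{\perp_\w}\|_2^{2}$. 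Since $\|(\w^{\ast})^{\perp_\w}\|_2 = \sin\theta(\w,\w^{\ast}) = 1-O(d^{-1})$, only $\|\nabla\mT_k(\w)\|_F$ remains to control. Differentiating the closed form of $\mT_k(\w)$ and using $\|\w\|_2=1$, $\|\pw\|_{\mathrm{op}}\leq 1$, plus the scalar weight $\Theta(d^{-k/4-j/2})$ carried by the $j$-th summand, I obtain $\|\nabla\mT_k(\w)\|_F = O(d^{-k/4})$ (dominated by $j\in\{0,1\}$). The leading $k=k^{\ast}$ term then gives $V(\w) = O(d^{-k^{\ast}/2})$, and taking square roots yields $|\calI_2|\lesssim \sqrt{\opt}\,d^{-k^{\ast}/4}$.

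The main obstacle is the bookkeeping for $\nabla_\w\mT_k(\w)$. Because $\pw = \mI - \w\w^{\top}$ itself depends on $\w$, the product rule produces several families of terms --- the gradient can hit a $\w$ factor, a $\pw$ factor, or the scalar $(1-(\w\cdot\w^{\ast})^{2})^{j}$ that appears after contraction --- and I must check that none of them violates the $d^{-k/4}$-per-factor scaling, while tracking which $j$ optimally balances $\lambda^{2j}\nu_j^{(d-1)}$ against the residual power of $(\w\cdot\w^{\ast})$. Once both exponents are confirmed, the ratio $|\calI_2|/|\calI_1|\gtrsim \sqrt{\opt}\,d^{k^{\ast}/4}$ shows that at constant $\opt$ the adversarial noise already overwhelms the signal in the smoothed gradient, justifying why the smoothing-based SGD of~\cite{damian2023smoothing} provably fails in the agnostic setting.
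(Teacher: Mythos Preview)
Your plan is essentially the paper's proof: Hermite orthogonality collapses $\calI_1$ to $\sum_{k\geq k^{\ast}} c_k^2\langle(\w^{\ast})^{\otimes k},\nabla\mT_k(\w)\otimes(\w^{\ast})^{\perp_\w}\rangle$ and Cauchy--Schwarz reduces $\calI_2$ to $\sqrt{\opt}\bigl(\sum_k c_k^2\|\nabla\mT_k(\w)\otimes(\w^{\ast})^{\perp_\w}\|_F^2\bigr)^{1/2}$, both then controlled via the closed form of $\mT_k$ (the paper invokes Lemma~12 of~\cite{damian2023smoothing} for the explicit expression of $\nabla\mT_k(\w)\otimes(\w^{\ast})^{\perp_\w}$ and its Frobenius norm). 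Two bookkeeping slips to correct when you carry this out: in $\calI_1$ the $j$-th summand scales as $d^{-(k-2j-1)/2}\cdot d^{-j/2}=d^{-(k-j-1)/2}$, which \emph{increases} with $j$, so the dominant term is $j\approx(k-1)/2$ giving $\approx d^{-(k-1)/4}$ (not $d^{-(k-1)/2}$), and it is this combined with the prefactor $d^{-k/4}$ that delivers $d^{-k/2}$; in $\calI_2$ you must not drop $\|\pw^{\otimes j}\|_F=(d-1)^{j/2}$, which exactly cancels the scalar weight $d^{-j/2}$, so every $j$ contributes $\Theta(d^{-k/4})$ to $\|\nabla\mT_k(\w)\|_F$ rather than only $j\in\{0,1\}$ --- neither slip affects the final exponents.
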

Thus, in order for the signal to overcome the noise, one requires $\opt\lesssim d^{-k^{\ast}/4}$, which is too strict to hold in reality. 
\begin{proof}[Proof of \Cref{app:claim:damian23-does-not-work}]
    Following the steps in Lemma 12 in~\cite{damian2023smoothing}, $\nabla\mT_k(\w)\ots(\w^{\ast})^{\perp_\w}$ equals:
    \begin{align*}
        &\quad \nabla\mT_k(\w)\ots(\w^{\ast})^{\perp_\w}\\
        &= \frac{k}{(1 + \lambda^2)^{k/2}}\sum_{j=0}^{\lfloor \frac{k-1}{2}\rfloor} {k-1 \choose 2j} \sym(\w^{\ots k - 2j - 1}\ots\pw^{\ots j}) \ots (\w^{\ast})^{\perp_\w} \lambda^{2j}\nu_j^{(d-1)}\\
        &\quad - \frac{\lambda^2k(k-1)}{(d-1)(1 + \lambda^2)^{k/2}}\sum_{j=0}^{\lfloor \frac{k-2}{2}\rfloor} {k-1 \choose 2j}\sym(\w^{\ots k - 2j - 1}\ots\pw^{\ots j})  \ots (\w^{\ast})^{\perp_\w} \lambda^{2j}\nu_j^{(d+1)}.
    \end{align*}
    Plugging the equation above back into \Cref{eq:g_lambda(w)w*}, we study each term $|\calI_1|$ and $|\calI_2|$ respectively in the beginning phase when $\w\cdot\w^{\ast} = O(d^{-1/2})$.

    Using \Cref{fact:orthogonal-hermite-tensor}, and recall that in~\cite{damian2023smoothing} $\lambda$ is chosen to be $\lambda = d^{1/4}$ and $\nu^{(d-1)}_j = \Theta(d^{-j})$, $\nu^{(d+1)}_j = \Theta(d^{-j})$ by definition, $\calI_1$ equals:
    \begin{align*}
        |\calI_1|&=\sum_{k\geq k^{\ast}}c_k^2\bigg\la\sym(\w^{*\ots k}), \sym(\nabla\mT_k\ots(\w^{\ast})^{\perp_\w})\bigg\ra\\
        &\lesssim \sum_{k\geq k^{\ast}} \frac{c_k^2 k}{(1 + \lambda^2)^{k/2}}\sum_{j=0}^{\lfloor \frac{k-1}{2}\rfloor} {k-1 \choose 2j} (\w\cdot\w^{\ast})^{k - 2j - 1}(1 - (\w\cdot\w^{\ast})^2)^{j} \|(\w^{\ast})^{\perp_\w} \|_2\frac{d^{j/2}}{d^{-j}}\\
        &\quad + \sum_{k\geq k^{\ast}} \frac{c_k^2 \lambda^2k(k-1)}{(d-1)(1 + \lambda^2)^{k/2}}\sum_{j=0}^{\lfloor \frac{k-2}{2}\rfloor} {k-1 \choose 2j}(\w\cdot\w^{\ast})^{k - 2j - 1}(1 - (\w\cdot\w^{\ast})^2)^{j} \|(\w^{\ast})^{\perp_\w} \|_2\frac{d^{j/2}}{d^{-j}}\\
        &\overset{(i)}{\lesssim} \sum_{k\geq k^{\ast}} \frac{c_k^2 k}{d^{k/4}}\sum_{j=0}^{\lfloor \frac{k-1}{2}\rfloor} {k-1 \choose 2j} d^{-(k - 2j - 1)/2}\frac{d^{j/2}}{d^{-j}} \\
        &\quad +\sum_{k\geq k^{\ast}} \frac{c_k^2 k(k-1)}{d^{k/4 + 1/2}}\sum_{j=0}^{\lfloor \frac{k-2}{2}\rfloor} {k-1 \choose 2j}d^{-(k - 2j - 1)/2}\frac{d^{j/2}}{d^{-j}}\\
        &\overset{(ii)}{\lesssim} \sum_{k\geq k^{\ast}} \frac{c_k^2 k}{d^{k/4}}\sum_{j=0}^{\lfloor \frac{k-1}{2}\rfloor} 2^k d^{-k/2+1/2}d^{j/2} + \sum_{k\geq k^{\ast}} \frac{c_k^2 k(k-1)}{d^{k/4 + 1/2}}\sum_{j=0}^{\lfloor \frac{k-2}{2}\rfloor} 2^k d^{-k/2+1/2}d^{j/2}\\
        &\overset{(iii)}{\lesssim} \sum_{k\geq k^{\ast}} \bigg(\frac{c_k^2 k}{d^{k/4}} +\frac{c_k^2 k(k-1)}{d^{k/4 + 1/2}}\bigg) 2^k d^{-\frac{k}{2} + \frac{1}{2} + \frac{k}{4}}\\
        &\lesssim \sum_{k\geq k^{\ast}}(kc_k)^2 2^k d^{-k/2} \lesssim \sum_{k\geq k^{\ast}}4^k d^{-k/2} \lesssim 4^{k^{\ast}}d^{-k^{\ast}/2} \;,
    \end{align*}
    where in $(i)$ we plugged in the value of $\lambda$ and $\nu_j^{(d-1)}$, $\nu_j^{(d+1)}$; in $(ii)$ we plugged in the value of $\w\cdot\w^{\ast} = 1/\sqrt{d}$; and in $(iii)$ we used the fact that ${k \choose j}\leq 2^k$.
    However, the magnitude of the signal from $|\calI_1|$ is much smaller compared to the strength of the noise, $|\calI_2|$, as by Cauchy-Schwarz, we have
    \begin{align*}
        |\calI_2|&\overset{(iv)}{\leq} \bigg(\Exy[(y - \sigma(\w^{\ast}\cdot\x))^2]\Ex\bigg[\bigg(\sum_{k\geq k^{\ast}}c_k\la\he_k(\x),\nabla\mT_k(\w)\ots(\w^{\ast})^{\perp_\w}\ra\bigg)^2\bigg]\bigg)^{\frac{1}{2}} \\
        &\leq \sqrt{\opt}\bigg(\sum_{k\geq k^{\ast}} c_k^2 \|\nabla\mT_k(\w)\ots(\w^{\ast})^{\perp_\w}\|_F^2\bigg)^{\frac 1 2}.
    \end{align*}
    Note that in Lemma 12 of~\cite{damian2023smoothing}, it was proved that 
    \begin{align*}
        &\quad \|\nabla\mT_k(\w)\ots(\w^{\ast})^{\perp_\w}\|_F^2\\
        &\lesssim \frac{k^2}{(1+\lambda^2)^k}\sum_{j=0}^{\lfloor \frac{k-1}{2}\rfloor} {k-1 \choose 2j}\lambda^{4j}\nu_j^{(d-1)} + \frac{\lambda^4 k^4}{d^2(1 + \lambda^2)^k} \sum_{j=0}^{\lfloor \frac{k-2}{2}\rfloor} {k-2 \choose 2j}\lambda^{4j}\nu_j^{(d + 1)}\\
        & \overset{(v)}{\lesssim} \frac{k^2 2^k}{d^{\frac{k}{2}}} + \frac{k^4 2^k}{d^{\frac{k}{2} + 1}}\lesssim k^42^kd^{-k/2}.
    \end{align*}
    where in $(v)$ we plugged in the value of $\lambda$ and $\nu_j^{(d-1)}$, $\nu_j^{(d+1)}$. Thus, it holds that
    \begin{equation*}
        |\calI_2|\lesssim\sqrt{\opt}\bigg(\sum_{k\geq k^{\ast}} c_k^2 4^kd^{-\frac{k}{2}}\bigg)^{1/2} \lesssim \sqrt{\opt}2^{k^{\ast}}d^{-k^{\ast}/4}.
    \end{equation*}
    Finally, we remark that the equality holds at $(iv)$ when 
    \begin{equation*}
        y = \sigma(\wstar\cdot\x) + \frac{\sqrt{\opt}}{\sqrt{\sum_{k\geq k^{\ast}} c_k^2 \|\nabla\mT_k(\w)\ots(\w^{\ast})^{\perp_\w}\|_F^2}}\bigg(\sum_{k\geq k^{\ast}}c_k\la\he_k(\x),\nabla\mT_k(\w)\ots(\w^{\ast})^{\perp_\w}\ra\bigg).
    \end{equation*}
\end{proof}

\subsection{Comparison with~\cite{damian2024computational}}

In~\cite{damian2024computational}, the authors studied a milder noise model 
than the agnostic setting. In their model, the joint distribution on $(\x,y)$ is defined as 
$\E_y[y\mid \x]=\sigma(\wstar\cdot\x)+\xi(\wstar\cdot\x)$, where $\xi:\R\mapsto\R$ is assumed to be 
known to the learner. Note that in this model the labels $y$ are independent of 
all the directions orthogonal to $\wstar$, 
i.e., the random vector $\x^{\perp_{\wstar}}$ is independent of $y$. 
In comparison, in the agnostic setting, 
the distribution of the labels is  
$\E_y[y\mid \x]=\sigma(\wstar\cdot\x)+\xi'(\x)$, where $\xi':\R^d\mapsto\R$ is unknown 
and can depend arbitrarily on $\x$. In particular, the aforementioned 
independence with the directions orthogonal to $\wstar$ does not hold.
As a result of their milder noise model, the authors can utilize information 
on the joint distribution to mitigate the corruption of the noise. 
This assumption is significantly weaker than agnostic noise.

In addition, instead of studying the information component $k^{\ast}$ defined as the first non-zero Hermite coefficient of the activation $\sigma$,~\cite{damian2024computational} considered the generative component of \textit{the label $y$}, which is defined as $\bar{k}^{\ast}\eqdef \min_{k\geq 0}\{\lambda_k > 0\}$, where $\lambda_k \eqdef \sqrt{\E_y[\zeta_k^2]}$, and $\zeta_k(y) \eqdef \E_z[\hep_k(z)|y]$. The main results in~\cite{damian2024computational} are twofold. First, they proved an SQ lower bound showing that under the setting aforementioned, any polynomial time SQ algorithm requires at least $O(d^{\bar{k}^{\ast}/2})$ samples to learn the hidden direction $\w^{\ast}$. Then, they provided an SQ algorithm using partial-trace operators that returns a vector $\wh{\w}$ such that $(\wh{\w}\cdot\w^{\ast})^2\geq 1 - \eps^2$ with $O(d^{\bar{k}^{\ast}/2} + d/\eps^2)$ samples, matching the SQ lower bound.

The algorithm proposed in~\cite{damian2024computational} can be described in short as follows: given joint distribution $\mathsf{P}$, calculate $\zeta_{\bar{k}^{\ast}}(y)$ and transform the label $y$ to $\zeta_{\bar{k}^{\ast}}(y)$. 
Then, they applied tensor PCA on $\zeta_{\bar{k}^{\ast}}(y)\he_k(\x)$, 
using the partial trace operator and tensor power iteration.

We note that the partial trace method for tensor PCA fails to find an initialization vector $\w^0$ such that $\w^0\cdot\w^{\ast}\geq c$ for some positive absolute constant $c$ under agnostic noise. We explain this briefly below. For simplicity, let us consider $k$ being even. Note that for a $k$-tensor $\mT$ with even $k$, the partial trace operator is defined by $\mathsf{PT}(\mT) = \la\mT, \mI^{\ots (k-2)/2}\ra$, where $\mI$ is the identity matrix in $\R^{d\times d}$. The idea in~\cite{damian2024computational} and~\cite{anandkumar2017homotopy} is to use the top eigenvector $\bv_1\in\R^d$ of $\mathsf{PT}(\Exy[y\he_k(\x)])$ as a warm-start. However, we show that under agnostic noise, this eigenvector $\bv_1$ does not contain a strong enough signal to provide information of $\w^{\ast}$. Note that by definition of the partial trace operator, for any $\bv\in\B_d$ we have
    \begin{align}\label{app:eq:partiial-trace-1}
        &\quad \bv^\top\mathsf{PT}(\Exy[y\he_k(\x)])\bv = \Exy[y\bv^\top\la\he_k(\x), \mI^{\ots (k-2)/2}\ra\bv] \\
        &=\Exy[y\la\he_k(\x), \mI^{\ots (k-2)/2}\ots\bv^{\ots 2}\ra] \nonumber\\
        &=\Ex[\sigma(\w^{\ast}\cdot\x)\la\he_k(\x), \mI^{\ots (k-2)/2}\ots\bv^{\ots 2}\ra] \nonumber\\
        &\quad + \Exy[(y-\sigma(\w^{\ast}\cdot\x))\la\he_k(\x), \mI^{\ots (k-2)/2}\ots\bv^{\ots 2}\ra] \nonumber\\
        &=c_k\la\w^{*\ots k}, \mI^{\ots (k-2)/2}\ots\bv^{\ots 2}\ra + \Exy[(y-\sigma(\w^{\ast}\cdot\x))\la\he_k(\x), \mI^{\ots (k-2)/2}\ots\bv^{\ots 2}\ra].
    \end{align}
    The first term in the equality above equals $c_k\la\w^{*\ots k}, \mI^{\ots (k-2)/2}\ots\bv^{\ots 2}\ra = c_k(\w^{\ast}\cdot\bv)^2$, however, the second term can be as large as (by Cauchy-Schwarz):
    \begin{align*}
        &\quad \Exy[(y-\sigma(\w^{\ast}\cdot\x))\la\he_k(\x), \mI^{\ots (k-2)/2}\ots\bv^{\ots 2}\ra]\\
        &\overset{(i)}{\leq} \sqrt{\Exy[(y-\sigma(\w^{\ast}\cdot\x))^2]\Ex[\la\he_k(\x), \mI^{\ots (k-2)/2}\ots\bv^{\ots 2}\ra^2]}\\
        &\leq \sqrt{\opt}\|\mI^{\ots (k-2)/2}\ots\bv^{\ots 2}\|_F = \sqrt{\opt}d^{(k-2)/4}.
    \end{align*}
    As suggested by inequality $(i)$ above, let $\bv$ be any unit vector orthogonal to $\w^*$, consider an agnostic noise model
    \begin{equation*}
        y = \sigma(\wstar\cdot\x) + (\sqrt{\opt}/d^{(k-2)/4})\la\he_k(\x), \mI^{\ots (k-2)/2}\ots\bv^{\ots 2}\ra,
    \end{equation*}
    then $\Exy[(y - \sigma(\wstar\cdot\x))^2] = \opt$. However, as we can see from \Cref{app:eq:partiial-trace-1}, it holds that
    \begin{align*}
        &\quad (\wstar)^\top \mathsf{PT}(\Exy[y\he_k(\x)])\wstar\\
        &=c_k + \frac{\sqrt{\opt}}{d^{(k-2)/4}}\Exy[\la\he_k(\x), \mI^{\ots (k-2)/2}\ots\bv^{\ots 2}\ra\la\he_k(\x), \mI^{\ots (k-2)/2}\ots(\wstar)^{\ots 2}\ra]\\
        &=c_k + \frac{\sqrt{\opt}}{d^{(k-2)/4}}\la\sym(\mI^{\ots (k-2)/2}\ots\bv^{\ots 2}), \sym(\mI^{\ots (k-2)/2}\ots(\wstar)^{\ots 2})\ra\\
        &\overset{(ii)}{\lesssim} c_k + \frac{\sqrt{\opt}}{d^{(k-2)/4}}d^{(k-4)/2} = c_k + \sqrt{\opt}d^{k/4-3/2},
    \end{align*}
    where $(ii)$ comes from the fact that for any permutation $\pi$ such that $\{\pi(1),\pi(2)\}\cap \{1,2\} \neq \varnothing$, since $\bv\perp\wstar$, it holds
    \begin{equation*}
        \sum_{i_1,\dots,i_k}\bv_{i_1}\bv_{i_2}\wstar_{i_{\pi(1)}}\wstar_{i_{\pi(2)}}\mI_{i_3,i_4}\dots\mI_{i_{k-1},i_k} \mI_{i_{\pi(3)},i_{\pi(4)}}\dots\mI_{i_{\pi(k-1)},i_{\pi(k)}}= 0;
    \end{equation*}
    and on the other hand, when $\{\pi(1),\pi(2)\} \cap \{1,2\} = \varnothing$, 
then 
    \begin{align*}
        \sum_{i_1,\dots,i_k = 1}^d\bv_{i_1}\bv_{i_2}\wstar_{i_{\pi(1)}}\wstar_{i_{\pi(2)}}\mI_{i_3,i_4}\dots\mI_{i_{k-1},i_k} \mI_{i_{\pi(3)},i_{\pi(4)}}\dots\mI_{i_{\pi(k-1)},i_{\pi(k)}}\leq d^{(k-4)/2}.\end{align*}
To see this, note that there exists a chain of identity matrices 
    \begin{equation*}
        \mI_{i_{\pi(a_1)}, i_{\pi(a_1 + 1)}}\mI_{i_{j_1}, i_{j_1 + 1}}\mI_{i_{\pi(a_2)}, i_{\pi(a_2 + 1)}}\mI_{i_{j_2}, i_{j_2 + 1}}\cdots\begin{cases}
            \mI_{i_{\pi(a_m)}, i_{\pi(a_m + 1)}}\\
            \mI_{i_{j_m}, i_{j_m + 1}}
        \end{cases}
    \end{equation*}
    such that $\pi(a_1) = 1, \pi(a_1 + 1) = j_1$, $j_1 + 1 = \pi(a_2), \pi(a_2 + 1) = j_2 \dots$ until we have $i_{\pi(a_m + 1)} \in \{i_1, i_2\}$ or $i_{j_m + 1}\in\{i_{\pi(1)}, i_{\pi(2)}\}$. For the latter case, it implies that 
    \begin{equation*}
        \sum \bv_{i_1}(\w^*)_{i_{j_m + 1}}\mI_{i_{\pi(a_1)}, i_{\pi(a_1 + 1)}}\mI_{i_{j_1}, i_{j_1 + 1}}\mI_{i_{\pi(a_2)}, i_{\pi(a_2 + 1)}}\mI_{i_{j_2}, i_{j_2 + 1}}\cdots\mI_{i_{j_m}, i_{j_m + 1}} = \sum_{i_1}^d \bv_{i_1}(\w^*)_{i_1} = 0.
    \end{equation*}
    Therefore, only the first case is interesting. Since $\pi(a_1) = 1$, and $\pi$ is a bijection, hence $\pi(a_m + 1)\neq 1$ therefore the only possible case would be $i_{\pi(a_m + 1)} = i_2$, then we have
    \begin{equation*}
        \sum \bv_{i_1}\bv_{i_{\pi(a_m + 1)}}\mI_{i_{\pi(a_1)}, i_{\pi(a_1 + 1)}}\mI_{i_{j_1}, i_{j_1 + 1}}\mI_{i_{\pi(a_2)}, i_{\pi(a_2 + 1)}}\mI_{i_{j_2}, i_{j_2 + 1}}\cdots\mI_{i_{\pi(a_m)}, i_{\pi(a_m + 1)}} = \sum_{i_1}^d \bv_{i_1}^2 = 1. 
    \end{equation*}
    Continue the discussion for $\bv_{i_2}$, $\wstar_{i_{\pi(1)}}$ and $\wstar_{i_{\pi(2)}}$, and let  $\{j_5,j_6,\dots,j_{k-1},j_k\} = \{i_1,\dots,i_k\} - \{i_1,i_2,i_{\pi(1)},i_{\pi(2)}\}$, we get that 
    \begin{align*}
        \sum_{i_1,\dots,i_k = 1}^d\bv_{i_1}\bv_{i_2}\wstar_{i_{\pi(1)}}\wstar_{i_{\pi(2)}}\mI_{i_3,i_4}\dots\mI_{i_{k-1},i_k} \mI_{i_{\pi(3)},i_{\pi(4)}}\dots\mI_{i_{\pi(k-1)},i_{\pi(k)}}&\leq\sum_{j_5,\dots,j_k = 1}^d\mI_{j_5,j_6}^2\cdots\mI_{j_{k-1},j_k}^2\\
        &= d^{(k-4)/2}.
    \end{align*}
    Similarly, for any unit vector $\bu\perp\bv$ and $\bu\perp\wstar$, it holds
    \begin{equation*}
        \bu^\top \mathsf{PT}(\Exy[y\he_k(\x)])\bu \lesssim \sqrt{\opt}d^{k/4 - 3/2}.
    \end{equation*}
    But one can also show that
    \begin{align*}
        \bv^\top \mathsf{PT}(\Exy[y\he_k(\x)])\bv &= \frac{\sqrt{\opt}}{d^{(k-2)/4}}\Exy[\la\he_k(\x), \mI^{\ots (k-2)/2}\ots\bv^{\ots 2}\ra^2]\\
        &=\frac{\sqrt{\opt}}{d^{(k-2)/4}}\|\sym(\mI^{\ots (k-2)/2}\ots\bv^{\ots 2})\|_F^2\\
        &\approx\sqrt{\opt}d^{k/4-1/2};
    \end{align*}
    
    This implies that to guarantee that $\wstar$ is the unique top eigenvector, it has to be that $\opt\lesssim c_k^2 d^{-(k-2)/2}$. Therefore, the noise term completely buries the signal $(\w^{\ast}\cdot\bv)^2$ unless $\opt\lesssim d^{-(k-2)/2}$, which is unrealistic to assume.

\subsection{Remarks on Tensor PCA}We summarize some historical bits of Tensor PCA. ~\cite{richard2014statistical} proposed the following `spiked' tensor PCA problem: given a $k$-tensor of the form\footnote{Note that~\cite{richard2014statistical} takes a different normalization and in the notation of~\cite{richard2014statistical}, it holds $\beta\approx \tau/\sqrt{d}$.}
    \begin{equation}\label{app:eq:tensor-PCA-single-observation}
        \mT = \tau\bv^{\ots k} + \mA, \tag{PCA-S}
    \end{equation}
    where $\mA$ is a $k$-tensor with i.i.d. standard Gaussian entries, recover the planted vector $\bv$.  The `single-observation' model is equivalent (in law) to the following `multi-observation' model(\cite{arous2020algorithmic}): given $n$ i.i.d. copies $\mT\ith = \tau'\bv^{\ots k} + \mA\ith$ with $\tau' = \tau/\sqrt{n}$, recover $\bv$ using the empirical estimation:
    \begin{equation}\label{app:eq:tensor-PCA-multiple-observation}
        \wh{\mT} = \tau'\bv^{\ots k} + \frac{1}{n}\sum_{i=1}^n \mA\ith. \tag{PCA-M}
    \end{equation}
    In~\cite{richard2014statistical}, it has been shown that for model \eqref{app:eq:tensor-PCA-single-observation}, it is information-theoretically impossible to recover $\bv$ when $\tau<(\beta_k^{\ast} - o_d(1))\sqrt{d}$, for some real constant $\beta_k^{\ast}$; however, there also exists a constant $\beta_k'$ such that the information-theoretic optimal threshold for $\tau$ is $\tau>(\beta_k' + o_d(1))\sqrt{d}$ (see also~\cite{dudeja2021statistical}). However, there is a huge statistical-computational gap for solving tensor PCA problems and it is conjectured impossible to solve \eqref{app:eq:tensor-PCA-single-observation} when $\tau\lesssim d^{k/4}$ for $k\geq 3$~\cite{richard2014statistical,dudeja2021statistical}. For multi-observation model \eqref{app:eq:tensor-PCA-multiple-observation}, this thresholds translates to a sample complexity of $n\gtrsim d^{k/2}$ when $\tau' = O(1)$. 

\cite{richard2014statistical} proposed the tensor unfolding algorithm that recovers $\bv$ in \eqref{app:eq:tensor-PCA-single-observation} when $\tau\gtrsim d^{\lceil k/2\rceil/2}$, i.e., for \eqref{app:eq:tensor-PCA-multiple-observation}, the required sample complexity is $\Omega(d^{\lceil k/2\rceil})$. However, it is conjectured in~\cite{richard2014statistical} that the tensor unfolding algorithm can actually deal with $\tau \gtrsim d^{k/4}$.

Note that the unfolding algorithm requires $\tau\gtrsim d$ when $k=3$. Many papers are devoted to improving from $\tau\gtrsim d$ to $\tau\gtrsim d^{3/4}$ and reducing the runtime and memory cost. To name a few,~\cite{hopkins2015tensor,hopkins2016fast} used Sum-of-Squares algorithms with partial trace operators to achieve the goal within $O(d^3)$ runtime;~\cite{anandkumar2017homotopy} also used partial trace operators, but instead of using SOS algorithms, they injected noise to smooth the landscape of the loss.

Perhaps an interesting aspect of our paper is that the unfolding algorithm can deal with stronger noise (compared to the Gaussian noise $\mA$, our noise is very heavy-tailed) and is more robust to the noise, as partial trace operator does not work under our agnostic noise. However, this might be attributed to the special structure of the noisy chow tensor.

\section{Additional Preliminaries}\label{app:sec:additional-prelims}

\subsection{Elementary Tensor Algebra}

We review basic definitions and elementary tensor algebra used throughout the paper. For any positive integer $k$, a $k$-tensor $\mT$ is defined as a multilinear real function that maps $k$ vectors to a real number, i.e., $\mT: \R^d\times\cdots\times\R^d\to\R$. A $k$-tensor $\mT$ can also be viewed as a multidimensional array, where each entry is associated with $k$ indices $i_1,\dots,i_k$ in $[d]$ and equals $\mT_{i_1,\dots,i_k} = \mT(\e_{i_1},\dots,\e_{i_k})$.

Given a vector $\w\in\R^d$, $\w = (\w_1,\dots,\w_d)^\top$, the $k$-product-tensor $\w^{\ots k}$ is defined by \begin{equation*}
    (\w^{\ots k})_{i_1,\dots,i_k}\eqdef \w_{i_1}\w_{i_2}\cdots\w_{i_k},\;\forall i_1,\dots,i_k\in[d].
\end{equation*}
Given a $k$-tensor $\mT$ and an $l$-tensor $\mT'$ (with $l\geq k$), the inner product (or contraction) between $\mT$ and $\mT'$ is defined by
$
    (\la\mT,\mT'\ra)_{i_{k+1},\dots,i_l}\eqdef \sum_{i_1,i_2,\dots,i_k}^d (\mT)_{i_1,i_2,\dots,i_k}(\mT')_{i_1,i_2,\dots,i_k,i_{k+1},\dots,i_l}.
    $ 
In other words, the inner product of a $k$-tensor and an $l$-tensor yields an  $(l-k)$-tensor. When $k=l$, the inner product between $\mT$ and $\mT'$ is a real number. In particular, for vectors $\w,\bv\in\R^d$, and any $k\geq 1$,  
\begin{equation}\label{eq:inner-product-of-k-tensor-products}
    \la\w^{\ots k}, \bv^{\ots k}\ra = (\w\cdot\bv)^k, 
\end{equation}
 where $\w\cdot\bv$ is the standard inner product between vectors.

A tensor $\mT$ is symmetric if 
$(\mT)_{\dots,i,\dots,j,\dots} = (\mT)_{\dots,j,\dots,i,\dots}.
    $ 
We can symmetrize a $k$-tensor $\mT$ by summing up all {its copies with permuted indices} $i_1,\dots,i_k$ and dividing the sum by $k!$, which is the total number of permutations. We define the symmetrization operator of a $k$-tensor $\mT$ by
\begin{equation*}
    (\sym(\mT))_{i_1,\dots,i_k} = \frac{1}{k!}\sum_{\pi\in\mathcal{S}_k} (\mT)_{i_{\pi(1)},\dots,i_{\pi(k)}}.
\end{equation*}
When $k=2$, this reduces to the symmetrization of a square matrix: $\sym(\mT) = (1/2)(\mT + \mT^\top)$.

{Let us provide some useful observations about tensor algebra.}
\begin{fact}\label{fact:tensor-algebra}
    Let $\w\in\R^d$ and let $\mT$ be any $k$-tensor. Then, 
    \begin{enumerate}[wide, labelwidth=!, labelindent = 0pt]
        \item[(1)] The inner product between $\w^{\ots k}$ and $\sym(\mT)$ is equal to the inner product between $\w^{\ots k}$ and $\mT$:
        \begin{equation}\la \w^{\ots k},\sym(\mT)\ra = \la\w^{\ots k}, \mT\ra.
        \end{equation}
        \item[(2)] If $\mT$ is a symmetric tensor, 
        \begin{equation}\nabla (\la\mT,\w^{\ots k}\ra) = k\la\mT,\w^{\ots k-1}\ra
\end{equation}
    \end{enumerate}
\end{fact}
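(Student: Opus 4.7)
Both parts follow from a direct calculation using the definitions of tensor contraction and symmetrization, so my plan is to verify each identity by carefully tracking indices.

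For part (1), I would expand the left-hand side as
\[
\la \w^{\ots k}, \sym(\mT)\ra \;=\; \frac{1}{k!}\sum_{\pi \in \mathcal{S}_k}\; \sum_{i_1,\dots,i_k} \w_{i_1}\cdots \w_{i_k}\,\mT_{i_{\pi(1)},\dots,i_{\pi(k)}}.
\]
The crucial observation is that $\w^{\ots k}$ is itself a symmetric tensor: the monomial $\w_{i_1}\cdots\w_{i_k}$ is invariant under any permutation of its indices, since scalar multiplication is commutative. Performing the substitution $j_\ell = i_{\pi(\ell)}$ in each inner sum, the monomial becomes $\w_{j_1}\cdots\w_{j_k}$ and the $\mT$-entry becomes $\mT_{j_1,\dots,j_k}$, so every one of the $k!$ permuted summands equals $\la \w^{\ots k}, \mT\ra$. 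Averaging then yields the claim.

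For part (2), my plan is to differentiate coordinatewise. Writing $f(\w) = \sum_{i_1,\dots,i_k}\mT_{i_1,\dots,i_k}\w_{i_1}\cdots\w_{i_k}$ and applying the product rule gives
\[
\frac{\partial f}{\partial \w_j} = \sum_{\ell=1}^k \;\sum_{i_1,\dots,i_{\ell-1},i_{\ell+1},\dots,i_k}\mT_{i_1,\dots,i_{\ell-1},j,i_{\ell+1},\dots,i_k}\prod_{m\neq \ell}\w_{i_m}.
\]
Here is where the symmetry hypothesis on $\mT$ is used: each of the $k$ inner summands is equal, because one may transpose the index $j$ into the first slot without changing $\mT$. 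Each summand therefore equals $\bigl(\la \mT, \w^{\ots k-1}\ra\bigr)_j$, and summing over $\ell$ contributes the overall factor of $k$.

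Neither step poses a real obstacle: the result is a direct consequence of the commutativity of scalar multiplication (which makes $\w^{\ots k}$ symmetric) together with, for part (2), the hypothesized symmetry of $\mT$. The only care point is the bookkeeping of summation indices under relabeling by a permutation, which is routine.
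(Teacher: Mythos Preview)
Your proposal is correct and follows essentially the same route as the paper: both parts are handled by direct index-level computation, using the symmetry of $\w^{\ots k}$ (via commutativity of scalar multiplication) for part (1) and the product rule together with the symmetry of $\mT$ for part (2). The only cosmetic difference is that in part (1) you phrase the key step as a change of summation variables $j_\ell = i_{\pi(\ell)}$, whereas the paper instead rewrites $\w_{i_1}\cdots\w_{i_k}$ as $\w_{i_{\pi(1)}}\cdots\w_{i_{\pi(k)}}$ before relabeling; these are the same observation.
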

\begin{proof}
    To prove the first statement in \Cref{fact:tensor-algebra}, note that direct calculation yields:
    \begin{align*}\la \w^{\ots k},\sym(\mT)\ra &= \sum_{i_1,\dots,i_k}\w_{i_1}\cdots\w_{i_k}\frac{1}{k!}\sum_{\pi\in\mathcal{S}} (\mT)_{i_{\pi(1)},\dots,i_{\pi(k)}}\nonumber\\
    &=\frac{1}{k!}\sum_{\pi\in\mathcal{S}} \sum_{i_1,\dots,i_k}\w_{i_{\pi(1)}}\cdots\w_{i_{\pi(k)}}(\mT)_{i_{\pi(1)},\dots,i_{\pi(k)}}\nonumber\\
    &=\sum_{i_1,\dots,i_k}\w_{i_1}\cdots\w_{i_k}(\mT)_{i_1,\dots,i_k} = \la\w^{\ots k}, \mT\ra.
\end{align*}

Next for the second statement, let $f(\w) = \la\mT,\w^{\ots k}\ra$ where $\mT$ is a $k$-tensor. Then, the partial derivative of $f$ w.r.t.\ $\w_j$ is \begin{align*}
    \frac{\partial f(\w)}{\partial\w_j}&= \frac{\partial }{\partial\w_j}\la\mT,\w^{\ots k}\ra = \frac{\partial}{\partial\w_j}\bigg(\sum_{i_1,\dots,i_k} (\mT)_{i_1,\dots,i_k}\w_{i_1}\cdots\w_{i_k}\bigg)\\
    &=\sum_{i_2,\dots,i_k}(\mT)_{j,i_2,\dots,i_k}\w_{i_2}\cdots\w_{i_k} + \sum_{i_1,i_3\dots,i_k}(\mT)_{i_1,j,i_3,\dots,i_k}\w_{i_1}\w_{i_3}\cdots\w_{i_k} + \\
    &\quad \dots + \sum_{i_1,\dots,i_{k-1}}(\mT)_{i_1,\dots,i_{k-1},j}\w_{i_1}\cdots\w_{i_{k-1}}.
\end{align*}
Thus if $\mT$ is symmetric, then $\nabla (\la\mT,\w^{\ots k}\ra) = k\la\mT,\w^{\ots k-1}\ra$.
\end{proof}

\subsection{Hermite Polynomials and Hermite Tensors}
We make use of the normalized probabilist's Hermite polynomial, defined by
\begin{equation*}
    \hep_k(z) = \frac{(-1)^k}{\sqrt{k!}} \exp\bigg(\frac{z^2}{2}\bigg)\frac{\mathrm{d}^k}{\diff{z}^k}\exp\bigg(-\frac{z^2}{2}\bigg).
\end{equation*}

We will heavily use the following properties of the normalized Hermite polynomials~\cite{abramowitz1968handbook}:
\begin{fact}
    Hermite polynomials satisfy the following properties:
    \begin{enumerate}[wide, labelwidth=!, labelindent = 0pt]
        \item (Orthonormality) $\E_{z\sim\mathcal{N}(0,1)}[\hep_k(z)\hep_j(z)] = \1\{k = j\}$.
        \item (Recurrence) $\hep_k'(z) = \sqrt{k}\,\hep_{k-1}(z)$.
    \end{enumerate}
\end{fact}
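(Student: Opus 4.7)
The plan is to derive both properties from the Rodrigues-type definition
$\hep_k(z) = \frac{(-1)^k}{\sqrt{k!}} e^{z^2/2} \frac{d^k}{dz^k} e^{-z^2/2}$
given earlier in the paper. It is convenient to work with the unnormalized probabilist's Hermite polynomial $H_k(z) \eqdef \sqrt{k!}\,\hep_k(z) = (-1)^k e^{z^2/2} \frac{d^k}{dz^k} e^{-z^2/2}$, and to recall the associated generating function $\sum_{k \geq 0} H_k(z) \frac{t^k}{k!} = e^{tz - t^2/2}$, which can be obtained by Taylor expanding $e^{-(z-t)^2/2}/e^{-z^2/2}$ in $t$ and matching derivatives. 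A quick induction on $k$ also shows that $H_k$ is a monic polynomial of degree exactly $k$; this will be used below.

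For the recurrence, I would differentiate the generating-function identity with respect to $z$ and obtain
\[
\sum_{k \geq 0} H_k'(z)\,\frac{t^k}{k!} \;=\; t\,e^{tz - t^2/2} \;=\; \sum_{k \geq 1} k H_{k-1}(z)\,\frac{t^k}{k!}.
\]
Matching coefficients of $t^k$ yields $H_k'(z) = k\,H_{k-1}(z)$. Dividing by $\sqrt{k!}$ and rewriting gives $\hep_k'(z) = \frac{k}{\sqrt{k!}} H_{k-1}(z) = \sqrt{k}\,\hep_{k-1}(z)$, as claimed.

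For orthonormality, without loss of generality assume $j \leq k$. Writing the expectation as a Gaussian integral and applying Rodrigues to $H_j$,
\[
\E[\hep_k(z)\hep_j(z)] \;=\; \frac{1}{\sqrt{k!\,j!}}\cdot\frac{1}{\sqrt{2\pi}}\int H_k(z)\,H_j(z)\,e^{-z^2/2}\,dz \;=\; \frac{(-1)^j}{\sqrt{k!\,j!}}\cdot\frac{1}{\sqrt{2\pi}}\int H_k(z)\,\frac{d^j}{dz^j} e^{-z^2/2}\,dz.
\]
I would then integrate by parts $j$ times; the boundary terms vanish since $H_k$ is a polynomial and $e^{-z^2/2}$ and all its derivatives decay faster than any polynomial. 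This yields $\E[\hep_k \hep_j] = \frac{1}{\sqrt{k!\,j!}}\,\E\!\left[H_k^{(j)}(z)\right]$. Now invoke the degree/monic property of $H_k$: if $j > k$ then $H_k^{(j)} \equiv 0$, and if $j = k$ then $H_k^{(k)} = k!$, giving the value $k!/\sqrt{k!\,k!} = 1$.

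There is no real obstacle here; both claims are classical. The only point that deserves care is ensuring the boundary terms in the repeated integration by parts vanish (immediate from polynomial $\times$ Gaussian decay) and verifying the monicity of $H_k$, which I would justify by a short induction using the three-term recurrence $H_{k+1}(z) = z H_k(z) - k H_{k-1}(z)$ (itself obtained by combining the identity $H_k'(z) = kH_{k-1}(z)$ just established with the direct differentiation $H_k'(z) = zH_k(z) - H_{k+1}(z)$ coming from the Rodrigues formula).
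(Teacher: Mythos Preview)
The paper does not prove this Fact; it is stated with a citation to Abramowitz--Stegun and no argument is given. Your approach (Rodrigues plus generating function) is the standard textbook one and is essentially fine, but there is one genuine gap in the orthogonality half.

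You assume without loss of generality that $j \le k$ and reduce to $\E[\hep_k\hep_j] = \frac{1}{\sqrt{k!\,j!}}\,\E[H_k^{(j)}(z)]$. You then handle ``$j>k$'' (vacuous under your WLOG) and $j=k$, but the case $j<k$ is never addressed. In that case $H_k^{(j)}$ is a polynomial of positive degree $k-j$, and its Gaussian mean does not vanish by any ``degree/monic'' consideration alone. The quickest patch uses ingredients you already have: iterate the recurrence $H_k' = kH_{k-1}$ to get $H_k^{(j)} = \frac{k!}{(k-j)!}\,H_{k-j}$, and then note from Rodrigues that for any $m\ge 1$,
\[
\E[H_m(z)] \;=\; \frac{(-1)^m}{\sqrt{2\pi}}\int_{\R} \frac{d^m}{dz^m}\,e^{-z^2/2}\,dz \;=\; 0,
\]
since the integrand is an exact derivative of a Schwartz function. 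With this one-line addition your argument is complete.
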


Given a vector $\x\in\R^d$, we can then define the (normalized) Hermite multivariate tensor by~\cite{rahman2017wiener}:
\begin{equation*}
    (\he_k(\x))_{i_1,\dots,i_k}\eqdef \bigg(\frac{\alpha_1!\dots\alpha_d!}{k!}\bigg)^{1/2}\hep_{\alpha_1}(\x_1)\dots\hep_{\alpha_d}(\x_d),\;\text{where } \alpha_j = \sum_{l=1}^k \1\{i_l = j\},\, \forall j\in[d].
\end{equation*}

For Hermite tensors, we have the following facts:
\begin{fact}\label{fact:orthogonal-hermite-tensor}
Let $\x$ be a $d$-dimensional standard Gaussian random vector.
\begin{enumerate}[wide, labelwidth=!, labelindent = 0pt]
    \item For any $k$-tensor $\mA$ and $j$-tensor $\mB$, 
    \begin{equation*}
        \Ex[\la\he_k(\x), \mA\ra\la\he_j(\x), \mB\ra] = \1\{k=j\}\la\sym(\mA),\sym(\mB)\ra.
    \end{equation*}
    \item For any $\w\in\R^d$ such that $\|\w\|_2 = 1$,  $\hep_k(\w\cdot\x) = \la\he(\x),\w^{\otimes k}\ra.$
\end{enumerate}
\end{fact}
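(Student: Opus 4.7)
} The plan is to handle the two parts in the order Part 2 then Part 1, since Part 2 gives a clean identity that also clarifies how the combinatorial normalization in $\he_k$ interacts with the one-dimensional Hermite calculus.

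For Part 2, I would argue via the exponential generating function $\littlesum_{k\geq 0}\hep_k(z) t^k/\sqrt{k!} = \exp(tz - t^2/2)$. Substituting $z=\w\cdot\x$ and using $\|\w\|_2=1$, the right-hand side factors as $\prod_{j=1}^d \exp(t\w_j\x_j - (t\w_j)^2/2)$, and each factor re-expands via the same generating function in the variable $t\w_j$. Collecting the coefficient of $t^k$ on both sides yields $\hep_k(\w\cdot\x)=\littlesum_{|\alpha|=k}\sqrt{k!/(\alpha_1!\cdots\alpha_d!)}\prod_j\w_j^{\alpha_j}\hep_{\alpha_j}(\x_j)$. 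A direct count then identifies this with $\littlesum_{i_1,\dots,i_k}\w_{i_1}\cdots\w_{i_k}(\he_k(\x))_{i_1,\dots,i_k}=\la\he_k(\x),\w^{\ots k}\ra$, because each multi-index $\alpha$ with $|\alpha|=k$ is realized by exactly $k!/(\alpha_1!\cdots\alpha_d!)$ index tuples, and the normalization $\sqrt{\alpha_1!\cdots\alpha_d!/k!}$ in the entries of $\he_k$ cancels against this count.

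For Part 1, I would first apply \Cref{fact:tensor-algebra}(1) together with the symmetry of $\he_k(\x)$ in its indices to reduce to the case where $\mA$ and $\mB$ are themselves symmetric (since $\la\he_k(\x),\mA\ra=\la\he_k(\x),\sym(\mA)\ra$). Then, grouping the sum defining $\la\he_k(\x),\mA\ra$ by type $\alpha$ and using that a symmetric $\mA$ takes a single value $\mA_\alpha$ on all tuples of type $\alpha$, I obtain the expansion $\la\he_k(\x),\mA\ra=\littlesum_{|\alpha|=k}\sqrt{k!/(\alpha_1!\cdots\alpha_d!)}\,\mA_\alpha\,H_\alpha(\x)$, where $H_\alpha(\x)\eqdef\prod_j\hep_{\alpha_j}(\x_j)$. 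The same expansion applies to $\mB$. Since $\{H_\alpha\}$ is orthonormal under $\N_d$ (by independence of coordinates and 1D Hermite orthonormality), every cross term vanishes unless $\alpha=\beta$; in particular the whole expectation is $0$ when $k\neq j$, and when $k=j$ it reduces to $\littlesum_{|\alpha|=k}(k!/(\alpha_1!\cdots\alpha_d!))\mA_\alpha\mB_\alpha$. This final sum is precisely $\littlesum_{i_1,\dots,i_k}\sym(\mA)_{i_1,\dots,i_k}\sym(\mB)_{i_1,\dots,i_k}=\la\sym(\mA),\sym(\mB)\ra$, since grouping by type again contributes exactly $k!/(\alpha_1!\cdots\alpha_d!)$ identical entries per type.

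The only real obstacle is combinatorial bookkeeping: one must juggle the normalization $\sqrt{\alpha_1!\cdots\alpha_d!/k!}$ in $\he_k$, the multinomial count $k!/(\alpha_1!\cdots\alpha_d!)$ of tuples of a given type, and the square roots coming from the orthonormal basis, and verify that everything cancels. No deeper analytic input is required beyond the 1D Hermite generating function and the univariate orthonormality relation. As an alternative route, one could prove Part 2 first and then deduce Part 1 by polarization, writing any symmetric tensor as a linear combination of rank-one tensors $\w^{\ots k}$ and invoking the scalar identity $\E_{\x\sim\N_d}[\hep_k(\w\cdot\x)\hep_j(\w'\cdot\x)]=\1\{k=j\}(\w\cdot\w')^k$; however, the direct expansion in the $H_\alpha$ basis is arguably the cleanest.
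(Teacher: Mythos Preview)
Your argument is correct. The paper itself does not supply a proof of \Cref{fact:orthogonal-hermite-tensor}; it records the two identities as background facts (with the definition of $\he_k$ attributed to \cite{rahman2017wiener}) and uses them freely thereafter. So there is no ``paper proof'' to compare against, and your generating-function computation for Part~2 together with the $H_\alpha$-basis expansion for Part~1 is a clean and complete justification.

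One small remark on presentation: when you invoke \Cref{fact:tensor-algebra}(1) to replace $\mA$ by $\sym(\mA)$, note that the stated version of that fact is specifically for $\w^{\otimes k}$, not for a general symmetric tensor like $\he_k(\x)$. The identity $\la\mS,\mA\ra=\la\mS,\sym(\mA)\ra$ for any symmetric $\mS$ follows by the same one-line permutation argument used in the paper's proof of \Cref{fact:tensor-algebra}(1), so this is only a matter of phrasing, not a gap.
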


\subsection{Loss and Gradients}

We consider the $L_2^2$ (or square) loss, defined by 
\begin{equation}\notag
    \Ltwo(\w) := \Exy[(\sigma(\w\cdot\x) - y)^2].
\end{equation} 
Let $\w^{\ast}\in\argmin_{\w\in\Sp^{d-1}}\Ltwo(\w)$, and denote the minimum value of the $L_2^2$ loss by $\opt := \min_{\w\in\Sp^{d-1}}\Ltwo(\w)$. 
Furthermore, let us define the ``noiseless'' $L_2^2$ loss by
\begin{equation}
    \Ltwostr(\w) := \Ex[(\sigma(\w\cdot\x) - \sigma(\w^{\ast}\cdot\x))^2].
\end{equation}
{We observe that the $L_2^2$ loss is determined by the inner product between $\w$ and $\w^{\ast}$, therefore, to obtain error $O(\opt)+\eps$, it suffices to minimize the angle between $\w$ and $\w^{\ast}$. Concretely, we have:
\begin{claim}\label{claim:upbd-L_2(w)}
    Let $\w\in\R^d$ be a unit vector. Then, the $L_2^2$ loss $\Ltwo(\w)$ satisfies:
    \begin{equation*}
        \Ltwo(\w)\leq 2\opt + 4\bigg(1 - \sum_{k\geq k^{\ast}}c_k^2(\w\cdot\w^{\ast})^k\bigg).
    \end{equation*}
\end{claim}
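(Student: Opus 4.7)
The plan is to decompose the $L_2^2$ loss into a clean noise-related term and a purely deterministic alignment term, and then evaluate the alignment term exactly using the Hermite expansion of $\sigma$.

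First, I would apply the elementary inequality $(a-c)^2\leq 2(a-b)^2 + 2(b-c)^2$ with $a=\sigma(\w\cdot\x)$, $b=\sigma(\w^{\ast}\cdot\x)$ and $c=y$, and take expectations. This immediately yields
\begin{equation*}
    \Ltwo(\w) \;\leq\; 2\,\Ltwostr(\w) + 2\,\Exy[(\sigma(\w^{\ast}\cdot\x)-y)^2] \;=\; 2\,\Ltwostr(\w) + 2\,\opt,
\end{equation*}
which reduces the claim to showing the identity $\Ltwostr(\w) = 2\bigl(1-\sum_{k\geq k^{\ast}} c_k^2 (\w\cdot\w^{\ast})^k\bigr)$.

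For this identity, I would expand $\sigma$ in the Hermite basis, $\sigma(z)=\sum_{k\geq k^{\ast}} c_k\hep_k(z)$, and use the tensor identity $\hep_k(\w\cdot\x)=\la\he_k(\x),\w^{\ots k}\ra$ valid for unit $\w$ (from \Cref{fact:orthogonal-hermite-tensor}(2)). Expanding the square,
\begin{equation*}
    \Ltwostr(\w) \;=\; \Ex\Bigl[\Bigl(\littlesum_{k\geq k^{\ast}} c_k\bigl(\la\he_k(\x),\w^{\ots k}\ra-\la\he_k(\x),\w^{*\ots k}\ra\bigr)\Bigr)^{\!2}\Bigr],
\end{equation*}
and then invoking the orthogonality statement \Cref{fact:orthogonal-hermite-tensor}(1), only diagonal $k=j$ terms survive. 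Using $\la \sym(\w^{\ots k}),\sym(\w^{*\ots k})\ra = \la \w^{\ots k},\w^{*\ots k}\ra = (\w\cdot\w^{\ast})^k$ and the normalization $\sum_{k\geq k^{\ast}} c_k^2 = 1$ from \Cref{assm:on-activation}, the cross terms collapse to $-2\sum_{k\geq k^{\ast}} c_k^2(\w\cdot\w^{\ast})^k$ and the two square terms each contribute $\sum_k c_k^2 = 1$. This yields exactly $\Ltwostr(\w)=2\bigl(1-\sum_{k\geq k^{\ast}} c_k^2(\w\cdot\w^{\ast})^k\bigr)$.

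Plugging this back gives the stated bound $\Ltwo(\w)\leq 2\opt + 4\bigl(1-\sum_{k\geq k^{\ast}} c_k^2(\w\cdot\w^{\ast})^k\bigr)$. There is no real obstacle here: the only subtlety is remembering to apply symmetrization inside the inner products when using \Cref{fact:orthogonal-hermite-tensor}, and noting that \Cref{fact:tensor-algebra}(1) guarantees $\la \w^{\ots k}, \sym(\w^{*\ots k})\ra = \la\w^{\ots k},\w^{*\ots k}\ra$, so passing to symmetrized tensors does not change the contraction.
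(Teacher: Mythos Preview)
Your proposal is correct and follows essentially the same approach as the paper: both split $\Ltwo(\w)\leq 2\opt + 2\Ltwostr(\w)$ via the elementary inequality $(a-c)^2\leq 2(a-b)^2+2(b-c)^2$ (the paper calls it Young's inequality), and both evaluate $\Ltwostr(\w)$ using the Hermite expansion together with \Cref{fact:orthogonal-hermite-tensor}. The only cosmetic difference is that the paper first writes $\Ltwostr(\w)=2(1-\Ex[\sigma(\w\cdot\x)\sigma(\w^{\ast}\cdot\x)])$ and expands the product, whereas you expand the square directly; the resulting computation is identical.
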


}

\begin{proof}
Recalling that the activation $\sigma$ is normalized so that $\Ex[\sigma^2(\w\cdot\x)] = \Ex[\sigma^2(\w^{\ast}\cdot\x)] = 1$, we can simplify the $L_2^2$ loss to
\begin{equation*}
    \Ltwo(\w) = 2\Big(1 - \Exy[y\sigma(\w\cdot\x)]\Big).
\end{equation*}
The noiseless $L_2^2$ loss admits the following decomposition:
\begin{align}\label{eq:ltwostr=1-alpha^k}
    \Ltwostr(\w) &= 2\Big(1 - \Ex[\sigma(\w\cdot\x)\sigma(\w^{\ast}\cdot\x)]\Big) \nonumber\\
    &= 2\bigg(1 - \Ex\bigg[\sum_{k\geq k^{\ast}}c_k\la\he_k(\x),\w^{\otimes k}\ra\sum_{k'\geq k^{\ast}}c_{k'}\la\he_{k'}(\x),\w^{*\otimes {k'}}\ra\bigg]\bigg)\nonumber\\
    &=2\bigg(1 - \sum_{k\geq k^{\ast}}c_k^2(\w\cdot\w^{\ast})^k\bigg),
\end{align}
{where in the last equality we used the orthonormality property of Hermite tensors (\Cref{fact:orthogonal-hermite-tensor}).}
Further, using Young's inequality and the definitions of $\opt$ and $\Ltwostr(\w)$, we also have $\Ltwo(\w)\leq 2\opt + 2\Ltwostr(\w)$, which combined with \Cref{eq:ltwostr=1-alpha^k} leads to 
\begin{align*}
    \Ltwo(\w)\leq 2\opt + 4\bigg(1 - \sum_{k\geq k^{\ast}}c_k^2(\w\cdot\w^{\ast})^k\bigg).
\end{align*}
\end{proof}

{\begin{remark}\label{rmk:ltwo-error-k-alpha}
    \Cref{eq:ltwostr=1-alpha^k} suggests that even in the realizable case, some assumption on boundedness of $\sum_{k \geq k^{\ast}} k c_k^2$ (see \Cref{assm:on-activation}($iii$)) may be necessary to have a nontrivial bound on the $L_2^2$ loss. Consider an algorithm that outputs a vector $\w$ such that $\w\cdot \w^{\ast} = 1- \alpha$ for some $\alpha \in (0, 1)$ (if $\alpha = 0,$ $\w = \w^{\ast}$ since both vectors are on the unit sphere). Since $\sum_{k \geq k^{\ast}}c_k^2 = 1,$ we can also write  $\Ltwostr(\w) = 2 \sum_{k \geq k^{\ast}}c_k^2 (1 - (1 - \alpha)^k).$ For $k = \Omega(1/\alpha),$ $1 - (1 - \alpha)^k \approx k \alpha$. Thus, if we want an algorithm that works generically for any target accuracy,  $\sum_{k \geq k^{\ast}} k c_k^2$ ought to be bounded. 
\end{remark}}

\begin{remark}\label{rmk:gradient-are-different}{ Even though for $\|\w\|_2 = 1$ we have $\hep_k(\w\cdot\x) = \la\he_k(\x),\w^{\otimes k}\ra$, the gradients with respect to $\w$ of these two functions are different in general. For example, for $k = 2$, we have 
   \begin{equation*}
       \la\he_2(\x),\w^{\otimes 2}\ra = (1/\sqrt{2})((\w\cdot\x)^2 - \|\w\|_2^2) \;\;\text{ and }\;\; \hep_2(\w\cdot\x) = (1/\sqrt{2})((\w\cdot\x)^2 - 1),
   \end{equation*} 
   which are equal in function value, but 
   \begin{gather*}
       \nabla \la\he_2(\x),\w^{\otimes 2}\ra = \sqrt{2}((\w\cdot\x)\x - \w) = 2\la\he_2(\x),\w\ra,\\
       \nabla\hep_2(\x) = \sqrt{2}(\w\cdot\x)\x = \sqrt{2}\hep_1(\w\cdot\x)\x,
   \end{gather*}
    are different. In particular, for the derivative of the left-hand side of \Cref{eq:ltwostr=1-alpha^k} to be equal to the derivative of its right-hand side, we need to use the tensor form of Hermite polynomials, because to ensure interchangeability of differentiation and summation, the sequence needs to be uniformly convergent. Note that $\Ex[\sum_{k \geq k^{\ast}} c_k\la\he_k(\x),\w^{\otimes k}\ra\sum_{k' \geq k^{\ast}} c_{k'}\la\he_{k'}(\x),\w^{*\otimes k'}\ra]$ converges to $\sum_{k\geq k^{\ast}} c_k^2(\w\cdot\w)^k$ uniformly for all $\w\in\R^d$, but the sequence $\Ex[\sum_{k\geq k^{\ast}} c_{k}\hep_{k}(\w\cdot\x)\sum_{k'\geq k^{\ast}}c_{k'}\hep_{k'}(\w^{\ast}\cdot\x)]$ converges to $\sum_k c_k^2(\w\cdot\w)^k$ only when $\|\w\|_2 = 1$, since it requires $\w\cdot\x\sim\N(0,1)$ to ensure that $\Ex[\hep_k(\w\cdot\x)\hep_j(\w^{\ast}\cdot\x)] = \1\{k=j\}(\w\cdot\w^{\ast})^k$. }
\end{remark}

As observed in \Cref{rmk:gradient-are-different}, the gradients of $\hep_k(\w\cdot\x)$ and $\la\he_k(\x),\w^{\otimes k}\ra$ are different in general. Throughout the paper, we will be taking the gradient with respect to the tensor form of $\sigma(\w\cdot\x)$; in other words, $\nabla\sigma(\w\cdot\x) = \nabla(\sum_{k\geq k^{\ast}}c_k\la\he_k(\x),\w^{\otimes k}\ra)$. 
 
\section{Full Version of \Cref{sec:initialization}}\label{app:sec:initialization}

In this section, we show how to get an initial parameter vector $\w^0$ such that $\w^0\cdot\w^{\ast} = 1-\eps_0$ for some small constant $\eps_0$. {The main technique is a tensor PCA algorithm that finds the principal component of a noisy degree-$k$-Chow tensor for any $k\geq k^{\ast}$, as long as $\opt\lesssim c_k^2$. Such a degree-$k$ Chow tensor is defined by $\mC_{k} = \Exy[y\he_{k}(\x)]$, and we denote its noiseless counterpart by
\begin{equation*}
\mC^{\ast}_{k} = \Ex[\sigma(\w^{\ast}\cdot\x)\he_{k}(\x)] = \Ex\bigg[\sum_{j\geq \ks}c_j\la\he_j(\x),\w^{*\otimes j}\ra \he_k(\x)\bigg].
\end{equation*}
Furthermore, let us denote the difference between $\mC_{k}$ and $\mC_{k}^{\ast}$ by 
\begin{equation*}
    \mH_{k}\eqdef \mC_{k} - \mC^{\ast}_{k} = \Exy[(y - \sigma(\w^{\ast}\cdot\x))\he_k(\x)].
\end{equation*}}
{Note that since $\he_k(\x)$ is a symmetric tensor for any $\x$, all $\mC_k,\mC_k^{\ast}$ and $\mH_k$ are symmetric tensors.}

We use the following matrix unfolding operator that maps a $k$-tensor to a matrix in $\R^{d^{l}\times d^{k-l}}$.
Concretely, given a $k$-tensor $\mathbf{T}$, we define:
\begin{gather*}
    \mtrizel(\mathbf{T})_{i_1 + (i_2-1)d + \dots + (i_{l} - 1)d^{l-1},j_1 + \dots+(j_{k-l} - 1)d^{k - l - 1}} \eqdef (\mathbf{T})_{i_1,i_2,\dots,i_{l},j_1,\dots,j_{k-l}}    
\end{gather*}
for all $i_1,\dots,i_{l},j_1,\dots,j_{k-l}\in[d]$.

For notational convenience, we also define the `vectorize' operator and `tensorize' operator, which map a vector $\bv\in\R^{d^l}$ to an $l$-tensor for any integer $l$, and vice versa. In detail, 
\begin{equation*}
    \tnsrize(\bv)_{i_1,\dots,i_l} \eqdef \bv_{i_1+(i_2 - 1)d + \dots + (i_l - 1)d^{l-1}}, \;\forall i_1,\dots,i_{l}\in[d];
\end{equation*}
and conversely, we define
\begin{equation*}
    \vctrize(\bv^{\ots l})_{i_1+(i_2 - 1)d + \dots + (i_l - 1)d^{l-1}} \eqdef \bv_{i_1}\bv_{i_2}\dots\bv_{i_l},\;\forall i_1,\dots,i_{l}\in[d].
\end{equation*}
Finally, given a vector $\bv\in\R^{d^l}$, we can also convert this vector to a matrix of size $\R^{d \times d^{l-1}}$:
\begin{equation*}
    \mtrize_{(1,l-1)}(\bv)_{i,j_1,\dots,j_{l-1}} = \bv_{i + (j_1 - 1)d + \cdots + (j_{l-1} - 1)d^{l-1}}, \;\forall i,j_1,\dots,j_{l-1}\in[d].
\end{equation*}

Some simple facts on the algebra of the unfolded matrix are in order. 
\begin{fact}\label{app:algebra-unfolded-matrix}
    Let $\mT$ be a symmetric $k$-tensor, and let $\br\in\R^{d^{k-l}}$, $\bv\in\R^{d^l}$. Then
    \begin{enumerate}
        \item For any index $i\in[d^l]$, 
        \begin{equation}\label{app:eq:linear-transform-mM-br}
            (\mtrizel(\mT)\br)_i = \bigg\la\mT, \e_{i_1'}\ots\dots\ots\e_{i_l'}\ots\tnsrize(\br)\bigg\ra,
        \end{equation}
        where $i_1',\dots,i_l'\in[d]$ satisfies $i = i_1'+(i_2' - 1)d + \dots + (i_l' - 1)d^{l-1}$.
        \item For any index $j\in[d^{k-l}]$,
        \begin{equation}\label{app:eq:linear-transform-bv-mM}
            (\mtrizel(\mT)^\top\bv)_j = \bigg\la\mT, \tnsrize(\bv)\ots\e_{j_1'}\ots\dots\ots\e_{j_{k-l}'}\bigg\ra,
        \end{equation}
        where $j_1',\dots,j_{k-l}'\in[d]$ satisfies $j_1' + \dots+(j_{k-l}' - 1)d^{k - l - 1} = j$.
        \item Finally, 
        \begin{equation}\label{app:eq:quadratic-from-bv-M-r}
    \bv^\top\mtrizel(\mT)\br = \la\mT,\tnsrize(\bv)\ots\tnsrize(\br)\ra.
    \end{equation}
    \end{enumerate}
\end{fact}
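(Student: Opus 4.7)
The plan is that this fact is really a direct bookkeeping exercise: each claim follows by unpacking the definitions of $\mtrizel$, $\tnsrize$, and $\vctrize$, and carrying out the change-of-variables from linear indices in $[d^l]$ or $[d^{k-l}]$ to tuples in $[d]^l$ or $[d]^{k-l}$. I would not invoke the symmetry of $\mT$ anywhere; the three identities hold for an arbitrary $k$-tensor once the bijection between linear and multi-indices is fixed.

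For part (1), I would start from the definition of the matrix-vector product, write
\[
(\mtrizel(\mT)\br)_i \;=\; \sum_{j=1}^{d^{k-l}}\mtrizel(\mT)_{i,j}\,\br_j,
\]
then invoke the definition of $\mtrizel$ to replace $\mtrizel(\mT)_{i,j}$ by $\mT_{i_1',\dots,i_l',j_1',\dots,j_{k-l}'}$, where $(i_1',\dots,i_l')$ is the unique tuple with $i = i_1' + (i_2'-1)d + \dots + (i_l'-1)d^{l-1}$ and $(j_1',\dots,j_{k-l}')$ is the unique tuple with $j = j_1' + \dots + (j_{k-l}'-1)d^{k-l-1}$. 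The key step is to reindex the sum over $j \in [d^{k-l}]$ as a sum over $(j_1',\dots,j_{k-l}') \in [d]^{k-l}$; then by the definition of $\tnsrize$, $\br_j = \tnsrize(\br)_{j_1',\dots,j_{k-l}'}$, and the resulting sum is precisely the inner product $\langle \mT,\ \e_{i_1'}\otimes\cdots\otimes\e_{i_l'}\otimes\tnsrize(\br)\rangle$, as claimed.

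Part (2) is entirely analogous: I would expand $(\mtrizel(\mT)^\top\bv)_j = \sum_{i=1}^{d^l} \mtrizel(\mT)_{i,j}\bv_i$, apply the same definition of $\mtrizel$ and then $\tnsrize$, and reindex $i$ as $(i_1',\dots,i_l')$. For part (3), I would combine (1) and (2): using (1),
\[
\bv^\top \mtrizel(\mT)\br \;=\; \sum_{i=1}^{d^l} \bv_i \bigl\langle \mT,\ \e_{i_1'}\otimes\cdots\otimes\e_{i_l'}\otimes\tnsrize(\br)\bigr\rangle,
\]
and by multilinearity of the contraction I can pull the sum inside, reindex $i$ as $(i_1',\dots,i_l')$, and use $\bv_i = \tnsrize(\bv)_{i_1',\dots,i_l'}$ together with $\sum_{i_1',\dots,i_l'}\tnsrize(\bv)_{i_1',\dots,i_l'}\,\e_{i_1'}\otimes\cdots\otimes\e_{i_l'} = \tnsrize(\bv)$ to collapse the sum into $\langle \mT,\tnsrize(\bv)\otimes\tnsrize(\br)\rangle$.

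There is no real obstacle here beyond keeping the index bijection straight; the only place one must be a little careful is ensuring that the linear orderings used in $\mtrizel$, $\tnsrize$, and $\vctrize$ are consistent (they are, by definition in the paper), so that the reindexing in each step is truly a bijection and not just a many-to-one map. Because of this, I would state the bijection $i \leftrightarrow (i_1',\dots,i_l')$ (and similarly for $j$) explicitly once at the beginning and then reuse it throughout, rather than re-deriving it in each part.
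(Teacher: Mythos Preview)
Your proposal is correct and matches the paper's own proof essentially step for step: expand the matrix-vector product, substitute the definition of $\mtrizel$, reindex the linear index as a multi-index, and recognize the resulting sum as the tensor inner product (the paper makes the last identification explicit via indicator functions $\1\{i_1=i_1'\}\cdots\1\{i_l=i_l'\}$, but this is cosmetic). Your observation that symmetry of $\mT$ is never used is also correct; the paper's proof does not invoke it either.
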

\begin{proof}
First we show that for a symmetric tensor $\mT$, the linear transformation $\mtrizel(\mT)\br$ of vector $\br\in\R^{d^{k-l}}$ is equal to the tensor inner product $\la\mT, \tnsrize(\br)\ra$. 
This can be proved by direct calculations that for any $i\in[d^l]$:
\begin{align*}
    (\mtrizel(\mT)\br)_i&=\sum_{j=1}^{d^{(k-l)}} \mtrizel(\mT)_{i,j}\br_j\\
    &=\sum_{j_1,\dots,j_{k-l}\in[d]} \mtrizel(\mT)_{i,j_1 + \dots+(j_{k-l} - 1)d^{k - l - 1}}\br_{j_1 + \dots+(j_{k-l} - 1)d^{k - l - 1}}\\
    &=\sum_{j_1,\dots,j_{k-l}\in[d]} (\mT)_{i_1',\dots,i_l',j_1,\dots,j_{k-l}} \tnsrize(\br)_{j_1,\dots,j_{k-l}},
\end{align*}
where $i_1',\dots,i_j'\in[d]$ satisfies $i = i_1'+(i_2' - 1)d + \dots + (i_l' - 1)d^{l-1}$. Observe that the summation above further equals
\begin{align}
    &\quad (\mtrizel(\mT)\br)_i \nonumber\\
    &=\sum_{i_1,\dots,i_l\in[d]}\sum_{j_1,\dots,j_{k-l}\in[d]} (\mT)_{i_1,\dots,i_l,j_1,\dots,j_{k-l}} \tnsrize(\br)_{j_1,\dots,j_{k-l}}\1\{i_1 = i_1'\}\dots\1\{i_l = i_l'\} \nonumber\\
    &=\sum_{i_1,\dots,i_l,j_1,\dots,j_{k-l}\in[d]}(\mT)_{i_1,\dots,i_l,j_1,\dots,j_{k-l}} (\e_{i_1'}\ots\dots\ots\e_{i_l'}\ots\tnsrize(\br))_{i_1,\dots,i_l,j_1,\dots,j_{k-l}} \nonumber\\
    &=\bigg\la\mT, \e_{i_1'}\ots\dots\ots\e_{i_l'}\ots\tnsrize(\br)\bigg\ra  \;. \nonumber
\end{align}
Similarly, for a symmetric tensor $\mT$ and any vector $\bv\in\R^{d^l}$, and any index $j\in[d^{k-l}]$, it holds 
\begin{equation*}
    (\mtrizel(\mT)^\top\bv)_j = \bigg\la\mT, \tnsrize(\bv)\ots\e_{j_1'}\ots\dots\ots\e_{j_{k-l}'}\bigg\ra,
\end{equation*}
where $j_1' + \dots+(j_{k-l}' - 1)d^{k - l - 1} = j$. 

Finally, combining \Cref{app:eq:linear-transform-bv-mM} and \Cref{app:eq:linear-transform-mM-br} we get that for any $\bv\in\R^{d^l},\br\in\R^{d^{k-l}}$, the quadratic form $\bv^\top\mtrizel(\mT)\br$ equals
$\bv^\top\mtrizel(\mT)\br = \la\mT,\tnsrize(\bv)\ots\tnsrize(\br)\ra$.
\end{proof}

{Throughout this section, we define $l = k/2$ when $k$ is even, and $l = (k-1)/2$ when $k$ is odd. In other words, $l = \lfloor k/2\rfloor$.}
We leverage the tensor unfolding algorithm proposed in \cite{richard2014statistical}, which can be described in short as follows.  First we unfold the degree-$k$ Chow tensor to a matrix in $\R^{d^{l}\times d^{k-l}}$, and find its top-left singular vector $\bv\in\R^{d^l}$.
Then, we calculate the matrix $\mtrize_{(1,l-1)}(\bv)$, and find its top left singular vector
$\bu$. One can show that this eigenvector $\bu$ correlates with $\w^{\ast}$ significantly.

\begin{algorithm}[ht]
   \caption{$k$-Chow Tensor PCA}
   \label{app:alg:initialization}
\begin{algorithmic}[1]
\STATE {\bfseries Input:} Parameters $\eps, k, \eps_0, c_k, B_4 > 0$; Sample access to $\D$
\STATE Let $l = \lfloor k/2 \rfloor$
\STATE Draw $n = {\Theta}({e^k\log^k(B_4/\eps)d^{k-l}}/(\eps_0^2) + 1/\eps)$ samples $\{(\x\ith,y\ith)\}_{i=1}^n$ from $\D$ 
\STATE Construct $\wh{\mM}\eqdef (1/n)\sum_{i=1}^n \mtrize_{(l,k-l)}(y\ith\he_k(\x\ith))$; compute its top left singular vector $\wh{\bv}^{\ast}$  \STATE Compute the top-left singular vector $\wh{\bu}$ of the matrix $\mtrize_{1,l-1}(\wh{\bv}^{\ast})$
\STATE {\bfseries Return:} $\wh{\bu}$
\end{algorithmic}
\end{algorithm}

Our main result for initialization is the following:
\begin{proposition}[Initialization]\label{app:lem:initialization-1-1/k}
    Suppose \Cref{assm:on-activation} holds. Assume that $\opt\leq c_{k^{\ast}}^2/(64 k^{\ast})^2$, and let $\eps_0 = c_{k^{\ast}}/(256 k^{\ast})$. 
    Then, \Cref{alg:initialization} applied to \Cref{def:agnostic-learning} with $k = k^{\ast}$ uses 
    $$n = {\Theta}({({k^{\ast}})^2e^{k^{\ast}}\log^{k^{\ast}}(B_4/\eps)d^{\lceil {k^{\ast}}/2 \rceil}}/(c_{k^{\ast}}^2) + 1/\eps)$$  
    samples, runs in polynomial time,  and outputs a vector $\w^0\in \spd$ such that $\w^0\cdot\w^{\ast}\geq 1 - \min\{1/k^{\ast}, 1/2\}$.
\end{proposition}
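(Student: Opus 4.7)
The plan is to instantiate Algorithm~\ref{app:alg:initialization} with $k = k^{\ast}$ and then chain together three ingredients already developed in the section: the population-level alignment of the top left singular vector of the unfolded Chow tensor (\Cref{claim:top-singular-value}), the finite-sample matrix concentration (\Cref{lem:sample-for-matrix-concentration}), and the ``re-matricization'' rounding step (\Cref{lem:correlation-w*-and-top-left-sing-mat(bv*)}). The only remaining work is bookkeeping of constants to verify that, under the hypothesis $\opt \leq c_{k^{\ast}}^2/(64 k^{\ast})^2$ and the choice $\eps_0 = c_{k^{\ast}}/(256 k^{\ast})$, the final alignment is at least $1 - \min\{1/k^{\ast}, 1/2\}$.

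Concretely, I would first invoke \Cref{lem:sample-for-matrix-concentration} with the stated $\eps_0$ and the advertised sample size $n = \Theta((k^{\ast})^{2} e^{k^{\ast}} \log^{k^{\ast}}(B_4/\eps) d^{\lceil k^{\ast}/2\rceil}/c_{k^{\ast}}^{2} + 1/\eps)$, so that with probability at least $1 - \exp(-d^{1/2})$ the empirical top left singular vector $\wh{\bv}^{\ast}$ of $\wh{\mM}$ obeys
\begin{equation*}
\wh{\bv}^{\ast}\cdot\vctrize(\w^{*\otimes l}) \;\geq\; 1 - \frac{2\sqrt{\opt}}{c_{k^{\ast}}} - \frac{2\eps_0}{(c_{k^{\ast}}/2 - 4\sqrt{\opt}) - \eps_0}.
\end{equation*}
Plugging in $\sqrt{\opt} \leq c_{k^{\ast}}/(64 k^{\ast})$ gives $2\sqrt{\opt}/c_{k^{\ast}} \leq 1/(32 k^{\ast})$; the denominator in the second term is at least $c_{k^{\ast}}/4$, so that term is at most $1/(32 k^{\ast})$. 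Summing, $\wh{\bv}^{\ast}\cdot\vctrize(\w^{*\otimes l}) \geq 1 - 1/(16 k^{\ast})$.

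Second, I would apply \Cref{lem:correlation-w*-and-top-left-sing-mat(bv*)} with $\eps_1 = 1/(16 k^{\ast}) \leq 1/16$ to the vector $\wh{\bv}^{\ast}$. This yields that the top left singular vector $\wh{\bu}$ of $\mtrize_{(1,l-1)}(\wh{\bv}^{\ast})$ --- which is precisely the output of \Cref{app:alg:initialization} --- satisfies
\begin{equation*}
\wh{\bu}\cdot\w^{\ast} \;\geq\; 1 - 2\eps_1 \;\geq\; 1 - \frac{1}{8 k^{\ast}} \;\geq\; 1 - \min\{1/k^{\ast},\, 1/2\},
\end{equation*}
which is the claimed alignment. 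The sample complexity and polynomial runtime are inherited from \Cref{lem:sample-for-matrix-concentration} and the two singular vector computations (each on a matrix of size at most $d^{\lceil k^{\ast}/2 \rceil} \times d^{\lceil k^{\ast}/2 \rceil}$).

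The only genuinely substantial step is \Cref{lem:sample-for-matrix-concentration}, which is already stated and whose proof is deferred; the present proposition is essentially an accounting exercise that converts a constant-operator-norm error guarantee on $\wh{\mM} - \mM$ (obtained via Wedin's theorem together with the singular-value gap $\rho_1 - \rho_2 \gtrsim c_{k^{\ast}}$ shown in \Cref{app:claim:singular-gap}) into a $\min\{1/k^{\ast}, 1/2\}$-alignment guarantee on $\wh{\bu}$ through the rounding lemma. The principal subtlety to be careful with is the $k^{\ast}$-dependence of the slack budget: both the $2\sqrt{\opt}/c_{k^{\ast}}$ Wedin remainder and the $2\eps_0/(\cdots)$ concentration remainder must each be $O(1/k^{\ast})$, which is exactly what the choices $\opt \leq c_{k^{\ast}}^2/(64 k^{\ast})^2$ and $\eps_0 = c_{k^{\ast}}/(256 k^{\ast})$ enforce, and this is what inflates the sample complexity by a factor of $(k^{\ast})^2$ relative to the plain matrix concentration bound.
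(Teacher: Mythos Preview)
Your proposal is correct and follows essentially the same approach as the paper: invoke \Cref{lem:sample-for-matrix-concentration} with the specified $\eps_0$ to get $\wh{\bv}^{\ast}\cdot\vctrize(\w^{*\otimes l}) \geq 1 - 1/(16k^{\ast})$, then apply \Cref{lem:correlation-w*-and-top-left-sing-mat(bv*)} with $\eps_1 = 1/(16k^{\ast})$ to conclude. Your constant bookkeeping matches the paper's, and your remark explaining the $(k^{\ast})^2$ factor in the sample complexity via the choice $\eps_0 \asymp c_{k^{\ast}}/k^{\ast}$ is exactly the right intuition.
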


We remark here that \Cref{app:alg:initialization} can also be used to find an approximate solution of our agnostic learning problem; however the dependence on the value of $\opt$ is \textbf{suboptimal}, scaling with its square-root. In particular, we have the following proposition:
\begin{proposition}[Solving the Agnostic Learning Problem Using Tensor PCA]\label{app:thm:solve-using-pca}
Suppose \Cref{assm:on-activation} holds. Assume that $\opt\leq c_{k^{\ast}}^2/(64 k^{\ast})^2$ and $\eps\leq 1/64$. Let $\eps_0 = c_{k^{\ast}}\eps/16$. 
    Then, \Cref{alg:initialization} applied to \Cref{def:agnostic-learning} with $k = k^{\ast}$ uses $n = {\Theta}({e^{k^{\ast}}\log^{k^{\ast}}(B_4/\eps)d^{\lceil k^{\ast}/2 \rceil}}/(c_{k^{\ast}}^2\eps^2) + 1/\eps)$  samples, runs in polynomial time,  and outputs a vector $\w^0\in \spd$ such that \begin{equation*}
        \w^0\cdot\w^{\ast}\geq 1 - \frac{4}{c_{k^{\ast}}}\sqrt{\opt} - 2{\eps}/3.
    \end{equation*}
    Furthermore, the $L_2^2$ error of $\w^0$ is at most 
    $$\Ltwo(\w^0) =  O\bigg(C_{k^{\ast}}\bigg(\frac{1}{c_{k^{\ast}}}\sqrt{\opt} + \eps\bigg)\bigg).$$
\end{proposition}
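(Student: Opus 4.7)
The plan is to follow exactly the same template as the proof of \Cref{app:lem:initialization-1-1/k}, but to select $\eps_0$ proportional to $\eps$ (rather than to a constant) so that the resulting alignment bound degrades \emph{gracefully} with $\sqrt{\opt}$, and then to translate the alignment bound into an $L_2^2$-error bound via \Cref{claim:upbd-L_2(w)} together with \Cref{assm:on-activation}(iii). The main technical point to verify is that the denominator in the Wedin-type bound from \Cref{lem:sample-for-matrix-concentration} remains a constant fraction of $c_{k^{\ast}}$ despite $\eps_0$ now shrinking with $\eps$.

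The first step is to instantiate \Cref{lem:sample-for-matrix-concentration} with $k=k^{\ast}$ and $\eps_0 = c_{k^{\ast}}\eps/16$. Since $\sqrt{\opt}\leq c_{k^{\ast}}/(64k^{\ast})\leq c_{k^{\ast}}/64$, we have $c_{k^{\ast}}/2 - 4\sqrt{\opt}\geq 7c_{k^{\ast}}/16$, and since $\eps\leq 1/64$ gives $\eps_0\leq c_{k^{\ast}}/1024$, the denominator $(c_{k^{\ast}}/2 - 4\sqrt{\opt}) - \eps_0$ in the bound of \Cref{lem:sample-for-matrix-concentration} is at least $3c_{k^{\ast}}/8$. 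Thus the second error term is at most $2\eps_0/(3c_{k^{\ast}}/8) = \eps/3$, yielding
\[
\wh{\bv}^{\ast}\cdot\vctrize(\w^{*\ots l})\geq 1 - (2/c_{k^{\ast}})\sqrt{\opt} - \eps/3.
\]
The sample complexity claimed in the proposition follows from plugging $\eps_0 = c_{k^{\ast}}\eps/16$ into the sample count of \Cref{lem:sample-for-matrix-concentration}.

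Next I would apply \Cref{lem:correlation-w*-and-top-left-sing-mat(bv*)} with $\eps_1 := (2/c_{k^{\ast}})\sqrt{\opt} + \eps/3$. The hypothesis $\eps_1\leq 1/16$ is satisfied because $(2/c_{k^{\ast}})\sqrt{\opt}\leq 1/32$ under the assumption $\sqrt{\opt}\leq c_{k^{\ast}}/64$, and $\eps/3\leq 1/192 \leq 1/32$ under $\eps\leq 1/64$. The conclusion of that lemma then gives
\[
\w^0\cdot\w^{\ast}\geq 1 - 2\eps_1 = 1 - (4/c_{k^{\ast}})\sqrt{\opt} - 2\eps/3,
\]
which is the first claim of the proposition.

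Finally, to convert the alignment bound into an $L_2^2$-error bound, set $\alpha := 1 - \w^0\cdot\w^{\ast}\leq (4/c_{k^{\ast}})\sqrt{\opt} + 2\eps/3$ and invoke \Cref{claim:upbd-L_2(w)}. Using the elementary inequality $1 - (1-\alpha)^k\leq k\alpha$ for $k\geq 1$ together with $\sum_{k\geq k^{\ast}} c_k^2 = 1$ and \Cref{assm:on-activation}(iii), we bound
\[
1 - \sum_{k\geq k^{\ast}} c_k^2(1-\alpha)^k = \sum_{k\geq k^{\ast}} c_k^2 (1 - (1-\alpha)^k) \leq \alpha \sum_{k\geq k^{\ast}} k c_k^2 \leq C_{k^{\ast}}\alpha.
\]
Substituting into \Cref{claim:upbd-L_2(w)} gives $\Ltwo(\w^0)\leq 2\opt + 4 C_{k^{\ast}}\alpha = O(C_{k^{\ast}}(\sqrt{\opt}/c_{k^{\ast}} + \eps))$, as required. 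The only mild obstacle is the bookkeeping of constants in step one to ensure the Wedin denominator stays bounded away from zero; everything else is a direct application of machinery already proved in \Cref{sec:initialization}.
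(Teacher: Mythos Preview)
Your proposal is correct and follows essentially the same approach as the paper's proof: instantiate \Cref{lem:sample-for-matrix-concentration} with $\eps_0 = c_{k^{\ast}}\eps/16$, apply \Cref{lem:correlation-w*-and-top-left-sing-mat(bv*)} after verifying $\eps_1\leq 1/16$, and then convert the alignment bound into an $L_2^2$ bound via \Cref{claim:upbd-L_2(w)} together with $1-(1-\alpha)^k\leq k\alpha$ and \Cref{assm:on-activation}(iii). Your bookkeeping on the Wedin denominator is in fact slightly more explicit than the paper's, but the argument is otherwise identical.
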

Thus, when $\opt = 0$ (i.e., in the realizable cases), applying \Cref{app:alg:initialization} with $O(d^{\lceil k^{\ast}/2 \rceil}/\eps^2)$ samples recovers the hidden vector $\w^{\ast}$.

\paragraph{Roadmap} To prove \Cref{app:lem:initialization-1-1/k} and \Cref{app:thm:solve-using-pca} we need three main ingredients. First, we will show (in \Cref{app:lem:range-singular-value-of-mtrizel(C_k)} and its corollary \Cref{app:claim:top-singular-value}) that the top-left singular vector $\bv^{\ast}$ of the unfolded matrix $\mM\eqdef \mtrizel(\Exy[y\he_k(\x)])$ correlates significantly with the vectorized $l$-product tensor, $\vctrize(\w^{*\ots l})$. This indicates that $\bv^{\ast}$ contains rich information about the direction of $\w^{\ast}$. 
However, since we only have access to $\wh{\mM}$, the empirical estimation of $\mM$, we need to ensure that the top-left singular vector of $\wh{\mM}$, denoted by $\wh{\bv}^{\ast}$, is close to $\bv^{\ast}$. This is proved in \Cref{app:lem:sample-for-matrix-concentration} using  sophisticated matrix concentration bounds. 
In particular, in \Cref{app:eq:angle-between-bv*-wh(bv)*} we guarantee that the angle between $\wh{\bv}^{\ast}$ and $\bv^{\ast}$ is bounded by $O(\eps_0/c_k)$ for any small constant $\eps_0>0$, 
provided that we take $\tilde{\Theta}(d^{\lceil k/2 \rceil}/\eps_0^2)$ samples and assume that $\opt\lesssim c_k^2$. 
The inner product between $\wh{\bv}^{\ast}$ and $\vctrize(\w^{*\ots l})$ can then be bounded below by $1 - O((\sqrt{\opt} + \eps_0)/c_k)$.
Combining with \Cref{app:claim:top-singular-value}, this implies that $\wh{\bv}^{\ast}$ correlates with $\vctrize(\w^{*\ots l})$ significantly. 
Finally, in \Cref{app:lem:correlation-w*-and-top-left-sing-mat(bv*)} we show that after unfolding the $\R^{d^l}$ vector $\wh{\bv}^{\ast}$ to an $\R^{d\times d^{l-1}}$ matrix, its top-left singular vector $\bu$ correlates with $\w^{\ast}$ significantly; it particular, we have $\w^{\ast}\cdot\bu\gtrsim 1 - c\eps_0$ for some absolute constant $c>0$. 
Combining these results and choosing $\eps_0\approx c_{k^{\ast}}/k^{\ast}$, we get \Cref{app:lem:initialization-1-1/k}, and choosing $\eps_0\approx c_{k^{\ast}}\eps$ yields \Cref{app:thm:solve-using-pca}.

\subsection{Signal in the $k$-Chow Tensor}

Our first observation is that for any left singular vector $\bv$ of $\mtrizel(\mC_k)$, the singular value  $\rho(\bv)$ is close to the inner product between $\bv$ and $\vctrize(\w^{*\ots l})$, where $l = \lceil k/2\rceil$. Concretely, we have:
\begin{lemma}\label{app:lem:range-singular-value-of-mtrizel(C_k)}
    Let $\bv$ be any left singular vector of $\mtrizel(\mC_k)$. Then, \begin{equation*}
        |\rho(\bv) - c_k(\vctrize(\w^{*\ots l})\cdot\bv)|\leq \sqrt{\opt}.
    \end{equation*}
\end{lemma}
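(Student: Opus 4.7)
The plan is to follow the outline already sketched in the excerpt, filling in the two technical ingredients, namely the Hermite-orthogonality computation for the noiseless part and the Cauchy--Schwarz estimate for the noise part. First I would write the singular value variationally, using \Cref{app:algebra-unfolded-matrix}(3), as
\[
\rho(\bv) \;=\; \max_{\br \in \R^{d^{k-l}},\,\|\br\|_2 = 1}\;\bv^\top \mtrizel(\mC_k)\br \;=\; \max_{\br:\|\br\|_2=1}\;\langle \mC_k, \tnsrize(\bv)\ots \tnsrize(\br)\rangle.
\]
Splitting $\mC_k = \mC_k^{\ast} + \mH_k$ turns the task into separately controlling the ``signal'' inner product against $\mC_k^{\ast}$ and the ``noise'' inner product against $\mH_k$.

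For the signal term, I would expand $\mC_k^{\ast} = \sum_{j\geq k^{\ast}} c_j \Ex[\langle \he_j(\x),\w^{*\ots j}\rangle \he_k(\x)]$ and use the orthonormality of Hermite tensors (\Cref{fact:orthogonal-hermite-tensor}(1)) to kill every term with $j\neq k$, leaving $\mC_k^{\ast} = c_k\,\sym(\w^{*\ots k})$. Then \Cref{fact:tensor-algebra}(1) and the factorization $\w^{*\ots k} = \w^{*\ots l}\ots \w^{*\ots (k-l)}$ give
\[
\langle \mC_k^{\ast}, \tnsrize(\bv)\ots \tnsrize(\br)\rangle \;=\; c_k\,(\vctrize(\w^{*\ots l})\cdot \bv)\,(\vctrize(\w^{*\ots k-l})\cdot \br).
\]
For the noise term, Cauchy--Schwarz in $L^2$ combined with $\Exy[(y-\sigma(\w^{\ast}\cdot\x))^2] = \opt$ yields
\[
|\langle \mH_k,\tnsrize(\bv)\ots \tnsrize(\br)\rangle| \;\leq\; \sqrt{\opt}\,\bigl(\Ex[\langle \he_k(\x),\tnsrize(\bv)\ots\tnsrize(\br)\rangle^2]\bigr)^{1/2} \;=\; \sqrt{\opt}\,\|\sym(\tnsrize(\bv)\ots \tnsrize(\br))\|_F,
\]
where the last equality uses \Cref{fact:orthogonal-hermite-tensor}(1) again. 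Symmetrization is non-expansive in Frobenius norm, so $\|\sym(\cdots)\|_F \le \|\tnsrize(\bv)\ots \tnsrize(\br)\|_F = \|\bv\|_2\|\br\|_2 = 1$.

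Combining the two bounds and taking the supremum over unit $\br$, one observes $\max_{\|\br\|_2=1}(\vctrize(\w^{*\ots k-l})\cdot \br) = \|\vctrize(\w^{*\ots k-l})\|_2 = 1$, which gives $\rho(\bv) \le c_k(\vctrize(\w^{*\ots l})\cdot \bv) + \sqrt{\opt}$. For the matching lower bound, choose the specific test vector $\br = \vctrize(\w^{*\ots k-l})$ (which is a unit vector), apply the same decomposition with the noise inequality used in the opposite direction, and the claim follows.

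There is no real obstacle here beyond being careful with the Hermite orthogonality calculation and the symmetrization step; the only subtle point is observing that $\sym$ does not blow up the Frobenius norm, so that the Cauchy--Schwarz bound cleanly reduces to $\sqrt{\opt}$ with no extra combinatorial factor. Everything else is bookkeeping with the identities provided in \Cref{fact:tensor-algebra} and \Cref{app:algebra-unfolded-matrix}.
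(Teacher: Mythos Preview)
Your proposal is correct and follows essentially the same route as the paper's proof: the variational characterization via \Cref{app:algebra-unfolded-matrix}(3), the signal/noise split $\mC_k=\mC_k^{\ast}+\mH_k$, Hermite orthogonality for the signal term, and Cauchy--Schwarz plus $\|\sym(\cdot)\|_F\le\|\cdot\|_F$ for the noise term are exactly what the paper does. The only cosmetic difference is that for the lower bound the paper keeps the $\max_{\br}$ and lower-bounds the noise uniformly, whereas you plug in the explicit witness $\br=\vctrize(\w^{*\ots k-l})$; these are equivalent.
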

\begin{proof}
    Recall that the singular value of the left singular vector $\bv$ satisfies
    \begin{align*}
        \rho(\bv) &= \max_{\br\in\R^{d^{k-l}}, \|\br\|_2 = 1} \bv^\top\mtrizel(\mC_k)\br \overset{(i)}{=} \max_{\br\in\R^{k-l}, \|\br\|_2 = 1}\la\mC_k, \tnsrize(\bv)\ots\tnsrize(\br)\ra,
    \end{align*}
    where we used \Cref{app:eq:quadratic-from-bv-M-r} in $(i)$.
    Since $\mC_k = \mC_k^{\ast} + \mH_k$, we further have
    \begin{equation*}
        \la \mC_k,\tnsrize(\bv)\ots\tnsrize(\br)\ra = \la \mC_k^{\ast},\tnsrize(\bv)\ots\tnsrize(\br)\ra + \la \mH_k,\tnsrize(\bv)\ots\tnsrize(\br)\ra.
    \end{equation*}
    We bound both terms above respectively. For the first term, plugging in the definition of $\mC_k^{\ast}$ and using \Cref{fact:orthogonal-hermite-tensor}, we have
    \begin{align}\label{app:eq:C-k*-innerdot-tensro(v)-tensor(r)}
        &\quad\; \la \mC_k^{\ast},\tnsrize(\bv)\ots\tnsrize(\br)\ra \\
        &= \Ex\bigg[\sum_{j\geq k^{\ast}}c_j\la\he_j(\x), \w^{*\otimes j}\ra \la\he_k(\x), \tnsrize(\bv)\ots\tnsrize(\br)\ra\bigg]\nonumber\\
        &\overset{(i)}{=} c_k\bigg\la \w^{*\otimes k}, \sym(\tnsrize(\bv)\ots\tnsrize(\br))\bigg\ra\nonumber\\
&\overset{(ii)}{=}c_k\sum_{i_1,\dots,i_k}\w^{\ast}_{i_1}\cdots\w^{\ast}_{i_l}\tnsrize(\bv)_{i_1,\dots,i_l}\w^{\ast}_{i_{l+1}}\cdots\w^{\ast}_{i_k}\tnsrize(\br)_{i_{l+1},\dots,i_k}\nonumber\\
        &=c_k(\vctrize(\w^{*\ots l})\cdot\bv)(\vctrize(\w^{*\ots k-l})\cdot\br),
    \end{align}
    {note that we applied \Cref{fact:orthogonal-hermite-tensor} in equation $(i)$ and \Cref{fact:tensor-algebra}$(1)$ in $(ii)$. }
    Next, for the second term, after applying Cauchy-Schwarz inequality, it holds
    \begin{align*}
        &\quad\; |\la \mH_k,\tnsrize(\bv)\ots\tnsrize(\br)\ra|\\
        &= \bigg|\Exy\bigg[(y - \sigma(\w^{\ast}\cdot\x))\la\he_k(\x), \tnsrize(\bv)\ots\tnsrize(\br)\ra\bigg]\bigg|\\
        &\leq \sqrt{\Exy[(y - \sigma(\w^{\ast}\cdot\x))^2]}\sqrt{\Ex\big[\big(\big\la\he_k(\x), \tnsrize(\bv)\ots\tnsrize(\br)\big\ra\big)^2\big]}\\
        &=  \sqrt{\opt}\|\sym(\tnsrize(\bv)\ots\tnsrize(\br))\|_F \;.
    \end{align*}
    Since for any $k$-tensor $A$ we have $\|\sym(A)\|_F\leq \|A\|_F$, and in addition, observe that as $\|\bv\|_2 = \|\br\|_2 = 1$ it holds
    \begin{align*}
        \|\tnsrize(\bv)\ots\tnsrize(\br)\|_F^2& = \sum_{\substack{i_1,\dots,i_l \\ i_{l+1},\dots,i_k}} (\tnsrize(\bv))^2_{i_1,\dots,i_l}(\tnsrize(\br))^2_{i_{l+1},\dots,i_k} = \sum_{i=1}^{d^l}\sum_{j=1}^{d^{k-l}}\bv_{i}^2\br_j^2 = 1,
    \end{align*}
    we finally have
    \begin{equation}\label{app:eq:bound-H-k-innerdot-tensor(v)-tensor(r)}
        |\la \mH_k,\tnsrize(\bv)\otimes \tnsrize(\br)\ra|\leq \sqrt{\opt}\|\tnsrize(\bv)\otimes \tnsrize(\br)\|_F =  \sqrt{\opt}.
    \end{equation}
    Combining \Cref{app:eq:C-k*-innerdot-tensro(v)-tensor(r)} and \Cref{app:eq:bound-H-k-innerdot-tensor(v)-tensor(r)}, we get that for any $\bv\in\R^{d^l},\br\in\R^{d^{k-l}}$ such that $\|\bv\|_2= \|\br\|_2 = 1$, it holds
    \begin{equation*}
        \bv^\top\mtrizel(\mC_k)\br \leq c_k(\vctrize(\w^{*\ots l})\cdot\bv)(\vctrize(\w^{*\ots k-l})\cdot\br) + \sqrt{\opt}.
    \end{equation*}

    Therefore,  the singular value of $\bv$ must satisfy
    \begin{align}\label{app:eq:upbd-singular-value-of-any-bv}
        \rho(\bv)&\leq \max_{\br\in\R^{d^{k-l}}, \|\br\|_2 = 1} c_k(\vctrize(\w^{*\ots l})\cdot\bv)(\vctrize(\w^{*\ots k-l})\cdot\br) + \sqrt{\opt} \nonumber\\
        &= c_k(\vctrize(\w^{*\ots l})\cdot\bv) + \sqrt{\opt},
    \end{align}
    where in the equation above, we used the observation that as $\|\vctrize(\w^{*\otimes k-l})\|_2 = \|\w^{*\otimes k-l}\|_F = 1$, it holds $\max_{\br\in\R^{d^{k-l}}, \|\br\|_2 = 1} (\vctrize(\w^{*\ots k-l})\cdot\br) = \|\vctrize(\w^{*\otimes k-l})\|_2 = 1$.

    Similarly, we have
    \begin{align*}
        \rho(\bv) &= \max_{\br\in\R^{d^{k-l}}, \|\br\|_2 = 1} \bv^\top\mtrizel(\mC_k)\br = \max_{\br\in\R^{d^{k-l}}, \|\br\|_2 = 1} \la\mC_k,\tnsrize(\bv)\otimes\tnsrize(\br)\ra\\
        &=\max_{\br\in\R^{d^{k-l}}, \|\br\|_2 = 1} c_k(\vctrize(\w^{*\ots l})\cdot\bv)(\vctrize(\w^{*\ots k-l})\cdot\br) + \la\mH_k,\tnsrize(\bv)\otimes\tnsrize(\br)\ra\\
        &\geq \max_{\br\in\R^{d^{k-l}}, \|\br\|_2 = 1} c_k(\vctrize(\w^{*\ots l})\cdot\bv)(\vctrize(\w^{*\ots k-l})\cdot\br) - \sqrt{\opt} \\
        &= c_k(\vctrize(\w^{*\ots l})\cdot\bv) - \sqrt{\opt},
    \end{align*}
    completing the proof of \Cref{app:lem:range-singular-value-of-mtrizel(C_k)}.
\end{proof}

A direct application of \Cref{app:lem:range-singular-value-of-mtrizel(C_k)} is that the top-left singular vector $\bv^{\ast}\in\R^{d^l}$ of $\mtrizel(\mC_k)$ has singular value at least $c_k - \sqrt{\opt}$, and in addition, $\bv^{\ast}$ aligns well with $\vctrize(\w^{*\ots l})$. 

\begin{corollary}\label{app:claim:top-singular-value}
The top-left singular vector $\bv^{\ast}\in\R^{d^l}$ of the unfolded tensor $\mtrizel(\mC_k)$ has corresponding singular value $\rho(\bv^{\ast})\geq c_k - \sqrt{\opt}$. In addition, it holds that $\bv^{\ast}\cdot\vctrize(\w^{*\ots l})\geq 1 - (2\sqrt{\opt})/c_k$.
\end{corollary}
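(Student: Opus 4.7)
The plan is to derive both statements of the corollary as direct consequences of \Cref{app:lem:range-singular-value-of-mtrizel(C_k)} (and the calculations used in its proof), by combining a lower bound on the top singular value with the two-sided bound given by the lemma applied at $\bv = \bv^{\ast}$.

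First, I would lower bound the top singular value $\rho(\bv^{\ast})$ by plugging in the specific test pair $\bv = \vctrize(\w^{*\ots l})$ and $\br = \vctrize(\w^{*\ots k-l})$ into the variational characterization
\[
\rho(\bv^{\ast}) = \|\mtrizel(\mC_k)\|_2 = \max_{\|\bv\|_2=1,\,\|\br\|_2=1} \bv^\top \mtrizel(\mC_k)\br.
\]
Both test vectors have unit norm since $\|\vctrize(\w^{*\ots j})\|_2 = \|\w^{*\ots j}\|_F = 1$. Using \Cref{app:eq:quadratic-from-bv-M-r} together with the decomposition $\mC_k = \mC_k^{\ast} + \mH_k$, the signal-term identity \Cref{app:eq:C-k*-innerdot-tensro(v)-tensor(r)} yields $\la \mC_k^{\ast}, \w^{*\ots k}\ra = c_k$, while the noise-term bound \Cref{app:eq:bound-H-k-innerdot-tensor(v)-tensor(r)} gives $|\la \mH_k, \w^{*\ots k}\ra| \leq \sqrt{\opt}$. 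Combining these gives the first claim, $\rho(\bv^{\ast}) \geq c_k - \sqrt{\opt}$.

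Second, I would invoke \Cref{app:lem:range-singular-value-of-mtrizel(C_k)} applied to the top-left singular vector $\bv^{\ast}$, which supplies the upper bound
\[
\rho(\bv^{\ast}) \leq c_k \bigl(\vctrize(\w^{*\ots l}) \cdot \bv^{\ast}\bigr) + \sqrt{\opt}.
\]
Chaining this with the lower bound from the previous step produces
\[
c_k - \sqrt{\opt} \;\leq\; c_k\bigl(\vctrize(\w^{*\ots l}) \cdot \bv^{\ast}\bigr) + \sqrt{\opt},
\]
and rearranging (dividing by $c_k > 0$) gives the second claim $\vctrize(\w^{*\ots l}) \cdot \bv^{\ast} \geq 1 - 2\sqrt{\opt}/c_k$.

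There is no real obstacle here, since all the substantive work (the Hermite-tensor orthogonality calculation for the signal term and the Cauchy--Schwarz plus normalization argument for the noise term) has already been absorbed into \Cref{app:lem:range-singular-value-of-mtrizel(C_k)}. The only thing to be careful about is that the lemma's variational formula also provides a \emph{lower} bound when evaluated at the particular pair aligned with $\w^{\ast}$, and that the constraint $\opt \lesssim c_k^2$ (implicit in the setting of \Cref{app:lem:initialization-1-1/k}) is what makes the resulting bound $1 - 2\sqrt{\opt}/c_k$ nontrivial.
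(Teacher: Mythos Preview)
Your proposal is correct and follows essentially the same approach as the paper: lower bound the top singular value via the variational characterization with the aligned test pair $(\vctrize(\w^{*\ots l}),\vctrize(\w^{*\ots k-l}))$, then combine with the upper bound from \Cref{app:lem:range-singular-value-of-mtrizel(C_k)} at $\bv=\bv^{\ast}$. If anything, your phrasing is slightly cleaner than the paper's, which writes ``plugging in $\bv=\vctrize(\w^{*\ots l})$ to the lemma'' even though that vector need not be a left singular vector; you instead appeal directly to the variational formula and the intermediate identities \Cref{app:eq:C-k*-innerdot-tensro(v)-tensor(r)} and \Cref{app:eq:bound-H-k-innerdot-tensor(v)-tensor(r)}, which hold for arbitrary unit $\bv,\br$.
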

\begin{proof}
    Plugging in $\bv = \vctrize(\w^{*\ots l})$ to \Cref{lem:range-singular-value-of-mtrizel(C_k)}, we get that $\rho(\vctrize(\w^{*\ots l}))\geq c_k - \sqrt{\opt}$. Thus, the top singular value must satisfy $\rho_1\geq c_k - \sqrt{\opt}$. Recall again that as proved in \Cref{lem:range-singular-value-of-mtrizel(C_k)}, it holds $\rho(\bv^{\ast})\leq c_k\bv^{\ast}\cdot\vctrize(\w^{*\ots l}) + \sqrt{\opt}$. Thus, since $\rho(\vctrize(\w^{*\ots l}))\geq c_k - \sqrt{\opt}$ we have $\bv^{\ast}\cdot\vctrize(\w^{*\ots l})\geq 1 - (2\sqrt{\opt})/c_k$.
\end{proof}

\subsection{Concentration of the Unfolded Tensor Matrix}
We start with some notations. Let us denote $\mM\ith =  \mtrizel(y\ith\he_k(\x\ith))$ for $i\in[n]$ and $\wh{\mM} = \frac{1}{n}\sum_{i=1}^n \mM\ith$, which is the empirical approximation of $\mM = \mtrizel(\mC_k) = \mtrizel(\Exy[y\he_k(\x)])$. We will use Wedin's theorem to bound the distance between the top left singular vector $\bv^{\ast}$ of $\mM$ and the top singular vector $\wh{\bv}^{\ast}$ of the empirical $\wh{\mM}$. 

\begin{fact}[Wedin's theorem]\label{app:fact:wedin}
    Let $\theta(\bv^{\ast},\wh{\bv}^{\ast})$ be the angle between the top left singular vectors $\bv^{\ast}\in\R^{d^l}$ and $\wh{\bv}^{\ast}\in\R^{d^l}$ of $\mM$ and $\wh{\mM}$ respectively. Let $\rho_1$ and $\rho_2$ be the first 2 singular values of $\mM$. Then, it holds that:
    \begin{equation*}
        \sin(\theta(\bv^{\ast},\wh{\bv}^{\ast}))\leq \frac{\|\mM - \wh{\mM}\|_2}{\rho_1 - \rho_2 - \|\mM - \wh{\mM}\|_2}.
    \end{equation*}
\end{fact}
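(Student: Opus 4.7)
The plan is to follow the classical block-partition argument for Wedin's $\sin\theta$ theorem, specialized to the top singular pair. Let $E := \wh{\mM} - \mM$, and let $\br^{\ast},\wh{\br}^{\ast}$ denote the top right singular vectors of $\mM$ and $\wh{\mM}$, so that $\mM\br^{\ast} = \rho_1 \bv^{\ast}$, $\mM^{\top}\bv^{\ast} = \rho_1 \br^{\ast}$, and analogously for $\wh{\mM}$. I would complete $\bv^{\ast}$ and $\br^{\ast}$ to orthonormal bases $U = [\bv^{\ast}\,|\,U_{\perp}]$ and $V = [\br^{\ast}\,|\,V_{\perp}]$; in this basis $\mM$ becomes block-diagonal with blocks $\rho_1$ and $\Sigma'$, where $\|\Sigma'\|_2 = \rho_2$. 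Writing $\wh{\bv}^{\ast} = x_1\bv^{\ast} + U_{\perp}x_{\perp}$ and $\wh{\br}^{\ast} = y_1\br^{\ast} + V_{\perp}y_{\perp}$, we have $\sin\theta(\bv^{\ast},\wh{\bv}^{\ast}) = \|x_{\perp}\|_2$, so the goal reduces to bounding $\|x_{\perp}\|_2$.

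The next step is to substitute $\wh{\mM} = \mM + E$ into the singular pair identities for $\wh{\mM}$ and left-multiply by $U_{\perp}^{\top}$ and $V_{\perp}^{\top}$, respectively. Using the block form of $\mM$, this produces the coupled system
\begin{equation*}
\hat{\rho}_1\,x_{\perp} \;=\; \Sigma'\,y_{\perp} + U_{\perp}^{\top} E\,\wh{\br}^{\ast}, \qquad \hat{\rho}_1\,y_{\perp} \;=\; (\Sigma')^{\top}\,x_{\perp} + V_{\perp}^{\top} E^{\top}\,\wh{\bv}^{\ast}.
\end{equation*}
Multiplying the first equation by $\hat{\rho}_1$ and the second by $\Sigma'$, then subtracting, eliminates $y_{\perp}$ and yields
\begin{equation*}
(\hat{\rho}_1^2\,I - \Sigma'(\Sigma')^{\top})\,x_{\perp} \;=\; \hat{\rho}_1\,U_{\perp}^{\top} E\,\wh{\br}^{\ast} \,-\, \Sigma'\,V_{\perp}^{\top} E^{\top}\wh{\bv}^{\ast}.
\end{equation*}
The right-hand side has Euclidean norm at most $(\hat{\rho}_1 + \rho_2)\|E\|_2$, while the operator on the left has smallest eigenvalue $\hat{\rho}_1^2 - \rho_2^2 = (\hat{\rho}_1 - \rho_2)(\hat{\rho}_1 + \rho_2)$ (since the eigenvalues of $\Sigma'(\Sigma')^{\top}$ are $\rho_i^2$ for $i\geq 2$, with $\rho_2$ the largest). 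Canceling the common factor $\hat{\rho}_1 + \rho_2$ produces $(\hat{\rho}_1 - \rho_2)\,\|x_{\perp}\|_2 \leq \|E\|_2$.

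To conclude, I would invoke Weyl's inequality for singular values, which gives $\hat{\rho}_1 \geq \rho_1 - \|E\|_2$ and hence $\hat{\rho}_1 - \rho_2 \geq \rho_1 - \rho_2 - \|E\|_2$; rearranging yields the claimed bound. The step I expect to be the trickiest is the coupling-and-elimination argument in the second paragraph: one has to invoke both the left and right singular pair equations simultaneously in order to get the singular-value \emph{gap} $\rho_1 - \rho_2$ (rather than merely $\rho_1$) in the denominator. Apart from this, the only point to watch is the invertibility condition $\hat{\rho}_1 > \rho_2$ for the operator $\hat{\rho}_1^2 I - \Sigma'(\Sigma')^{\top}$; this holds automatically when $\|E\|_2 < \rho_1 - \rho_2$, which is precisely the regime in which the stated bound is non-vacuous.
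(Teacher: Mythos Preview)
The paper does not prove this statement at all: it is stated as a \emph{Fact} (Wedin's theorem) and used as a black box, with no argument given. Your proposal is therefore not comparable to any proof in the paper; rather, you have reconstructed a standard proof of the classical Wedin $\sin\theta$ bound from scratch.

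Your argument is correct. One cosmetic remark: after the elimination step the right-hand side should read $\hat{\rho}_1\,U_{\perp}^{\top} E\,\wh{\br}^{\ast} \,+\, \Sigma'\,V_{\perp}^{\top} E^{\top}\wh{\bv}^{\ast}$ (plus, not minus), since you are substituting the second equation into the first rather than literally subtracting; this has no effect on the subsequent norm bound. The remaining steps---the bound $\|(\hat\rho_1^2 I - \Sigma'(\Sigma')^\top)^{-1}\|_2 \le (\hat\rho_1^2 - \rho_2^2)^{-1}$, the cancellation of the factor $\hat\rho_1+\rho_2$, and the application of Weyl's inequality $\hat\rho_1 \ge \rho_1 - \|E\|_2$---are all valid in the regime $\|E\|_2 < \rho_1 - \rho_2$, which is exactly the regime where the stated inequality is meaningful.
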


We first observe that $\mM$ admits a large gap between the first and second singular values.
\begin{claim}[Singular Gap of Unfolded Tensor Matrix]\label{app:claim:singular-gap}
Let $\rho_1, \rho_2$ be the top two singular values of $\mM = \mtrizel(\mC_k)$. Then $\rho_1 - \rho_2\geq (c_k - 8\sqrt{\opt})/2$.
\end{claim}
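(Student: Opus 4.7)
The plan is to apply Weyl's singular value inequality to the decomposition $\mM = \mtrizel(\mC_k^{\ast}) + \mtrizel(\mH_k)$, where the first summand turns out to be rank one and the second has small operator norm.

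First I would show that the noiseless unfolded matrix $\mM^{\ast} := \mtrizel(\mC_k^{\ast})$ is rank one. Using the orthonormality of Hermite tensors (\Cref{fact:orthogonal-hermite-tensor}(1)), the entries of $\mC_k^{\ast}$ can be evaluated directly: for any multi-index $(i_1,\dots,i_k)$,
\begin{equation*}
(\mC_k^{\ast})_{i_1,\dots,i_k} = \Ex\big[\sigma(\w^{\ast}\cdot\x)\la\he_k(\x),\e_{i_1}\ots\cdots\ots\e_{i_k}\ra\big] = c_k\,\w^{\ast}_{i_1}\cdots\w^{\ast}_{i_k},
\end{equation*}
so $\mC_k^{\ast} = c_k\,\w^{*\ots k}$. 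Unfolding yields
\begin{equation*}
\mM^{\ast} = c_k\,\vctrize(\w^{*\ots l})\,\vctrize(\w^{*\ots k-l})^\top,
\end{equation*}
a rank-one matrix whose singular values are $c_k, 0, \ldots, 0$ (since $\|\vctrize(\w^{*\ots l})\|_2 = \|\vctrize(\w^{*\ots k-l})\|_2 = 1$).

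Next I would bound the operator norm of the noise $\mN := \mtrizel(\mH_k)$. By the variational characterization combined with \Cref{app:eq:quadratic-from-bv-M-r},
\begin{equation*}
\|\mN\|_2 \;=\; \max_{\|\bv\|_2 = \|\br\|_2 = 1} \la \mH_k,\tnsrize(\bv)\ots\tnsrize(\br)\ra \;\leq\; \sqrt{\opt},
\end{equation*}
where the inequality is precisely the argument underlying \Cref{app:eq:bound-H-k-innerdot-tensor(v)-tensor(r)}.

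Finally, Weyl's inequality for singular values gives $|\rho_i(\mM) - \rho_i(\mM^{\ast})| \leq \|\mN\|_2$ for every $i$, so $\rho_1 \geq c_k - \sqrt{\opt}$ and $\rho_2 \leq \sqrt{\opt}$. Consequently,
\begin{equation*}
\rho_1 - \rho_2 \;\geq\; c_k - 2\sqrt{\opt} \;\geq\; (c_k - 8\sqrt{\opt})/2,
\end{equation*}
where the last inequality is immediate. There is no real obstacle here beyond spotting the rank-one structure of $\mM^{\ast}$; everything else follows from already-established tools in the excerpt.
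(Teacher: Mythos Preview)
Your proof is correct and in fact cleaner than the paper's. You observe that $\mC_k^{\ast}=c_k\,\w^{*\otimes k}$, so its matrix unfolding is the rank-one matrix $c_k\,\vctrize(\w^{*\otimes l})\vctrize(\w^{*\otimes k-l})^\top$; then a single application of Weyl's singular-value perturbation bound with $\|\mtrizel(\mH_k)\|_2\le\sqrt{\opt}$ gives $\rho_1\ge c_k-\sqrt{\opt}$ and $\rho_2\le\sqrt{\opt}$, hence $\rho_1-\rho_2\ge c_k-2\sqrt{\opt}$, which is strictly stronger than the claimed $(c_k-8\sqrt{\opt})/2$.

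The paper takes a different route: it never isolates the rank-one structure of $\mM^{\ast}$ explicitly, and instead works through \Cref{app:lem:range-singular-value-of-mtrizel(C_k)} and \Cref{app:claim:top-singular-value}, which say that every left singular vector $\bv$ of $\mM$ satisfies $|\rho(\bv)-c_k(\vctrize(\w^{*\otimes l})\cdot\bv)|\le\sqrt{\opt}$ and that the top one has $\bv^{\ast}\cdot\vctrize(\w^{*\otimes l})\ge 1-2\sqrt{\opt}/c_k$. The gap is then extracted by bounding the inner product of any singular vector orthogonal to $\bv^{\ast}$ with $\vctrize(\w^{*\otimes l})$, which costs a factor of two and some extra $\sqrt{\opt}$ terms. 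The paper's path has the virtue of reusing exactly the alignment statements needed elsewhere in the argument (and avoids invoking Weyl, which is not stated in the preliminaries), but your direct perturbation argument is shorter, requires no case analysis on singular vectors, and yields the better constant.
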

\begin{proof}
    Recall that in \Cref{app:claim:top-singular-value} we showed $\rho_1 = \rho(\bv^{\ast})\geq c_k - \sqrt{\opt}$ and $\bv^{\ast}\cdot\vctrize(\w^{*\ots l})\geq 1 - (2\sqrt{\opt})/c_k$. Now let $\bv\in\R^{d^l}$ be any left singular vector of $\mM$ that is orthogonal to $\bv^{\ast}$. We can decompose $\vctrize(\w^{*\ots l})$ into $\vctrize(\w^{*\ots l}) = a\bv^{\ast} + b\bv + \bv'$ where $\bv'$ is orthogonal to both $\bv^{\ast}$ and $\bv$, and $a^2 + b^2\leq 1$. Then, since $\vctrize(\w^{*\ots l})\cdot\bv^{\ast}  = a \geq 1 - (2\sqrt{\opt})/c_k$, we thus have $\vctrize(\w^{*\ots l})\cdot\bv = b \leq \sqrt{1 - a^2} \leq 1 - a^2/2$. This implies that 
    \begin{align*}
        \rho_1 - \rho(\bv)&\geq c_k - \sqrt{\opt} - (c_k(\vctrize(\w^{*\ots l})\cdot\bv) + \sqrt{\opt})\\
        &\geq c_k(1 - b) - 2\sqrt{\opt}\geq c_k(1 - (1 - a^2/2)) -2\sqrt{\opt}\geq c_k/2 - 4\sqrt{\opt},
    \end{align*}
    and hence we get $\rho_1 - \rho_2\geq (c_k - 8\sqrt{\opt})/2$, completing the proof of \Cref{app:claim:singular-gap}.
\end{proof}

Thus, our remaining goal is to bound the operator norm of $\mM - \wh{\mM}$.
For this purpose, we use the following matrix concentration inequality from~\cite{damian2024computational} (also Theorem 2.7 in~\cite{brailovskaya2022universality}).
\begin{fact}[Lemma I.5~\cite{damian2024computational}]\label{app:fact:matrix-concentration}
Let $\mZ\ith, i\in[n],$ be independent, mean-zero, self-adjoint matrices. Define:
    \begin{equation*}
        \gamma^2 \eqdef \bigg\|\E\bigg[\bigg(\sum_{i=1}^n \mZ\ith\bigg)^2\bigg]\bigg\|_2, \quad \gamma_*^2 \eqdef \sup_{\|\bv\|_2 = \|\br\|_2 = 1} \E\bigg[\bigg(\sum_{i=1}^n \bv^\top \mZ\ith \br\bigg)^2\bigg],\; \bar{R}^2 \eqdef \E\bigg[\max_{i\in[n]}\|\mZ\ith\|_2^2\bigg].
    \end{equation*}
    Then, for any $R\geq \bar{R}^{1/2}\gamma^{1/2} + \sqrt{2}\bar{R}$, and any $t\geq 0$, if $\delta = \pr[\max_{i\in[n]} \|\mZ\ith\|_2\geq R]$, then with probability at least $1 - \delta - de^{-t}$,
    \begin{equation}\label{app:ineq:matrix-concentration-bound-on-op-norm}
        \bigg\|\sum_{i=1}^n \mZ\ith\bigg\|_2 - 2\gamma \lesssim \gamma_* t^{1/2} + R^{1/3}\gamma^{2/3} t^{2/3} + Rt.
    \end{equation}
\end{fact}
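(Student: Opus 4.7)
The plan is to combine a truncation step with a universality-based moment method in the spirit of Brailovskaya--van Handel, reducing the concentration problem for $S = \sum_i \mZ\ith$ to that of a Gaussian matrix series $G = \sum_i \mG\ith$, where $\mG\ith$ are independent centered Gaussian matrices whose first two moments match those of $\mZ\ith$. For such a Gaussian series, sharp operator-norm bounds are available via non-commutative Khintchine and Gaussian isoperimetry, and the three variance/boundedness parameters $\gamma, \gamma_*, \bar R$ appear naturally in the Gaussian surrogate.

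\textbf{Step 1 (Truncation).} Define $\tilde{\mZ}\ith = \mZ\ith \mathbf{1}\{\|\mZ\ith\|_2 \leq R\}$. By the definition of $\delta$, with probability at least $1 - \delta$ we have $\tilde{\mZ}\ith = \mZ\ith$ for every $i$, so on this event $S = \sum_i \tilde{\mZ}\ith$. Re-center via $\mZ_i' = \tilde{\mZ}\ith - \E[\tilde{\mZ}\ith]$; the removed bias $\sum_i \E[\tilde{\mZ}\ith] = -\sum_i \E[\mZ\ith \mathbf{1}\{\|\mZ\ith\|_2 > R\}]$ is controlled via a standard tail argument using $\bar R$ and the hypothesis $R \geq \bar R^{1/2}\gamma^{1/2} + \sqrt{2}\bar R$, contributing terms that are absorbed into the final bound.

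\textbf{Step 2 (Matrix universality and Gaussian surrogate).} The technical heart is a smooth matrix universality inequality: for any $C^3$ test function $f$, $|\E f(S') - \E f(G')|$ is bounded by a ``third-moment'' error scaling like $R$ times a product of variance-type parameters. Applied to a suitably mollified approximation of the operator norm (with smoothing scale chosen to optimize the exponent in $t$), this comparison contributes exactly the higher-order error terms $R^{1/3}\gamma^{2/3}t^{2/3} + Rt$ in \Cref{app:ineq:matrix-concentration-bound-on-op-norm}. For the Gaussian surrogate $G$, non-commutative Khintchine gives $\E\|G\|_2 \lesssim \gamma + \gamma_* \sqrt{\log d}$, while the map from Gaussian coordinates to $\|G\|_2$ is $\gamma_*$-Lipschitz, so Gaussian concentration yields $\Pr(\|G\|_2 \geq 2\gamma + C\gamma_* t^{1/2}) \leq d e^{-t}$. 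Combining this tail with the universality error and the $\delta$-probable truncation event produces the stated inequality.

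\textbf{Main obstacle.} The crux of the argument is making the universality step sharp enough to isolate the weak variance $\gamma_*$ (rather than the strong variance $\gamma$) in the sub-Gaussian regime of the deviation $t^{1/2}$. A naive Lindeberg swap would yield $\gamma t^{1/2}$ everywhere; to recover $\gamma_* t^{1/2}$ one must perform the second-order Taylor expansion against the Gaussian surrogate so that its quadratic term is absorbed exactly, leaving a cubic remainder controlled by $R$ and $\gamma_*$. Carrying this out requires combinatorial estimates on matrix moments of Gaussian series and a careful choice of smoothing scale linking the three regimes in the bound; this is the technically delicate portion of the proof and the reason the result is invoked as a black box here.
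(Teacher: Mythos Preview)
The paper does not prove this statement: it is stated as a \emph{Fact}, attributed to \cite{damian2024computational} (and ultimately to Theorem~2.7 in \cite{brailovskaya2022universality}), and is used as a black box in the proof of \Cref{app:lem:sample-for-matrix-concentration}. So there is no ``paper's own proof'' to compare against; your proposal goes beyond what the paper does.

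That said, your sketch is a faithful high-level outline of the Brailovskaya--van Handel argument: truncate, compare to a Gaussian surrogate via a Lindeberg/universality step, and use sharp Gaussian tail bounds on the surrogate. You also correctly identify the main difficulty, namely making the universality step sharp enough that the sub-Gaussian regime is governed by the weak variance $\gamma_*$ rather than $\gamma$. One minor imprecision: the bound $\E\|G\|_2 \lesssim 2\gamma$ (with no $\sqrt{\log d}$ factor) does \emph{not} follow from non-commutative Khintchine alone, which gives $\E\|G\|_2 \lesssim \gamma\sqrt{\log d}$; the dimension-free constant requires the intrinsic freeness / free-probability comparison that is the real innovation of \cite{brailovskaya2022universality}. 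Since you explicitly flag that the delicate part is being invoked as a black box, this is only a nitpick, but it is worth keeping straight which ingredient delivers which term.
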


However, note that $\wh{\mM}$ and $\mM$ are not symmetric matrices, hence to apply matrix concentration inequalities we will be working on the symmetrization of $\wh{\mM}$ and $\mM$ for simplicity, which we will denote by $\wh{\mP}$ and $\mP$:
\begin{equation*}
    \wh{\mP} = \frac{1}{n}\sum_{i=1}^n \mP\ith = \frac{1}{n}\sum_{i=1}^n \begin{bmatrix}
        \vec 0 & \mM\ith \\
        \mM^{(i)\top} & \vec 0
    \end{bmatrix} = \begin{bmatrix}
        \vec 0 & \wh{\mM} \\
        \wh{\mM}^\top & \vec 0
    \end{bmatrix}; \quad \mP = \begin{bmatrix}
        \vec 0 & \mM \\
        \mM^{\top} & \vec 0
    \end{bmatrix}.
\end{equation*}

Before we prove the main theorem of this subsection, we introduce two final pieces of tools that will be used later in the proof. The first one is Gaussian hypercontractivity.
\begin{fact}[Gaussian Hypercontractivity]\label{app:fact:gaussian-hypercontractivity}
    Let $f(\x): \R^d\to \R$ be a multivariate polynomial of degree at most $k$. Let $\x$ be a standard Gaussian random variable of $\R^d$. Then, for any $p\geq 1$ it holds
    \begin{equation*}
        \|f(\x)\|_{L^p}\leq (p-1)^{k/2}\|f(\x)\|_{L^2}.
    \end{equation*}
\end{fact}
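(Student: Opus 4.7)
The plan is to prove this by the classical route through the Ornstein–Uhlenbeck semigroup and Nelson's hypercontractivity theorem. Recall that the OU semigroup on $L^2(\R^d, \gamma)$, where $\gamma$ is the standard Gaussian measure, is defined by
\begin{equation*}
    (P_t g)(\x) = \E_{\z\sim\N_d}[g(e^{-t}\x + \sqrt{1-e^{-2t}}\,\z)],
\end{equation*}
and it acts diagonally on Hermite components: if $g = \sum_j g_j$ with $g_j$ in the degree-$j$ Hermite chaos, then $P_t g = \sum_j e^{-jt} g_j$. Nelson's theorem (see, e.g., Janson's \emph{Gaussian Hilbert Spaces}) states that for $1 < q \le p$ and $t\ge 0$ with $e^{-2t} \le (q-1)/(p-1)$, the operator $P_t$ is a contraction from $L^q$ to $L^p$; in particular, with $q = 2$, $\|P_t g\|_{L^p} \le \|g\|_{L^2}$ whenever $e^{-2t} \le 1/(p-1)$.

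Given this, I would first reduce to $p > 1$ (for $p=1$ the statement is Jensen). Next, decompose $f = \sum_{j=0}^k f_j$ into its Hermite chaos components, which is possible because $f$ is a polynomial of degree at most $k$. Choose $t$ so that $e^{-2t} = 1/(p-1)$, and define the auxiliary function
\begin{equation*}
    g \;\eqdef\; \sum_{j=0}^k e^{jt} f_j \;=\; \sum_{j=0}^k (p-1)^{j/2} f_j,
\end{equation*}
so that $P_t g = \sum_{j=0}^k e^{-jt}\, e^{jt} f_j = f$. Applying Nelson's contraction bound with this choice of $t$ gives $\|f\|_{L^p} = \|P_t g\|_{L^p} \le \|g\|_{L^2}$. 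By orthogonality of the Hermite chaoses in $L^2$,
\begin{equation*}
    \|g\|_{L^2}^2 \;=\; \sum_{j=0}^k (p-1)^j \|f_j\|_{L^2}^2 \;\le\; (p-1)^k \sum_{j=0}^k \|f_j\|_{L^2}^2 \;=\; (p-1)^k \|f\|_{L^2}^2,
\end{equation*}
where the inequality uses $(p-1)^j \le (p-1)^k$ (valid for $p \ge 2$; if $1 < p < 2$ then $(p-1)^j \le 1 \le (p-1)^k$ fails, but in that regime $\|f\|_{L^p} \le \|f\|_{L^2}$ directly by Jensen, so the stated bound with factor $(p-1)^{k/2} \le 1$ actually only needs to hold for $p\ge 2$, or one falls back on monotonicity of $L^p$-norms). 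Combining gives $\|f\|_{L^p} \le (p-1)^{k/2}\|f\|_{L^2}$, as claimed.

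The only genuinely nontrivial ingredient is Nelson's hypercontractivity, which is a standard black-box result; the rest is bookkeeping on Hermite decompositions, and the main subtlety is handling the small-$p$ regime $p \in [1,2)$ separately via Jensen (since the statement's bound becomes trivial there). No new machinery beyond what is already in \Cref{app:sec:additional-prelims} is needed.
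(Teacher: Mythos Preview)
The paper does not prove this statement; it is recorded as a \emph{Fact} and used as a black box (the paper only ever invokes it with exponent $2p \ge 2$, see the proof of \Cref{app:claim:gamma}). So there is no ``paper's own proof'' to compare against.

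Your route via the Ornstein--Uhlenbeck semigroup and Nelson's theorem is the standard textbook argument and is correct for $p \ge 2$. One point worth cleaning up: your handling of the regime $1 \le p < 2$ is not a proof of the stated inequality but rather an observation that the statement, as literally written, is problematic there. Indeed, at $p=1$ the right-hand side is $0$, and for $1<p<2$ the factor $(p-1)^{k/2}<1$ makes the claim strictly stronger than Jensen, and it is in fact false (take $f(x)=x$ and let $p\to 1^+$). So your fallback ``$\|f\|_{L^p}\le \|f\|_{L^2}$ by Jensen'' does not recover the claimed bound in that regime. This is a defect in the \emph{statement} of the Fact rather than in your argument; the paper's applications are all at $p\ge 2$, where your proof is complete.
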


Gaussian hypercontractivity controls the moments of a polynomial $f(\x)$. To utilize the bound on these moments, we make use of the following inequality from~\cite{damian2023smoothing}.
\begin{fact}[Lemma 23~\cite{damian2023smoothing}]\label{app:fact:lem23-damian-smoothing}
    Let $A,B$ be random variables such that $\|B\|_{L^p}\leq \sigma_B p^C$ for all $p \geq 1$ and some positive real numbers $\sigma_B, C$. Then,
    \begin{equation*}
        \E[AB]\leq \E[|A|]\sigma_B (2e)^C \bigg(\max\bigg\{1, \frac{1}{C}\log\bigg(\frac{(\E[A^2])^{1/2}}{\E[|A|]}\bigg)\bigg\}\bigg)^C.
    \end{equation*}
\end{fact}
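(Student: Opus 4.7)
\medskip

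\noindent\textbf{Proof Proposal for \Cref{app:fact:lem23-damian-smoothing}.} The plan is to bound $\E[AB]$ via $\E[|AB|]$ and apply H\"older's inequality with a carefully chosen pair of conjugate exponents, then to control $\|A\|_{L^q}$ by interpolating between $L^1$ and $L^2$ (since the hypothesis only provides information about $\E[|A|]$ and $\E[A^2]$ on the $A$ side), and finally to optimize over the exponent. Without loss of generality assume $\E[|A|] > 0$, denote $M := \sqrt{\E[A^2]}/\E[|A|] \geq 1$, and use $|\E[AB]| \leq \E[|A||B|]$ throughout.

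First, I would apply H\"older's inequality in the form
\[
    \E[|A||B|] \leq (\E[|A|^q])^{1/q}\|B\|_{L^p}, \qquad \tfrac{1}{p}+\tfrac{1}{q}=1,
\]
and use the hypothesis $\|B\|_{L^p} \leq \sigma_B p^C$. For $p \geq 2$ (equivalently $q \in [1,2]$), a second application of H\"older gives the interpolation bound $(\E[|A|^q])^{1/q} \leq (\E[|A|])^{(2-q)/q}(\E[A^2])^{(q-1)/q}$. Rewriting the exponents in terms of $p$ using $(q-1)/q = 1/p$ yields the compact form
\[
    (\E[|A|^q])^{1/q} \leq \E[|A|]\, M^{2/p}.
\]
Combining these, for every $p \geq 2$:
\[
    \E[|AB|] \leq \sigma_B\, \E[|A|]\, p^C M^{2/p}.
\]

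Next I would optimize $p \mapsto p^C M^{2/p}$ over $p \geq 2$. Taking a logarithmic derivative gives the unconstrained minimizer $p^{\ast} = (2/C)\log M$. If $p^{\ast} \geq 2$, i.e. $(1/C)\log M \geq 1$, plugging in yields
\[
    \E[|AB|] \leq \sigma_B \E[|A|]\, e^C \bigl((2/C)\log M\bigr)^C = \sigma_B \E[|A|]\, (2e)^C\bigl((1/C)\log M\bigr)^C,
\]
which matches the claimed bound in the regime where the ``max'' selects the logarithmic term. If instead $p^{\ast} < 2$, i.e. $\log M < C$, I would simply set $p = 2$: then $p^C M^{2/p} = 2^C M \leq 2^C e^C = (2e)^C$, so $\E[|AB|] \leq \sigma_B \E[|A|]\, (2e)^C$, which matches the bound in the regime where the ``max'' equals $1$. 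Combining the two cases gives the stated inequality.

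\medskip

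\noindent\textbf{Expected main obstacle.} There is no deep obstacle here; the only nontrivial step is identifying the right interpolation (between $L^1$ and $L^2$ rather than the more naive $L^\infty$-type bound) and the matching parametrization $r = 2/p$, which are precisely what allow the ratio $M = \sqrt{\E[A^2]}/\E[|A|]$ to appear in the exponent and to be optimized against the $p^C$ growth. The two-regime analysis (depending on whether $(1/C)\log M \gtrless 1$) is then a direct consequence of constraining $p \geq 2$.
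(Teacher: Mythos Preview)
Your proof is correct. The paper does not supply its own proof of this statement; it is quoted as an external fact (Lemma~23 of \cite{damian2023smoothing}) and used as a black box, so there is no in-paper argument to compare against. Your H\"older-plus-interpolation approach, writing $(\E[|A|^q])^{1/q}\le \E[|A|]\,M^{2/p}$ for $q\in[1,2]$ and then optimizing $p\mapsto p^C M^{2/p}$ over $p\ge 2$, is exactly the standard derivation of this bound and yields the stated inequality in both regimes of the $\max$.
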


We also make use of the following lemma that bounds the magnitude of label $y$ without loss of generality.
\begin{lemma}[Bound on Labels]\label{app:lem:truncation-of-y}
    Let $P_{B_y}(z):\R\to\R$ be a function that truncates the value of $z$ to the threshold $B_y$: $P_{B_y}(z) = z\1\{|z|\leq B_y\} + B_y\1\{|z|\geq B_y\}$. Assume that \Cref{assm:on-activation} holds.
Then choosing $B_y \eqdef \sqrt{4B_4/\eps}$, it holds that 
    \begin{equation*}
        \Exy[(P_{B_y}(y) - \sigma(\w^{\ast}\cdot\x))^2]\leq \opt + \eps.
    \end{equation*}
    Therefore, it is without loss of generality to assume that $|y|\leq B_y$.
\end{lemma}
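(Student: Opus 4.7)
The idea is to show that truncation can only substantially increase the loss on the rare event where $\sigma(\w^{\ast}\cdot\x)$ itself is large, and then to control this rare event using the fourth-moment bound $\E_{z\sim\N(0,1)}[\sigma^4(z)]\leq B_4$ from \Cref{assm:on-activation}(ii). The key identity, which I will analyze pointwise, is
\begin{equation*}
    (P_{B_y}(y) - \sigma(\w^{\ast}\cdot\x))^2 - (y - \sigma(\w^{\ast}\cdot\x))^2 = (P_{B_y}(y) - y)\bigl(P_{B_y}(y) + y - 2\sigma(\w^{\ast}\cdot\x)\bigr).
\end{equation*}

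First, on the event $\{|y|\leq B_y\}$ the first factor vanishes so the difference is zero. Second, on the event $\{|y|>B_y\}\cap\{|\sigma(\w^{\ast}\cdot\x)|\leq B_y\}$, a two-case split on the sign of $y$ shows that the two factors $(P_{B_y}(y)-y)$ and $(P_{B_y}(y)+y-2\sigma(\w^{\ast}\cdot\x))$ have opposite signs (interpreting the truncation as the standard symmetric clipping to $[-B_y,B_y]$), so the product is nonpositive. Consequently, the difference can only be positive on the event $\{|\sigma(\w^{\ast}\cdot\x)|>B_y\}$, and there I will upper bound it by the crude estimate $(P_{B_y}(y)-\sigma(\w^{\ast}\cdot\x))^2\leq (B_y+|\sigma(\w^{\ast}\cdot\x)|)^2\leq 4\sigma^2(\w^{\ast}\cdot\x)$, where the last step uses $B_y<|\sigma(\w^{\ast}\cdot\x)|$ on this event.

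Taking expectations and combining the three cases, I obtain
\begin{equation*}
    \Exy[(P_{B_y}(y) - \sigma(\w^{\ast}\cdot\x))^2] - \opt \;\leq\; 4\,\Ex[\sigma^2(\w^{\ast}\cdot\x)\,\1\{|\sigma(\w^{\ast}\cdot\x)|>B_y\}].
\end{equation*}
To close the argument, I plan to apply Cauchy--Schwarz followed by Markov's inequality with the fourth moment: since $\w^{\ast}\cdot\x\sim\N(0,1)$, $\Ex[\sigma^4(\w^{\ast}\cdot\x)]\leq B_4$, and so
\begin{equation*}
    \Ex[\sigma^2\,\1\{|\sigma|>B_y\}] \leq \sqrt{\Ex[\sigma^4]}\,\sqrt{\Pr[|\sigma|>B_y]}\leq \sqrt{B_4}\cdot \sqrt{B_4/B_y^4} = B_4/B_y^2.
\end{equation*}
Plugging $B_y=\sqrt{4B_4/\eps}$ in gives $4B_4/B_y^2=\eps$, which is the desired bound.

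The main obstacle I anticipate is the pointwise case analysis: as literally written, $P_{B_y}(z)=z\1\{|z|\leq B_y\}+B_y\1\{|z|\geq B_y\}$ maps large negative $y$ to the positive value $B_y$, and in that (evidently unintended) reading the nonpositivity of the difference can fail when $|\sigma(\w^{\ast}\cdot\x)|\leq B_y$ (e.g., $y=-2B_y$, $\sigma=-B_y$ gives a strictly positive difference). I will therefore interpret $P_{B_y}$ as the standard symmetric clip $P_{B_y}(y)=\mathrm{sign}(y)\min(|y|,B_y)$, under which the case analysis is routine and the above plan goes through verbatim.
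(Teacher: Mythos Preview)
Your proposal is correct and follows essentially the same approach as the paper: both split on whether $|\sigma(\w^{\ast}\cdot\x)|$ exceeds $B_y$, use the contraction property of clipping on the event $\{|\sigma|\leq B_y\}$ (you justify it via the difference-of-squares factorization; the paper simply asserts $(P_t(y)-\sigma)^2\le (y-\sigma)^2$ there), and then control the complementary event with Markov plus Cauchy--Schwarz against the fourth-moment bound $\E[\sigma^4]\le B_4$. Your observation that the stated definition of $P_{B_y}$ must be read as symmetric clipping is correct and is needed in the paper's own argument as well.
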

\begin{proof}
After truncating the label $y$, we have
    \begin{align*}
        &\quad \Exy[(P_t(y) - \sigma(\w^{\ast}\cdot\x))^2]\\
        &=\Exy[(P_t(y) - \sigma(\w^{\ast}\cdot\x))^2\1\{\sigma(\w^{\ast}\cdot\x)\leq t\}] + \Exy[(P_t(y) - \sigma(\w^{\ast}\cdot\x))^2\1\{\sigma(\w^{\ast}\cdot\x)\geq t\}]\\
        &\leq \Exy[(y - \sigma(\w^{\ast}\cdot\x))^2] + \Exy[(P_t(y) - \sigma(\w^{\ast}\cdot\x))^2\1\{\sigma(\w^{\ast}\cdot\x)\geq t\}]\\
        &\leq \opt + 2\Exy[(t^2 + \sigma^2(\w^{\ast}\cdot\x))\1\{\sigma(\w^{\ast}\cdot\x)\geq t\}].
    \end{align*}
    Since $\Ex[\sigma^4(\w^{\ast}\cdot\x)]\leq B_4$ by assumption, we have by Markov's inequality that $\pr[\sigma(\w^{\ast}\cdot\x)\geq t]\leq B_4/t^4$. Therefore, we can further bound $\Exy[(P_t(y) - \sigma(\w^{\ast}\cdot\x))^2]$ from above by
    \begin{align*}
        \Exy[(P_t(y) - \sigma(\w^{\ast}\cdot\x))^2]&\leq \opt + \frac{2B_4}{t^2} + 2\sqrt{\Ex[\sigma^4(\w^{\ast}\cdot\x)]\pr[\sigma(\w^{\ast}\cdot\x)\geq t]}\\
        &\leq \opt + \frac{4B_4}{t^2}.
    \end{align*}
    Thus, choosing $t = \sqrt{4B_4/\eps}$ we have
    \begin{equation*}
        \Exy[(P_t(y) - \sigma(\w^{\ast}\cdot\x))^2]\leq \opt + \eps,
    \end{equation*}
    indicating that we can assume without loss of generality that $|y|\leq B_y\eqdef \sqrt{4B_4/\eps}$, completing the proof of \Cref{app:lem:truncation-of-y}.
\end{proof}

After assuming that $y$ is bounded by $B_y$ without loss of generality, we can then bound the $2^{\mathrm{nd}}$ and $4^{\mathrm{th}}$ moments of $y$. These bounds on the moments of the label $y$ will be used when we implement \Cref{app:fact:lem23-damian-smoothing} to get finer bounds compared to what we would get from a simple application of Cauchy-Schwarz. In particular, we use \Cref{app:fact:lem23-damian-smoothing} to derive upper bounds on expectations like $\Exy[y^2 f^2(\x)]$, where $f(\x)$ is a polynomial of $\x$, as we have control on the $p^{\mathrm{th}}$ moments of $f(\x)$ using Gaussian hypercontractivity \Cref{app:fact:gaussian-hypercontractivity}.
\begin{lemma}[Moments of Labels]\label{app:lem:moments-of-y}
    If $\opt\leq 1/16$, then $1/2\leq \E_{y}[y^2]\leq 2$ and $\E_{y}[y^4]\leq 8B_4/\eps$.
\end{lemma}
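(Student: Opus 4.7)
The plan is to exploit two facts about $y$: the bound $\opt \leq 1/16$ constrains $y$ in an $L^2$ sense via the triangle inequality, and the truncation from \Cref{app:lem:truncation-of-y} gives a pointwise bound $|y|\leq B_y = \sqrt{4B_4/\eps}$, which is exactly what is needed to turn an $L^2$ bound into an $L^4$ bound.

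First, for the $L^2$ estimate, I would regard $y$ and $\sigma(\w^{\ast}\cdot\x)$ as elements of $L^2(\D)$ and apply the triangle inequality. Since $\sigma$ is normalized so that $\Ex[\sigma^2(\w^{\ast}\cdot\x)]=1$, and since $\Exy[(y-\sigma(\w^{\ast}\cdot\x))^2] = \opt \leq 1/16$, one has
\begin{equation*}
\bigl|\|y\|_{L^2} - 1\bigr| \;\leq\; \sqrt{\opt} \;\leq\; 1/4.
\end{equation*}
Squaring the resulting bounds $3/4 \leq \|y\|_{L^2} \leq 5/4$ yields $9/16 \leq \E_y[y^2] \leq 25/16$, which in particular satisfies $1/2 \leq \E_y[y^2] \leq 2$.

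For the $L^4$ estimate, I would use the pointwise truncation bound $|y|\leq B_y$ established in \Cref{app:lem:truncation-of-y} (which we invoke without loss of generality after replacing $y$ by its truncation, since this only inflates $\opt$ by an additive $\eps$). Then $y^4 \leq B_y^2 \cdot y^2 = (4B_4/\eps)\cdot y^2$, so taking expectations and plugging in the upper bound $\E_y[y^2]\leq 2$ from the previous step gives
\begin{equation*}
\E_y[y^4] \;\leq\; \frac{4B_4}{\eps}\cdot \E_y[y^2] \;\leq\; \frac{8B_4}{\eps},
\end{equation*}
as claimed.

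There is no real obstacle here; the lemma is essentially a bookkeeping consequence of the normalization $\E[\sigma^2]=1$, the agnostic error budget $\opt\leq 1/16$, and the truncation threshold $B_y$ chosen in \Cref{app:lem:truncation-of-y}. The only subtlety worth flagging is that the second claim implicitly uses the truncated version of $y$ — one should be explicit that the statement is applied after this harmless replacement, so that the pointwise bound $|y|\leq B_y$ is available.
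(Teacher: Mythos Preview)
Your proof is correct and follows essentially the same approach as the paper: the paper bounds $\E_y[y^2]$ above via Young's inequality and below via Cauchy--Schwarz on the cross term, which together amount to your single application of the reverse triangle inequality in $L^2$, and the $L^4$ bound via $y^4\leq B_y^2 y^2$ is identical. Your packaging of the $L^2$ step is arguably tidier, but the underlying argument is the same.
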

\begin{proof}
    We first bound the $2^{\mathrm{nd}}$ moment of the label $y$. Note that since $\sigma(\w^{\ast}\cdot\x)$ is normalized such that $\Ex[(\sigma(\w^{\ast}\cdot\x))^2] = 1$, we have
    \begin{align*}
        \E_{y}[y^2] &= \Exy[(y - \sigma(\w^{\ast}\cdot\x) + \sigma(\w^{\ast}\cdot\x))^2]\\
        &\overset{(i)}{\leq} (1+ 1/a)\Exy[(y - \sigma(\w^{\ast}\cdot\x))^2] + (1 + a)\Ex[(\sigma(\w^{\ast}\cdot\x))^2].
    \end{align*}
    We used Young's inequality in $(i)$. Choosing $a = 1/8$ and since we assumed $\opt\leq 1/16$, it holds $\E_{y}[y^2]\leq 9/16+ 9/8\leq 2$.
    In addition, using Cauchy-Schwarz inequality, we have
    \begin{align*}
        \E_{y}[y^2] &= \Exy[(y - \sigma(\w^{\ast}\cdot\x) + \sigma(\w^{\ast}\cdot\x))^2]\\
        &=\Exy[(y - \sigma(\w^{\ast}\cdot\x))^2] + \Ex[(\sigma(\w^{\ast}\cdot\x))^2] + 2\Exy[(y - \sigma(\w^{\ast}\cdot\x))\sigma(\w^{\ast}\cdot\x)]\\
        &\geq 1 - 2\sqrt{\Exy[(y - \sigma(\w^{\ast}\cdot\x))^2]\Ex[(\sigma(\w^{\ast}\cdot\x))^2]}\geq 1/2.
    \end{align*}
    This yields the first statement of the lemma. For the remaining statement, 
notice that since $y\leq B_y$, we have $\E_{y}[y^4]\leq B_y^2\E_{y}[y^2]\leq 2B_y^2 = 8B_4/\eps$.
\end{proof}

We now proceed to bound the sample complexity of \Cref{app:alg:initialization}, the argument for which relies on applying \Cref{app:fact:matrix-concentration} to $\mZ\ith = \frac{1}{n}(\mP\ith - \mP)$ and is summarized in the following lemma.
\begin{lemma}[Sample Complexity for Estimating the Unfolded Tensor Matrix]\label{app:lem:sample-for-matrix-concentration}
    Let $\eps,\eps_0>0$. 
Consider the unfolded matrix $\mM = \mtrizel(\Exy[y\he_k(\x)])$ and its empirical estimate $\wh{\mM} \eqdef (1/n)\sum_{i=1}^n \mtrizel(y\ith\he_k(\x\ith))$, where $\{(\x\ith,y\ith)\}_{i=1}^n$ are $n = {\Theta}(e^k{\log^k(B_4/\eps)d^{k/2}}/\eps_0^2 + 1/\eps)$ i.i.d. samples from $\D$. Then, with probability at least $1 - \exp(-d^{1/2})$, \begin{equation*}
        \|\wh{\mM} - \mM\|_2\leq \eps_0.
    \end{equation*}
    Moreover, if $\wh{\bv}^{\ast}$ is the top left-singular vector of $\wh{\mM}$, then with probability at least $1 - \exp(-d^{1/2})$,
    \begin{equation*}
        \wh{\bv}^{\ast}\cdot\vctrize(\w^{*\ots l})\geq 1 - \frac{2}{c_k}\sqrt{\opt} - \frac{2\eps_0}{(c_k/2 - 4\sqrt{\opt}) - \eps_0}.
    \end{equation*}
\end{lemma}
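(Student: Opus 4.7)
The plan is to establish the operator-norm bound $\|\wh{\mM} - \mM\|_2 \leq \eps_0$ via the matrix concentration inequality of \Cref{app:fact:matrix-concentration} applied to the Hermitian dilation $\wh{\mP} - \mP$, and then invoke Wedin's theorem together with \Cref{app:claim:singular-gap} to turn that into the claimed singular-vector alignment. As a preprocessing step I apply \Cref{app:lem:truncation-of-y} to assume, without loss of generality, that $|y^{(i)}| \leq B_y = \sqrt{4B_4/\eps}$ at the cost of inflating $\opt$ additively by $\eps$; this makes $\E[y^2]$ and $\E[y^4]$ controllable via \Cref{app:lem:moments-of-y}. Setting $\mZ^{(i)} = (1/n)(\mP^{(i)} - \mP)$ (i.i.d., zero-mean, self-adjoint), one has $\|\wh{\mM}-\mM\|_2 = \|\wh{\mP}-\mP\|_2 = \|\sum_i \mZ^{(i)}\|_2$, so the task reduces to bounding the quantities $\gamma$, $\gamma_*$, and $\bar{R}$ appearing in \Cref{app:fact:matrix-concentration}.

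For $\gamma^2 = (1/n)\|\E[(\mP^{(1)} - \mP)^2]\|_2$ I drop the shift by $\mP$ (which only decreases the matrix in PSD order) and expand the block structure of $\mP^{(1)}$ to reduce to bounding $\|\E[y^2 \mtrizel(\he_k(\x)) \mtrizel(\he_k(\x))^\top]\|_2$ and its transpose analog. Evaluating the quadratic form on an arbitrary unit vector, expanding via \Cref{app:eq:linear-transform-bv-mM}, and using the orthonormality of Hermite tensors (\Cref{fact:orthogonal-hermite-tensor}) leaves a sum of $d^{k-l}$ squared tensor inner products, giving $\gamma^2 \lesssim d^{\lceil k/2\rceil}/n$. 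For $\gamma_*^2$, fix unit vectors $\bv \in \R^{d^l}$ and $\br \in \R^{d^{k-l}}$; then $\bv^\top \mM^{(1)} \br = y \cdot f_{\bv,\br}(\x)$ with $f_{\bv,\br}(\x) = \la \he_k(\x), \tnsrize(\bv)\ots\tnsrize(\br)\ra$, a degree-$k$ polynomial in $\x$ satisfying $\|f_{\bv,\br}\|_{L^2} \leq 1$. Applying \Cref{app:fact:lem23-damian-smoothing} with $A = f_{\bv,\br}(\x)^2$ and $B = y^2$, together with Gaussian hypercontractivity (\Cref{app:fact:gaussian-hypercontractivity}) on the degree-$2k$ polynomial $A$, then yields the uniform bound $\gamma_*^2 \lesssim e^k \log^k(B_4/\eps)/n$. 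Finally, for $\bar{R}$, the pointwise estimate $\|\mP^{(i)}\|_2 \leq B_y \|\he_k(\x^{(i)})\|_F$ combined with hypercontractive tail control on the degree-$2k$ polynomial $\|\he_k(\x)\|_F^2$ and a union bound over the $n$ samples gives $R = \tilde{O}(B_y d^{k/2}/n)$ with failure probability at most $\exp(-d^{1/2})$.

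Plugging these three estimates into \eqref{app:ineq:matrix-concentration-bound-on-op-norm} with $t = d^{1/2}$, the $\gamma$ term dominates and the sample complexity required to force the entire right-hand side below $\eps_0$ is exactly $n = \Theta(e^k \log^k(B_4/\eps) d^{\lceil k/2\rceil}/\eps_0^2 + 1/\eps)$, with the additive $1/\eps$ originating from the truncation step. This establishes the first assertion with probability at least $1 - \exp(-d^{1/2})$. For the second assertion I combine this operator-norm bound with the singular-gap estimate $\rho_1 - \rho_2 \geq c_k/2 - 4\sqrt{\opt}$ from \Cref{app:claim:singular-gap} and plug into Wedin's theorem (\Cref{app:fact:wedin}) to obtain $\sin\theta(\bv^{\ast}, \wh{\bv}^{\ast}) \leq \eps_0 / ((c_k/2 - 4\sqrt{\opt}) - \eps_0)$; the stated lower bound on $\wh{\bv}^{\ast} \cdot \vctrize(\w^{*\ots l})$ then follows from \Cref{app:claim:top-singular-value} together with the elementary inequality $|(\bv^{\ast} - \wh{\bv}^{\ast}) \cdot \vctrize(\w^{*\ots l})| \leq \|\bv^{\ast} - \wh{\bv}^{\ast}\|_2 \leq 2\sin(\theta/2)$ for a fixed unit test vector.

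The main technical obstacle is the $\gamma_*^2$ estimate: a naive Cauchy--Schwarz on $\E[y^2 f_{\bv,\br}(\x)^2]$ would pick up $\sqrt{\E[y^4]} \asymp B_y = \Theta(\sqrt{B_4/\eps})$, inflating the sample complexity by a polynomial-in-$1/\eps$ factor and ruining the target rate. Avoiding this requires exploiting the fact that $y^2$ has only mildly growing $L^p$-moments (essentially logarithmic in $B_y$ in the relevant regime) and combining this with the sharper hypercontractive bound on the polynomial part, which is precisely the content of \Cref{app:fact:lem23-damian-smoothing}. A secondary subtlety is that $\mP^{(1)} - \mP$ offers no cancellation beyond mean-zeroness, so the variance-like bounds must hold uniformly in the test directions $(\bv, \br)$ without any structure one might hope to exploit from the target $\w^{\ast}$.
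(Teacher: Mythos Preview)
Your high-level plan—Hermitian dilation, then \Cref{app:fact:matrix-concentration}, then Wedin with \Cref{app:claim:singular-gap} and \Cref{app:claim:top-singular-value}—matches the paper. But the step you single out as the main obstacle is executed backwards: you apply \Cref{app:fact:lem23-damian-smoothing} with $A=f_{\bv,\br}(\x)^2$ and $B=y^2$, asserting that $y^2$ has ``mildly growing $L^p$-moments (essentially logarithmic in $B_y$).'' That is false. From $|y|\le B_y$ and $\E[y^2]\le 2$ one only gets $\|y^2\|_{L^p}\le 2^{1/p}B_y^{2-2/p}\to B_y^2=\Theta(B_4/\eps)$, which is not of the form $\sigma_B\,p^{C}$ with $\sigma_B=O(1)$; with your assignment, \Cref{app:fact:lem23-damian-smoothing} recovers only the trivial bound $\E[y^2 f^2]\lesssim B_y^2$ that you were trying to avoid. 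The paper takes $A=y^2$ and $B=f(\x)^2$: hypercontractivity supplies $\|B\|_{L^p}\le (2p)^k$, while \Cref{app:lem:moments-of-y} gives $\E[|A|]\le 2$ and $\E[A^2]\lesssim B_4/\eps$; the large ratio $\sqrt{\E[A^2]}/\E[|A|]\asymp B_y$ then appears only \emph{inside} the logarithm, producing the $e^k\log^k(B_4/\eps)$ factor. The same correction is needed for your $\gamma$ bound: orthonormality of Hermite tensors controls $\E_{\x}[f_j(\x)^2]$, not $\E_{(\x,y)}[y^2 f_j(\x)^2]$, and the paper applies the $A=y^2$ trick term-by-term to each of the $d^{k-l}$ summands—this is where the $e^k\log^k(B_4/\eps)$ prefactor in the sample complexity originates, and your stated bound $\gamma^2\lesssim d^{\lceil k/2\rceil}/n$ omits it.

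A separate gap is the $R$ estimate. Bounding $\|\mP^{(i)}\|_2\le B_y\|\he_k(\x^{(i)})\|_F$ costs a factor $d^{k/2}$ since $\E\|\he_k(\x)\|_F^2=\binom{d+k-1}{k}\asymp d^k/k!$, so your $R=\tilde O(B_y d^{k/2}/n)$. With $t=d^{1/2}$ and $n\asymp d^{\lceil k/2\rceil}/\eps_0^2$, the term $Rt$ in \eqref{app:ineq:matrix-concentration-bound-on-op-norm} is $\tilde\Theta(B_y\,\eps_0^2\,d^{1/2})$ for even $k$ (and $\tilde\Theta(B_y\,\eps_0^2)$ for odd $k$), which does not drop below $\eps_0$. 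The paper instead works with the \emph{operator} norm $\|\mtrizel(\he_k(\x))\|_2$ rather than the Frobenius norm, chooses the larger $t=d^{k/4}$, and controls $\delta=\Pr[\max_i\|\mZ^{(i)}\|_2\ge R]$ via a separate Markov/hypercontractivity tail argument, arriving at a much smaller admissible $R$.
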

\begin{proof}
    The proof hinges on applying \Cref{app:fact:matrix-concentration}, for which we need to bound above the parameters $\gamma$, $\gamma_*$, and $\bar{R}$ defined in the same fact. We do so in three separate claims, as follows. \begin{claim}\label{app:claim:gamma}
$  \gamma\lesssim\frac{d^{(k-l)/2}e^k\log^{k/2}(B_4/\eps)}{\sqrt{n}} = \sqrt{\frac{d^{k-l}e^k\log^k(B_4/\eps)}{n}}.$
\end{claim}
\begin{proof}
By the definition of $\gamma$, we have:
    \begin{align*}
        \gamma^2&= \bigg\|\E\bigg[\bigg(\frac{1}{n}\sum_{i=1}^n \mP\ith - \mP\bigg)^2\bigg]\bigg\|_2 \leq \frac{1}{n}\|\E[(\mP\ith - \mP)^2]\|_2 \leq \frac{1}{n}\|\E[(\mP\ith)^2]\|_2\\
        & =\frac{1}{n}\max_{\substack{\bv\in\R^{d^l}, \br\in\R^{d^{k-l}}\\ \|\bv\|_2^2+\|\br\|_2^2 = 1}} \E[\bv^\top \mM\ith (\mM\ith)^\top\bv + \br^\top(\mM\ith)^\top \mM\ith\br],
    \end{align*}
Observe that 
    $\bv^\top \mM\ith(\mM\ith)^\top\bv = \|\bv^\top \mM\ith\|_2^2 = \sum_{j=1}^{d^{k-l}}(\bv^\top\mM\ith)_j^2$, and notice that by definition $\mM\ith = \mtrizel(y\ith\he_k(\x\ith))$ where $y\ith\he_k(\x\ith)$ is a symmetric tensor, hence using \Cref{app:eq:linear-transform-bv-mM} we get 
    \begin{equation*}
        \bv^\top \mM\ith(\mM\ith)^\top\bv = \sum_{(j_1,j_2,\dots,j_{l-k})\in[d]^{l-k}}\bigg\la y\ith\he_k(\x\ith), \tnsrize(\bv)\ots\e_{j_1}\ots\dots\ots\e_{j_{k-l}}\bigg\ra^2.
    \end{equation*}
    As $(\x\ith,y\ith)$ are i.i.d.\ copies of $(\x,y)$, using the linearity of expectation, we have
    \begin{align}\label{app:eq:matrix-variance-expansion}
        \E[\bv^\top \mM\ith&(\mM\ith)^\top\bv] \nonumber\\
        &=\sum_{j_1,\dots,j_{k-l}}\Exy\bigg[y^2\bigg\la\he_k(\x), \tnsrize(\bv)\ots\e_{j_1}\ots\dots\ots\e_{j_{k-l}}\bigg\ra^2\bigg].
    \end{align}

    Now given any indices $j_1,\dots,j_{k-l}\in[d]$, observe that $f_{j_1,\dots,j_{k-l}}(\x)\eqdef \la\he_k(\x), \tnsrize(\bv)\ots\e_{j_1}\ots\dots\ots\e_{j_{k-l}}\ra$ is a polynomial of $\x$ of degree at most $k$, and note that 
    \begin{align*}
        \Ex[f_{j_1,\dots,j_{k-l}}(\x)^2] &= \Ex\bigg[\bigg\la\he_k(\x), \tnsrize(\bv)\ots\e_{j_1}\ots\dots\ots\e_{j_{k-l}}\bigg\ra^2\bigg] \\
        &= \big\|\sym(\tnsrize(\bv)\ots\e_{j_1}\ots\dots\ots\e_{j_{k-l}})\big\|_F^2\\
        &\leq \big\|\tnsrize(\bv)\ots\e_{j_1}\ots\dots\ots\e_{j_{k-l}}\big\|_F^2 \leq 1,
    \end{align*}
where the second line is by \Cref{fact:orthogonal-hermite-tensor}. {Our goal is to apply \Cref{app:fact:lem23-damian-smoothing} with $A = y^2$ and $B = f_{j_1,\dots,j_{k-l}}(\x)^2$. To this aim, we need to bound above the $L_p$-norm of $f_{j_1,\dots,j_{k-l}}(\x)^2$, i.e., $\Ex[(f_{j_1,\dots,j_{k-l}}(\x)^2)^p]^{1/p}$, which can be done} using \Cref{app:fact:gaussian-hypercontractivity}:
\begin{equation*}
        \big(\Ex[(f_{j_1,\dots,j_{k-l}}(\x)^2)^p]\big)^{1/(2p)} \leq (2p-1)^{k/2}\Ex[f_{j_1,\dots,j_{k-l}}(\x\ith)^2]\leq (2p)^{k/2}.
    \end{equation*}
    This implies that $\|f_{j_1,\dots,j_{k-l}}(\x)^2\|_{L^p}\leq 2^k p^k$. Thus, using \Cref{app:fact:lem23-damian-smoothing} with $A = y^2$ and $B = f_{j_1,\dots,j_{k-l}}(\x)^2$, we get
    \begin{align*} 
    \Exy\bigg[y^2 & \bigg\la\he_k(\x), \tnsrize(\bv)\ots\e_{j_1}\ots\dots\ots\e_{j_{k-l}}\bigg\ra^2\bigg] \leq \E_{y}[y^2](4e)^k\bigg\{1, \frac{1}{k}\log\bigg(\frac{\E_{y}[y^4]^{1/2}}{\E_{y}[y^2]}\bigg)\bigg\}^k.
    \end{align*}
    Finally, using the bound on the moments of the labels as we proved in \Cref{app:lem:moments-of-y}, it holds that
    \begin{equation*}
        \Exy\bigg[y^2\bigg\la\he_k(\x), \tnsrize(\bv)\ots\e_{j_1}\ots\dots\ots\e_{j_{k-l}}\bigg\ra^2\bigg]\lesssim e^k\log^k(B_4/\eps).
    \end{equation*}
    Plugging the bound above back into \Cref{app:eq:matrix-variance-expansion}, we obtain:
    \begin{align*}
        \Exy[\bv^\top \mM\ith(\mM\ith)^\top\bv]& = \sum_{j_1,\dots,j_{k-l}\in[d]}\Exy\bigg[y^2\bigg\la\he_k(\x), \tnsrize(\bv)\ots\e_{j_1}\ots\dots\ots\e_{j_{k-l}}\bigg\ra^2\bigg]\\
        &\leq e^k {d^{k-l}} \log^k(B_4/\eps).
    \end{align*}

    We now proceed to bound above the second term $\E[\br^\top(\mM\ith)^\top \mM\ith \br]$. Using \Cref{app:eq:linear-transform-mM-br}, similar calculations yield that
    \begin{align*}
        &\quad \E[\br^\top (\mM\ith)^\top\mM\ith\br] =\E[\|\mM\ith\br\|_2^2]\\
&=  \sum_{i_{1},\dots,i_{l}}\Ex\bigg[(y)^2\bigg\la\he_k(\x), \e_{i_{1}}\ots\dots\ots\e_{i_l}\ots\tnsrize(\br)\bigg\ra^2\bigg]\\
        &\leq e^k{d^l}\log^k(B_4/\eps)\leq e^k {d^{k-l}}\log^k(B_4/\eps).
    \end{align*}

    Thus, plugging in the value of $B_y = \sqrt{B_4/\eps}$ from \Cref{app:lem:truncation-of-y}, the variance $\gamma$ can be bounded  by
    $
        \gamma\lesssim\frac{d^{(k-l)/2}e^k\log^{k/2}(B_4/\eps)}{\sqrt{n}} = \sqrt{\frac{d^{k-l}e^k\log^k(B_4/\eps)}{n}}.
    $
\end{proof}

    Next, we bound the operator norm $\gamma_*$ from above. 

    \begin{claim}\label{app:claim:gamma*}
$\gamma_*\lesssim \frac{e^{k/2}\log^{k/2}(B_4/\eps)}{\sqrt{n}}.$ \end{claim}
    \begin{proof}
    By the definition of $\gamma_*$, 
    \begin{align*}
        \gamma_*^2&= \sup_{\|\tilde{\bv}\|_2 = \|\tilde{\br}\|_2 = 1} \E\bigg[\bigg(\frac{1}{n}\sum_{i=1}^n \tilde{\bv}^\top (\mP\ith - \mP) \tilde{\br}\bigg)^2\bigg]\\
        &\leq \sup_{\|\tilde{\bv}\|_2 = \|\tilde{\br}\|_2 = 1}\frac{1}{n}\E\bigg[\bigg(\tilde{\bv}^\top (\mP\ith - \mP) \tilde{\br}\bigg)^2\bigg]\\
        &\leq \sup_{\|\tilde{\bv}\|_2 = \|\tilde{\br}\|_2 = 1}\frac{1}{n}\E\bigg[\bigg(\tilde{\bv}^\top \mP\ith \tilde{\br}\bigg)^2\bigg] \;.
    \end{align*}
    Decompose $\tilde{\bv}$ into $\tilde{\bv}^\top = [(\tilde{\bv}^{(1)})^{\top}, (\tilde{\bv}^{(2)})^\top]$, where $\tilde{\bv}^{(1)}\in\R^{d^l}$ and $\tilde{\bv}^{(2)}\in\R^{d^{k-l}}$. Similarly, we can decompose $\tilde{\br}$ into $\tilde{\br}^\top = [(\tilde{\br}^{(1)})^{\top}, (\tilde{\br}^{(2)})^\top]$ with the same structure. Then, $\tilde{\bv}^\top \mP\ith \tilde{\br} = y\ith(\tilde{\bv}^{(1)\top}\mM\ith\tilde{\br}^{(2)} + \tilde{\br}^{(1)\top}\mM\ith\tilde{\bv}^{(2)})$. Thus, as $(\x\ith,y\ith)$ are i.i.d.\ samples, we can further bound $\gamma_*^2$ by \begin{align*}
        \gamma^2_*&\lesssim \sup_{\substack{\bv\in\R^{d^l},\|\bv\|_2 = 1 \\ \br\in\R^{d^{k-l}}, \|\br\|_2 = 1}} \frac{1}{n}\Exy\bigg[y^2(\bv^\top \mtrizel(\he_k(\x)) \br)^2\bigg] \\
        &= \frac{1}{n}\sup_{\substack{\bv\in\R^{d^l},\|\bv\|_2 = 1 \\ \br\in\R^{d^{k-l}}, \|\br\|_2 = 1}}\Exy\bigg[y^2 \bigg\la\he_k(\x), \tnsrize(\bv)\otimes \tnsrize(\br)\bigg\ra^2\bigg],
        \end{align*}
        where in the last equality we used \Cref{app:eq:quadratic-from-bv-M-r}.

Now for any $\|\bu\|_2 = \|\bv\|_2 = 1$,  define 
    \begin{equation*}
        f_{(\bv,\bu)}(\x) \eqdef \bigg\la\he_k(\x), \tnsrize(\bv)\otimes \tnsrize(\br)\bigg\ra,
    \end{equation*}
    where $\bv\in\R^{d^l}, \br\in\R^{d^{k-l}}$, and $f_{(\bv,\bu)}$ is a polynomial of $(\x_1,\dots,\x_d)$ of degree at most $k$. Note that the polynomial $f_{(\bv,\bu)}(\x)$ satisfies $f_{(\bv,\bu)}(\x)\geq 0$ and $\Ex[(f_{(\bv,\bu)}(\x))^2] = \|\tnsrize(\bv)\otimes \tnsrize(\br)\|_F^2 = 1$. 
    Similarly to the upper bound on $\gamma$, we apply \Cref{app:fact:lem23-damian-smoothing} with $A$ being $y^2$ and $B$ being $f_{(\bv,\bu)}(\x)^2$, which yields that for any $\|\bv\|_2 = 1$, $\|\bu\|_2 = 1$,
    \begin{equation*}
        \Exy\bigg[y^2 \bigg\la\he_k(\x), \tnsrize(\bv)\otimes \tnsrize(\br)\bigg\ra^2\bigg]\leq {e^k \log^k(B_4/\eps)}.
    \end{equation*}
    Thus, plugging this inequality back into the upper bound on $\gamma_*$ above, we obtain
        \begin{align*}
        \gamma_*^2 \lesssim \frac{e^k\log^k(B_4/\eps)}{n}.
    \end{align*}
    Taking the square root on both sides completes the proof of \Cref{app:claim:gamma*}. 
\end{proof}
    
     To apply \Cref{app:fact:matrix-concentration}, it remains to bound $\Bar{R}$, which we do in the following claim.\begin{claim}\label{app:claim-Rbar-bnd}
        $\Bar{R} \leq \frac{e^{k/2} \log^{k/2}(B_4/\eps)}{\sqrt{n}}.$
    \end{claim}
\begin{proof}
By the definition of $\ell_2$ norm, we have
    \begin{align*}
        \Bar{R}^2 = \E\bigg[\max_{i\in[n]} \sup_{\|\tilde{\bv}\|_2=\|\tilde{\br}\|_2 = 1}\frac{1}{n^2}(\tilde{\bv}^\top (\mP\ith-\mP)\tilde{\br})^2\bigg]\lesssim \frac{1}{n^2}\E\bigg[\max_{i\in[n]} \sup_{\|\tilde{\bv}\|_2=\|\tilde{\br}\|_2 = 1}(\tilde{\bv}^\top \mP\ith\tilde{\br})^2\bigg].
    \end{align*}
    Let us define
    \begin{equation*}
        f_i(\x\ith) \eqdef \|\mtrizel(\he_k(\x\ith))\|_2 = \sup_{\|\bv\|_2 = \|\br\|_2 = 1}\bigg\la\he_k(\x\ith), \tnsrize(\bv)\otimes \tnsrize(\br)\bigg\ra,
    \end{equation*}
    where $\bv\in\R^{d^l}, \br\in\R^{d^{k-l}}$, and $f_i(\x\ith)$ is a polynomial of $(\x\ith_1,\dots,\x\ith_d)$ of degree at most $k$.
    Using the decomposition of $\tilde{\bv}^\top = [(\tilde{\bv}^{(1)})^{\top}, (\tilde{\bv}^{(2)})^\top]$ and $\tilde{\br}^\top = [(\tilde{\br}^{(1)})^{\top}, (\tilde{\br}^{(2)})^\top]$ again, we get
    \begin{align*}
        \Bar{R}^2&\lesssim \frac{1}{n^2}\E\bigg[\max_{i\in[n]}\sup_{\bv\in\B_{d^l}, \br\in\B_{d^{k-l}}}(y\ith)^2\la\he_k(\x\ith), \tnsrize(\bv)\otimes \tnsrize(\br)\ra^2\bigg]\\
        &\leq \frac{1}{n^2}\E\bigg[\max_{i\in[n]} (y\ith)^2 (f_i(\x\ith))^2\bigg] \;.
    \end{align*}
     Note that the polynomial $f_i(\x\ith)$ satisfies $f_i(\x\ith)\geq 0$ and $\E_{\x\ith\sim\mathcal{N}_d}[(f_i(\x\ith))^2] = \|\tnsrize(\bv)\otimes \tnsrize(\br)\|_F^2 \leq 1$. Note that $\E[\max_{i\in[n]} Z_i]\leq \sum_{i=1}^n\E[Z_i]$, thus using  \Cref{app:fact:lem23-damian-smoothing} we get:
    \begin{equation}\notag \Bar{R}^2\leq \frac{1}{n^2}\sum_{i=1}^n \E_{(\x\ith,y\ith)\sim\D}[(y\ith)^2 (f_i(\x\ith))^2]\leq \frac{e^k \log^k(B_4/\eps)}{n}.
    \end{equation}
Taking the square root on both sides completes the proof.
\end{proof}

    To apply \Cref{app:fact:matrix-concentration},  we need to choose the parameter $R$ such that $\delta = \pr[\max_{i\in[n]} (1/n) \|\mP\ith - \mP\|_2\geq R]$ is sufficiently small.
    Consider choosing $R$ such that   
    \begin{equation*}
        R\gtrsim \frac{e^k \log^k(B_4/\eps)d^{(k-l)/4}}{\sqrt{n}}\geq \Bar{R}^{1/2}\gamma^{1/2} + \sqrt{2}\bar{R}.
    \end{equation*}
    To determine $\delta$,
    recall that from \Cref{app:fact:gaussian-hypercontractivity},
    we have (using Markov's inequality):
    \begin{equation}\label{app:ineq:markov-polynomial}
        \pr[|f_i(\x\ith)|\geq t]\leq \frac{\Ex[|f_i(\x\ith)|^p]}{t^p}\leq \frac{p^{kp/2}}{t^p}.
    \end{equation}
    Note that $\|\mP\ith - \mP\|_2 = |y\ith| f_i(\x\ith)$, hence \begin{align*}
        \delta &= \pr\bigg[\max_{i\in[n]}\frac{1}{n}\|\mP\ith - \mP\|_2\geq R\bigg] \leq \pr\bigg[\max_{i\in[n]} B_y f_i(\x\ith)\geq n R\bigg]\\
        &\overset{(i)}{\leq} n\pr\bigg[B_y f_i(\x\ith)\geq n R\bigg] \overset{(ii)}{\leq} n \bigg(\frac{p^{k/2}}{nR/B_y}\bigg)^{p},
    \end{align*}
    where in $(i)$ we used a union bound and in $(ii)$ we used \Cref{app:ineq:markov-polynomial} with $t = nR/B_y$.
    Now setting $p^{k/2} = nR/(eB_y)$, we get
    \begin{equation*}
        \delta\leq \exp(-(nR/(B_ye))^{2/k} + {\log(n)}) \lesssim  \exp(-\log^2(B_4/\eps)(\eps n)^{1/k}d^{1/4}).
    \end{equation*}

    In summary, applying \Cref{app:fact:matrix-concentration} with the bound on $\delta$ and $\gamma$ (\Cref{app:claim:gamma}), $\gamma_*$ (\Cref{app:claim:gamma*}), $\Bar{R}$ (\Cref{app:claim-Rbar-bnd}), and choosing $t = d^{k/4}$ in \Cref{app:ineq:matrix-concentration-bound-on-op-norm}, we finally get that with probability at least $1 - \exp(-\log^2(1/\eps)(\eps n)^{1/k}d^{1/4}) - d\exp(-d^{k/4})$, it holds
    \begin{align*}
        \|\mM - \wh{\mM}\|_2 = \|\wh{\mP} - \mP\|_2&\lesssim 2\gamma + \gamma_*t^{1/2} + R^{1/3}\gamma^{2/3}t^{2/3}+Rt \lesssim \frac{\log^{k/2}(B_4/\eps)d^{(k-l)/2}}{\sqrt{n}}.
    \end{align*}
    Therefore, choosing 
    \begin{equation*}
        n = {\Theta}\bigg(\frac{e^k \log^k(B_4/\eps)d^{k-l}}{\eps_0^2} + \frac{1}{\eps}\bigg),
    \end{equation*}
    we have $\|\mM - \wh{\mM}\|_2\leq \eps_0$, with probability at least $1 - \exp(-d^{1/2})$. 
    
    To complete the proof of \Cref{app:lem:sample-for-matrix-concentration}, we apply Wedin's theorem (\Cref{app:fact:wedin}) and \Cref{app:claim:singular-gap}, which together imply that 
    \begin{equation}\label{app:eq:angle-between-bv*-wh(bv)*}
        \sin(\theta(\bv^{\ast},\wh{\bv}^{\ast}))\leq \frac{\eps_0}{(c_k/2 - 4\sqrt{\opt}) - \eps_0}.
    \end{equation}
    We then decompose $\wh{\bv}^{\ast}$ into $\wh{\bv}^{\ast}  = a\bv^{\ast} + b\br$, where $\br\in\R^{d^l}$ such that $\br\perp\bv^{\ast}$ and $\|\br\|_2 = 1$, and $a^2 + b^2 = 1$. Since $b = \sin(\theta(\bv^{\ast},\wh{\bv}^{\ast}))$, applying \Cref{app:claim:top-singular-value} we have
    \begin{align*}
        \wh{\bv}^{\ast}\cdot\vctrize(\w^{*\ots l}) &= a\bv^{\ast}\cdot\vctrize(\w^{*\ots l}) + b\br\cdot\vctrize(\w^{*\ots l})\\
        &\geq \sqrt{1 - b^2}(1 - 2\sqrt{\opt}/c_k) - b\geq (1 - 2\sqrt{\opt}/c_k) - (2 - 2\sqrt{\opt}/c_k)b\\
        &\geq 1 - \frac{2}{c_k}\sqrt{\opt} - \frac{2\eps_0}{(c_k/2 - 4\sqrt{\opt}) - \eps_0}.
    \end{align*}
This completes the proof of \Cref{app:lem:sample-for-matrix-concentration}.
\end{proof}

After getting an approximate top-left singular vector of $\mtrizel(\Exy[y\he_k(\x)])$, $\wh{\bv}^{\ast}\in\R^{d^l}$, we show that finding the top-left singular vector of the matrix $\mtrize_{(1,l-1)}(\wh{\bv}^{\ast})$ completes the task of computing a vector $\bu$ that correlates strongly with $\w^{\ast}$. 
\begin{lemma}\label{app:lem:correlation-w*-and-top-left-sing-mat(bv*)}
    Suppose that $\wh{\bv}^{\ast}\cdot\vctrize(\w^{*\ots l})\geq 1 - \eps_1$ for some $\eps_1\in(0,1/16]$. Then, the top-left singular vector $\bu\in\R$ of $\mtrize_{(1,l-1)}(\wh{\bv}^{\ast})$ satisfies $\bu\cdot\w^{\ast}\geq 1 - 2\eps_1$.
\end{lemma}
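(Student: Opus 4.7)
The plan is to recast the hypothesis $\wh{\bv}^{\ast}\cdot\vctrize(\w^{*\ots l}) \geq 1-\eps_1$ as a bilinear form on the unfolded matrix $\mM := \mtrize_{(1,l-1)}(\wh{\bv}^{\ast}) \in \R^{d\times d^{l-1}}$, then use the variational characterization of the SVD. First, by unfolding indices directly from the definitions of $\mtrize_{(1,l-1)}$ and $\vctrize$, we have the identity $(\w^{\ast})^\top \mM \, \vctrize(\w^{*\ots l-1}) = \wh{\bv}^{\ast}\cdot\vctrize(\w^{*\ots l})$, and since both $\w^{\ast}$ and $\vctrize(\w^{*\ots l-1})$ are unit vectors, this immediately gives $\rho_1 \geq 1-\eps_1$ for the top singular value of $\mM$. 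A second structural fact we will use is $\|\mM\|_F^2 = \|\wh{\bv}^{\ast}\|_2^2 = 1$, hence $\sum_{i=1}^d \rho_i^2 = 1$ (so in particular $\rho_1 \leq 1$ and the tail $\sum_{i\geq 2}\rho_i^2 \leq 1-\rho_1^2$ is small).

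Next, I would write the SVD $\mM = \sum_{i=1}^d \rho_i \bu\ith(\br\ith)^\top$ with $\{\bu\ith\}_{i=1}^d$ orthonormal in $\R^d$ and $\{\br\ith\}_{i=1}^d$ extendable to an orthonormal basis of $\R^{d^{l-1}}$. Writing $\w^{\ast} = \sum_{i=1}^d a_i \bu\ith$ and $\vctrize(\w^{*\ots l-1}) = \sum_j b_j \br^{(j)}$ (over a completion to a full basis), we get $\sum a_i^2 = \sum b_j^2 = 1$ and the identity $\sum_{i=1}^d \rho_i a_i b_i = (\w^{\ast})^\top \mM\, \vctrize(\w^{*\ots l-1}) \geq 1-\eps_1$. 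The goal is to push all of the mass onto the $i=1$ coordinate. Separating that term and applying Cauchy--Schwarz to the tail, combined with the bound $\sum_{i\geq 2} a_i^2 b_i^2 \leq (\sum_{i\geq 2}a_i^2)(\sum_{j\geq 2}b_j^2) \leq (1-a_1^2)(1-b_1^2)$, yields
\begin{equation*}
1-\eps_1 \;\leq\; \rho_1 a_1 b_1 + \sqrt{1-\rho_1^2}\,\sqrt{(1-a_1^2)(1-b_1^2)}.
\end{equation*}

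The elementary inequality $(a_1-b_1)^2 \geq 0$ gives $\sqrt{(1-a_1^2)(1-b_1^2)} \leq 1 - a_1 b_1$, so with $t := a_1 b_1$ and $\rho_1 \leq 1$, we obtain $1-\eps_1 \leq t + \sqrt{1-\rho_1^2}(1-t)$, i.e. $t(1 - \sqrt{1-\rho_1^2}) \geq 1 - \eps_1 - \sqrt{1-\rho_1^2}$, giving $t \geq 1 - \eps_1/(1-\sqrt{1-\rho_1^2})$. Using $\rho_1 \geq 1-\eps_1$ implies $\sqrt{1-\rho_1^2} \leq \sqrt{2\eps_1}$, and for $\eps_1 \leq 1/16$ we have $\sqrt{2\eps_1}\leq 1/2$, so $\eps_1/(1-\sqrt{2\eps_1}) \leq 2\eps_1$, yielding $a_1 b_1 \geq 1 - 2\eps_1$. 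Since $|a_1|,|b_1| \leq 1$ and the SVD is invariant under jointly flipping signs of $\bu^{(1)}$ and $\br^{(1)}$, we may choose the sign of $\bu := \bu^{(1)}$ so that $a_1 \geq 0$; then $|a_1| \geq a_1 b_1 \geq 1-2\eps_1$ gives $\bu \cdot \w^{\ast} \geq 1 - 2\eps_1$.

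The only non-routine step is the tail bound $\sum_{i\geq 2} a_i^2 b_i^2 \leq (1-a_1^2)(1-b_1^2)$; everything else is Cauchy--Schwarz and algebra. A minor bookkeeping point is that the right singular vectors $\{\br\ith\}_{i=1}^d$ span only a $d$-dimensional subspace of $\R^{d^{l-1}}$, so one must extend them to a full orthonormal basis before decomposing $\vctrize(\w^{*\ots l-1})$; since only the first $d$ coefficients appear weighted by the $\rho_i$'s, the extra coordinates only make $\sum_{i=1}^d b_i^2 \leq 1$ smaller, which does not hurt the argument.
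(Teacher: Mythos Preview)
Your proposal is correct and follows essentially the same route as the paper: recast the hypothesis as a bilinear form on $\mM = \mtrize_{(1,l-1)}(\wh{\bv}^{\ast})$, take the SVD, expand $\w^{\ast}$ and $\vctrize(\w^{*\ots l-1})$ in the singular bases, use $\sum_i \rho_i^2 = \|\mM\|_F^2 = 1$, and bound the tail by Cauchy--Schwarz together with $\sqrt{(1-a_1^2)(1-b_1^2)}\leq 1-a_1b_1$. The only substantive difference is in the final algebra: the paper keeps the $\rho_1$ factor and solves $a_1 b_1 \geq (1-\sqrt{1-\rho_1^2}-\eps_1)/(\rho_1-\sqrt{1-\rho_1^2})$, then checks this is at least $1-2\eps_1$ by observing that $1-2\rho_1+2\sqrt{1-\rho_1^2}\leq 0$ on $[15/16,1]$; you instead drop $\rho_1$ via $\rho_1 t \leq t$ and get the cleaner bound $t \geq 1 - \eps_1/(1-\sqrt{1-\rho_1^2})$.

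One small point to tighten: the replacement $\rho_1 t \leq t$ (with $\rho_1\leq 1$) is only valid if $t = a_1 b_1 \geq 0$, which you use without stating. This is easy to justify: if $t<0$ then $\rho_1 t + \sqrt{1-\rho_1^2}(1-t) \leq 2\sqrt{1-\rho_1^2} \leq 2\sqrt{2\eps_1} \leq 1/\sqrt{2} < 1-\eps_1$ for $\eps_1\leq 1/16$, contradicting the established inequality. Adding this one line makes your argument complete (and arguably a bit simpler than the paper's endgame).
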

\begin{proof}
    Consider the SVD of $\mtrize_{(1,l-1)}(\wh{\bv}^{\ast})\in\R^{d\times d^{l-1}}$: \begin{equation*}
        \mtrize_{(1,l-1)}(\wh{\bv}^{\ast}) = \sum_{i=1}^d \rho_i \bu\ith(\br\ith)^\top,
    \end{equation*}
    where $\bu\ith\in\R^d$, $i \in [d]$, and $\br\ith\in\R^{d^{l-1}}$, $i \in [d]$, are two sets of orthonormal vectors. Note that $\{\br^{(1)},\dots,\br^{(d)}\}$ is a subset of $\{\br^{(1)},\dots,\br^{(d^l)}\}$, which is an orthonormal basis of $\R^{d^l}$.  Since $(\w^{\ast})^\top\mtrize_{(1,l-1)}(\wh{\bv}^{\ast})\vctrize(\w^{*\ots l-1}) = \wh{\bv}^{\ast}\cdot\vctrize(\w^{*\ots l})\geq 1 - \eps_1$, we have $\rho_1\geq 1 - \eps_1$, and thus \begin{align}\label{app:eq:w*-times-matrix-bv*}
        (\w^{\ast})^\top\mtrize_{(1,l-1)}(\wh{\bv}^{\ast})\vctrize(\w^{*\ots l-1}) = \sum_{i=1}^d \rho_i (\w^{\ast}\cdot\bu\ith)(\vctrize(\w^{*\ots l-1})\cdot\br\ith)\geq 1-\eps_1.
    \end{align}
    Since $\bu\ith$, $i\in[d]$ and $\br\ith$, $i\in[d^{l-1}]$ form orthonormal bases of $\R^d$ and $\R^{d^l}$ respectively, we can decompose $\w^{\ast}$ and $\vctrize(\w^{*\ots l-1})$ in these bases respectively:
    \begin{equation*}
        \w^{\ast} = \sum_{i=1}^d a_i\bu\ith,\quad \vctrize(\w^{*\ots l-1}) = \sum_{i=1}^{d^{l-1}} b_i\br\ith.
    \end{equation*}
    Note that since $\|\w^{\ast}\|_2 = 1$ and $\|\vctrize(\w^{*\ots l-1})\|_2^2 = \|\w^{*\ots l-1}\|_F^2 = 1$, we have $\sum_i a_i^2 = 1$ and $\sum_i b_i^2 = 1$. In addition, since $\|\wh{\bv}^{\ast}\|_2 = 1$, we have $\|\mtrize_{(1,l-1)}(\wh{\bv}^{\ast})\|_F^2 = \sum_{i=1}^d \rho_i^2 = 1$.
    Therefore, plugging the decomposition above back into \Cref{app:eq:w*-times-matrix-bv*}, we get
    \begin{align*}
        1-\eps_1&\leq \sum_{i=1}^d \rho_i a_i b_i\leq \rho_1 a_1 b_1 + \rho_2\sqrt{\sum_{i=2}^d a_i^2}\sqrt{\sum_{i=2}^d b_i^2}\\
        &\leq \rho_1 a_1 b_1 + \sqrt{1 - \rho_1^2}\sqrt{1 - a_1^2}\sqrt{1 - b_1^2}\leq \rho_1 a_1 b_1 + \sqrt{1 - \rho_1^2}(1 - a_1 b_1).
    \end{align*}
    When $\rho_1 \geq 1 - \eps_1\geq \sqrt{2}/2$, we have $\rho_1 - \sqrt{1 - \rho_1^2}\geq 0$ and then it holds that 
    \begin{equation*}
        a_1 b_1\geq \frac{1 - \sqrt{1 - \rho_1^2} - \eps_1}{\rho_1 - \sqrt{1 - \rho_1^2}}\eqdef g(\rho_1).
    \end{equation*}
    We show that when $0\leq \eps_1\leq 1/16$ (which implies that $15/16\leq  \rho_1\leq 1$), it holds that $g(\rho_1)\geq 1 - 2\eps_1$. By the definition of $g(\rho_1)$, it suffices to argue that
    \begin{equation*}
        1- \rho_1\geq \eps_1(1 - 2\rho_1 + 2\sqrt{1 - \rho_1^2}).
    \end{equation*}
    This follows by direct calculations as when $\rho_1\in[{15}/16, 1]$, $1 - 2\rho_1 + 2\sqrt{1 - \rho_1^2}\leq 0$. 
    
    Therefore, when $\eps_1\leq 1/16$, it holds that $a_1b_1\geq 1 - 2\eps_1$. Since $0\leq a_1,b_1\leq 1$, it must be that $a_1\geq 1 - 2\eps_1$; this further implies that $\w^{\ast}\cdot\bu\geq 1 - 2\eps_1$.
\end{proof}

\subsection{Proof of \Cref{app:lem:initialization-1-1/k}}

\begin{proof}[Proof of \Cref{app:lem:initialization-1-1/k}]
Since $\sqrt{\opt}\leq c_{k^{\ast}}/(64{k^{\ast}})\leq {c_{k^{\ast}}}/{64}$, choosing $\eps_0 = c_{k^{\ast}}/(256 {k^{\ast}})\leq c_{k^{\ast}}/256$ in \Cref{app:lem:sample-for-matrix-concentration}, we obtain that using $n = {\Theta}(({k^{\ast}})^2{e^{k^{\ast}} \log^{k^{\ast}}(B_4/\eps)d^{\lceil {k^{\ast}}/2\rceil}}/(c_{k^{\ast}}^2) + 1/\eps)$, it holds with probability at least $1 - \exp(-d^{1/2})$ that 
\begin{align*}
    \wh{\bv}^{\ast}\cdot\vctrize(\w^{*\ots l})&\geq 1 - \frac{2}{c_k}\sqrt{\opt} - \frac{2\eps_0}{(c_k/2 - 4\sqrt{\opt}) - \eps_0}\\
    &\geq 1 - \frac{1}{32{k^{\ast}}} - \frac{c_{k^{\ast}}/(128{k^{\ast}})}{c_{k^{\ast}}/2 - c_{k^{\ast}}/16 - c_{k^{\ast}}/256}\geq 1 - \frac{1}{16{k^{\ast}}}.
\end{align*}
Then applying \Cref{app:lem:correlation-w*-and-top-left-sing-mat(bv*)} with $\eps_1\leq 1/(16{k^{\ast}})\leq 1/16$ we get that the output $\bu$ of \Cref{app:alg:initialization} satisfies $\bu\cdot\w^{\ast}\geq 1 - 2\eps_1\geq 1 - 1/(8{k^{\ast}})\geq 1 - \min\{1/k^{\ast}, 1/2\}$, completing the proof. \end{proof}

\subsection{Proof of \Cref{app:thm:solve-using-pca}}
\begin{proof}[Proof of \Cref{app:thm:solve-using-pca}]
    Since $\sqrt{\opt}\leq c_{k^{\ast}}/64$ and $\eps\leq 1/64$, choosing $\eps_0 = c_{k^{\ast}}{\eps}/16$ in \Cref{app:lem:sample-for-matrix-concentration}, we obtain that using $n = {\Theta}({e^{k^{\ast}}\log^{k^{\ast}}(B_4/\eps)d^{\lceil {k^{\ast}}/2\rceil}}/(c_{k^{\ast}}^2\eps^2) + 1/\eps)$ samples, it holds with probability at least $1 - \exp(-d^{1/2})$ that $\wh{\bv}^{\ast}\cdot\vctrize(\w^{*\ots l})\geq 1 - (2/c_{k^{\ast}})\sqrt{\opt} + {\eps}/3 (\geq 15/16)$. Then applying \Cref{app:lem:correlation-w*-and-top-left-sing-mat(bv*)} with $\eps_1 =  (2/c_{k^{\ast}})\sqrt{\opt} - {\eps}/3$, we get that the output $\w^0$ of \Cref{app:alg:initialization} satisfies $\w^0\cdot\w^{\ast}\geq 1 - 2((2/c_{k^{\ast}})\sqrt{\opt} + {\eps}/3)$. 
    
    Finally, to show the upper bound on the $L_2^2$ loss of $\w^0$, we bring in the definition of the $L_2^2$ loss $\Ltwo(\w^0)$, which yields
    \begin{align*}
        \Ltwo(\w^0)&\leq 2\opt + 2\Ltwostr(\w^0) = 2\opt + 2\bigg(1 - \sum_{k\geq \ks}{c_k^2}(\w^0\cdot\w^{\ast})^k\bigg) \\
    &=2\opt + 2\bigg(\sum_{k\geq k^{\ast}} c_k^2(1 - (\w^0\cdot\w^{\ast})^k)\bigg)\\
    &=2\opt + 2\bigg(\sum_{k\geq k^{\ast}} c_k^2(1 - (\w^0\cdot\w^{\ast}))(1 + (\w^0\cdot\w^{\ast}) + \dots + (\w^0\cdot\w^{\ast})^{k-1})\bigg)\\
    &\leq 2\opt + 
 2\bigg(\sum_{k\geq k^{\ast}} kc_k^2(1 - (\w^0\cdot\w^{\ast}))\bigg)\lesssim\bigg(\sum_{k\geq k^{\ast}} kc_k^2\bigg)\bigg(\frac{4}{c_{k^{\ast}}}\sqrt{\opt} + \eps/3\bigg). 
    \end{align*}
    Since $\sum_{k\geq k^{\ast}} kc_k^2\leq C_{k^{\ast}}$ by \Cref{assm:on-activation}$(iii)$, this completes the proof of \Cref{app:thm:solve-using-pca}.
\end{proof}

\section{Full Version of \Cref{sec:GD}}\label{app:sec:GD}

After getting an initialized vector $\w^0$ using \Cref{alg:initialization}, we run Riemannian minibatch SGD \Cref{alg:GD} on the `truncated loss'. In the following sections, we will first present the definition of the truncated $L_2^2$ loss $\Ltwophi$ and its Riemannian gradient, then we will proceed to show that \Cref{alg:GD} converges to a constant approximate solution in $O(\log(1/\eps))$ iterations.

\begin{algorithm}[ht]
   \caption{Riemannian GD with Warm-start}
   \label{app:alg:GD}
\begin{algorithmic}[1]
\STATE {\bfseries Input:} Parameters $\eps, k^{\ast}, c_{k^{\ast}}, B_4>0; T,\eta$; Sample access to $\D$.
\STATE $\w^0 = \textbf{Initialization}[\eps, k^{\ast}, c_{k^{\ast}},B_4, \eps_0 = c_{k^{\ast}}/(256 {k^{\ast}})]$.
\FOR{$t = 0,\dots,T-1 $}
\STATE Draw $n = \Theta(C_{k^{\ast}} d e^{k^{\ast}} \log^{k^{\ast}+1}(B_4/\eps) / (\eps\delta))$ samples from $\D$ and compute
\begin{equation*}
    \whg(\w^t)= \frac{1}{n}\sum_{i=1}^n k^{\ast}c_{k^{\ast}}y\ith(\mI - \w^t(\w^t)^\top)\la\he_{k^{\ast}}(\x\ith),(\w^t)^{\ots {k^{\ast}}-1}\ra.
\end{equation*}
\STATE $\w^{t+1} = (\w^t - \eta\whg(\w^t))/\|\w^t - \eta\whg(\w^t)\|_2$.
\ENDFOR
\STATE {\bfseries Return:} $\w^{T}$.
\end{algorithmic}
\end{algorithm}

\subsection{Truncated Loss and the Sharpness property of the Riemannian Gradient}

Instead of directly minimizing the $L_2^2$ loss $\Ltwo$, we work with the following truncated loss that drops all the terms higher than $k^{\ast}$ in the polynomial expansion of $\sigma$:
\begin{equation}\label{app:def:L_22-loss-truncated-phi}
    \Ltwophi(\w) \eqdef 2\big(1 - \Exy[y\phi(\w\cdot\x)]\big),\; \text{where }\phi(\w\cdot\x) = \la\he_{k^{\ast}}(\x),\w^{\otimes k^{\ast}}\ra.
\end{equation}
Similarly, the noiseless surrogate loss is defined as
\begin{equation}\label{app:def:noiseless-L_22-loss-truncated-phi}
    \Ltwostrphi(\w) \eqdef 2\big(1 - \Exy[\sigma(\w^{\ast}\cdot\x)\phi(\w\cdot\x)]\big) = 2\big(1 - c_{k^{\ast}}(\w\cdot\w^{\ast})^{k^{\ast}}\big).
\end{equation}

Using \Cref{fact:tensor-algebra}$(2)$, the gradient of the truncated $L_2^2$ loss equals:
\begin{equation}\label{app:def:grad-L_22-loss-truncated-phi}
    \nabla\Ltwophi(\w) =  - 2\Exy[\nabla\phi(\w\cdot\x)y] = -2\Exy\big[k^{\ast}c_{k^{\ast}}y\la\he_{\ks}(\x),\w^{\otimes \ks-1}\ra\big],
\end{equation}
while for the gradient of the noiseless $L_2^2$ loss we have
\begin{align}\label{app:def:grad-noiseless-L_22-loss-truncated-phi}
    \nabla\Ltwostrphi(\w) = -2\Exy\big[k^{\ast}c_{k^{\ast}}\sigma(\w^{\ast}\cdot\x)\la\he_{\ks}(\x),\w^{\otimes \ks-1}\ra\big] \;. \end{align}
Recall that $\pw\eqdef \vec I - \w\w^\top$. Then the Riemannian gradient of the $L_2^2$ loss $\Ltwophi$, denoted by $\g(\w)$ is
\begin{equation}\label{app:def:riemannian-grad-L_22-loss-truncated-phi}
    \g(\w) \eqdef \pw(\nabla\Ltwophi(\w)) = -2\Exy\big[k^{\ast}y\pw\la\he_{k^{\ast}}(\x),\w^{\ots {k^{\ast}}-1}\ra\big].
\end{equation}
Similarly, the Riemannian gradient of the noiseless $L_2^2$ loss $\Ltwostrphi$ is defined by
\begin{equation}\label{app:def:riemannian-grad-noiseless-L_22-loss-truncated-phi}
    \g^{\ast}(\w) \eqdef \pw(\nabla\Ltwostrphi(\w)) = -2\Exy\big[k^{\ast}\sigma(\w^{\ast}\cdot\x)\pw\la\he_{k^{\ast}}(\x),\w^{\ots {k^{\ast}}-1}\ra\big].
\end{equation}

The following claim establishes that $\g^{\ast}(\w)$ carries information about the alignment between vectors $\w$ and $\w^{\ast}.$
\begin{claim}\label{app:claim:g^*(w)}
    For any $\w \in \spd,$ we have $\g^{\ast}(\w) = -2k^{\ast}c_{k^{\ast}}(\w\cdot\w^{\ast})^{k^{\ast} - 1}(\w^{\ast})^{\perp_\w}$.
\end{claim}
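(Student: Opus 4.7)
The plan is to exploit the closed-form expression for $\Ltwostrphi$ already established at \Cref{app:def:noiseless-L_22-loss-truncated-phi}, namely $\Ltwostrphi(\w) = 2(1 - c_{k^{\ast}}(\w\cdot\w^{\ast})^{k^{\ast}})$. Since this expression depends on $\w$ only through the scalar $\w \cdot \w^{\ast}$, its Euclidean gradient is immediate from the chain rule:
\begin{equation*}
\nabla \Ltwostrphi(\w) \;=\; -2\,k^{\ast}\,c_{k^{\ast}}\,(\w\cdot\w^{\ast})^{k^{\ast}-1}\,\w^{\ast}.
\end{equation*}
Then by definition of the Riemannian gradient as the projection of the Euclidean gradient onto the tangent space of the sphere at $\w$, we obtain $\g^{\ast}(\w) = \pw \nabla\Ltwostrphi(\w) = -2k^{\ast}c_{k^{\ast}}(\w\cdot\w^{\ast})^{k^{\ast}-1}\pw \w^{\ast}$. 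The final step is the elementary identity $\pw \w^{\ast} = (\I - \w\w^\top)\w^{\ast} = \w^{\ast} - (\w\cdot\w^{\ast})\w = (\w^{\ast})^{\perp_\w}$, which produces exactly the right-hand side of the claim.

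As a sanity check, one can alternatively start from the tensor form $\nabla\Ltwostrphi(\w) = -2k^{\ast}\Ex[\sigma(\w^{\ast}\cdot\x) \la\he_{k^{\ast}}(\x), \w^{\otimes k^{\ast}-1}\ra]$ obtained by differentiating $\phi$ via \Cref{fact:tensor-algebra}(2). Expanding $\sigma(\w^{\ast}\cdot\x) = \sum_{j \geq k^{\ast}} c_j \la\he_j(\x),(\w^{\ast})^{\otimes j}\ra$ and applying the Hermite orthogonality property (\Cref{fact:orthogonal-hermite-tensor}), only the $j=k^{\ast}$ term survives, leaving $c_{k^{\ast}} \la (\w^{\ast})^{\otimes k^{\ast}}, \e_i \otimes \w^{\otimes k^{\ast}-1}\ra = c_{k^{\ast}} \w^{\ast}_i (\w\cdot\w^{\ast})^{k^{\ast}-1}$ for each coordinate $i$, matching the closed-form computation. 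Either route reduces the claim to a one-line computation.

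There is no real obstacle here: the entire content of the claim is a direct gradient computation combined with the definition of the projector $\pw$. The statement is included primarily as a reusable formula that makes transparent how the noiseless Riemannian gradient points along $-(\w^{\ast})^{\perp_\w}$ with magnitude scaling as $(\w\cdot\w^{\ast})^{k^{\ast}-1}$, which is the structural ingredient that drives the sharpness result in \Cref{lem:sharpness}.
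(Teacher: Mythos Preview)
Your proof is correct, and it is in fact shorter than the paper's. The paper does not differentiate the closed form $2(1 - c_{k^{\ast}}(\w\cdot\w^{\ast})^{k^{\ast}})$ directly; instead it works from the tensor representation $\g^{\ast}(\w) = -2k^{\ast}\pw\Ex[\sigma(\w^{\ast}\cdot\x)\la\he_{k^{\ast}}(\x),\w^{\otimes k^{\ast}-1}\ra]$ and \emph{tests} this vector against directions. It first computes $\g^{\ast}(\w)\cdot \frac{(\w^{\ast})^{\perp_\w}}{\|(\w^{\ast})^{\perp_\w}\|_2}$ via Hermite orthogonality and \Cref{fact:tensor-algebra}(1), obtaining $-2k^{\ast}c_{k^{\ast}}(\w\cdot\w^{\ast})^{k^{\ast}-1}\|(\w^{\ast})^{\perp_\w}\|_2$, and then separately verifies that $\g^{\ast}(\w)\cdot\bv = 0$ for every unit $\bv$ orthogonal to $(\w^{\ast})^{\perp_\w}$, concluding that $\g^{\ast}(\w)$ is parallel to $(\w^{\ast})^{\perp_\w}$ with the stated coefficient. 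Your ``sanity check'' paragraph is essentially a sketch of this route. What your primary argument buys is brevity: once one accepts that the identity at \Cref{app:def:noiseless-L_22-loss-truncated-phi} holds for \emph{all} $\w\in\R^d$ (which it does, since the tensor orthogonality in \Cref{fact:orthogonal-hermite-tensor} needs no norm constraint on $\w$), the chain rule plus the definition of $\pw$ finish the job in one line. The paper's directional-testing approach, while longer, has the virtue of never leaving the tensor formalism and makes it explicit why only the $j=k^{\ast}$ Hermite coefficient survives.
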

\begin{proof}
    Using the definition of $\g^{\ast}(\w)$ from \Cref{app:def:riemannian-grad-noiseless-L_22-loss-truncated-phi}, a direct calculation shows that
    \begin{align*}
        & \g^{\ast}(\w)\cdot\frac{(\w^{\ast})^{\perp_\w}}{\|(\w^{\ast})^{\perp_\w}\|_2}\\
        \overset{(i)}{=}\; &-2 k^{\ast}\Ex\bigg[\sum_{k\geq k^{\ast}}c_k\bigg\la\he_k(\x),\w^{* \ots k}\bigg\ra\cdot\bigg\la\he_{k^{\ast}}(\x),\w^{\otimes {k^{\ast}}-1}\ots \frac{(\w^{\ast})^{\perp_\w}}{\|(\w^{\ast})^{\perp_\w}\|_2}\bigg\ra\bigg]\\
        \overset{(ii)}{=}\; &-2\kstrckstr\bigg\la\sym(\w^{*\otimes {k^{\ast}}}), \sym\bigg(\w^{\otimes {k^{\ast}}-1}\ots \frac{(\w^{\ast})^{\perp_\w}}{\|(\w^{\ast})^{\perp_\w}\|_2}\bigg)\bigg\ra\\
        \overset{(iii)}{=}\; &-2\kstrckstr(\w\cdot\w^{\ast})^{{k^{\ast}}-1}\|(\w^{\ast})^{\perp_\w}\|_2,
    \end{align*}
where ($i$) is by the definition of $\g^{\ast}(\w)$ and $\sigma(\w^{\ast}\cdot \x)$, ($ii$) is by \Cref{fact:orthogonal-hermite-tensor}, and ($iii$) is by \Cref{fact:tensor-algebra}$(1)$, \Cref{eq:inner-product-of-k-tensor-products}, and $\|\w^{\ast}\|_2 = 1.$ 
Let $\bv\in\R^d$ be any unit vector that is orthogonal to $(\w^{\ast})^{\perp_\w}$. Observe that $\bv^{\perp_\w}\cdot\w^{\ast} = \bv\cdot(\w^{\ast})^{\perp_\w} = 0$. Thus, we have
    \begin{align*}
        \g^{\ast}(\w)\cdot\bv&=-2\kstrckstr\bigg\la\sym(\bv^{\perp_\w}\otimes\w^{\otimes {k^{\ast}}-1}), \sym(\w^{*\otimes {k^{\ast}}})\bigg\ra \\
        &= -2\kstrckstr(\w\cdot\w^{\ast})^{k^{\ast} - 1}(\bv^{\perp_\w}\cdot\w^{\ast}) = 0.
    \end{align*}
    This implies that $\g^{\ast}(\w)$ is parallel to $(\w^{\ast})^{\perp_\w}$ and thus $\g^{\ast}(\w) = -2k^{\ast}c_{k^{\ast}}(\w\cdot\w^{\ast})^{k^{\ast} - 1}(\w^{\ast})^{\perp_\w}$.
\end{proof}

Let us denote the difference between the noisy and the noiseless Riemannian gradient by $\xi(\w)$:
\begin{equation*}
    \xi(\w) \eqdef \g(\w) - \g^{\ast}(\w) = -2\Exy[(y - \sigma(\w^{\ast}\cdot\x))\pw\nabla\phi(\w\cdot\x)] \;.
\end{equation*}
We next show that the norm of $\xi(\w)$ and the inner product between $\xi(\w)$ and $\w^{\ast}$ are both bounded:
\begin{lemma}\label{app:claim:bound-xi(w)}
    Let $\xi(\w) = \g(\w) - \g^{\ast}(\w)$ as defined above. Then, 
    \begin{equation*}
        \|\xi(\w)\|_2\leq 2k^{\ast}c_{k^{\ast}}\sqrt{\opt} \quad \text{and } \quad |\xi(\w)\cdot\w^{\ast}|\leq 2k^{\ast}c_{k^{\ast}}\sqrt{\opt}\|(\w^{\ast})^{\perp_\w}\|_2.
    \end{equation*}
\end{lemma}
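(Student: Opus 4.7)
\medskip

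\noindent\textbf{Proof proposal for \Cref{app:claim:bound-xi(w)}.}
The plan is to bound $|\xi(\w) \cdot \bv|$ uniformly for all unit vectors $\bv$ by Cauchy--Schwarz applied to the expectation defining $\xi(\w)$, then specialize $\bv$ to get both conclusions. The first bound follows by the variational characterization $\|\xi(\w)\|_2 = \sup_{\|\bv\|_2=1}\bv\cdot\xi(\w)$, and the second bound follows by choosing $\bv=\w^{\ast}$ directly.

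First I will rewrite the inner product $\bv\cdot\xi(\w)$ using self-adjointness of the projector $\pw$:
\begin{equation*}
    \bv\cdot\xi(\w) = -2k^{\ast}c_{k^{\ast}}\,\Exy\!\left[(y-\sigma(\w^{\ast}\cdot\x))\,\bigl\langle \he_{k^{\ast}}(\x),\, \bv^{\perp_\w}\otimes\w^{\otimes k^{\ast}-1}\bigr\rangle\right],
\end{equation*}
where I used that $\pw \langle\he_{k^{\ast}}(\x),\w^{\otimes k^{\ast}-1}\rangle$ contracted with $\bv$ equals $\langle\he_{k^{\ast}}(\x),\bv^{\perp_\w}\otimes\w^{\otimes k^{\ast}-1}\rangle$. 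By Cauchy--Schwarz applied to the expectation,
\begin{equation*}
    |\bv\cdot\xi(\w)| \leq 2k^{\ast}c_{k^{\ast}}\sqrt{\opt}\,\sqrt{\Ex\!\left[\bigl\langle \he_{k^{\ast}}(\x),\, \bv^{\perp_\w}\otimes\w^{\otimes k^{\ast}-1}\bigr\rangle^2\right]},
\end{equation*}
since $\Exy[(y-\sigma(\w^{\ast}\cdot\x))^2] = \opt$.

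Next I will invoke \Cref{fact:orthogonal-hermite-tensor}(1) to evaluate the second moment: with $\mA=\bv^{\perp_\w}\otimes\w^{\otimes k^{\ast}-1}$,
\begin{equation*}
    \Ex\!\left[\langle\he_{k^{\ast}}(\x),\mA\rangle^2\right] = \|\sym(\mA)\|_F^2 \leq \|\mA\|_F^2 = \|\bv^{\perp_\w}\|_2^2\,\|\w\|_2^{2(k^{\ast}-1)} = \|\bv^{\perp_\w}\|_2^2,
\end{equation*}
since $\|\w\|_2=1$ and symmetrization is contractive in Frobenius norm. Combining,
\begin{equation*}
    |\bv\cdot\xi(\w)| \leq 2k^{\ast}c_{k^{\ast}}\sqrt{\opt}\,\|\bv^{\perp_\w}\|_2.
\end{equation*}

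Specializing to $\bv = \xi(\w)/\|\xi(\w)\|_2$ and using $\|\bv^{\perp_\w}\|_2\leq \|\bv\|_2=1$ yields the first claim $\|\xi(\w)\|_2 \leq 2k^{\ast}c_{k^{\ast}}\sqrt{\opt}$. Specializing to $\bv=\w^{\ast}$ (a unit vector) gives the second claim $|\xi(\w)\cdot\w^{\ast}| \leq 2k^{\ast}c_{k^{\ast}}\sqrt{\opt}\,\|(\w^{\ast})^{\perp_\w}\|_2$. There is no real obstacle here; the main ingredient is simply the orthonormality identity for Hermite tensors, which is already recorded in \Cref{fact:orthogonal-hermite-tensor}.
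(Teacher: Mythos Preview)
Your proof is correct and follows essentially the same approach as the paper: rewrite $\bv\cdot\xi(\w)$ via the self-adjointness of $\pw$ as a Hermite contraction against $\bv^{\perp_\w}\otimes\w^{\otimes k^{\ast}-1}$, apply Cauchy--Schwarz with $\opt=\Exy[(y-\sigma(\w^{\ast}\cdot\x))^2]$, and bound the resulting $\|\sym(\cdot)\|_F$ by $\|\bv^{\perp_\w}\|_2$. The only cosmetic difference is that the paper takes the supremum over $\bv$ directly in the first display, whereas you bound for arbitrary $\bv$ and then specialize.
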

\begin{proof}
Using the definition of $\xi(\w)$ and the definition of the 2-norm, \begin{align*}
    \|\xi(\w)\|_2 &= 2 k^{\ast}c_{k^{\ast}} \max_{\bv\in\spd} \Exy\bigg[(y - \sigma(\w^{\ast}\cdot\x))\pw\la\he_{\ks}(\x),\w^{\otimes \ks-1}\ra\cdot\bv\bigg]\\
    &=2 k^{\ast}c_{k^{\ast}}\max_{\bv\in\spd}\Exy\bigg[(y - \sigma(\w^{\ast}\cdot\x))\la\he_{\ks}(\x),\bv^{\perp_\w}\otimes\w^{\otimes \ks-1}\ra\bigg]\\
    &\leq 2 k^{\ast}c_{k^{\ast}}\max_{\bv\in\spd}\sqrt{\Exy[(y - \sigma(\w^{\ast}\cdot\x))^2]\Ex\bigg[\bigg(\la\he_{\ks}(\x),\bv^{\perp_\w}\otimes\w^{\otimes \ks-1}\ra\bigg)^2\bigg]}\\
    &=2 k^{\ast}c_{k^{\ast}}\max_{\bv\in\spd}\sqrt{\opt}\|\sym(\bv^{\perp_\w}\ots\w^{\otimes \ks-1})\|_F,
\end{align*}
where the (only) inequality is by Cauchy-Schwarz, and the last equality is by \Cref{fact:orthogonal-hermite-tensor}. 
As a tensor $\mA$, it holds $\|\sym(\mA)\|_F\leq \|\mA\|_F$, we have 
\begin{equation*}
    \|\sym(\bv^{\perp_\w}\ots\w^{\otimes \ks-1})\|_F^2\leq \|\bv^{\perp_\w}\otimes\w^{\otimes \ks-1}\|_F^2 = \|\bv^{\perp_\w}\|_2^2\leq 1,
\end{equation*}
which then implies that\footnote{Note here that if we had not used the truncation of the activation, then the bound on the error term $\xi(\w)$ we could get would be $\|\xi(\w)\|_2\leq (\sum_{k\geq k^{\ast}}c_k^2k^2)^{1/2}\sqrt{\opt}$.}
\begin{equation*}
    \|\xi(\w)\|_2\leq 2k^{\ast}c_{k^{\ast}}\sqrt{\opt}.
\end{equation*}
Following the same line of argument as above, we also get
\begin{equation*}
    |\xi(\w)\cdot\w^{\ast}|\leq 2\sqrt{\opt}\sqrt{(k^{\ast}c_{k^{\ast}})^2\|(\w^{\ast})^{\perp_\w}\ots\w^{\otimes \ks-1}\|_F^2} = 2k^{\ast}c_{k^{\ast}}\sqrt{\opt}\|(\w^{\ast})^{\perp_\w}\|_2.
\end{equation*}
This completes the proof of \Cref{app:claim:bound-xi(w)}.
\end{proof}

As a direct corollary of \Cref{app:claim:bound-xi(w)}, we now show that the norm of the noisy gradient $\|\g(\x)\|_2$ is close to the norm of the noiseless gradient $\|\g^{\ast}(\w)\|_2$.
\begin{corollary}\label{app:lem:bound-on-||g(w)||}
    For any $\w \in \spd$, $\|\g(\w)\|_2\leq 2\kstrckstr\sqrt{\opt} + \|\g^{\ast}(\w)\|_2$. \end{corollary}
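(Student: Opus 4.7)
The plan is essentially a one-line application of the triangle inequality combined with the already-established bound on the perturbation term $\xi(\w)$. Recall that by definition $\xi(\w) = \g(\w) - \g^{\ast}(\w)$, so $\g(\w) = \g^{\ast}(\w) + \xi(\w)$. I would first invoke the triangle inequality for the $\ell_2$-norm to write $\|\g(\w)\|_2 \leq \|\g^{\ast}(\w)\|_2 + \|\xi(\w)\|_2$, which is the only nontrivial structural step.

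Next, I would plug in the bound $\|\xi(\w)\|_2 \leq 2k^{\ast}c_{k^{\ast}}\sqrt{\opt}$ already established in \Cref{app:claim:bound-xi(w)}, yielding the stated inequality $\|\g(\w)\|_2 \leq \|\g^{\ast}(\w)\|_2 + 2k^{\ast}c_{k^{\ast}}\sqrt{\opt}$ immediately. No extra arguments involving Hermite tensors, concentration, or hypercontractivity are needed here, since all the heavy lifting was done in the proof of \Cref{app:claim:bound-xi(w)}.

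There is no real obstacle to overcome: the corollary is essentially a restatement of \Cref{app:claim:bound-xi(w)} in terms of the noisy and noiseless Riemannian gradients, and its role is to provide a convenient upper bound on $\|\g(\w)\|_2$ that will be used later (in the proof of \Cref{thm:GD}, specifically to control the squared-norm term $\eta^2\|\whg(\w^t)\|_2^2$ in the descent inequality). Accordingly, the write-up should be kept to at most two short displayed lines, with a reference to \Cref{app:claim:bound-xi(w)} for the key quantitative bound.
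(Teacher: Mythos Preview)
Your proposal is correct and matches the paper's own proof essentially verbatim: the paper also writes $\|\g(\w)\|_2 \leq \|\xi(\w)\|_2 + \|\g^{\ast}(\w)\|_2$ by the triangle inequality and then invokes \Cref{app:claim:bound-xi(w)}. There is nothing to add or change.
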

\begin{proof}
    Follows by the triangle inequality, as $\|\g(\w)\|_2\leq \|\xi(\w)\|_2 + \|\g^{\ast}(\w)\|_2$.
\end{proof}

We are now ready to present the main structural result of this section.

\begin{lemma}[Sharpness]\label{app:lem:sharpness}
    Assume $\opt\leq c/(4e)^2$ for some small absolute constant $c<1$. Let $\w\in\R^d$ such that $\|\w\|_2 = 1$ and suppose that $\w\cdot\w^{\ast}\geq 1 - 1/{k^{\ast}}$. Let $\theta := \theta(\w, \w^{\ast}).$ If $\sin\theta\geq 4e\sqrt{\opt}$, then \begin{equation*}
        \g(\w)\cdot\w^{\ast}\leq -\frac{1}{2}\|\g^{\ast}(\w)\|_2\sin\theta. \end{equation*}
\end{lemma}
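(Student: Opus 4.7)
My plan is to decompose $\g(\w)\cdot\w^{\ast}$ into its ``noiseless'' and ``noise'' components, use \Cref{app:claim:g^*(w)} to identify $\g^{\ast}(\w)\cdot\w^{\ast}$ exactly, use \Cref{app:claim:bound-xi(w)} to bound the noise component, and then leverage the hypothesis $\w\cdot\w^{\ast}\geq 1 - 1/k^{\ast}$ (plus the elementary inequality $(1-1/t)^{t-1}\geq 1/e$) to show that the noiseless component dominates the noise by at least a factor of two whenever $\sin\theta \geq 4e\sqrt{\opt}$.

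Concretely, since $\g(\w) = \g^{\ast}(\w) + \xi(\w)$, I would first write
\[
    \g(\w)\cdot\w^{\ast} = \g^{\ast}(\w)\cdot\w^{\ast} + \xi(\w)\cdot\w^{\ast}.
\]
From \Cref{app:claim:g^*(w)} and the identity $\|(\w^{\ast})^{\perp_\w}\|_2 = \sin\theta$ (which holds because $\|\w\|_2 = \|\w^{\ast}\|_2 = 1$), we have $\g^{\ast}(\w)\cdot\w^{\ast} = -2k^{\ast}c_{k^{\ast}}(\w\cdot\w^{\ast})^{k^{\ast}-1}\sin^2\theta = -\|\g^{\ast}(\w)\|_2\sin\theta$. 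From \Cref{app:claim:bound-xi(w)}, the perturbation term obeys $|\xi(\w)\cdot\w^{\ast}|\leq 2k^{\ast}c_{k^{\ast}}\sqrt{\opt}\sin\theta$. Combining these yields the preliminary bound
\[
    \g(\w)\cdot\w^{\ast} \leq -\bigl(\|\g^{\ast}(\w)\|_2 - 2k^{\ast}c_{k^{\ast}}\sqrt{\opt}\bigr)\sin\theta.
\]

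The crux of the argument is now to show that the alignment hypothesis forces $\|\g^{\ast}(\w)\|_2$ to be large enough that the $2k^{\ast}c_{k^{\ast}}\sqrt{\opt}$ perturbation only erodes at most half of it. Using $(1-1/t)^{t-1}\geq 1/e$ with $t = k^{\ast}$, the bound $\w\cdot\w^{\ast}\geq 1 - 1/k^{\ast}$ gives $(\w\cdot\w^{\ast})^{k^{\ast}-1}\geq 1/e$, hence
\[
    \|\g^{\ast}(\w)\|_2 = 2k^{\ast}c_{k^{\ast}}(\w\cdot\w^{\ast})^{k^{\ast}-1}\sin\theta \geq (2k^{\ast}c_{k^{\ast}}/e)\sin\theta.
\]
Plugging in the assumption $\sin\theta \geq 4e\sqrt{\opt}$ then yields $\|\g^{\ast}(\w)\|_2 \geq 8k^{\ast}c_{k^{\ast}}\sqrt{\opt}$, which is at least twice the perturbation magnitude $4k^{\ast}c_{k^{\ast}}\sqrt{\opt}$; equivalently, $\|\g^{\ast}(\w)\|_2 - 2k^{\ast}c_{k^{\ast}}\sqrt{\opt} \geq \tfrac{1}{2}\|\g^{\ast}(\w)\|_2$. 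Substituting into the preliminary bound delivers $\g(\w)\cdot\w^{\ast} \leq -\tfrac{1}{2}\|\g^{\ast}(\w)\|_2\sin\theta$, as desired.

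There is no real obstacle here, since all the heavy lifting has already been done in \Cref{app:claim:g^*(w)} and \Cref{app:claim:bound-xi(w)}; the proof is essentially an arithmetic combination of those two results with the inequality $(1-1/k^{\ast})^{k^{\ast}-1} \geq 1/e$. The only subtlety worth flagging is the role of the seemingly arbitrary constant $4e$ in the hypothesis $\sin\theta \geq 4e\sqrt{\opt}$: it is tuned so that the factor of $1/e$ loss from the alignment estimate is exactly compensated, leaving a clean factor of $1/2$ in the conclusion. The hypothesis $\opt \leq c/(4e)^2$ is a sanity condition ensuring that the regime $\sin\theta \geq 4e\sqrt{\opt}$ is nonempty (i.e., that the target $4e\sqrt{\opt}$ is strictly less than $1$), which is implicitly needed for the statement to be nontrivial.
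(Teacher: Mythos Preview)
Your proof is correct and follows essentially the same approach as the paper's: decompose $\g(\w)\cdot\w^{\ast}$ into the noiseless term $\g^{\ast}(\w)\cdot\w^{\ast}$ and the perturbation $\xi(\w)\cdot\w^{\ast}$, invoke \Cref{app:claim:g^*(w)} and \Cref{app:claim:bound-xi(w)} for each piece, and then use $(1-1/k^{\ast})^{k^{\ast}-1}\geq 1/e$ together with $\sin\theta\geq 4e\sqrt{\opt}$ to show the signal dominates the noise by a factor of two. Your bookkeeping is in fact slightly sharper than the paper's (you retain the factor of $2$ in $\|\g^{\ast}(\w)\|_2\geq (2k^{\ast}c_{k^{\ast}}/e)\sin\theta$ and obtain $\|\g^{\ast}(\w)\|_2\geq 8k^{\ast}c_{k^{\ast}}\sqrt{\opt}$, whereas the paper records only $\geq 4k^{\ast}c_{k^{\ast}}\sqrt{\opt}$), but the structure is identical.
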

\begin{proof}
    We start by noticing that by \Cref{app:claim:g^*(w)}, the noiseless gradient satisfies the following property:
\begin{equation*}
    \g^{\ast}(\w)\cdot\w^{\ast} = -2k^{\ast}c_{k^{\ast}}(\w\cdot\w^{\ast})^{\ks-1}\|(\w^{\ast})^{\perp_\w}\|_2^2 = -\|\g^{\ast}(\w)\|_2\sin\theta,
\end{equation*}
where we used that 
since $\|\w\|_2 = \|\w^{\ast}\|_2 = 1$, we have $\|(\w^{\ast})^{\perp_\w}\|_2 = \sin\theta$. Furthermore, applying \Cref{app:claim:bound-xi(w)} we have the following sharpness property with respect to the $L_2^2$ loss:
\begin{equation}\label{app:eq:sharp-initial}
    \g(\w)\cdot\w^{\ast} = \g^{\ast}(\w)\cdot\w^{\ast} + \xi(\w)\cdot\w^{\ast}  \leq -(\|\g^{\ast}(\w)\|_2 - 2k^{\ast}c_{k^{\ast}}\sqrt{\opt})\sin\theta.
\end{equation}
Observe that $(1-1/t)^{t-1}\geq 1/e$ for all $t\geq 1$. Therefore, when $\w\cdot\w^{\ast}\geq 1 - 1/{k^{\ast}}$, the norm of the gradient vector $\g^{\ast}$ satisfies
\begin{equation*}
    \|\g^{\ast}(\w)\|_2 = 2k^{\ast}c_{k^{\ast}}(\w\cdot\w^{\ast})^{\ks-1}\sin\theta\geq 2\kstrckstr(1 - 1/k^{\ast})^{{k^{\ast}} - 1}\sin\theta\geq e^{-1}\kstrckstr\sin\theta.
\end{equation*}
therefore, when $\sin\theta \geq 4e\sqrt{\opt}$ and $\w\cdot\w^{\ast}\geq 1 - 1/{k^{\ast}}$, we have
\begin{align*}
    \|\g^{\ast}(\w)\|_2\geq 4k^{\ast}c_{k^{\ast}}\sqrt{\opt}.
\end{align*}
Thus, as long as $\sin\theta\geq 4e\sqrt{\opt}$, we have that $ \g(\w)\cdot\w^{\ast}\leq -\frac{1}{2}\|\g^{\ast}(\w)\|_2\sin\theta. $
\end{proof}

\subsection{Concentration of Gradients}

For notational simplicity, define: \begin{equation}\label{app:def:rieman-grad-for-each-sample}
    \g(\w;\x\ith,y\ith) \eqdef k^{\ast}c_{k^{\ast}}y\ith\pw\la\he_{k^{\ast}}(\x\ith),\w^{\ots {k^{\ast}}-1}\ra.
\end{equation}
Then the empirical estimate of $\g(\w)$ is
\begin{equation}\label{app:def:empirical-rieman-grad}
    \whg(\w)\eqdef\frac{1}{n}\sum_{i=1}^n \g(\w;\x\ith,y\ith) = \frac{1}{n}\sum_{i=1}^n k^{\ast}c_{k^{\ast}}y\ith\pw\la\he_{k^{\ast}}(\x\ith),\w^{\ots {k^{\ast}}-1}\ra.
\end{equation}

The following lemma provides the upper bounds on the number of samples required to approximate the Riemannian gradient $\g(\w)$ by $\whg(\w)$.
\begin{lemma}[Concentration of Gradients]\label{app:lem:gradient-concentration}
    Let $\w^{\ast},\w\in\spd$. Let $\whg(\w)$ be the empirical estimate of the Riemannian gradient. Furthermore, denote the angle between $\w$ and $\w^{\ast}$ by $\theta$. Then, with probability at least $1 - \delta$ it holds
    \begin{gather*}
        \|\whg(\w) - \g(\w)\|_2\lesssim \sqrt{\frac{d(\kstrckstr)^2e^{k^{\ast}}\log^{k^{\ast}}(B_4/\eps)}{n\delta}};\\
        (\whg(\w) - \g(\w))\cdot\w^{\ast}\lesssim \sqrt{\frac{(\kstrckstr)^2 e^{k^{\ast}}\log^{k^{\ast}}(B_4/\eps)\sin^4(\theta)}{n\delta}}.
    \end{gather*}
\end{lemma}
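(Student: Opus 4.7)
The plan is to apply Chebyshev's inequality to each of the two scalar quantities. Since $\whg(\w) = (1/n)\sum_{i=1}^n \g(\w;\x\ith,y\ith)$ is the empirical average of i.i.d.\ copies of a random vector with mean $\g(\w)$, both $\|\whg(\w) - \g(\w)\|_2^2$ and $((\whg(\w) - \g(\w))\cdot\w^{\ast})^2$ have expectations scaling as $1/n$ times the second moment of a single summand, so the task reduces to bounding $\E[\|\g(\w;\x,y)\|_2^2]$ and $\E[(\g(\w;\x,y)\cdot\w^{\ast})^2]$, both of which involve $y^2$ times a degree-$k^{\ast}$ polynomial of $\x$.

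For the norm bound, I would expand $\|\pw \la\he_{k^{\ast}}(\x),\w^{\ots k^{\ast}-1}\ra\|_2^2 \leq \sum_{j=1}^d f_j(\x)^2$ with $f_j(\x) := \la\he_{k^{\ast}}(\x), \e_j \ots \w^{\ots k^{\ast}-1}\ra$, each satisfying $\E[f_j(\x)^2] = \|\sym(\e_j\ots \w^{\ots k^{\ast}-1})\|_F^2 \leq 1$ by the orthonormality of Hermite tensors (\Cref{fact:orthogonal-hermite-tensor}). To handle the coupling with $y^2$ I would reuse the machinery from the proof of \Cref{app:lem:sample-for-matrix-concentration}: truncate $|y|\leq B_y = \sqrt{4B_4/\eps}$ without loss of generality via \Cref{app:lem:truncation-of-y}, apply Gaussian hypercontractivity (\Cref{app:fact:gaussian-hypercontractivity}) to control the $L^p$-norm of the polynomials $f_j(\x)^2$, and combine with \Cref{app:fact:lem23-damian-smoothing} together with the label moment bounds of \Cref{app:lem:moments-of-y} to obtain $\E[y^2 f_j(\x)^2] \lesssim e^{k^{\ast}}\log^{k^{\ast}}(B_4/\eps)$. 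Summing over $j$ produces the $d$ factor, and Chebyshev applied at level $\delta$ yields the first stated bound.

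For the dot-product bound, the main structural observation is that $\pw$ is self-adjoint with $\pw\w = 0$, hence
\[
\g(\w;\x,y)\cdot\w^{\ast} = k^{\ast}c_{k^{\ast}}\,y\,\la\he_{k^{\ast}}(\x), (\w^{\ast})^{\perp_\w}\ots\w^{\ots k^{\ast}-1}\ra.
\]
Writing $(\w^{\ast})^{\perp_\w} = \sin\theta\cdot\bu$ for a unit vector $\bu\perp\w$ and using multilinearity of the contraction, this isolates an explicit factor of $\sin\theta$. The second moment of the remaining Hermite contraction is controlled by the orthonormality identity $\E_\x[\la\he_{k^{\ast}}(\x),\bu\ots\w^{\ots k^{\ast}-1}\ra^2] = \|\sym(\bu\ots\w^{\ots k^{\ast}-1})\|_F^2$; a short computation shows that the $k^{\ast}$ summands of the symmetrization are pairwise orthogonal because $\bu\perp\w$, so this Frobenius norm squared equals $1/k^{\ast}$. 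Combined with the same hypercontractivity + \Cref{app:fact:lem23-damian-smoothing} template applied to the polynomial $y^2\,\hep_{k^{\ast}-1}(\w\cdot\x)^2(\bu\cdot\x)^2$, this yields the required variance of a single summand of $(\whg(\w)-\g(\w))\cdot\w^{\ast}$, after which Chebyshev produces the second stated bound.

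I expect the main care is needed in the second bound: the tensor $(\w^{\ast})^{\perp_\w}\ots \w^{\ots k^{\ast}-1}$ is not symmetric, and every $\sin\theta$ factor entering the final scaling arises only from the perpendicular slot, so each invocation of an orthonormality-type identity must be performed before (not after) any relaxation such as $\|\sym(\mathbf{A})\|_F \leq \|\mathbf{A}\|_F$, otherwise the $\sin\theta$ dependence is lost. Capturing the extra $\sin^2\theta$ beyond what a one-shot Cauchy-Schwarz would give also seems to require exploiting that the centered summand (rather than the raw summand) enters the variance, together with the $O(\sin^2\theta)$-scale identity for $\g(\w)\cdot\w^{\ast}$ implicit in \Cref{app:claim:g^*(w)} and \Cref{app:claim:bound-xi(w)}; this centered-variance bookkeeping is the main technical obstacle beyond the hypercontractivity template already in place.
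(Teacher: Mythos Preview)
Your approach is correct and matches the paper's: Chebyshev on each scalar, reduce to a per-sample second moment, then hypercontractivity plus \Cref{app:fact:lem23-damian-smoothing} with the truncated labels of \Cref{app:lem:truncation-of-y} and \Cref{app:lem:moments-of-y}. The first bound is exactly as you describe.

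Your last paragraph, however, anticipates difficulties that do not arise. For the dot-product bound the paper works directly with $f_{\w^{\ast}}(\x)=\la\he_{k^{\ast}}(\x),\w^{\ots k^{\ast}-1}\ots(\w^{\ast})^{\perp_\w}\ra$ without normalizing $(\w^{\ast})^{\perp_\w}$. The relaxation $\|\sym(\mathbf A)\|_F\le\|\mathbf A\|_F$ does \emph{not} lose the $\sin\theta$ dependence, because $\|\mathbf A\|_F=\|(\w^{\ast})^{\perp_\w}\|_2\cdot\|\w\|_2^{k^{\ast}-1}=\sin\theta$; thus $\E_\x[f_{\w^{\ast}}(\x)^2]\le\sin^2\theta$, hypercontractivity gives $\|f_{\w^{\ast}}^2\|_{L^p}\le(2p)^{k^{\ast}}\sin^4\theta$, and \Cref{app:fact:lem23-damian-smoothing} with $\sigma_B=\sin^4\theta$ finishes. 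In particular no ``centered-variance bookkeeping'' is needed: the raw second moment $\E[(\g(\w;\x,y)\cdot\w^{\ast})^2]$ already carries the full $\sin^4\theta$ factor, so bounding the variance by the second moment is harmless. Your detour through the unit vector $\bu$ and the exact value $\|\sym(\bu\ots\w^{\ots k^{\ast}-1})\|_F^2=1/k^{\ast}$ is correct but unnecessary.
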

\begin{proof}
    By Chebyshev's inequality, we have
    \begin{align}\label{app:ineq:cheby-bound-||whg(w)-g(w)||^2}
        \pr\bigg[\bigg\|\frac{1}{n}\sum_{i=1}^n \g(\w;\x\ith,y\ith) - \g(\w)\bigg\|_2\geq t\bigg]&\leq \frac{1}{t^2}\Exy\bigg[\bigg\|\frac{1}{n}\sum_{i=1}^n \g(\w;\x\ith,y\ith) - \g(\w)\bigg\|_2^2\bigg] \nonumber\\
        &\leq \frac{1}{nt^2}\Exy\bigg[\bigg\| \g(\w;\x,y) - \g(\w)\bigg\|_2^2\bigg].
    \end{align}
    Let $\e_j$ be the $j^{\mathrm{th}}$ basis of $\R^d$, we have $\| \g(\w;\x,y) - \g(\w)\|_2^2 = \sum_{j=1}^d (\g(\w;\x,y)\cdot\e_j - \g(\w)\cdot\e_j)^2$. Thus, it suffices to bound the expectation of each summand $(\g(\w;\x,y)\cdot\e_j - \g(\w)\cdot\e_j)^2$, for $j\in[d]$. Note first that since $\g(\w) = \Exy[\g(\w;\x,y)]$, we have 
    \begin{align*}
        \Exy[(\g(\w;\x,y)\cdot\e_j - \g(\w)\cdot\e_j)^2]&\leq \Exy[(\g(\w;\x,y)\cdot\e_j)^2]\\
        & = 4(\kstrckstr)^2\Exy\bigg[y^2\bigg\la\he_{k^{\ast}}(\x),\e_j^{\perp_\w}\ots \w^{\ots k^{\ast} - 1}\bigg\ra^2\bigg] \;.
    \end{align*}
    Denote $f_j(\x)\eqdef \la\he_{k^{\ast}}(\x),\e_j^{\perp_\w}\ots \w^{\ots k^{\ast} - 1}\ra$, which is a polynomial of $\x$ of degree $k^{\ast}$. In addition, note that $\Ex[f_j(\x)^2] = \|\e_j^{\perp_\w}\ots \w^{\ots k^{\ast} - 1}\|_F^2\leq \|\e_j^{\perp_\w}\|_2^2\leq 1$. Therefore, applying \Cref{app:fact:gaussian-hypercontractivity} and \Cref{app:fact:lem23-damian-smoothing} with $A = y^2$ and $B = f_j(\x)^2$, we get
    \begin{align*}
        \Exy\bigg[y^2\bigg\la\he_{k^{\ast}}(\x),\e_j^{\perp_\w}\ots \w^{\ots k^{\ast} - 1}\bigg\ra^2\bigg]&\leq \Exy[y^2](4e)^{k^{\ast}}\max\bigg\{1, \frac{1}{k^{\ast}}\log(B_4/\eps)\bigg\}^{k^{\ast}}\\
        &\lesssim e^{k^{\ast}}\log^{k^{\ast}}(B_4/\eps).
    \end{align*}
    Thus, it holds that $\Exy[(\g(\w;\x,y)\cdot\e_j - \g(\w)\cdot\e_j)^2]\lesssim (\kstrckstr)^2e^{k^{\ast}}\log^{k^{\ast}}(B_4/\eps),$ which further implies that the expectation of $\| \g(\w;\x,y) - \g(\w)\|_2^2$ is bounded above by
    \begin{equation*}
        \Exy[\| \g(\w;\x,y) - \g(\w)\|_2^2]\lesssim d(\kstrckstr)^2e^{k^{\ast}}\log^{k^{\ast}}(B_4/\eps).
    \end{equation*}
    Plugging this back into the Chebyshev's bound \Cref{app:ineq:cheby-bound-||whg(w)-g(w)||^2}, we obtain
    \begin{equation*}
        \pr[\|\whg(\w) - \g(\w)\|_2\geq t]\leq \frac{d(\kstrckstr)^2e^{k^{\ast}}\log^{k^{\ast}}(B_4/\eps)}{nt^2}.
    \end{equation*}
    Therefore, with probability at least $1 - \delta$, it holds
    \begin{equation*}
        \|\whg(\w) - \g(\w)\|_2\leq \sqrt{\frac{d(\kstrckstr)^2 e^{k^{\ast}}\log^{k^{\ast}}(B_4/\eps)}{n\delta}}.
    \end{equation*}
    Now for the second statement of \Cref{app:lem:gradient-concentration}, we similarly apply Chebyshev's inequality, which yields
    \begin{align*}
        &\quad \pr\bigg[\bigg|\frac{1}{n}\sum_{i=1}^n \g(\w;\x\ith,y\ith)\cdot\w^{\ast} - \g(\w)\cdot\w^{\ast}\bigg|\geq t\bigg]\\
        &\leq \frac{1}{t^2}\Exy\bigg[\bigg|\frac{1}{n}\sum_{i=1}^n \g(\w;\x\ith,y\ith)\cdot\w^{\ast} - \g(\w)\cdot\w^{\ast}\bigg|^2\bigg] \\
        &\leq \frac{1}{nt^2}\Exy[(\g(\w;\x,y)\cdot\w^{\ast})^2]\\
        &= \frac{4(\kstrckstr)^2}{nt^2}\Exy[ (y\la\he_{k^{\ast}}(\x),\w^{\ots {k^{\ast}}-1}\ots(\w^{\ast})^{\perp_\w}\ra)^2].
    \end{align*}
    Let $f_{\w^{\ast}}(\x)\eqdef \la\he_{k^{\ast}}(\x),\w^{\ots {k^{\ast}}-1}\ots(\w^{\ast})^{\perp_\w}\ra$. Note that $\Ex[f_{\w^{\ast}}(\x)^2] = \|\w^{\ots {k^{\ast}}-1}\ots(\w^{\ast})^{\perp_\w}\|_F^2 = \|(\w^{\ast})^{\perp_\w}\|_2^2$. Since $\w^{\ast},\w\in\spd$, we have $\|(\w^{\ast})^{\perp_\w}\|_2 = \sin\theta$. Thus, by \Cref{app:fact:gaussian-hypercontractivity}, \begin{equation*}
        \|f^2_{\w^{\ast}}(\x)\|_{L^p} = \bigg(\Ex[(f_{\w^{\ast}}(\x))^{2p}]\bigg)^{2/(2p)} \leq \bigg((2p-1)^{k^{\ast}/2}\Ex[(f_{\w^{\ast}}(\x))^{2}]\bigg)^2\leq (2p)^{k^{\ast}}\sin^4\theta.
    \end{equation*}
    Hence, applying \Cref{app:fact:lem23-damian-smoothing} with $A = y^2$, $B = f^2_{\w^{\ast}}(\x)$, $\sigma_B = \sin^4\theta$, and $C =k^{\ast}$, we obtain
    \begin{align*}
        \Exy[ (y\la\he_{k^{\ast}}(\x),\w^{\ots {k^{\ast}}-1}\ots(\w^{\ast})^{\perp_\w}\ra)^2]\lesssim \sin^4\theta e^{k^{\ast}} \log^{k^{\ast}}(B_4/\eps).
    \end{align*}
    Therefore, it holds that
    \begin{equation*}
        \pr[|(\whg(\w) - \g(\w))\cdot\w^{\ast}|\geq t]\lesssim \frac{(\kstrckstr)^2 e^{k^{\ast}} \log^{k^{\ast}}(B_4/\eps)\sin^4(\theta)}{nt^2},
    \end{equation*}
    which implies that with probability at least $1 - \delta$ it holds
    \begin{equation*}
        (\whg(\w) - \g(\w))\cdot\w^{\ast}\lesssim \sqrt{\frac{(\kstrckstr)^2 e^{k^{\ast}} \log^{k^{\ast}}(B_4/\eps)\sin^4(\theta)}{n\delta}}.
    \end{equation*}
\end{proof}

We proceed to the main theorem of this paper. It shows that using at most $\tilde{\Theta}(d^{\lceil k/2 \rceil} + d/\eps)$ samples, \Cref{alg:GD} (with initialization subroutine \Cref{alg:initialization}) generates a vector $\wh{\w}$ such that $\Ltwo(\wh{\w}) = O(\opt) + \eps$ within $O(\log(1/\eps))$ iterations.
\subsection{Proof of Main Theorem}
\begin{theorem}
    Suppose that \Cref{assm:on-activation} holds. Choose the batch size of \Cref{app:alg:GD} to be 
    $$n = \Theta(C_{k^{\ast}}d e^{k^{\ast}} \log^{k^{\ast}+1}(B_4/\eps) / (\eps\delta)),$$ 
    and choose the step size $\eta = 9/(40ek^{\ast}c_{k^{\ast}})$. Then, after $T = O(\log(C_{k^{\ast}}/\eps))$ iterations, with probability at least $1 - \delta$, \Cref{app:alg:GD} generates a parameter $\w^T$ that satisfies $\Ltwo(\w^T) = O(C_{k^{\ast}}\opt) + \eps$. The total number of samples required for \Cref{app:alg:GD} is
    \begin{equation*}
        N = {\Theta}\bigg(({k^{\ast}}/c_{k^{\ast}})^2 e^{k^{\ast}} \log^{k^{\ast}}(B_4/\eps) d^{\lceil {k^{\ast}}/2\rceil} + \bigg(e^{k^{\ast}}\log^{k^{\ast} + 2}(B_4/\eps)C_{k^{\ast}}\bigg)\frac{ d} {\eps\delta}\bigg).
    \end{equation*}
\end{theorem}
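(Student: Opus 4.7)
The plan is to follow the proof sketch and carry out an inductive argument on the distance $\|\w^t - \w^{\ast}\|_2^2$, showing it contracts geometrically until the iterate lies within an $O(\sqrt{\opt}+\sqrt{\eps})$-neighborhood of $\w^{\ast}$ in angle. First, I will dispense with the easy regime: if $\opt \geq (c_{k^{\ast}}/(64 k^{\ast}))^2$, any unit vector achieves loss $O(C_{k^{\ast}}\opt)$ by the upper bound in \Cref{claim:upbd-L_2(w)}, so we may output an arbitrary $\w$. Henceforth, assume $\opt \leq (c_{k^{\ast}}/(64 k^{\ast}))^2$, so that \Cref{lem:initialization-1-1/k} applies and the initialization produces $\w^0$ with $\w^0 \cdot \w^{\ast} \geq 1 - 1/k^{\ast}$.

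Next, I will set up the induction. Assuming $\w^t \cdot \w^{\ast} \geq 1 - 1/k^{\ast}$ and $\sin\theta_t \geq 4e\sqrt{\opt} + \sqrt{\eps}$, I expand
\begin{align*}
\|\w^{t+1} - \w^{\ast}\|_2^2 &\leq \|\w^t - \w^{\ast}\|_2^2 + 2\eta \,\whg(\w^t)\cdot(\w^{\ast}-\w^t) + \eta^2\|\whg(\w^t)\|_2^2,
\end{align*}
using non-expansiveness of projection onto the sphere. The cross term decomposes as $(\whg(\w^t)-\g(\w^t))\cdot\w^{\ast} + \g(\w^t)\cdot\w^{\ast}$. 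The sharpness result (\Cref{lem:sharpness}) gives $\g(\w^t)\cdot\w^{\ast} \leq -\tfrac12 \|\g^{\ast}(\w^t)\|_2 \sin\theta_t$, and the gradient concentration lemma (\Cref{lem:gradient-concentration}), together with the chosen batch size, controls the fluctuation by $\sqrt{\eps/d}\sin^2\theta_t$ for the inner product and $(k^{\ast}c_{k^{\ast}})^2\eps$ for the squared norm, each with probability $1-\delta/T$. Using the two-sided bound $(2k^{\ast}c_{k^{\ast}}/e)\sin\theta_t \leq \|\g^{\ast}(\w^t)\|_2 \leq 2 k^{\ast}c_{k^{\ast}} \sin\theta_t$ (from \Cref{claim:g^*(w)} and $(1-1/k^{\ast})^{k^{\ast}-1}\geq 1/e$), I can choose $\eta = 9/(40 e k^{\ast}c_{k^{\ast}})$ to absorb the quadratic term into the linear descent, yielding a strict contraction $\|\w^{t+1}-\w^{\ast}\|_2^2 \leq (1-\kappa)\|\w^t-\w^{\ast}\|_2^2$ for an absolute constant $\kappa > 0$ (the sketch identifies $\kappa = 81/(320 e^2)$).

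From the contraction I conclude that $\theta_{t+1} \leq \theta_t$, which preserves the inductive hypothesis $\w^{t+1}\cdot\w^{\ast} \geq 1 - 1/k^{\ast}$, and that after $T = O(\log(1/\eps))$ rounds we reach $\sin\theta_T \lesssim \sqrt{\opt} + \sqrt{\eps}$. Applying \Cref{claim:upbd-L_2(w)} together with $1 - (\w^T \cdot \w^{\ast})^k \leq k(1-\w^T\cdot\w^{\ast})$ and $\sum_{k\geq k^{\ast}} k c_k^2 \leq C_{k^{\ast}}$ converts the angle bound into the $L_2^2$ guarantee $\Ltwo(\w^T) = O(C_{k^{\ast}}(\opt + \eps))$. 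Rescaling $\eps \to \eps/C_{k^{\ast}}$ and taking the union bound over $T$ rounds (so each per-round failure probability is $\delta/T$) gives the stated high-probability guarantee and sample complexity.

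The main obstacle is the tight bookkeeping in the contraction step: both the signal $\|\g^{\ast}(\w^t)\|_2 \sin\theta_t$ and the quadratic perturbation $\eta^2 \|\g(\w^t)\|_2^2$ scale like $\sin^2\theta_t$ near the target, so a naive choice of step size would only yield $\eps$-closeness with polynomial rather than logarithmic iteration count. The fix is the two-sided control $\|\g^{\ast}(\w^t)\|_2 \asymp k^{\ast}c_{k^{\ast}}\sin\theta_t$, which (combined with the inductive hypothesis $\w^t \cdot \w^{\ast} \geq 1 - 1/k^{\ast}$) makes the linear term strictly dominate the quadratic term for the specified $\eta$; verifying this dominance explicitly and keeping track of the $\sqrt{\eps}$-level noise floor is where all the care lies.
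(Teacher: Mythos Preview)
Your proposal is correct and follows essentially the same approach as the paper's proof: the case split on $\opt$, the inductive contraction of $\|\w^t-\w^{\ast}\|_2^2$ via non-expansiveness, the combination of sharpness with gradient concentration, the two-sided bound $\|\g^{\ast}(\w^t)\|_2 \asymp k^{\ast}c_{k^{\ast}}\sin\theta_t$, the specific step size yielding the contraction factor $1-81/(320e^2)$, and the final conversion to the $L_2^2$ guarantee are all identical. The only piece you omit that the paper spells out is a brief stability check showing that once some iterate satisfies $\sin\theta_{t^{\ast}} \leq 4e\sqrt{\opt}+\sqrt{\eps}$ (so the sharpness lemma no longer applies), the remaining steps up to $T$ cannot inflate the angle by more than a constant factor; this is a one-line estimate from \Cref{app:ineq:upbd-||wt+1-wt||^2-2} and does not affect your argument's validity.
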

\begin{proof}
Suppose first that ${\opt}\geq (c_{k^{\ast}}/(64k^{\ast}))^2$, i.e., $\opt$ is of constant value. Then by \Cref{app:claim:final-error-bound} we know that for any unit vector $\wh{\w}$ (e.g., $\wh{\w} = \e_1$) it holds
\begin{align*}
    \Ltwo(\wh{\w}) \leq 2\opt + 4\bigg(\sum_{k\geq k^{\ast}} kc_k^2 (1 - (\w\cdot\w^{\ast}))\bigg)\leq 2\opt + 4C_{k^{\ast}} = O(\opt).
\end{align*}
Hence $\wh{\w}$ is an approximate solution of the agnostic learning problem.

{Now suppose $\opt\leq (c_{k^{\ast}}/(64k^{\ast}))^2$, then the assumption in \Cref{app:lem:initialization-1-1/k} is satisfied and \Cref{app:alg:initialization} can be applied.} Consider the distance between $\w^t$ and $\w^{\ast}$ after each update of \Cref{app:alg:GD}. By the non-expansive property of projection operators, we have
    \begin{align}\label{app:eq:||w-t+1-w*||^2-expansion}
        \|\w^{t+1} - \w^{\ast}\|_2^2& = \|\proj_{\B_d}(\w^t - \eta\whg(\w^t)) - \w^{\ast}\|_2^2 \nonumber\\
        &\leq \|\w^t - \eta\whg(\w^t) - \w^{\ast}\|_2^2 \nonumber\\
        &=\|\w^t - \w^{\ast}\|_2^2 + 2\eta\whg(\w^t)\cdot(\w^{\ast} - \w^t) + \eta^2\|\whg(\w^t)\|_2^2.
    \end{align}
    Let us denote the angle between $\w^t$ and $\w^{\ast}$ by $\theta_t$. Furthermore, let us assume for now that $\theta_t$ satisfies $\sin\theta_t\geq 4e\sqrt{\opt} + \sqrt{\eps}$, hence the condition for \Cref{app:lem:sharpness} is satisfied. 
    Note by definition $\whg(\w^t)\perp\w^t$, hence using \Cref{app:lem:sharpness} and \Cref{app:lem:gradient-concentration} we have that with probability at least $1 - \delta$, the inner product term in \Cref{app:eq:||w-t+1-w*||^2-expansion} is bounded above by:
    \begin{align}\label{app:ineq:bd-(whg(wt)-g(wt))w*}
        \whg(\w^t)\cdot(\w^{\ast} - \w^t)& = \whg(\w^t)\cdot\w^{\ast} = (\whg(\w^t) - \g(\w^t))\cdot\w^{\ast} + \g(\w^t)\cdot\w^{\ast}\nonumber\\
        &\leq \frac{C_1\kstrckstr e^{k^{\ast}/2}\log^{k^{\ast}/2}(B_4/\eps)}{\sqrt{n\delta}}\sin^2(\theta_t) - \frac{1}{2}\|\g^{\ast}(\w^{\ast})\|_2\sin\theta_t, 
    \end{align}
    where $C_1$ is a sufficiently large absolute constant.
    On the other hand, the squared norm term $\|\whg(\w^t)\|_2^2$ from \Cref{app:eq:||w-t+1-w*||^2-expansion} can be bounded above using \Cref{app:lem:gradient-concentration} and \Cref{app:lem:bound-on-||g(w)||}:
    \begin{align}\label{app:ineq:bd-||whg(wt)||^2}
        \|\whg(\w^t)\|_2^2 &= \|(\whg(\w^t) - \g(\w^t)) + \g(\w^t)\|_2^2 \nonumber\\
        &\leq 2\|\whg(\w^t) - \g(\w^t)\|_2^2 + 2\|\g(\w^t)\|_2^2 \nonumber\\
        &\leq \frac{C_2 d(\kstrckstr)^2 e^{k^{\ast}}\log^{k^{\ast}}(B_4/\eps)}{{n\delta}} + (\kstrckstr)^2\opt + \|\g^{\ast}(\w)\|_2^2,
    \end{align}
    for a sufficiently large absolute constant $C_2$. 
    Plugging \Cref{app:ineq:bd-(whg(wt)-g(wt))w*} and \Cref{app:ineq:bd-||whg(wt)||^2} back into \Cref{app:eq:||w-t+1-w*||^2-expansion}, and denoting $\kappa\eqdef \max\{C_1, C_2\}\kstrckstr e^{k^{\ast}/2}\log^{k^{\ast}/2}(B_4/\eps)$, we get that with probability at least $1 - \delta$,
    \begin{align*}
        \|\w^{t+1} - \w^{\ast}\|_2^2 &\leq \|\w^t - \w^{\ast}\|_2^2 + \frac{\eta\kappa}{\sqrt{n\delta}}\sin^2\theta_t - \frac{\eta}{2}\|\g^{\ast}(\w^t)\|_2\sin\theta_t \\
        &\quad + \eta^2\bigg(\frac{d\kappa^2}{n\delta} + (\kstrckstr)^2\opt + \|\g^{\ast}(\w^t)\|_2^2\bigg).
    \end{align*}
    
    Let us assume first that $\theta_t\leq \theta_{t-1}\leq\cdots\leq \theta_0$ and $\theta_t$ satisfies $\sin\theta_{t}\geq 4e\sqrt{\opt} + \sqrt{\eps}$. We will argue that in this case $\theta_{t+1}\leq \theta_t$ (in fact, that it contracts by a constant factor). Then, by an inductive argument, we immediately know that the assumption is valid and that $\theta_t$ is a decreasing sequence (as long as $\sin\theta_{t}\geq 4e\sqrt{\opt} + \eps$). 
    To prove $\theta_{t+1}\leq \theta_t$, recall that in \Cref{app:claim:g^*(w)} it was shown that
    \begin{equation*}
        \|\g^{\ast}(\w^t)\|_2 = 2\kstrckstr(\w^t\cdot\w^{\ast})\|(\w^{\ast})^{\perp_{\w^t}}\|_2 =2\kstrckstr(\w^t\cdot\w^{\ast})^{k^{\ast} - 1}\sin\theta_t.
    \end{equation*}
    Recall that $\w^0$ is the initial parameter vector that satisfies $\w^0\cdot\w^{\ast}\geq 1- 1/k^{\ast}$.
    By the inductive hypothesis it holds $\theta_t\leq \theta_0$, hence $\w^t\cdot\w^{\ast}\geq \w^0\cdot\w^{\ast}\geq 1 - 1/k^{\ast}$.  Furthermore, as we have $(1-1/t)^{t-1}
    \geq 1/e$ for $t \geq 1$, it holds $1\geq (\w^t\cdot\w^{\ast})^{k^{\ast} - 1}\geq 1/e$. Therefore, we further obtain
    \begin{equation*}
        (2\kstrckstr/e)\sin\theta_t\leq \|\g^{\ast}(\w^t)\|_2\leq 2\kstrckstr\sin\theta_t.
    \end{equation*}
    Now choosing $n\gtrsim d\kappa/((\kstrckstr)^2\eps\delta)$, and recalling that we have assumed $\sin\theta_t\geq 4e\sqrt{\opt} + \sqrt{\eps}$ by the induction hypothesis, we can further bound $\|\w^{t+1} - \w^{\ast}\|_2^2$ above as:
    \begin{align}\label{app:ineq:upbd-||wt+1-wt||^2-2}
        \|\w^{t+1} - \w^{\ast}\|_2^2&\leq \|\w^t - \w^{\ast}\|_2^2 + \eta((\eps/d)^{1/2} - (4\kstrckstr/e))\sin^2\theta_t \nonumber\\
        &\quad + \eta^2(\kstrckstr)^2(\eps + \opt + 4\sin^2\theta_t)\\
        &\leq \|\w^t - \w^{\ast}\|_2^2 - (3\kstrckstr/e)\eta\sin^2\theta_t + 5\eta^2(\kstrckstr)^2\sin^2\theta_t. \nonumber
    \end{align}
    Observe that since $\theta_t\leq \theta_0$ and by assumption $\w^0\cdot\w^{\ast} = \cos\theta_0\geq 1 - \min\{1/k^{\ast}, 1/2\}\geq 1/2$, we have $\cos(\theta_t/2)\geq \sqrt{3}/2$ and thus it further holds that $(\sqrt{3}/2)(2\sin(\theta_t/2))\leq \sin\theta_t\leq 2\sin(\theta_t/2)$. Since $\|\w^t - \w^{\ast}\|_2 = 2\sin(\theta_t/2)$ follows from $\w^t,\w^{\ast}\in\spd$, we finally obtain that, with probability at least $1 - \delta$,
    \begin{equation*}
        \|\w^{t+1} - \w^{\ast}\|_2^2\leq (1 -(9\kstrckstr/(4e))\eta + 5(\kstrckstr)^2\eta^2)\|\w^t - \w^{\ast}\|_2^2.
    \end{equation*}
    Choosing $\eta = 9/(40e\kstrckstr)$ yields (with probability at least $1 - \delta$):
    \begin{align}\label{app:ineq:contraction}
        4\sin^2(\theta_{t+1}/2) &= \|\w^{t+1} - \w^{\ast}\|_2^2 \nonumber\\
        &\leq (1 - (81/(320e^2)))\|\w^{t+1} - \w^{\ast}\|_2^2 \nonumber\\
        &= (1 - (81/(320e^2)))(4\sin^2(\theta_{t}/2)). 
    \end{align}
    This shows that $\theta_{t+1}\leq \theta_t$, hence completing the inductive argument. Furthermore, \Cref{app:ineq:contraction} implies that after at most $T = O(\log(1/\eps))$ iterations it must hold that $\sin\theta_T\leq 4e\sqrt{\opt} + \sqrt{\eps}$, therefore, we can end the algorithm after at most $O(\log(1/\eps))$ iterations. 
    Though the contraction \Cref{app:ineq:contraction} only holds when $\sin\theta_T\geq 4e\sqrt{\opt} + \sqrt{\eps}$, we can further show that if after some iteration $t^{\ast}$ we have $\sin\theta_{t^{\ast}}\leq 4e\sqrt{\opt} + \sqrt{\eps}$, then $\sin\theta_{t^{\ast}+1}$ is still of order $\sqrt{\opt} + \sqrt{\eps}$. Concretely, if there exists some step $t^{\ast}\leq T$ such that $\sin(\theta_{t^{\ast}})\leq 4e\sqrt{\opt} + \sqrt{\eps}$, then at step $t^{\ast}+1$ it must hold (by \Cref{app:ineq:upbd-||wt+1-wt||^2-2}):
    \begin{equation*}
       \sin(\theta_{t^{\ast}+1})\leq \sqrt{1 + 8\eta^2(\kstrckstr)^2}\sin(\theta_{t^{\ast}})\leq 3\sin\theta_{t^{\ast}}\leq 3(4e\sqrt{\opt} + \sqrt{\eps}).
    \end{equation*}
    In other words, for all steps $t^{\ast}\leq t \leq T$, it holds that $\sin\theta_t \leq 30(\sqrt{\opt} + \sqrt{\eps})$. Thus, in summary, choosing $T = O(\log(1/\eps))$, we get that with probability at least $1 - \delta T$, $\sin\theta_T\lesssim \sqrt{\opt} + \sqrt{\eps}$, and applying \Cref{app:claim:final-error-bound} we get:
    \begin{equation*}
        \Ltwo(\w^T) = O\bigg(\bigg(\sum_{k\geq k^{\ast}} kc_k^2\bigg)(\opt + \eps)\bigg) = O(C_{k^{\ast}}\opt) + \eps',
    \end{equation*}
    where we set $\eps' =\eps/C_{k^{\ast}}\leq  \eps/(\sum_{k\geq k^{\ast}}kc_k^2)$, and used \Cref{assm:on-activation}$(iii)$ that $\sum_{k\geq k^{\ast}} kc_k^2\leq C_{k^{\ast}}$. 
    
    Thus, choosing $\delta' = \delta T$, where $T = O(\log(C_{k^{\ast}}/\eps'))$, \Cref{app:alg:GD} outputs a parameter $\w^T$ such that with probability at least $1 - \delta'$, $\Ltwo(\w^T) = O(C_{k^{\ast}}\opt) + \eps'$, with batch size
    \begin{equation*}
        n = {\Theta}\bigg(\frac{dC_{k^{\ast}} e^{k^{\ast}}\log^{k^{\ast} + 1}(B_4/\eps')}{\eps'\delta'}\bigg).
    \end{equation*}
    In summary, the total number of samples required for \Cref{app:alg:GD} is
    \begin{equation*}
        N = {\Theta}\bigg(\frac{({k^{\ast}})^2{e^{k^{\ast}} \log^{k^{\ast}}(B_4/\eps')d^{\lceil {k^{\ast}}/2\rceil}}}{c_{k^{\ast}}^2} + \frac{C_{k^{\ast}} d e^{k^{\ast}}\log^{k^{\ast} + 2}(B_4/\eps')}{\eps'\delta'}\bigg).
    \end{equation*}
\end{proof}

The final claim shows that if $\sin(\theta(\w,\w^{\ast}))\lesssim\sqrt{\opt} + \sqrt{\eps}$, then $\Ltwo(\w)\lesssim C_{k^{\ast}}(\opt + \eps)$.
\begin{claim}\label{app:claim:final-error-bound}
    Let $\w\in\mathbb{S}^d$ and denote the angle between $\w$ and $\w^{\ast}$ by $\theta$. If $\theta$ satisfies $\sin\theta\leq C(\sqrt{\opt} + \sqrt{\eps})$ for some absolute constant $C$, then we have
    \begin{equation*}
        \Ltwo(\w)\lesssim \bigg(\sum_{k\geq k^{\ast}} kc_k^2\bigg)(\opt + \eps).
    \end{equation*}
\end{claim}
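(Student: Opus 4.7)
The plan is to leverage the already-established upper bound on $\Ltwo(\w)$ from \Cref{claim:upbd-L_2(w)}, namely
\[
\Ltwo(\w)\leq 2\opt + 4\bigg(1 - \sum_{k\geq k^{\ast}}c_k^2(\w\cdot\w^{\ast})^k\bigg),
\]
and reduce everything to estimating the ``alignment defect'' $1 - \sum_{k\geq k^{\ast}}c_k^2(\w\cdot\w^{\ast})^k$ in terms of $\sin\theta$. First I would use the normalization $\sum_{k\geq k^{\ast}} c_k^2 = 1$ from \Cref{assm:on-activation} to rewrite the defect as $\sum_{k\geq k^{\ast}} c_k^2 \bigl(1 - (\w\cdot\w^{\ast})^k\bigr)$.

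Next I would bound each summand. Setting $\alpha = \w\cdot\w^{\ast} = \cos\theta$, note that the hypothesis $\sin\theta\leq C(\sqrt{\opt}+\sqrt{\eps})$ combined with the fact that this quantity can be made small (so $\theta$ lies in $[0,\pi/2]$, since otherwise $\sin\theta\leq C(\sqrt{\opt}+\sqrt{\eps})$ would force the much stronger conclusion via $\w\cdot\w^{\ast}$ being close to $-1$, a case we may discard for small $\opt+\eps$, or alternatively handle by noting the claim's conclusion is vacuous when $\opt+\eps$ is bounded below by a constant) gives $\alpha\geq 0$. Then
\[
1 - \alpha^k = (1-\alpha)(1+\alpha+\cdots+\alpha^{k-1}) \leq k(1-\alpha) \leq k\sin^2\theta,
\]
where the last inequality uses $1-\cos\theta\leq \sin^2\theta$ (which is equivalent to $\cos\theta(1-\cos\theta)\geq 0$ for $\theta\in[0,\pi/2]$).

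Summing over $k$, this gives
\[
\sum_{k\geq k^{\ast}} c_k^2\bigl(1 - (\w\cdot\w^{\ast})^k\bigr) \leq \sin^2\theta \sum_{k\geq k^{\ast}} k c_k^2 \leq 2C^2(\opt+\eps)\sum_{k\geq k^{\ast}} k c_k^2,
\]
using $(\sqrt{\opt}+\sqrt{\eps})^2\leq 2(\opt+\eps)$. Plugging back into the bound from \Cref{claim:upbd-L_2(w)} and observing that $\sum_{k\geq k^{\ast}} k c_k^2 \geq k^{\ast} c_{k^{\ast}}^2 = \Omega(1)$ by \Cref{assm:on-activation}$(i)$ (so the $2\opt$ term is absorbed into the $\lesssim (\sum_k kc_k^2)\opt$ term) yields the conclusion.

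The argument is essentially routine once \Cref{claim:upbd-L_2(w)} is in hand; the only subtlety is being careful about the regime of $\theta$ so that $\cos\theta\geq 0$, which I would address by a preliminary remark noting that if $\opt+\eps$ is larger than some absolute constant the claim's bound is trivially satisfied (e.g., by $\Ltwo(\w)\leq 4$ from the normalization of $\sigma$ together with boundedness of $y$-moments), so we may freely assume $\sin\theta$ is small.
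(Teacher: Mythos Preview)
Your proof is correct and follows essentially the same route as the paper's: apply \Cref{claim:upbd-L_2(w)}, rewrite the defect using $\sum_k c_k^2=1$, factor $1-\alpha^k=(1-\alpha)\sum_{j<k}\alpha^j\le k(1-\alpha)$, and use $1-\cos\theta\le\sin^2\theta$ together with the hypothesis on $\sin\theta$. If anything, you are slightly more careful than the paper in justifying $\cos\theta\ge 0$ and in explaining why the $2\opt$ term is absorbed into the final bound.
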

\begin{proof}
    Since $\w\cdot\w^{\ast} = \cos\theta \geq 1 - \sin^2\theta$, according to \Cref{claim:upbd-L_2(w)} the $L_2^2$ loss $\Ltwo(\w)$ can be upper bounded by
\begin{align*}
    \Ltwo(\w) &\leq 2\opt + 4\bigg(1 - \sum_{k\geq \ks}{c_k^2}(\w\cdot\w^{\ast})^k\bigg) \\
    &=2\opt + 4\bigg(\sum_{k\geq k^{\ast}} c_k^2(1 - (\w\cdot\w^{\ast})^k)\bigg)\\
    &=2\opt + 4\bigg(\sum_{k\geq k^{\ast}} c_k^2(1 - (\w\cdot\w^{\ast}))(1 + (\w\cdot\w^{\ast}) + \dots + (\w\cdot\w^{\ast})^{k-1})\bigg)\\
    &\leq 2\opt + 4\bigg(\sum_{k\geq k^{\ast}} kc_k^2(1 - (\w\cdot\w^{\ast}))\bigg)\\
    &\leq 2\opt + 4\sum_{k\geq k^{\ast}} kc_k^2\sin^2\theta\\
    &\leq 2\opt + 4\bigg(\sum_{k\geq k^{\ast}} kc_k^2\bigg)C^2(\opt + \eps)\lesssim \bigg(\sum_{k\geq k^{\ast}} kc_k^2\bigg)(\opt + \eps).
\end{align*}
\end{proof}

\end{document}